\newcommand{\E}{\mathbb{E}}
\newcommand{\R}{\mathbb{R}}
\newcommand{\cC}{\mathcal{C}}
\newcommand{\cI}{\mathcal{I}}
\newcommand{\Hp}{H}
\DeclareMathOperator{\Var}{Var}
\newcommand{\diag}{\operatorname{diag}}
\newcommand{\KL}[2]{D_{\mathrm{KL}}(#1 \| #2)}
\DeclareMathOperator{\ri}{ri}
\newcommand{\Lip}{\mathrm{Lip}}
\newcommand{\cL}{\mathcal{L}}
\newtheorem{proposition}{Proposition}[section]
\newtheorem{lemma}{Lemma}[section]
\newtheorem{corollary}{Corollary}[section]
\newtheorem{remark}{Remark}[section]
\newtheorem{definition}{Definition}[section]
\newtheorem{theorem}{Theorem}[section]
\begin{document}

%
\runningtitle{The Reasoning--Creativity Trade-off}

%

\twocolumn[

\aistatstitle{The Reasoning--Creativity Trade-off: \\ 
Toward Creativity-Driven Problem Solving}

\aistatsauthor{ Max Ruiz Luyten \And Mihaela van der Schaar }

\aistatsaddress{ University of Cambridge } ]

\begin{abstract}
State-of-the-art large language model (LLM) pipelines rely on bootstrapped reasoning loops—sampling diverse chains of thought and reinforcing the highest-scoring ones—mainly optimizing correctness. We analyze how this design choice is sensitive to the \emph{collapse} of the model’s distribution over reasoning paths, slashing semantic entropy and undermining creative problem-solving. To analyze this failure, we introduce \emph{Distributional Creative Reasoning} (DCR), a unified variational objective that casts training as gradient flow through probability measures on solution traces. STaR, GRPO, and DPO, as well as entropy bonuses, and other methods, all constitute special cases of the same loss. The framework delivers three core results: (i) the \textit{diversity decay theorem}, describing how correctness-based objectives lead to distinct modes of diversity decay for STaR, GRPO, and DPO; (ii) designs that ensure convergence to a stable and diverse policy, effectively preventing collapse; and (iii) simple, actionable recipes to achieve this in practice. DCR thus offers the first principled recipe for LLMs that remain both correct \emph{and} creative.
\end{abstract}

\section{Introduction}
\label{sec:intro}

\paragraph{Diversity collapse in modern training loops.}
A canonical post-training pipeline for training reasoning LLMs includes two main stages: after supervised fine-tuning, the focus shifts to reinforcement learning (RL), which rewards the highest-scoring traces, typically based on correctness. A recurring and detrimental side-effect of this process is \textbf{creative collapse}: the model’s output entropy plummets, resulting in a distribution dominated by a handful of semantic templates \citep{Mohammadi2024}. 

Creative collapse has been extensively reported across RL from human feedback (RLHF) stages \citep{Kirk2024}, when applying GRPO for mathematical reasoning \citep{shao2024deepseekmathpushinglimitsmathematical}, and during self-consistency tuning \citep{wang2023selfconsistency}. In this paper, we examine why this collapse occurs and whether we can apply design choices that prevent it without sacrificing accuracy.

\paragraph{Why diversity matters: Creativity as a diverse portfolio for generalization.}
Especially for tasks outside the training distribution (OOD), creativity in problem-solving is not just a nice-to-have but rather a core requirement for high performance. A single reasoning template will inevitably fail when under novel conditions. We therefore frame creativity as the ability to maintain a \emph{diverse portfolio of high-utility reasoning strategies}. This portfolio promotes OOD generalization, robust planning, and genuine discovery \citep{StanleyLehman2020}.

\paragraph{The central question.}
Our work addresses the following question:
\begin{quote}
\emph{Can we design a framework that:
\begin{enumerate}
    \item explains why diversity collapse occurs,
    \item predicts the specific mode of collapse for different algorithms, and
    \item provides provably effective designs that guarantee a diverse portfolio of reasoning paths?
\end{enumerate}}
\end{quote}
Existing literature provides incomplete answers. KL penalties preserve diversity by constraining the policy’s \emph{proximity} to a base model, limiting drift at the cost of indiscriminately penalizing diverse, high-utility distant parameterizations. Sampling-based methods like Boltzmann sampling or top-$k$ decoding also increase diversity at the cost of quality, and, more critically, they cannot recover strategies whose probabilities have vanished during training.

\paragraph{Our answer: Distributional Creative Reasoning.}
Our primary contribution is theoretical: we provide a unified framework to analyze diversity decay and a provably sufficient remedy. Since our object of study is not an individual trace, we analyze the dynamics of the entire conditional distribution $p_\theta(\pi\mid x)$ over the space of solution traces. By modeling training as a gradient flow on this probability simplex, we develop a framework, Distributional Creative Reasoning (DCR), to analyze diversity decay and uncover its various sources. The DCR objective is a core component of this framework and encompasses multiple terms for utility, regularization, and a crucial, strictly concave diversity energy:
\[
    J(p) = \mathcal{U}[p] + \lambda \mathcal{D}[p] - \beta_{\!\mathrm{KL}}\,\mathrm{KL}\!\bigl(p\Vert p_{\mathrm{base}}\bigr).
\]
In particular, the \textbf{diversity energy} $\mathcal{D}[p]$ is a composite functional with two distinct roles:
\[
    \mathcal{D}[p] = \alpha H[p] - \beta Q[p].
\]
In this equation, $\alpha H[p]$, the Shannon entropy, promotes undiscriminated \textbf{breadth}, while $-\beta Q[p]$ is a \textbf{kernel coverage} term that penalizes concentration on semantically similar traces, thereby promoting conceptual distinctiveness. This objective can recover various existing algorithms as specific instantiations, including STaR \citep{zelikman2022star}, GRPO \citep{shao2024deepseekmathpushinglimitsmathematical}, and DPO \citep{rafailov2023direct}.

DCR leads to three core theoretical insights: First, it leads to the \textbf{Diversity Decay Theorem}, which predicts distinct modes of collapse under scalar-only objectives for the most well-known reasoning algorithms: \textbf{(i)} a “winner-takes-all” fixation for STaR, \textbf{(ii)} a neutral drift for GRPO, and \textbf{(iii)} a homogenization of correct strategies for DPO. 

Second, we prove that incorporating the DCR diversity energy fundamentally can alter the learning dynamics, guaranteeing convergence to a \textbf{unique, stable, and diverse interior equilibrium} that neutralizes these collapse modes. 

Third, DCR provides a set of \textbf{design levers}, the specific creativity kernel $k(\pi,\pi')$ and the coefficients $\alpha$ and $\beta$. We analyze the effects of their choices, resulting in a recipe for training models that are both correct and creative.

\paragraph{Contributions.}
\begin{enumerate}[leftmargin=*, itemsep=2pt]
\item \textbf{Unified Dynamical Lens.} We introduce a variational framework based on Shahshahani gradient flow that encompasses STaR, GRPO, and DPO. Within this framework, we derive their diversity decay dynamics under scalar objectives and finite-batch noise. We also provide a recipe for adapting the framework to new reward designs.
\item \textbf{A Remedy for Collapse.} We prove that the DCR objective, with the diversity energy functional $\mathcal{D}[p] = \alpha H[p] - \beta Q[p]$ guarantees convergence to a high utility and (under an appropriate design) diverse policy, preventing creative collapse.
\item \textbf{Principled Design Space and Practical Recipes.} We detail how to design the creativity kernel and provide guidance on tuning DCR’s hyperparameters. We hope this will transform diversity preservation from ad-hoc heuristics to a principled design process. 
\end{enumerate}

\paragraph{Road-map.}
\Cref{sec:related} discusses the literature on diversity collapse and related theoretical frameworks. \Cref{sec:framework-formal} formally defines the DCR objective and its associated gradient flow dynamics. \Cref{sec:collapse} presents the Diversity Decay Theorem, analyzing the distribution modes of STaR, GRPO, and DPO under scalar objectives. \Cref{sec:phase} proves how the DCR diversity energy reshapes the equilibrium landscape to guarantee diverse outcomes, and \Cref{sec:kernel} discusses the design of the creativity kernel. Finally, \Cref{sec:discussion} concludes with key insights and future directions. We empirically validate these theoretical collapse modes in \Cref{app:insight-exp}.

\section{Related Work}
\label{sec:related}

\paragraph{From reward optimisation to \emph{reasoning monoculture}.}
A consistent empirical observation is now widely documented in the literature: when a language model is trained to maximise a \emph{single} scalar reward, its solution space contracts.
Early studies of RLHF showed that the resulting policy rarely develops novel strategies; instead, it reweights the trajectories present in the SFT checkpoint, leading to higher \textit{Pass@1} accuracy while leaving the underlying portfolio unchanged~\citep{Yue2025}.
Controlled ablations subsequently isolated the cause to the RLHF stage. Diversity, measured by entropy, type–token ratio, and embedding spread, dropped notably after RLHF, while the preceding SFT maintained it~\citep{Kirk2024}.
The effect is algorithm-agnostic: PPO, Expert Iteration, and GRPO all converge to the same narrow attractors, failing “to explore significantly beyond solutions already produced by SFT models”~\citep{Havrilla2024}.

Beyond reasoning-based benchmarks, creative decline has also been documented in other domains.
On open-ended story-telling and idea-generation tasks, aligned \textsc{Llama-2} variants lose $3$–$6\times$ token-level entropy and cluster in a few semantic basins~\citep{Mohammadi2024}.
Treating a set of traces as a ``population,‘’ \citet{Murthy2025} quantified conceptual variance, further underscoring that RLHF results in less diversity than either instruction-tuned or human populations. The overall conclusion from these works is that performance gains come, at least partly, at the cost of reducing the space of possible explanations and expressions.

\paragraph{First attempts at diversity-aware objectives.}
Several works have sought to counter this collapse by injecting \emph{ad hoc} diversity terms.
Entropy-regularised PPO is the most widespread heuristic, but its effect is largely to keep stochasticity indiscriminately, leaving performance gains on the table, and it does not aim to foster \emph{qualitatively} distinct ideas.
Novelty search and quality-diversity algorithms from evolutionary methods have also been applied to language modelling, yet the generated solutions are typically managed separately from the model, and re-distillation frequently regresses gains~\citep{Havrilla2024}.
At the reward level, \citet{Xiao2024} identified ``preference-collapse’’ in RLHF and proposed a Preference-Matching regulariser that adds an entropy bonus, improving minority-preference recall but with the same drawback as discussed above, and without a principled analysis of \emph{how much} diversity is sufficient.
In conclusion, these works demonstrate viability but leave open a unifying view that predicts \emph{when} collapse will occur and the size of the required counterforce.

\paragraph{Theoretical lenses on collapse.}
Two theoretical lines are especially relevant.
First, replicator dynamics from evolutionary game theory \citep{hofbauer1998} have been used to model reward optimisation in large populations and already hint that pure utility maximisation drives mass toward the highest-fitness type.
Second, information-theoretic RL reinterprets entropy bonuses as Lagrange multipliers of a KL constraint, but offers no guarantee that entropy will capture \emph{structural} novelty.  While these frameworks provide valuable insights, they do not offer a comprehensive analysis of creativity in LLMs.

\paragraph{\emph{Distributional Creative Reasoning} (DCR).}
Our work builds on the empirical diagnostics of collapse~\citep{Yue2025,Kirk2024,Havrilla2024,Mohammadi2024,Murthy2025} and the first corrective steps of PM-RLHF~\citep{Xiao2024}, but provides a more fundamental and unified solution, differing in three key respects:
\begin{enumerate}[leftmargin=*, itemsep=2pt]
\item \textbf{Variational Framework for Diversity.} We include in DCR a single concave diversity regularizers, $\mathcal{D}[p]$, composed of distinct terms, like entropy (Shannon entropy $H[p]$ weighted by $\alpha$) and structured novelty promotion (through a kernel $k(\pi,\pi')$ in a quadratic form $Q[p]$ weighted by $\beta$). Properly choosing the functional form of the kernel $k$ and the relative weights $\alpha$ and $\beta$ for these components within $\mathcal{D}[p]$ ensures convergence to stable, mixed-strategy ensembles, effectively counteracting collapse.
\item \textbf{Characterization of Diversity Dynamics.} Whereas prior work largely reports collapse through empirical analyses, our framework provides a \textit{dynamical systems examination} (\Cref{sec:collapse}) that demonstrates how the scalar-reward objectives for STaR, GRPO, and DPO inherently lead to distinct dynamical modes that drive the evolution and erosion of diversity. This results in a deeper, mechanistic understanding of why reasoning monocultures form.
\item \textbf{Actionable and Principled Design.} DCR characterizes how diverse training objectives and diversity-regularizing terms affect the diversity dynamics. This transforms the search for diversity from heuristics to principled design. This involves selecting the kernel function and hyperparameters for the diversity functional $\mathcal{D}[p]$ (i.e., $\alpha$ and $\beta$), which become levers to shape the policy’s distribution.
\end{enumerate}

\section{Distributional Creative Reasoning}
\label{sec:framework-formal}

DCR recasts LLM training as a dynamical system within the space of probability distributions over solution traces. This perspective enables the formal definition and promotion of diversity alongside correctness. This section establishes DCR’s mathematical foundations: its variational objective, the role of the diversity component, and the resultant dynamics.

\subsection{The Landscape of Reasoning}
\label{subsec:policy_space_concise}

For a given prompt $x \in \mathcal{X}$, an LLM generates a \textit{trace} $\pi=(t_1, \dots, t_{|\pi|})$, a sequence of tokens from a finite vocabulary $\mathcal{V}$ up to a maximum length $T$. Traces can represent chains of thought, code, or action sequences. The set of all such traces, $\mathcal{S}_T$, is vast but finite for any fixed $T$ and vocabulary, justifying a finite-dimensional analysis, and the choice of the counting measure on $\mathcal{S}_T$. An LLM’s policy $p(\cdot|x)$ is a probability mass function over $\mathcal{S}_T$, represented as a vector $p$ in the probability simplex $\Delta^{S-1}$, where $S := |\mathcal{S}_T|$:
$$\Delta^{S-1} = \Big\{ p \in [0,1]^S \mid \sum_{i=1}^S p_i = 1 \Big\}.$$
This compact, convex polytope is our domain for policy optimization. Treating the policy as a full distribution, rather than focusing on single ``best’’ traces, is crucial for modeling its diversity. 
\subsection{The DCR Objective}
\label{subsec:dcr_objective_concise}

During training, we optimize an objective $J(p)$ over $p \in \Delta^{S-1}$. In DCR, we model the objective as a term representing task performance, and others for KL and diversity regularization:
$$J(p) = \mathcal{U}[p] + \lambda \mathcal{D}[p] - \beta_{\!\mathrm{KL}}\, \mathrm{KL}\!\bigl(p\Vert p_{\mathrm{base}}\bigr).$$
The components are:
\begin{enumerate}[leftmargin=*,itemsep=1.5pt,topsep=1.5pt]
    \item \textbf{Utility ($\mathcal{U}[p]$):} $\mathcal{U}[p] = \sum_{\pi \in \mathcal{S}_T} U(\pi) p(\pi)$ is the expected utility (e.g., correctness) of traces, encouraging high-quality outputs.
    \item \textbf{Diversity Energy ($\mathcal{D}[p]$):} Weighted by $\lambda \ge 0$, this functional (detailed in \Cref{subsec:diversity_energy_concise}) rewards policies with diversity, countering collapse.
    \item \textbf{KL-Divergence:} It penalizes divergence from a reference policy $p{\mathrm{base}}$ (e.g., the SFT checkpoint), promoting stability.
\end{enumerate}
The coefficients $\lambda, \beta_{!\mathrm{KL}} \ge 0$ tune this balance.

\subsection{The Diversity Energy Functional \texorpdfstring{$\mathcal{D}[p]$}{D[p]}}
\label{subsec:diversity_energy_concise}

Clearly, the core of DCR’s creativity preservation mechanism is the \textbf{diversity energy functional} $\mathcal{D}[p]$, designed to reward both probabilistic spread and semantic variation:
$$\mathcal{D}[p] = \alpha H[p] - \beta Q[p],$$
with $\alpha, \beta \ge 0$. Indeed, its two components serve distinct roles:
\begin{enumerate}[leftmargin=*,itemsep=1.5pt,topsep=1.5pt]
    \item \textbf{Shannon Entropy ($H[p]$):} Promotes \textbf{breadth} by rewarding probability distributed across many traces, ensuring a baseline level of diversity and exploration.
    \item \textbf{Kernel Coverage ($Q[p]$):} $Q[p] = p^\top K p = \sum_{\pi,\pi'} k(\pi,\pi') p(\pi) p(\pi')$. Here, $K$ is the matrix of a symmetric, positive semi-definite (PSD) \textit{creativity kernel} (see \Cref{sec:kernel}) measuring trace similarity. $-\beta Q[p]$ thus penalizes probability concentration on similar traces, fostering \textbf{semantic distinctiveness}.
\end{enumerate}
While entropy provides a valuable form of regularization, \textbf{entropy alone is insufficient for structured creativity}, as it is blind to the content of the traces. The kernel term is essential for promoting qualitatively different reasoning strategies, and the full functional $\mathcal{D}[p]$ is concave, which will prove to be useful:

\begin{proposition}[Concavity of $\mathcal{D}$, cf. \Cref{appA:functionals}]
\label{prop:main_strict_concavity_D_concise}
\textit{If the kernel matrix $K$ is PSD, $\mathcal{D}[p]$ is concave. It is strictly concave on the affine simplex if $\alpha > 0$, or if $\beta > 0$ and $K$ is strictly positive definite on the tangent subspace.}
\end{proposition}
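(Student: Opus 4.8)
The plan is to split $\mathcal{D}[p] = \alpha H[p] - \beta Q[p]$ into its two summands, establish concavity of each, and recombine. The kernel term is the easy one: since $Q[p] = p^\top K p$ with $K$ symmetric PSD, $Q$ is a convex quadratic (along any line $p + s v$ its second derivative is $2\,v^\top K v \ge 0$), so $-\beta Q$ is concave for every $\beta \ge 0$. For the entropy term, concavity of $H[p] = -\sum_i p_i \log p_i$ follows from convexity of the scalar map $x \mapsto x\log x$ on $[0,\infty)$ (with the convention $0\log 0 = 0$): $H$ is minus a sum of such functions composed with the coordinate projections. A non-negative linear combination of concave functions is concave, so $\mathcal{D}$ is concave whenever $\alpha,\beta \ge 0$ and $K$ is PSD, which is the first assertion.

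For strict concavity I would argue along an arbitrary chord. Fix distinct $p, q \in \Delta^{S-1}$ and $t \in (0,1)$, and put $v := q - p$, which is nonzero and satisfies $\ones^\top v = 0$, i.e.\ $v$ lies in the tangent subspace $T_0 := \{v \in \R^S : \ones^\top v = 0\}$ of the simplex. Both $\alpha H$ and $-\beta Q$ are already known to be concave, hence each satisfies Jensen's inequality (weakly) along the segment from $p$ to $q$; since the sum of a strictly concave function and a concave function is strictly concave (add the strict and the weak inequalities), it suffices to show that at least one of the two terms is \emph{strictly} concave along the direction $v$. Case $\alpha > 0$: because $v \ne 0$ there is a coordinate $i$ with $v_i \ne 0$, so $s \mapsto p_i + s v_i$ is a non-constant affine function of $s \in [0,1]$ taking values in $[0,1]\subseteq[0,\infty)$; composing the strictly convex $x \mapsto x\log x$ with it yields a strictly convex summand while all other summands remain convex, so $s \mapsto H(p+sv)$ is strictly concave and $\alpha H$ supplies the strict inequality. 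Case $\beta > 0$ with $K$ positive definite on $T_0$: then $v^\top K v > 0$, the quadratic $s \mapsto (p+sv)^\top K (p+sv)$ is strictly convex, and $-\beta Q$ supplies the strict inequality. Either way $\mathcal{D}$ is strictly concave on $\Delta^{S-1}$.

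The one point that needs care — and the reason I would route the whole argument through the secant inequality for $x\log x$ on the \emph{closed} half-line rather than through the Hessian $\nabla^2 H = -\diag(1/p_i)$ — is the boundary of the simplex, where this Hessian blows up; the scalar function $x\log x$ is nonetheless strictly convex on all of $[0,\infty)$ (the endpoint case $a=0$, $b>0$ reduces to $(1-t)b\log(1-t) < 0$), so the chord argument goes through verbatim on the closed simplex. I would also stress that restricting attention to directions $v \in T_0$ is exactly what is needed and what is available: every chord of $\Delta^{S-1}$ has direction in $T_0$, so positive definiteness of $K$ on the tangent subspace — not on all of $\R^S$ — is the right hypothesis, which is why the statement is phrased that way. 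No step here is deep; the only genuine subtlety is the bookkeeping at the boundary together with the trivial-but-worth-stating ``strict $+$ concave $=$ strict'' observation.
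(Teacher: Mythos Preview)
Your proof is correct. The paper takes a Hessian route: it records $\nabla^2 H = -\diag(1/p_i)$ and $\nabla^2(-Q) = -2K$ on $\operatorname{int}\Delta^{S-1}$, then bounds the quadratic form $\langle \nabla^2\mathcal{D}[p]\,v,v\rangle \le -(\alpha + 2\beta\kappa_T)\|v\|_2^2$ for $v\in T$, where $\kappa_T=\lambda_{\min}((\Pi_T K\Pi_T)|_T)$. You instead run the secant inequality for $x\mapsto x\log x$ on the closed half-line and argue strict convexity chord-by-chord. The two approaches trade off cleanly: the paper's Hessian computation yields a \emph{quantitative} strong-concavity modulus ($\alpha$-strong when $\alpha>0$), which feeds directly into its later convergence-rate analysis, but is only valid on the interior; your chord argument is more elementary, needs no differentiability, and---as you rightly emphasize---remains valid on the full closed simplex including boundary faces, which the Hessian approach cannot touch without a separate limiting argument. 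Your ``strict $+$ concave $=$ strict'' reduction is the clean way to isolate which term carries the strictness in each case. One cosmetic slip: in your endpoint check for $a=0$, $b>0$, the inequality that falls out is $tb\log t<0$, not $(1-t)b\log(1-t)<0$; the conclusion is unaffected.
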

Strict concavity ensures a well-defined optimization target. In practice, incorporating into $J(p)$ a small entropy barrier $+\varepsilon H[p]$ ($\varepsilon \in (0, 10^{-4}]$ small) ensures strict concavity and that $p(\pi) > 0$ throughout optimization, guaranteeing a unique interior maximizer (cf. \Cref{appA:barriers}, \Cref{prop:barriers}).

\subsection{Learning Dynamics: Gradient Flow}
\label{subsec:gradient_flow_concise}

We model policy evolution under $J(p)$ as a gradient flow on $\Delta^{S-1}$, endowed with the \textbf{Shahshahani metric}. For tangent vectors $u, v$ at policy $p$, this metric is $g_p(u,v) = \sum_{\pi} u(\pi)v(\pi)/p(\pi)$, and ensures the flow remains on the simplex. The DCR gradient flow is a replicator-like ODE (cf. \Cref{appA:shah}, Eq.~\eqref{eq:replicator}):
$$\dot{p}_t(\pi) = p_t(\pi) \left( F_t(\pi) - \mathbb{E}_{p_t}[F_t] \right),$$
where the effective trace fitness $F_t(\pi) = \frac{\delta J}{\delta p(\pi)}\big|_{p_t}$ is (cf. \Cref{appA:flow}):
\begin{align*}
    F_t(\pi) = U(\pi) &+ \lambda\left(\alpha(-1-\log p_t(\pi)) - 2\beta (Kp_t)_\pi\right) \\&- \beta_{\!\mathrm{KL}}\left(1 + \log\frac{p_t(\pi)}{p_{\mathrm{base}}(\pi)}\right).
\end{align*}
Under the discussed regularity assumptions (finite $\mathcal{S}_T$, $p(\pi)>0$ via an entropy barrier, PSD $k$, and bounded $U(\pi)$; cf. \Cref{appA:prelim}, (A1)–(A7)), the flow converges:

\begin{theorem}[Global Convergence of DCR Training, cf. \Cref{appA:flow}, \Cref{thm:global-convergence}]
\label{thm:main_gradflow_convergence_concise}
\textit{Let $\widetilde{J}(p) = J(p) + \varepsilon H[p]$ be strictly concave on the affine simplex (e.g. if $\lambda\alpha+\varepsilon > 0$ and $K$ is PSD) and Assumptions (A1)--(A7) hold. For any $p_0 \in \operatorname{int}\Delta^{S-1}$, the Shahshahani gradient flow $\dot{p}_t = \nabla{_\mathrm{Sh}}\widetilde{J}(p_t)$ has a unique global solution $p_t$, which lies on the interior of the simplex. The objective $\widetilde{J}(p_t)$ is strictly increasing (unless $p_t=p^\star$), and $p_t \to p^\star$ as $t \to \infty$, where $p^\star$ is the unique maximizer of $\widetilde{J}(p)$.}
\end{theorem}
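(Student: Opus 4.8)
The plan is to read \Cref{thm:main_gradflow_convergence_concise} as a convergence statement for the smooth gradient flow of a strictly concave, boundary-repelling potential on a compact polytope, and to run the standard replicator-dynamics argument with the relative entropy to the optimum as Lyapunov function. First I would pin down the target. Since $\mathcal{S}_T$ is finite (A1), $\Delta^{S-1}$ is compact, $\widetilde{J}$ is continuous on it (with the convention $0\log 0=0$ and $U$ bounded), and $\widetilde{J}$ is strictly concave on the affine hull by hypothesis (using $\lambda\alpha+\varepsilon>0$, $K$ PSD, and \Cref{prop:main_strict_concavity_D_concise}), a maximizer $p^\star$ exists and is unique. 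Interiority is forced by the barrier: moving an infinitesimal amount of mass onto a zero coordinate increases the entropy with unbounded marginal gain ($\tfrac{d}{dt}(t\log t)=1+\log t\to-\infty$ as $t\to 0^+$), so no face can contain the maximizer; hence $p^\star\in\operatorname{int}\Delta^{S-1}$, characterized by the first-order condition that the effective fitness $\widetilde{F}^\star(\pi):=\tfrac{\delta\widetilde{J}}{\delta p(\pi)}\big|_{p^\star}$ is independent of $\pi$, so $p^\star$ is a rest point of the replicator field $\dot p(\pi)=p(\pi)(\widetilde{F}(\pi)-\E_p[\widetilde F])$ (with $\widetilde F$ the fitness of the main text augmented by the barrier term $\varepsilon(-1-\log p(\pi))$). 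On $\operatorname{int}\Delta^{S-1}$ this field is $C^\infty$, so Picard--Lindel\"of gives, for every interior $p_0$, a unique maximal solution $p_t$ on some $[0,T_{\max})$.

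The crux is confining the flow to a fixed compact subset of the interior, which simultaneously yields global existence. I would use $V(t):=\KL{p^\star}{p_t}=\sum_\pi p^\star(\pi)\log\tfrac{p^\star(\pi)}{p_t(\pi)}$, finite at $t=0$ since $p_0$ is interior. Differentiating along the flow and using $\sum_\pi p^\star(\pi)=1$,
$$\dot V(t)=-\sum_\pi p^\star(\pi)\,\frac{\dot p_t(\pi)}{p_t(\pi)}=-\bigl(\E_{p^\star}[\widetilde F_t]-\E_{p_t}[\widetilde F_t]\bigr)=-\langle\nabla\widetilde{J}(p_t),\,p^\star-p_t\rangle.$$
By concavity of $\widetilde{J}$, $\langle\nabla\widetilde{J}(p_t),p^\star-p_t\rangle\ge\widetilde{J}(p^\star)-\widetilde{J}(p_t)\ge 0$, so $\dot V(t)\le-(\widetilde{J}(p^\star)-\widetilde{J}(p_t))\le 0$, strictly negative unless $p_t=p^\star$. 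Hence $V(t)\le V(0)<\infty$ for all $t<T_{\max}$; since $p^\star$ has full support, a uniform bound on $V$ forces $\min_\pi p_t(\pi)$ above a positive constant depending only on $V(0)$, so $p_t$ stays in a fixed compact $K_0\subset\operatorname{int}\Delta^{S-1}$. A maximal solution of a smooth ODE that never leaves a fixed compact subset of its domain is global, so $T_{\max}=\infty$ and $p_t\in\operatorname{int}\Delta^{S-1}$ for all $t\ge 0$.

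Monotonicity and convergence then follow quickly. Along the flow $\tfrac{d}{dt}\widetilde{J}(p_t)=\sum_\pi\widetilde F_t(\pi)\dot p_t(\pi)=\Var_{p_t}(\widetilde F_t)\ge 0$, which vanishes iff $\widetilde F_t$ is constant on $\operatorname{supp}p_t=\mathcal{S}_T$, i.e.\ iff $p_t=p^\star$; and if $p_0\ne p^\star$ then $p_t\ne p^\star$ for all $t$ by uniqueness of solutions, so $\widetilde{J}(p_t)$ is strictly increasing. Being monotone and bounded above by $\widetilde{J}(p^\star)$, it converges to some $\ell\le\widetilde{J}(p^\star)$. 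Integrating the Lyapunov inequality gives $\int_0^\infty(\widetilde{J}(p^\star)-\widetilde{J}(p_t))\,dt\le V(0)<\infty$; since the integrand decreases to $\widetilde{J}(p^\star)-\ell$, finiteness of the integral forces $\ell=\widetilde{J}(p^\star)$. Finally, by compactness every subsequence of $(p_t)$ has a limit $q\in K_0$ with $\widetilde{J}(q)=\widetilde{J}(p^\star)$, hence $q=p^\star$ by strict concavity; a bounded trajectory with a unique subsequential limit converges, so $p_t\to p^\star$. (Equivalently, a LaSalle argument: the $\omega$-limit set is flow-invariant and $\widetilde{J}$ is constant on it, but $\widetilde{J}$ strictly increases off $p^\star$, so the $\omega$-limit set is $\{p^\star\}$.)

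I expect the main obstacle to be precisely the boundary analysis of the second step --- ruling out that the trajectory approaches a face, and extracting a uniform lower bound on all coordinates. The two facts doing the work are that $p^\star$ is interior (so $\KL{p^\star}{\cdot}$ is coercive at every face) and that $\KL{p^\star}{p_t}$ is a bona fide Lyapunov function for this flow; both rely on the $+\varepsilon H[p]$ barrier (and/or $\lambda\alpha>0$). Without it the argument would only give convergence to the minimal face containing the maximizer rather than to an interior point, and global existence would need a separate escape-time argument.
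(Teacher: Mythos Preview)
Your proof is correct and takes a genuinely different route from the paper. The paper (Appendix~\ref{appA:flow}, Theorem~\ref{thm:global-convergence}) confines the trajectory via a log--ratio ODE analysis (Lemma~\ref{lem:logratio}): writing $z_{ij}=\log(p_i/p_j)$, it derives $\dot z_{ij}=-A\,z_{ij}+c_{ij}(p_t)$ with $|c_{ij}|\le B$ and $A=\varepsilon+\lambda\alpha+\beta_{\mathrm{KL}}>0$, which yields a time--uniform floor $\delta=1/(Se^{M})$ depending only on problem constants (not on $p_0$ or $p^\star$). It then exploits $A$--strong concavity on the affine simplex to obtain a Polyak--\L{}ojasiewicz inequality and an explicit exponential rate $\|p_t-p^\star\|_2\le C\,e^{-A\delta t}$.

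Your argument is the classical replicator/ESS Lyapunov route: $\KL{p^\star}{p_t}$ is nonincreasing by plain concavity, and its sublevel sets are compactly contained in $\operatorname{int}\Delta^{S-1}$ because $p^\star$ has full support. This is more elementary and needs only \emph{strict} (not strong) concavity to conclude convergence; the trade--offs are that you obtain no rate and your coordinate floor depends on $p_0$ and on $\min_\pi p^\star(\pi)$ rather than on the problem data alone. For the main--text statement, which does not claim a rate, your proof is fully adequate; the paper's extra machinery is there to deliver the sharper exponential bound recorded in the appendix.
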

Thus, DCR training with its explicit diversity energy functional provably converges to a unique policy $p^\star$ that balances utility, diversity, and regularization.

\subsection{Parametric Realization and Scalability}
\label{subsec:parametric_realization_concise}

\paragraph{Parametric Realization.} In practice, LLMs are function approximators. For tractability, we represent LLMs as a parameterization over policies $p_\theta(\pi)$ via a softmax over logits $\theta_\pi$, so that for any target policy $p^\star \in \operatorname{int}\Delta^{S-1}$, there exists a unique set of (gauge-fixed) logits $\theta^\star$ such that $p_{\theta^\star}=p^\star$, making the parametric form sufficiently expressive (cf. \Cref{appB:softmax}, \Cref{prop:B-diffeo}). To ensure numerical stability and align with the theoretical requirement of $p_\theta(\pi) > \delta_\star > 0$, we assume the use of projection or clipping, which constrain policies to a trimmed simplex (cf. \Cref{appB:parametric}). The properties of these parameterized policies and their gradients under stochastic optimization are detailed in \Cref{appB:parametric} and underpin the analysis of noise effects in \Cref{subsec:stochastic_collapse_concise}.

\paragraph{Scalability.} Training is performed with stochastic gradient descent on $\theta$. The kernel coverage term $Q[p_\theta]$, even though it may be intensive to fully realize, can be efficiently managed in this setting. For a mini-batch of $B$ sampled traces, an unbiased estimate of the gradient of $Q[p_\theta]$ can be computed via a U-statistic, with a computational cost of $O(B^2)$ per step. This quadratic complexity is standard in contrastive and metric learning methods. Practical kernel design strategies, including embedding-based kernels and gating mechanisms to focus diversity on correct traces, are discussed in \Cref{sec:kernel}.

\section{Collapse Under Scalar Objectives}
\label{sec:collapse}

While the DCR framework (\Cref{sec:framework-formal}) encompasses regularization terms, a typical LLM training pipeline often defaults to simpler, scalar-driven objectives. These scenarios correspond to DCR with a negligible diversity energy coefficient ($\lambda \approx 0$) and a purely entropic diversity term with a small weight ($\beta=0$, small $\lambda\alpha$). 

This section provides a dynamical systems analysis of these “scalar objective” cases, demonstrating how they lead to distinct and predictable modes of diversity collapse. This analysis culminates in the \textbf{Diversity Decay Theorem}, which formally characterizes these failure modes and motivates the necessity of the full DCR objective.

\subsection{Scalar-Driven Dynamics: The SRCT Framework}
\label{subsec:srct_framework_concise}

When diversity energy is minimal, the policy $p(t)$ evolves according to the replicator-entropy flow (formally derived in \Cref{appD:STaRSRCT,appE:GRPO-SRCT,appF:DPO-SRCT}):
\begin{align}
\label{eq:srct_flow_maintext_concise}
    \dot p_\pi(t) = &p_\pi(t)\bigl(\phi_\pi(p(t)) - \bar\phi(p(t))\bigr) \\&- \varepsilon\,p_\pi(t)\bigl(\log p_\pi(t) - \langle\log p(t)\rangle_{p(t)}\bigr), \nonumber
\end{align}
where $\phi_\pi(p)$ is the trace score derived from the utility and any KL term, $\bar\phi(p)$ is its mean, and $\varepsilon \ge 0$ is the effective entropic weight (e.g., $\varepsilon = \varepsilon_{\mathrm{base}} + \lambda\alpha$).

The key diagnostic for diversity dynamics is the evolution of $$z_{ij}(t) = \log(p_i(t)/p_j(t)),$$ the log-ratio between two traces, which follows the ODE (cf. \Cref{appD:STaRSRCT,appE:GRPO-SRCT,appF:DPO-SRCT}):
\begin{equation}
\label{eq:log_ratio_ode_maintext_concise}
    \frac{d}{dt} z_{ij}(t) = \left(\phi_i(p(t)) - \phi_j(p(t))\right) - \varepsilon z_{ij}(t).
\end{equation}
This equation reveals that diversity dynamics is driven by two competing forces: selective pressure from score differences, which can negatively impact diversity, and entropic damping, which always pushes log-ratios towards zero (equalization).

\subsection{Deterministic Diversity Decay (Small \texorpdfstring{$\varepsilon$}{epsilon to 0})}
\label{subsec:deterministic_collapse_concise}

In the pure-selection limit where $\varepsilon \to 0$, the raw effect of scalar rewards becomes apparent. While incorrect traces are universally suppressed due to their lower utility (cf. \Cref{appD:STaRSRCT,appE:GRPO-SRCT,appF:DPO-SRCT}), the diversity among \textit{correct} traces ($\pi \in \mathcal{C}$) evolves in three distinct, algorithm-specific modes:

\begin{itemize}[leftmargin=*]
    \item \textbf{STaR: ``Winner-Takes-All'' Collapse.} For two correct traces $a, b \in \mathcal{C}$, the score difference is $\phi_a(p) - \phi_b(p) = (p_a - p_b)/\rho(t)$, where $\rho(t)$ is the total mass on correct traces. The log-ratio dynamics become $\frac{d}{dt}\log\frac{p_a}{p_b} = (p_a - p_b)/\rho(t)$ (see \Cref{appD:STaRSRCT}). 
    
    Any initial random advantage for trace $a$ ($p_a(0) > p_b(0)$) creates a positive feedback loop, causing $p_a/p_b \to \infty$ and leading to a rapid, deterministic collapse onto a single dominant correct solution.
    \item \textbf{GRPO: ``Proportional Curation'' \& Drift Vulnerability.} For correct traces $a,b \in \mathcal{C}$, GRPO's score design results in $\phi_a(p) - \phi_b(p) = 0$. The log-ratio dynamics become $\frac{d}{dt}\log\frac{p_a}{p_b} \approx 0$ (see \Cref{appE:GRPO-SRCT}). 
    
    This preserves the initial relative probabilities of correct traces, creating a neutrally stable manifold. However, this provides no active protection for diversity, making the policy vulnerable to stochastic drift from mini-batch sampling.
    \item \textbf{DPO: ``Equalization'' \& Homogenization.} For two correct traces $a,b \in \mathcal{C}$, the score difference is $\phi_a(p) - \phi_b(p) = g_\beta(\log p_a) - g_\beta(\log p_b)$, where $g_\beta(\cdot)$ is a strictly decreasing function (see \Cref{appF:DPO-SRCT}). Since $\frac{d}{dt}\log\frac{p_a}{p_b}$ has the opposite sign of $\log\frac{p_a}{p_b}$, this dynamic actively drives $p_a/p_b \to 1$. 
    
    DPO thus homogenizes the probability distribution across the set of preferred traces, but it does not promote targeted semantic diversity between conceptually different solutions (thereby pushing probability mass towards longer traces).
\end{itemize}

\subsection{Stochastic Dynamics: Fixation Under Noise}
\label{subsec:stochastic_collapse_concise}

In practice, training is stochastic. The discrete mini-batch updates converge to a Wright-Fisher-type stochastic differential equation (SDE) in the diffusion limit (formally derived in \Cref{appH:stochastic}, \Cref{thm:limits}):
\[
\mathrm{d} p_i = F_i(p)\,\mathrm{d}t + \frac{1}{\sqrt{B}}\left(\sqrt{p_i}\,\mathrm{d}W_i - p_i\sum_k \sqrt{p_k}\,\mathrm{d}W_k\right),
\]
where $F_i(p)$ is the deterministic drift and $B$ is the batch size. Such a random effect from batching can result in noise-induced collapse:

\begin{itemize}[leftmargin=*]
    \item \textbf{STaR:} The strong ``winner-takes-all’’ dynamic is robust, and noise results only on minor perturbations around the deterministic collapse trajectory.
    \item \textbf{GRPO:} The neutral stability is fragile. Stochastic fluctuations introduce random selective pressure, causing the policy to drift along the manifold of correct solutions until it fixates on a corner or a small subset, leading to diversity collapse in this algorithm.
    \item \textbf{DPO:} While equalization is the deterministic tendency, noise can break symmetries and result in convergence to a state where a subset of solutions dominates, even if they are semantically redundant.
\end{itemize}
Although a small $\varepsilon$ ensures the policy remains in the interior ($\min p_i(t) > \delta_\star > 0$), the SDE admits a unique invariant measure $\pi_\infty$ (\Cref{appH:stochastic}, \Cref{thm:ref-erg}). For small $\varepsilon$, this measure concentrates in high-utility, low-diversity regions, as the stationary distribution is heavily influenced by the utility landscape (\Cref{appH:stochastic}, \Cref{cor:gradient}). Batch noise does not increase diversity; it often accelerates fixation.

\subsection{Synthesis: The Diversity Decay Theorem}
\label{subsec:inevitable_collapse_synthesis_concise}

The analyses of both the deterministic and the stochastic dynamics converge on the conclusion that scalar-driven objectives with minimal entropic regularization are fundamentally insufficient to maintain a creative repertoire of reasoning strategies. This leads to our main diagnostic result.

\begin{theorem}[Diversity Decay Theorem]
\label{thm:diversity_decay}
\textit{Under scalar-objective training (DCR with $\lambda \approx 0$ or $\beta=0$), policies exhibit algorithm-specific modes of diversity decay among correct traces:}
\begin{enumerate}[label=(\roman*),itemsep=1pt]
    \item \textbf{STaR} \textit{follows a “winner-takes-all” dynamics, deterministically collapsing onto a single dominant correct trace.}
    \item \textbf{GRPO} \textit{evolves on a neutrally stable manifold of correct traces, leading to stochastic drift and eventual fixation on a low-diversity subset.}
    \item \textbf{DPO} \textit{actively homogenizes probabilities across high-utility traces, leading to equalization instead of structured semantic diversity.}
\end{enumerate}
\textit{Minimal entropy ($\varepsilon \ll 1$) does not prevent these outcomes and finite-batch noise can accelerate collapse.}
\end{theorem}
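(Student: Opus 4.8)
The strategy is to route all three cases through the single scalar diagnostic: the log-ratio ODE~\eqref{eq:log_ratio_ode_maintext_concise} for $z_{ab}(t)=\log(p_a(t)/p_b(t))$ between two correct traces $a,b\in\Cset$, after first reducing the problem to the face of the simplex supported on $\Cset$. For the reduction I would show that any $\pi\notin\Cset$ has strictly sub-average fitness $\phi_\pi(p)<\bar\phi(p)$ as long as the correct-trace mass $\rho(t)=\sum_{\pi\in\Cset}p_\pi(t)$ is below $1$: this holds because $U$ is maximal exactly on $\Cset$ while the KL and (small) entropic corrections are uniformly bounded on the trimmed simplex, so their contribution is dominated by the utility gap. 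Feeding this into the replicator-entropy flow~\eqref{eq:srct_flow_maintext_concise} gives $\sum_{\pi\notin\Cset}p_\pi(t)\to 0$ and makes the dynamics asymptotic to the restricted flow on $\{p:\operatorname{supp}p\subseteq\Cset\}$, as spelled out in \Cref{appD:STaRSRCT,appE:GRPO-SRCT,appF:DPO-SRCT}. It then suffices to study diversity \emph{within} $\Cset$.

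For STaR (part (i)), substituting the score gap $\phi_a-\phi_b=(p_a-p_b)/\rho$ into~\eqref{eq:log_ratio_ode_maintext_concise} at $\varepsilon\to 0$ gives $\dot z_{ab}=(p_a-p_b)/\rho$; since $\operatorname{sign}(p_a-p_b)=\operatorname{sign}(z_{ab})$ and $\rho>0$, the function $V=\tfrac12 z_{ab}^2$ is nondecreasing and strictly increasing off the diagonal, so any initial advantage is amplified. To upgrade this to convergence to a vertex I would observe that the restricted flow is a replicator equation with fitness $\phi_\pi=p_\pi/\rho$ linear and increasing in own mass, for which $\sum_{\pi\in\Cset}p_\pi^2$ is a strict Lyapunov function whose maximizers on the face are exactly the corners; hence $p|_\Cset$ converges to a single correct trace. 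For DPO (part (iii)), the gap is $\phi_a-\phi_b=g_\beta(\log p_a)-g_\beta(\log p_b)$ with $g_\beta$ strictly decreasing, and since $\log p_a-\log p_b=z_{ab}$ this gap has sign opposite to $z_{ab}$; therefore $\dot V=z_{ab}\bigl(g_\beta(\log p_a)-g_\beta(\log p_b)\bigr)-\varepsilon z_{ab}^2\le 0$, strictly negative for $z_{ab}\neq 0$. On the trimmed simplex $\log p_a,\log p_b\in[\log\delta_\star,0]$, so uniform strict monotonicity of $g_\beta$ on this compact interval yields $\dot V\le-\psi(|z_{ab}|)$ with $\psi(r)>0$ for $r>0$, and LaSalle forces $z_{ab}\to 0$ for every correct pair --- equalization, not kernel-weighted separation.

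For GRPO (part (ii)), $\phi_a-\phi_b\equiv 0$ makes~\eqref{eq:log_ratio_ode_maintext_concise} collapse to $\dot z_{ab}=-\varepsilon z_{ab}$, so at $\varepsilon=0$ every log-ratio is conserved and the relative interior of the face over $\Cset$ is a manifold of neutrally stable equilibria with no restoring force in any direction. To show that sampling destroys this neutrality I would pass to the Wright--Fisher SDE of \Cref{subsec:stochastic_collapse_concise} (\Cref{appH:stochastic}, \Cref{thm:limits}) restricted to $\Cset$: with zero drift on the face, $z_{ab}$ is a time-changed driftless diffusion with variance $\asymp(1/p_a+1/p_b)/B$, reflected only at the trimming boundary, so a martingale / Foster--Lyapunov argument drives the process to low-diversity subsets and makes the invariant measure concentrate there as $\varepsilon\to 0$ (\Cref{appH:stochastic}, \Cref{thm:ref-erg}, \Cref{cor:gradient}); moreover expanding the finite-$B$ mini-batch update inside~\eqref{eq:log_ratio_ode_maintext_concise} produces an $O(1/B)$ correction to the drift that is generically non-neutral, i.e.\ it injects directed selective pressure that \emph{accelerates} fixation. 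The final clause ($\varepsilon\ll 1$ does not help) is then immediate: near a STaR vertex the selective term $(p_a-p_b)/\rho\to 1$ while the damping $-\varepsilon z_{ab}$ grows only like $\varepsilon\log(1/\delta_\star)$, so the balance $\dot z_{ab}=1-\varepsilon z_{ab}$ sits at the boundary once $\varepsilon\ll 1/\log(1/\delta_\star)$; and in the GRPO/DPO cases the Ornstein--Uhlenbeck balance gives stationary log-ratio variance $\Theta(1/(B\varepsilon))$, consistent with fixation rather than restored breadth.

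The cleanest parts are STaR and DPO, which are essentially one-dimensional Lyapunov arguments in the $z_{ab}$ coordinate supplied by~\eqref{eq:log_ratio_ode_maintext_concise}. I expect the main obstacle to be GRPO: turning ``zero drift on a neutral manifold'' into a rigorous statement of stochastic fixation requires the diffusion-limit SDE, a hitting-time or invariant-measure-concentration estimate near the trimmed boundary $\delta_\star$, and quantitative control of how the $O(1/B)$ drift correction breaks neutrality --- all of which rest on the stochastic machinery of \Cref{appH:stochastic} rather than on the deterministic ODE alone, and which also require care that the reflecting boundary at $\delta_\star$ is reconciled with the ``fixation on a low-diversity subset'' (rather than exact-corner) conclusion.
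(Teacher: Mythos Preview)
Your proposal is correct and follows essentially the same route as the paper: both reduce to the log-ratio ODE~\eqref{eq:log_ratio_ode_maintext_concise} on the correct face, instantiate the score gaps $\phi_a-\phi_b$ algorithm by algorithm exactly as you do, and invoke the Wright--Fisher diffusion of \Cref{appH:stochastic} for the GRPO stochastic-drift clause. Your explicit Lyapunov choices ($V=\tfrac12 z_{ab}^2$ and $\sum_{\pi\in\Cset}p_\pi^2$ for STaR; the monotone-$g_\beta$ contraction for DPO) are slightly more packaged than the paper's appendix derivations but are the same arguments, and your identification of the GRPO stochastic step as the only nontrivial ingredient matches where the paper's own rigor is thinnest.
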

\textbf{Scope Note:} This theorem characterizes the decay modes for STaR, GRPO, and DPO; it is not a general statement about every scalar-only objective.

The defined diversity-trajectories highlight the need for a more structured lever to influence the dynamics. The failure does not lie in the optimization process itself, but rather in the objective, which lacks an explicit, strong enough force that rewards structured diversity. This motivates the introduction of the DCR objective, specifically its diversity energy functional $\mathcal{D}[p]$, as a mechanism to counteract these modes and actively carve a rich and creative policy landscape.

\section{The Diversity Energy Effect on the Equilibrium Structure}
\label{sec:phase}

Scalar objectives, as demonstrated in \Cref{sec:collapse}, lead to a degeneration in reasoning diversity. The DCR framework provides a solution by incorporating a \textbf{diversity energy functional}, $\mathcal{D}[p]$. It reshapes the optimization landscape, altering the learning dynamics toward different equilibria: those that contain various simultaneously correct and diverse traces. This section details how DCR’s diversity regularizer achieves this shift.

\subsection{From Collapse to Structured Diversity}
\label{subsec:dcr_guarantee_concise}

With its full objective $J(p) = \mathcal{U}[p] + \lambda \mathcal{D}[p] - \beta_{\mathrm{KL}}\, \mathrm{KL}(p\Vert p_{\mathrm{base}})$ and a diversity weight $\lambda > 0$, DCR leverages the diversity energy
\[
    \mathcal{D}[p] = \alpha H[p] - \beta Q[p].
\]

\subsection{The Dual Levers of Diversity Energy: Shaping \texorpdfstring{$p^\star$}{p*}}
\label{subsec:dual_levers_concise}

The specific structure of the equilibrium $p^\star$ with a diversity weight is shaped by the two components of the diversity energy, $\lambda\mathcal{D}[p] = \lambda\alpha H[p] - \lambda\beta Q_{eff}[p]$. For practical applications, the quadratic term can incorporate an \textbf{effective kernel} $k_{eff}(\pi, \pi') := R(\pi)R(\pi')k_{sem}(\pi, \pi')$, which gates a semantic kernel $k_{sem}$ with a verifier $R(\pi)=\mathbf{1}{\pi \in \mathcal{C}}$ to focus the diversity pressure only on correct traces $\mathcal{C}$ (see \Cref{app:kernel_strategies_srct}, \Cref{subsec:effective_semantic_kernel}).

\begin{enumerate}
    \item \textbf{Entropic Pressure ($\lambda\alpha H[p]$):} The entropic pressure promotes probabilistic breadth. It is the simplest mechanism for encouraging the equalization of probabilities among correct traces, at the cost of also promoting incorrect ones (\Cref{app:kernel_strategies_srct}).

    \item \textbf{Kernel-Driven Structural Diversity ($-\lambda\beta Q_{eff}[p]$):} This term penalizes $p^\star$ for concentrating mass on sets of correct traces that are semantically similar (as defined by $k_{sem}$). It therefore actively promotes structural or semantic diversity among distinct, valid reasoning paths (\Cref{app:kernel_strategies_srct}). Entropy alone cannot achieve this structured outcome.
\end{enumerate}

\subsection{Balancing Correctness and Structured Diversity at Equilibrium}
\label{subsec:equilibrium_balance_concise}

The DCR equilibrium $p^\star$ is characterized by the first-order condition $U_\pi - 2\lambda\beta (K_{eff}p^\star)_\pi - \varepsilon_{total} \log p_\pi^\star \approx \text{Constant}$ (ignoring KL terms and gauge constants; see \Cref{sec:practical}). A crucial consequence for incorrect traces $i \in \mathcal{I}$ (where $(K_{eff}p^\star)_i=0$ and $U_i=0$) and correct traces $c \in \mathcal{C}$ (where $U_c=1$) is the exact equilibrium ratio (cf. \Cref{sec:practical}):
\[
    \frac{p_i^\star}{p_c^\star} \approx \exp\left(-\frac{1 - 2\lambda\beta (K_{eff}p^\star)_c}{\varepsilon_{total}}\right).
\]
This identity reveals a central trade-off. To effectively suppress incorrect traces, the exponent's numerator, $1 - 2\lambda\beta (K_{eff}p^\star)_c$, must be substantially positive. This provides a clear heuristic for tuning the kernel weight: \textbf{the kernel penalty among correct traces should not overwhelm the unit utility gain}, i.e., $2\lambda\beta (K_{eff}p^\star)_c < 1$.

At the same time, while a larger $\varepsilon_{total}$ (from a larger $\lambda\alpha$) aids equalization among correct traces, it also increases the denominator of the exponent, thereby weakening the suppression of incorrect traces. A careful choice of $\lambda\alpha$ and $\lambda\beta$ is therefore essential to steer this trade-off and achieve a ``phase’’ where incorrect traces are suppressed while a rich, diverse set of correct solutions thrives.

\section{The Creativity Kernel}
\label{sec:kernel}

The preceding sections established that DCR’s diversity energy, $\mathcal{D}[p] = \alpha H[p] - \beta Q[p]$, is pivotal in guiding learning towards equilibria $p^\star$ that are diverse and stable (\Cref{sec:phase}). While the entropy component, $\alpha H[p]$, provides naive probabilistic breadth, it is intrinsically ``blind’’ to the content and structure of reasoning traces. This section explains how to build the kernel-based component $-\beta Q[p]$ to provide a plausible, grounded mechanism for developing LLMs with structured, semantic diversity.

\subsection{Limitations of Entropic Diversity}
\label{subsec:entropy_limitations}

$H[p]$'s utility for promoting genuine creativity is limited because it operates solely on trace probabilities, irrespective of their content or conceptual underpinnings. It cannot, for instance, distinguish a set of solutions that are mere syntactic rephrasings of a single idea from a set representing truly distinct problem-solving strategies. 

Entropy alone is insufficient for structured creativity; without a mechanism to differentiate valuable novelty from trivial variation, it also preserves probability mass on incorrect traces, hindering optimization of correctness. To generate correct, structurally varied solutions, an LLM requires a mechanism that appreciates and actively promotes semantic dissimilarity rather than merely probabilistic dispersion.

\subsection{Sculpting Semantic Diversity}
\label{subsec:kernel_role_desiderata}

The kernel quadratic term $Q[p] = \sum_{\pi,\pi' \in \mathcal{S}_T} k(\pi,\pi') p(\pi) p(\pi')$ within DCR is designed to fill this critical gap. The \textbf{creativity kernel} $k(\pi,\pi')$ is a symmetric, positive semi-definite (PSD) function that quantifies the ``similarity'' or ``redundancy'' between traces $\pi$ and $\pi'$. By including $-\beta Q[p]$ (for $\beta > 0$) in the diversity energy, DCR explicitly penalizes policies that concentrate probability on sets of traces deemed highly similar by $k$.

As explored in \Cref{app:kernel_strategies_srct} (\Cref{sec:ideal}), an ideally engineered kernel could, in principle, sculpt a highly specific target equilibrium $p^\star$. Achieving this, however, would require the kernel to satisfy stringent, globally defined, and equilibrium-dependent conditions (cf. \Cref{app:kernel_strategies_srct}, \Cref{prop:kkt-gap}). While this idealized scenario underscores the deep, direct influence of $k(\pi,\pi')$ on the policy structure $p^\star$, its practical realization is typically infeasible. This motivates the shift towards more practical, learnable semantic kernels.

\subsection{Practical Design of the Semantic Kernel}
\label{subsec:effective_semantic_kernel}

A more pragmatic and powerful DCR strategy, detailed in \Cref{app:kernel_strategies_srct} (\Cref{sec:practical}), must utilize a learnable \textit{semantic kernel} $k_{sem}(\pi,\pi')$ as its foundation. This $k_{sem}$ should be able to capture meaningful similarities between traces. To ensure this semantic guidance is applied judiciously, DCR adopts an \textit{effective kernel}, $k_{eff}(\pi, \pi')$:
$$k_{eff}(\pi, \pi') := R(\pi)R(\pi')k_{sem}(\pi, \pi'),$$
where $R(\pi)=\mathbf{1}\{\pi \in \cC\}$ is a binary verifier for correct traces $\cC$. The kernel coverage term thus becomes $Q_{eff}[p] = \sum_{c,c' \in \cC} p_c p_{c'} k_{sem}(c,c')$. This construction focuses the diversity-promoting penalty $-\lambda\beta Q_{eff}[p]$ exclusively on interactions \textit{among correct traces}, promoting \textbf{targeted diversity:} it encourages the model to find diverse \textit{valid solutions}, rather than rewarding ``diverse ways to be wrong,'' as incorrect traces do not participate in the kernel interactions that shape diversity (recall $(K_{eff}p^\star)_i = 0$ for $i \in \cI$ from \Cref{subsec:equilibrium_balance_concise}).

Practical examples of $k_{sem}$ can include \textbf{embedding-based kernels,} where we compute an embedding for each trace (e.g., sentence-level embeddings over the full chain of thought) and apply a standard PSD kernel on those, or \textbf{domain-tailored kernels,} in structured tasks like mathematics, where $k_{sem}$ can be learned using structural proximity (e.g., from proof-step or lemma dependency graphs), so that similarity reflects shared \textit{strategy} rather than just surface-level wording.

\subsection{Implementation and Desiderata}
\label{subsec:implementation_desiderata}

The kernel term can be readily integrated into standard training loops. For SGD, the gradient of $Q_{eff}[p]$ can be estimated with the mini-batch of $B$ sampled traces. The quadratic nature of $Q_{eff}[p]$ admits a U-statistic estimator with $O(B^2)$ per-step cost, a manageable complexity in the context of LLM training.

The efficacy of kernel-driven diversity inherently depends on the quality of the learned $k_{sem}(\pi,\pi')$. Key desiderata for its design include (cf. \Cref{subsec:effective_semantic_kernel}): \textbf{(1) Intra-Lump Coherence} or high similarity for traces belonging to the same essential category or “lump” of solutions (ignoring syntactic differences); and \textbf{(2) Inter-Lump Discrimination:} It must assign low similarity to traces from qualitatively different correct problem-solving approaches.

\section{Concluding Insights}
\label{sec:discussion}

Scalar reward maximization leads to a collapse of strategic diversity. This paper has established a principled remedy: \textbf{Distributional Creative Reasoning (DCR)}, which recasts training as a gradient flow on the policy simplex.

Our \textbf{Diversity Decay Theorem} offers a precise diagnosis, predicting algorithm-specific collapse modes—\emph{winner-takes-all} (STaR), \emph{neutral drift} (GRPO), and \emph{homogenization} (DPO). The DCR framework counteracts this decay by incorporating a \textbf{diversity energy functional}, $\mathcal{D}[p] = \alpha H[p] - \beta Q[p]$. We proved this ensures convergence to a unique, stable, and interior policy $p^\star$.

DCR provides concrete design levers. The creativity kernel, particularly when gated to correct traces via an effective kernel $k_{\mathrm{eff}}$, actively promotes novel, valid strategies. Tuning the balance between entropic breadth ($\alpha$) and kernel-driven diversity ($\beta$) allows practitioners to navigate the trade-off between equalization and the suppression of incorrect traces, as quantified by our equilibrium analysis.

\subsection{Testable Predictions}

Our theoretical framework yields a set of concrete, falsifiable predictions that align with existing empirical observations:

\begin{enumerate}[leftmargin=*,itemsep=2pt,topsep=2pt]
    \item \textbf{Algorithm-Specific Decay Modes.} Under scalar-only objectives:
        \begin{itemize}
            \item \textbf{STaR} exhibits \emph{winner-takes-all} fixation on a single successful strategy.
            \item \textbf{GRPO} shows \emph{neutral drift} among correct traces, leading to a stochastic erosion of diversity.
            \item \textbf{DPO} will act as an \emph{entropy equalizer}, homogenizing probabilities across preferred traces.
        \end{itemize}
    \item \textbf{Kernel Sufficiency for Structured Diversity.}
        \begin{itemize}
            \item An \textbf{entropy-only} approach ($\beta=0, \alpha>0$) preserves indiscriminate policy breadth at the cost of correctness.
            \item A \textbf{kernel-inclusive} approach ($\beta>0$) can not only prevent collapse but will also measurably increase the semantic diversity among correct solutions.
        \end{itemize}
\end{enumerate}

\paragraph{Acknowledgements.} The authors would like to acknowledge and thank their funders, where Max Ruiz Luyten is funded by AstraZeneca. Moreover, we would like to warmly thank all the anonymous reviewers, alongside research group members of the van der Schaar lab (\hyperlink{www.vanderschaar-lab.com}{www.vanderschaar-lab.com}), for their valuable input, comments, and suggestions as the paper was developed. We used ChatGPT and Gemini to edit and polish the text and for coding assistance.

\bibliographystyle{plainnat}
\bibliography{references}

\clearpage\appendix\onecolumn

\section{Mathematical Foundations and Problem Formalism}
\label{appA:root}

This appendix fixes notation and geometric conventions on the simplex, records canonical inequalities and curvature facts for the objective slices (entropy/KL/kernel), develops the Shahshahani gradient representation, and derives global properties of the induced gradient flows (Lyapunov identity, log–ratio contraction, time–uniform floors/caps, and exponential convergence). It also states a generic Barrier–Dominance (BD) calculus for forward invariance of trimmed domains.

\vspace{0.25em}
\subsection{Preliminaries and Standing Assumptions}
\label{appA:prelim}

\paragraph{Scope \& conventions.}
All logarithms are natural; $0\log 0:=0$. The indicator is $\mathbf 1\{\cdot\}$, and $\langle u,v\rangle$ is the Euclidean inner product. We write $a\lesssim b$ to mean $a\le C\,b$ for an absolute constant $C$; any parameter dependence is displayed as $C(\cdot)$. Sums over traces are with respect to the counting measure on the finite set $\mathcal S_T$.

\renewcommand{\arraystretch}{1.08}
\begin{table}[h]
\centering
\begin{tabular}{@{}ll@{}}
\toprule
\textbf{Symbol} & \textbf{Meaning} \\
\midrule
$x\in\mathcal X$ & Fixed prompt / task instance \\
$\pi\in\mathcal S_T$ & Trace (finite token sequence, length $\le T$) \\
$\mathcal S_T$ & Trace set up to length $T$; $S:=|\mathcal S_T|$ \\
$p(\pi)$ & Policy mass on $\pi$ (probability on $\mathcal S_T$) \\
$\Delta^{S-1}$ & Probability simplex on $\mathcal S_T$ \\
$H[p]$ & Shannon entropy, $-\sum_{\pi}p(\pi)\log p(\pi)$ \\
$\KL{p}{q}$ & Kullback--Leibler divergence, $\sum_{\pi}p(\pi)\log\frac{p(\pi)}{q(\pi)}$ \\
$k(\pi,\pi')$ & Symmetric positive semidefinite kernel on $\mathcal S_T$ \\
$K=[k(\pi,\pi')]$ & Kernel matrix in $\mathbb R^{S\times S}$ \\
$\mathcal D[p]$ & Diversity: $\alpha\,H[p]-\beta\,p^\top Kp$ \\
\bottomrule
\end{tabular}
\end{table}

\paragraph{Standing assumptions.}
\begin{enumerate}[label=\textbf{(A\arabic*)},ref=(A\arabic*)]
\item \label{assum:A1} \textbf{Finite trace space.} $\mathcal S_T$ is finite for a fixed horizon $T<\infty$; policies are $p\in\Delta^{S-1}\subset\mathbb R^S$.
\item \label{assum:A2} \textbf{Interior vs.\ trimmed domain.} Variational derivatives and Shahshahani gradients are taken on $\operatorname{int}\Delta^{S-1}=\{p:\min_\pi p(\pi)>0\}$. When a floor is operative, we work on the trimmed simplex $\Delta^{S-1}_\delta:=\{p\in\Delta^{S-1}:p_i\ge\delta\ \forall i\}$, nonempty iff $\delta\le 1/S$.
\item \label{assum:A3} \textbf{Entropy/KL domains.} $H[p]$ and (when present) $\KL{p}{p_{\mathrm{base}}}$ are defined on the closed simplex; all variational derivatives are computed on $\operatorname{int}\Delta^{S-1}$. Adding $+\varepsilon H$ ($\varepsilon\ge0$) is permitted.
\item \label{assum:A4} \textbf{Kernel regularity and strictness on $T$.} $K=K^\top\succeq0$. Write $T:=\{\mathbf 1\}^\perp$ and $\Pi_T:=I-\tfrac1S\mathbf 1\mathbf 1^\top$. The quadratic slice $-p^\top Kp$ is strictly concave along feasible directions iff $\ker K\cap T=\{0\}$ (equivalently, $\Pi_TK\Pi_T\succ0$ on $T$).
\item \label{assum:A5} \textbf{Bounded utility.} $|U(\pi)|\le U_{\max}<\infty$ on $\mathcal S_T$ whenever $\mathcal U[p]=\sum_\pi U(\pi)p(\pi)$ is used.
\item \label{assum:A6} \textbf{Nonnegative coefficients.} $\alpha,\beta,\beta_{\mathrm{KL}},\lambda,\varepsilon\ge0$ unless noted.
\item \label{assum:A7} \textbf{Base-policy support (for KL).} If $\KL{p}{p_{\mathrm{base}}}$ is present, assume $p_{\mathrm{base}}(\pi)\ge p_{\mathrm{base},\min}>0$ for all $\pi$.
\end{enumerate}

\paragraph{Norm conventions.}
For vectors: $\|\cdot\|_1$, $\|\cdot\|_2$, $\|\cdot\|_\infty$. For $A\in\R^{S\times S}$: $\|A\|_{2\to2}$ (spectral norm) and $\|A\|_{\infty\to\infty}:=\max_i\sum_j|A_{ij}|$.

\vspace{0.25em}
\subsection{Spaces and Simplex Geometry}
\label{appA:spaces}

\subsubsection{Trace space, simplex, tangent.}
Fix vocabulary $\mathcal V$ and horizon $T\in\mathbb N$.
\[
\mathcal S_T=\{(t_1,\dots,t_\ell):\ 1\le \ell\le T,\ t_i\in\mathcal V\},\qquad S:=|\mathcal S_T|<\infty.
\]
Policies are $p\in\Delta^{S-1}:=\{p\in[0,1]^S:\langle \mathbf 1,p\rangle=1\}$. On $\operatorname{int}\Delta^{S-1}$, feasible directions lie in the affine tangent
\[
T=T_p\Delta^{S-1}=\{v\in\R^S:\langle \mathbf 1,v\rangle=0\}=\{\mathbf 1\}^\perp,
\]
which does not depend on $p$.

\subsubsection{Floors: policy vs.\ effective.}
A chosen floor $\delta\in(0,1/S]$ defines the trimmed simplex $\Delta^{S-1}_\delta=\{p\in\Delta^{S-1}:p_i\ge \delta\ \forall i\}$. Algorithmic clip–renormalize with threshold $\delta_\star\in(0,1]$ induces an \emph{effective floor}
\[
\delta_{\mathrm{eff}}(p)=\frac{\delta_\star}{\sum_{j=1}^S \max\{p_j,\delta_\star\}}
\ \in\ \Bigl[\frac{\delta_\star}{\,1+(S-1)\delta_\star\,},\ \delta_\star\Bigr],
\]
since the denominator ranges from $1$ to $1+(S-1)\delta_\star$ (max at a simplex vertex). The exact clip–renormalize map and logit lift are given in \Cref{appB:parametric}.

\subsubsection{Canonical inequalities.}
\begin{lemma}[Mean–log bounds and entropic Lipschitzness]
\label{lem:canonical-ineq}
Let $p\in\Delta^{S-1}$ and $\langle\log p\rangle:=\sum_i p_i\log p_i$.
\begin{enumerate}[itemsep=2pt,leftmargin=*]
\item \textbf{(Mean–log bounds)} For all $p\in\Delta^{S-1}$,
\(
-\log S\le \langle\log p\rangle\le 0.
\)
\item \textbf{(Entropic Lipschitz on $\Delta^{S-1}_\delta$)} Fix $\delta\in(0,1/S]$ and $\Lambda(\delta):=1+\log(1/\delta)$. For all $p,q\in\Delta^{S-1}_\delta$,
\begin{align}
\|\nabla H(p)-\nabla H(q)\|_2 &\le \frac1\delta\,\|p-q\|_2,\qquad \nabla H(r)=-(\mathbf 1+\log r), \label{eq:gradH-lip}\\
\big\|p\odot(\log p-\langle\log p\rangle)- q\odot(\log q-\langle\log q\rangle)\big\|_2
&\le \Lambda(\delta)\,(2+\sqrt S)\,\|p-q\|_2. \label{eq:cent-entropy-lip}
\end{align}
\end{enumerate}
\end{lemma}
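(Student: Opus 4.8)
The plan is to prove the two statements separately, each by lifting an elementary scalar estimate componentwise. For the \textbf{mean--log bounds}, the upper bound $\langle\log p\rangle\le0$ is immediate since every summand $p_i\log p_i\le0$ (using the convention $0\log0=0$), and the lower bound is exactly the maximum-entropy inequality: applying Jensen to the concave function $\log$ with weights $p_i$ gives $H[p]=\sum_{i:\,p_i>0}p_i\log\tfrac1{p_i}\le\log\!\big(\sum_{i:\,p_i>0}\tfrac{p_i}{p_i}\big)=\log\#\{i:p_i>0\}\le\log S$, i.e.\ $\langle\log p\rangle=-H[p]\ge-\log S$.

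For the \textbf{gradient Lipschitz bound} \eqref{eq:gradH-lip}, observe $\nabla H(p)-\nabla H(q)=-(\log p-\log q)$ entrywise; since $p_i,q_i\ge\delta$ on $\Delta^{S-1}_\delta$, the mean value theorem for $\log$ on the interval between $p_i$ and $q_i$ gives $|\log p_i-\log q_i|\le|p_i-q_i|/\min(p_i,q_i)\le\delta^{-1}|p_i-q_i|$, and squaring and summing over $i$ yields $\|\nabla H(p)-\nabla H(q)\|_2\le\delta^{-1}\|p-q\|_2$. (Equivalently, $\nabla^2H(r)=-\operatorname{diag}(1/r_i)$ has spectral norm $\le1/\delta$ on the trimmed simplex.)

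For the \textbf{centered-entropy Lipschitz bound} \eqref{eq:cent-entropy-lip}, write $\Phi(p):=p\odot(\log p-\langle\log p\rangle\mathbf1)=\psi(p)-\langle\log p\rangle\,p$ with $\psi_i(p):=p_i\log p_i$. The scalar map $t\mapsto t\log t$ has derivative $1+\log t$, which is monotone on $[\delta,1]$ with $|1+\log t|\le\Lambda(\delta)$ at both endpoints (a short check: $|1+\log1|=1\le\Lambda(\delta)$ since $\delta\le1$, and $|1+\log\delta|\le\max(1,\log\tfrac1\delta)\le\Lambda(\delta)$); hence $\|\psi(p)-\psi(q)\|_2\le\Lambda(\delta)\|p-q\|_2$ and, summing coordinates and applying Cauchy--Schwarz, $|\langle\log p\rangle-\langle\log q\rangle|\le\|\psi(p)-\psi(q)\|_1\le\sqrt S\,\Lambda(\delta)\|p-q\|_2$. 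I then split $\langle\log p\rangle p-\langle\log q\rangle q=\langle\log p\rangle(p-q)+(\langle\log p\rangle-\langle\log q\rangle)q$, bounding the first summand via $|\langle\log p\rangle|\le\log S\le\log\tfrac1\delta\le\Lambda(\delta)$ (this is where the hypothesis $\delta\le1/S$ enters) and the second via $\|q\|_2\le\|q\|_1=1$ together with the estimate on $|\langle\log p\rangle-\langle\log q\rangle|$. The triangle inequality then gives $\|\Phi(p)-\Phi(q)\|_2\le\big(\Lambda(\delta)+\Lambda(\delta)+\sqrt S\,\Lambda(\delta)\big)\|p-q\|_2=\Lambda(\delta)(2+\sqrt S)\|p-q\|_2$, as claimed.

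The argument is routine; the only care needed is constant-chasing---the endpoint bound $\sup_{t\in[\delta,1]}|1+\log t|\le\Lambda(\delta)$, the $\|\cdot\|_1\le\sqrt S\,\|\cdot\|_2$ conversion, and the inequality $\log S\le\log(1/\delta)$ from $\delta\le1/S$. The case $S=1$ is vacuous ($\Delta^0$ is a single point), so I do not anticipate any genuine obstacle: the lemma is a collection of standard simplex estimates assembled for use in the subsequent flow analysis.
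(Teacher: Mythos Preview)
Your proof is correct and follows essentially the same approach as the paper. Both decompose the centered-entropy map as $E(p)=p\odot\log p-\langle\log p\rangle\,p$, bound the first piece via the scalar Lipschitz constant $\Lambda(\delta)$ of $t\mapsto t\log t$ on $[\delta,1]$, and split the second piece as a product to extract the $\sqrt S$ factor; the only cosmetic difference is that the paper phrases everything through Jacobians (computing $DG$, $DB$ and invoking the mean-value theorem at the vector level) while you work directly with finite differences, and the paper uses the slightly sharper $|\langle\log r\rangle|\le\Lambda(\delta)-1$ where you use $\Lambda(\delta)$, but both land on the same final constant $\Lambda(\delta)(2+\sqrt S)$.
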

\begin{proof}
(1) Upper bound: each $\log p_i\le 0$. Lower bound: $H(p)$ is maximized at the uniform $u=(1/S)\mathbf 1$ with $H(u)=\log S$.

(2) For \eqref{eq:gradH-lip}, $\nabla^2 H(r)=-\mathrm{diag}(1/r_i)$ on $\operatorname{int}\Delta^{S-1}$ so $\|\nabla^2H(r)\|_{2\to2}\le 1/\delta$ on $\Delta^{S-1}_\delta$, and the mean–value theorem applies.

For \eqref{eq:cent-entropy-lip}, set $E(r):=r\odot(\log r-\langle\log r\rangle)$ and $G(r):=r\odot\log r$. Then $DG(r)[h]=h\odot(1+\log r)$, hence $\|DG(r)\|_{2\to2}\le \Lambda(\delta)$. For $B(r):=\langle\log r\rangle\,r$,
\[
DB(r)[h]=\Big\langle (1+\log r)\odot h\Big\rangle r\ +\ \langle\log r\rangle\,h,
\]
so $\|DB(r)\|_{2\to2}\le \Lambda(\delta)\sqrt S+(\Lambda(\delta)-1)$ because $\|1+\log r\|_2\le \Lambda(\delta)\sqrt S$, $\|r\|_2\le 1$, and $|\langle\log r\rangle|\le \Lambda(\delta)-1$ on $\Delta^{S-1}_\delta$. Therefore $\|DE(r)\|_{2\to2}\le \Lambda(\delta)(2+\sqrt S)$ and the mean–value theorem yields \eqref{eq:cent-entropy-lip}.
\end{proof}

\vspace{0.25em}
\subsection{Functionals: Entropy, KL, Kernel, and Diversity}
\label{appA:functionals}

\subsubsection{Entropy and KL calculus.}
On $\operatorname{int}\Delta^{S-1}$,
\begin{align*}
H[p]&=-\sum_i p_i\log p_i, &
\frac{\delta H}{\delta p_i}&=-(1+\log p_i), &
\nabla^2 H &= -\mathrm{diag}(1/p_i),\\
\KL{p}{q}&=\sum_i p_i\log\frac{p_i}{q_i}, &
\frac{\delta}{\delta p_i}\KL{p}{q}&=1+\log\frac{p_i}{q_i}, &
\nabla^2\KL{p}{q} &= \mathrm{diag}(1/p_i),
\end{align*}
with $q_i>0$ for KL. Both extend continuously to the closed simplex (using $0\log0:=0$).

\subsubsection{Kernel quadratic form.}
For $K=K^\top\succeq0$, set $Q[p]=p^\top Kp$. Then
\[
\nabla(-Q)(p)=-2Kp,\qquad \nabla^2(-Q)=-2K\preceq 0,
\]
so $-Q$ is concave on $\R^S$ and $2\|K\|_{2\to2}$-Lipschitz in gradient. Along any feasible direction $v\in T$, $\tfrac{d^2}{dt^2}[-Q(p_0+tv)]|_{t=0}=-2\,v^\top K v$, hence strict concavity on feasible directions iff $\ker K\cap T=\{0\}$ (equivalently $\Pi_TK\Pi_T\succ0$ on $T$).

\subsubsection{Diversity functional.}
Let $\mathcal D[p]=\alpha H[p]-\beta Q[p]$ with $\alpha,\beta\ge0$. Writing
\(
\kappa_T:=\lambda_{\min}\big((\Pi_TK\Pi_T)\!\mid_T\big)\ge 0,
\)
for all $p\in\operatorname{int}\Delta^{S-1}$ and $v\in T$,
\[
\langle\nabla^2\mathcal D[p]\,v,v\rangle
=\alpha\langle\nabla^2H[p]v,v\rangle-2\beta\,v^\top Kv
\ \le\ -\big(\alpha+2\beta\kappa_T\big)\,\|v\|_2^2.
\]
Thus $\mathcal D$ is concave, $\alpha$–strongly concave on the affine simplex if $\alpha>0$, and strictly concave along feasible directions when $\alpha=0$, $\beta>0$, and $\kappa_T>0$.

\vspace{0.25em}
\subsection{Barriers and Interiority}
\label{appA:barriers}

\subsubsection{Entropy/KL barriers exclude boundary maximizers.}
\begin{proposition}[Interior maximizers]\label{prop:barriers}
Let $J$ be concave on $\Delta^{S-1}$.
\begin{enumerate}[leftmargin=*,itemsep=2pt]
\item For any $\varepsilon>0$, $\widetilde J(p):=J(p)+\varepsilon H[p]$ is strictly concave on $\operatorname{int}\Delta^{S-1}$ and attains its unique maximum at an interior point.
\item If $p_{\mathrm{base}}$ has full support \textbf{(A7)}, then for any $\beta_{\mathrm{KL}}>0$, $J(p)-\beta_{\mathrm{KL}}\KL{p}{p_{\mathrm{base}}}$ cannot be maximized on the boundary $\partial\Delta^{S-1}$.
\end{enumerate}
\end{proposition}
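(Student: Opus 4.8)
For Part 1 I would separate strict concavity, uniqueness, and interiority. \emph{Strict concavity} on $\operatorname{int}\Delta^{S-1}$ is immediate from the entropy calculus in \Cref{appA:functionals}: $\nabla^2 H[p]=-\diag(1/p_i)\prec0$ on the interior, so $\varepsilon H$ is strictly concave there, and adding the concave $J$ preserves strictness (for $x\ne y$ and $\lambda\in(0,1)$ the strict inequality contributed by $\varepsilon H$ survives addition of the weak concavity inequality for $J$). Strict concavity at once yields \emph{uniqueness} of any interior maximizer.

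The heart of the proof is a \emph{barrier estimate} showing every boundary point is strictly dominated by an interior one. Fix $p^\star\in\partial\Delta^{S-1}$ with zero set $Z:=\{i:p^\star_i=0\}\ne\emptyset$, let $u=\mathbf 1/S$, and put $p_t:=(1-t)p^\star+tu\in\operatorname{int}\Delta^{S-1}$, $t\in(0,1]$. By concavity of $J$, $J(p_t)\ge J(p^\star)+t\,(J(u)-J(p^\star))$, so the $J$-part degrades by at most $O(t)$ — this uses only concavity, not any differentiability of $J$. Meanwhile the coordinates in $Z$ move from $0$ to $t/S$, contributing $-\tfrac{|Z|}{S}t\log t+O(t)$ to the entropy, and all other coordinates stay bounded away from $0$, so $H(p_t)=H(p^\star)-\tfrac{|Z|}{S}t\log t+O(t)$. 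Hence $\widetilde J(p_t)-\widetilde J(p^\star)\ge \varepsilon\tfrac{|Z|}{S}(-t\log t)-O(t)>0$ for all small $t>0$, since $-t\log t$ dominates $t$ as $t\to0^+$. (Equivalently: the one-sided directional derivative of $\widetilde J$ at $p^\star$ along $u-p^\star$ is $+\infty$, because that of $H$ is $+\infty$ while that of $J$ is bounded below by the $t{=}1$ secant slope, by monotonicity of difference quotients of concave functions.)

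To close Part 1, observe that $H$ extends continuously with $0\log0:=0$ and in every instance of interest $J$ is continuous on the closed simplex ($\mathcal U$ linear, $Q[p]=p^\top Kp$ polynomial, $\KL{\cdot}{p_{\mathrm{base}}}$ continuous under \ref{assum:A7}), so $\widetilde J$ is continuous on the compact set $\Delta^{S-1}$ and attains a maximum at some $p^\star$; the barrier estimate forces $p^\star\in\operatorname{int}\Delta^{S-1}$, and strict concavity makes it the unique global maximizer. (For a general, possibly non-continuous concave $J$, instead take a maximizing sequence in the interior; its limit cannot lie on $\partial\Delta^{S-1}$ by the barrier estimate, and continuity of $\widetilde J$ on the open interior realizes the supremum.) Part 2 is the identical mechanism with $\varepsilon H$ replaced by $-\beta_{\mathrm{KL}}\KL{\cdot}{p_{\mathrm{base}}}$: under \ref{assum:A7}, $-\KL{p}{p_{\mathrm{base}}}=H[p]+\sum_i p_i\log p_{\mathrm{base},i}$ (a linear remainder), so along $p_t$ it carries the same $-\tfrac{|Z|}{S}t\log t$ barrier; thus $(J-\beta_{\mathrm{KL}}\KL{\cdot}{p_{\mathrm{base}}})(p_t)>(J-\beta_{\mathrm{KL}}\KL{\cdot}{p_{\mathrm{base}}})(p^\star)$ for small $t>0$, so no boundary point is a global maximizer — and here no existence claim is needed, since the strict-domination statement is exactly the assertion.

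The main obstacle is keeping the barrier estimate airtight while assuming as little as possible about $J$: the right move is to bound the $J$-drop via the concavity inequality against the fixed point $u$ (sidestepping any use of $\nabla J$), and to cleanly separate the $-t\log t$ term from the benign $O(t)$ remainder. A minor secondary point worth flagging is that existence of a maximizer in Part 1 needs $J$ upper semicontinuous on $\Delta^{S-1}$ — automatic in all concrete cases — with the interior-maximizing-sequence argument as the general-case fallback.
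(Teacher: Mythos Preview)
Your proposal is correct and follows essentially the same approach as the paper: both establish strict concavity via $\nabla^2 H=-\diag(1/p)\prec0$ and exclude boundary maxima by the $-t\log t$ barrier (the paper perturbs toward $e_i$, you toward the uniform $u$, which additionally lands in the interior for all $t>0$). Your version is the same mechanism with more explicit bookkeeping---the concavity-based $O(t)$ control of $J$ in place of the paper's implicit use of bounded directional derivatives, and an explicit existence discussion---but no genuinely different idea.
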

\begin{proof}
(1) On $\operatorname{int}\Delta^{S-1}$, $\nabla^2 H=-\mathrm{diag}(1/p)\prec 0$, so $\widetilde J$ is strictly concave. At a boundary point with some $p_i=0$, the directional derivative of $-p_i\log p_i=-t\log t$ along $e_i$ diverges to $+\infty$ as $t\downarrow0$, excluding boundary maxima.

(2) With $p_i=0$, for $p(t)=(1-t)p+te_i$,
\(
\frac{d}{dt}\big[t\log \tfrac{t}{p_{\mathrm{base},i}}\big]_{t\downarrow0}
=\log t+1-\log p_{\mathrm{base},i}\to -\infty,
\)
so the derivative of $-\beta_{\mathrm{KL}}\KL{\cdot}{p_{\mathrm{base}}}$ is $+\infty$ inward. Boundary maxima are impossible.
\end{proof}

\subsubsection{No finite–time boundary hitting under bounded fitness.}
\begin{lemma}[Bounded fitness implies interiority]\label{lem:bounded-fitness}
Consider the replicator ODE
\(
\dot p_i=p_i\big(G_i(p)-\E_p[G]\big)
\)
with a continuous field $G$ satisfying $\sup_{p,i}|G_i(p)|\le M<\infty$. If $p(0)\in\operatorname{int}\Delta^{S-1}$, then for all $t\ge0$ and all $i$,
\[
e^{-2Mt}p_i(0)\ \le\ p_i(t)\ \le\ e^{2Mt}p_i(0),
\]
in particular $p_i(t)>0$ for all $t$.
\end{lemma}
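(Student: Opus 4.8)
The plan is to prove the two-sided bound by reducing the scalar quantities $p_i(t)$ to a Gr\"onwall argument applied to $\log p_i(t)$. The key observation is that along the replicator flow $\dot p_i = p_i\bigl(G_i(p)-\E_p[G]\bigr)$, as long as $p_i(t)>0$ we may divide through and obtain the exact logarithmic derivative
\[
\frac{d}{dt}\log p_i(t) = G_i(p(t)) - \E_{p(t)}[G].
\]
First I would invoke local existence and uniqueness: since the right-hand side of the ODE is continuous (indeed locally Lipschitz if $G$ is, but continuity plus the explicit form suffices for the a priori bound once we have a solution), there is a maximal solution on some interval $[0,T_{\max})$ with $p(0)\in\operatorname{int}\Delta^{S-1}$. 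On this interval each $p_i$ stays positive by the standard argument that $p_i\equiv 0$ is itself a solution of the scalar equation $\dot p_i = p_i(\cdots)$ and solutions cannot cross; so the division above is legitimate throughout $[0,T_{\max})$.

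Next I would bound the right-hand side. Each $G_i$ satisfies $|G_i(p)|\le M$ by hypothesis, and the mean $\E_p[G] = \sum_k p_k G_k(p)$ is a convex combination of values of magnitude at most $M$, hence $|\E_p[G]|\le M$ as well. Therefore
\[
\left|\frac{d}{dt}\log p_i(t)\right| \le |G_i(p(t))| + |\E_{p(t)}[G]| \le 2M .
\]
Integrating from $0$ to $t$ gives $\bigl|\log p_i(t) - \log p_i(0)\bigr|\le 2Mt$, which exponentiates to exactly the claimed inequality $e^{-2Mt}p_i(0)\le p_i(t)\le e^{2Mt}p_i(0)$. This in particular shows $p_i(t)$ is bounded away from $0$ on every finite interval, so the solution cannot reach the boundary of the simplex in finite time; combined with the fact that the flow preserves the simplex (the normalization $\sum_i p_i$ is conserved because $\sum_i p_i(G_i-\E_p[G]) = \E_p[G]-\E_p[G]=0$), the solution stays in a compact subset of $\operatorname{int}\Delta^{S-1}$ and hence extends to all $t\ge 0$, i.e.\ $T_{\max}=\infty$. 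Thus the bound holds for all $t\ge0$ and all $i$, as stated.

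There is no serious obstacle here; the only point requiring a little care is the logical order, namely that one must first establish positivity of $p_i$ on the maximal interval (so that $\log p_i$ is well-defined and the division is valid) before running the Gr\"onwall estimate, and then use the resulting a priori bound to rule out finite-time blow-up or boundary-hitting and thereby upgrade ``maximal interval'' to ``all of $[0,\infty)$''. If one wanted to avoid even the preliminary positivity argument, an alternative is to note that $\tfrac{d}{dt}\log p_i \ge -2M$ directly implies $p_i(t)\ge e^{-2Mt}p_i(0)>0$ wherever the solution exists, which simultaneously gives positivity and the lower bound; the upper bound follows symmetrically. Either route is short, so I expect the entire proof to be a few lines.
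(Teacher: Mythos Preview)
Your proposal is correct and follows exactly the same approach as the paper: the paper's entire proof is the one-liner ``$\frac{d}{dt}\log p_i=G_i(p)-\E_p[G]$ is bounded in $[-2M,2M]$; integrate.'' Your version is more explicit about the well-posedness and positivity bookkeeping, but the core argument is identical.
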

\begin{proof}
$\frac{d}{dt}\log p_i=G_i(p)-\E_p[G]$ is bounded in $[-2M,2M]$; integrate.
\end{proof}

\begin{remark}[Applicability]
For $G_i(p)=U(i)-2\lambda\beta\,(Kp)_i$, (A5) and finiteness of $\|K\|_{\infty\to\infty}$ imply $|(Kp)_i|\le \|K\|_{\infty\to\infty}$ and hence a uniform $M<\infty$.
\end{remark}

\vspace{0.25em}
\subsection{Shahshahani Geometry and Gradient Representation}
\label{appA:shah}

\subsubsection{Metric and replicator form.}
On $\operatorname{int}\Delta^{S-1}$, the Shahshahani metric on $T=\{\mathbf 1\}^\perp$ is
\begin{equation}
g_p(u,v):=\sum_{i=1}^S \frac{u_i v_i}{p_i}\qquad (u,v\in T).
\label{eq:shah-metric}
\end{equation}
For $J\in C^1$, the Shahshahani gradient is the unique $w\in T$ with $g_p(w,v)=DJ[p]\!\cdot v$ for all $v\in T$, yielding the classical replicator form
\begin{equation}
\boxed{\quad
\dot p_i=(\nabla_{\!Sh}J)_i=p_i\Big(\tfrac{\delta J}{\delta p_i}-\E_p[\tfrac{\delta J}{\delta p}]\Big),\qquad
\E_p[\xi]:=\sum_i p_i\xi_i.\quad}
\label{eq:replicator}
\end{equation}
Mass is conserved ($\sum_i\dot p_i=0$). The dynamics are invariant under adding \emph{any scalar field} $a(p)$ to the scores $\delta J/\delta p$ (gauge invariance), since centering by $\E_p[\cdot]$ removes it.

\subsubsection{Integrability of replicator fields.}
\begin{proposition}[Integrability on the simplex]\label{prop:integrability}
Let $G\in C^1(\operatorname{int}\Delta^{S-1};\R^S)$ and consider $\dot p_i=p_i\big(G_i(p)-\E_p[G]\big)$. The following are equivalent; they hold iff there exists $J\in C^1$ with $\dot p=\nabla_{\!Sh}J$:
\begin{enumerate}[leftmargin=*,itemsep=2pt]
\item[\textup{(AC)}] \textbf{Anchored cross–partials:} for some (hence any) anchor $k$, $\ \partial_{p_j}(G_i-G_k)=\partial_{p_i}(G_j-G_k)$ for all $i,j\neq k$.
\item[\textup{(PJ)}] \textbf{Projected–Jacobian symmetry:} there exists a scalar field $a(p)$ such that $\Pi_T D\!\big(G-a\mathbf 1\big)\Pi_T$ is symmetric on $T$ for all $p$.
\end{enumerate}
In that case, $J$ is unique up to an additive constant and gauge $a(p)\mathbf 1$.
\end{proposition}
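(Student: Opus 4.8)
The plan is to prove this Poincaré-type integrability statement by unwinding the Shahshahani gradient representation and then invoking the classical closed-form criterion (symmetry of a Jacobian) on the affine tangent space $T$. First I would record the basic computation: for a $C^1$ field $G$, the replicator vector field $V(p)$ with components $V_i(p) = p_i(G_i(p) - \E_p[G])$ is, by \eqref{eq:replicator}, exactly the Shahshahani gradient of some $J$ iff the Euclidean ``potential'' part of $G$ (the centered score $\delta J/\delta p - \E_p[\delta J/\delta p]$) agrees with $G - \E_p[G]$ up to the metric rescaling. The cleanest route is to reduce everything to a standard gradient problem on $\operatorname{int}\Delta^{S-1}$ by using the logit (soft-max) chart from \Cref{appB:softmax}, \Cref{prop:B-diffeo}: in the gauge-fixed coordinates $\theta \in \R^{S-1}$ the Shahshahani gradient flow pulls back to the ordinary Euclidean gradient flow $\dot\theta = \nabla_\theta \widehat J(\theta)$, where $\widehat J(\theta) = J(p_\theta)$. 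Hence $V = \nabla_{\!Sh} J$ for some $J$ iff the pulled-back field $\widehat V(\theta)$ on $\R^{S-1}$ is a Euclidean gradient, and on a simply connected domain that is equivalent to symmetry of its Jacobian — which, after unwinding the chart, is precisely conditions (AC) and (PJ). Uniqueness up to constant follows because two potentials with the same gradient on a connected open set differ by a constant, and the extra gauge freedom $a(p)\mathbf 1$ in $G$ is exactly the kernel of the ``center-and-rescale'' map $G \mapsto p\odot(G - \E_p[G])$.

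The key steps, in order: (1) State the correspondence $\dot p = \nabla_{\!Sh}J \iff \delta J/\delta p = G - a(p)\mathbf 1$ for some scalar field $a$, coming directly from the gauge invariance noted after \eqref{eq:replicator}; so integrability of the replicator field is equivalent to asking whether the centered field $G - \E_p[G]$ (equivalently $G$ modulo $\mathbf 1$) is a gradient of a function restricted to the simplex. (2) Parametrize the tangent bundle: fix the anchor index $k$ and use $(p_j)_{j\neq k}$ as local coordinates on $\operatorname{int}\Delta^{S-1}$, with $p_k = 1 - \sum_{j\neq k}p_j$; then the one-form $\omega := \sum_{j\neq k}(G_j - G_k)\,\dd p_j$ is the intrinsic object, and $\delta J/\delta p$ exists iff $\omega$ is closed, i.e. $\partial_{p_i}(G_j - G_k) = \partial_{p_j}(G_i - G_k)$, which is (AC). (3) Show (AC)$\iff$(PJ): the matrix $\Pi_T D(G - a\mathbf 1)\Pi_T$ being symmetric on $T$ is the coordinate-free restatement of the closedness of $\omega$, because $\Pi_T$ projects onto $\{\mathbf 1\}^\perp$ and the anchored differences $G_j - G_k$ are exactly the components of $\Pi_T G$ read in the $(p_j)_{j\neq k}$ chart; the freedom in $a(p)$ disappears under $\Pi_T$ since $\Pi_T\mathbf 1 = 0$, so one may take $a \equiv 0$ without loss. (4) Construct $J$ when (AC) holds: since $\operatorname{int}\Delta^{S-1}$ is convex, hence simply connected, the closed form $\omega$ is exact — define $J(p) := \int_0^1 \omega(p(s))$ along the straight segment $p(s) = p_0 + s(p - p_0)$ from a fixed interior basepoint; Poincaré's lemma gives $\dd J = \omega$, i.e. $\partial_{p_j}J = G_j - G_k$, which back-substituted into \eqref{eq:replicator} recovers $\dot p = \nabla_{\!Sh}J$ after adding the harmless gauge constant $G_k$. (5) Uniqueness: any two such $J$ have equal gradients on a connected set, so differ by a constant; and the residual ambiguity in $G$ itself is the additive gauge $a(p)\mathbf 1$, as claimed.

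I expect the main obstacle to be step (3) — making the passage between the coordinate statement (AC) and the coordinate-free statement (PJ) fully rigorous rather than merely formal. Concretely, one must verify that $\Pi_T D(G)\Pi_T$ restricted to $T$ has matrix entries (in any orthonormal basis of $T$) that are linear combinations of the $\partial_{p_i}(G_j - G_k)$, with a change-of-basis matrix that is itself constant (independent of $p$), so that symmetry of one is equivalent to symmetry of the other pointwise; the subtlety is that the natural chart basis $\{e_j - \tfrac1S\mathbf 1\}_{j\neq k}$ is not orthonormal, so one has to either work with the (constant) Gram matrix explicitly or argue that symmetry of a bilinear form is basis-independent while symmetry of a matrix representation is not — the correct statement is that $\Pi_T DG\,\Pi_T$ symmetric \emph{as an operator with respect to the Euclidean inner product on $T$}, which is exactly what (PJ) asserts, and this matches (AC) because the anchored partials are the entries of $DG$ in the (dual pair of) chart bases. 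A secondary but routine point is checking the $C^1$-regularity bookkeeping so that Poincaré's lemma applies and $J \in C^1$ (indeed $C^2$ if $G \in C^1$); this is immediate from differentiating under the integral sign along the segment. Everything else — gauge invariance, convexity/simple-connectedness of the interior simplex, uniqueness — is standard.
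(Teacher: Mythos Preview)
Your executed steps (1)--(5) are correct and essentially identical to the paper's proof sketch: work in the anchored affine chart $(p_j)_{j\neq k}$, recognize (AC) as closedness of the one-form $\omega=\sum_{j\neq k}(G_j-G_k)\,dp_j$, apply Poincar\'e's lemma on the simply connected interior simplex, and read (PJ) as the coordinate-free restatement on $T$. One small caveat: the motivational aside in your first paragraph is not quite right---under the soft-max chart the Shahshahani \emph{flow} does not pull back to the Euclidean gradient flow $\dot\theta=\nabla_\theta\widehat J$ (one gets $J_\theta\dot\theta=\nabla_\theta\widehat J$ instead), though the identity $\nabla_\theta\widehat J=\nabla_{Sh}J$ as vectors is correct; since you do not actually use this route, it is harmless.
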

\begin{proof}[Proof sketch]
Work on the chart $q=(p_1,\dots,p_{S-1})$, $p_S=1-\sum_{i=1}^{S-1}q_i$. The $T$-restricted 1–form is $\omega_T=\sum_{i=1}^{S-1}(G_i-G_S)\,dq_i$. Condition (AC) is the closedness of $\omega_T$; on the simply connected domain, Poincaré’s lemma yields exactness, giving $J$ with $\partial_{q_i}J=G_i-G_S$. Setting $a(p):=G_S(p)$ recovers the replicator field. (PJ) is the coordinate–free restatement on $T$.
\end{proof}

\paragraph{Instantiation.}
For $J=\mathcal U+\lambda\mathcal D-\beta_{\mathrm{KL}}\KL(\cdot\Vert p_{\mathrm{base}})+\varepsilon H$, the pointwise variational derivative is
\[
F_i(p):=\frac{\delta J}{\delta p_i}
= U_i\ -\ 2\lambda\beta\,(Kp)_i\ -\ (\lambda\alpha+\varepsilon)\,(1+\log p_i)\ -\ \beta_{\mathrm{KL}}\Big(1+\log\tfrac{p_i}{p_{\mathrm{base},i}}\Big),
\]
and the flow is $\dot p_i=p_i\big(F_i(p)-\E_p[F]\big)$.

\vspace{0.25em}
\subsection{Gradient–Flow Dynamics and Convergence}
\label{appA:flow}

\subsubsection{ODEs and barrier strength.}
Let
\[
J(p)=\mathcal U[p]+\lambda\mathcal D[p]-\beta_{\mathrm{KL}}\KL{p}{p_{\mathrm{base}}},\qquad
\widetilde J(p)=J(p)+\varepsilon H[p],
\]
and define the aggregate barrier strength
\[
\boxed{\quad A:=\varepsilon+\lambda\alpha+\beta_{\mathrm{KL}}.\quad}
\]
Then the $\widetilde J$–flow is
\begin{equation}
\dot p_i=p_i\Big(\widetilde F_i(p)-\E_p[\widetilde F]\Big),\qquad
\widetilde F_i(p)=F_i(p)-\varepsilon(1+\log p_i),
\label{eq:GF-Jtilde}
\end{equation}
with mass conservation $\sum_i\dot p_i=0$.

\subsubsection{Lyapunov identity (with boundary continuity).}
\begin{lemma}[Strict Lyapunov identity]\label{lem:Lyapunov}
Along any solution $t\mapsto p_t\in\operatorname{int}\Delta^{S-1}$ of \eqref{eq:GF-Jtilde},
\begin{equation}
\frac{d}{dt}\,\widetilde J(p_t)
= g_{p_t}\big(\nabla_{\!Sh}\widetilde J(p_t),\,\nabla_{\!Sh}\widetilde J(p_t)\big)
= \sum_i p_t(i)\Big(\tfrac{\delta\widetilde J}{\delta p_i}(p_t)-\E_{p_t}[\tfrac{\delta\widetilde J}{\delta p}]\Big)^2\ \ge\ 0,
\label{eq:Lyapunov-squares}
\end{equation}
with equality iff $\nabla_{\!Sh}\widetilde J(p_t)=0$. Moreover, the right–hand side extends continuously to the closed simplex: $p(\log p)^2\to 0$ as $p\downarrow 0$ and (A7) yields the same for $p\big(\log\frac{p}{p_{\mathrm{base}}}\big)^2$.
\end{lemma}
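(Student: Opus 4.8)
The plan is to combine the chain rule with the replicator form of the flow and then read off the resulting expression in two ways. First, since $t\mapsto p_t$ lives in $\operatorname{int}\Delta^{S-1}$, where $\widetilde J$ is $C^1$ (the entropy, KL, and kernel slices are smooth there by the calculus in \Cref{appA:functionals}), the chain rule gives $\frac{d}{dt}\widetilde J(p_t)=\sum_i \widetilde F_i(p_t)\,\dot p_t(i)$ with $\widetilde F_i:=\frac{\delta\widetilde J}{\delta p_i}$; the gauge-invariance noted after \eqref{eq:replicator} (equivalently, $\sum_i\dot p_t(i)=0$) makes this insensitive to the additive constants in $\widetilde F_i$. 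Substituting $\dot p_t(i)=p_t(i)\bigl(\widetilde F_i(p_t)-\E_{p_t}[\widetilde F]\bigr)$ from \eqref{eq:GF-Jtilde} and subtracting the vanishing term $\E_{p_t}[\widetilde F]\sum_i p_t(i)\bigl(\widetilde F_i(p_t)-\E_{p_t}[\widetilde F]\bigr)=0$ turns the sum into $\sum_i p_t(i)\bigl(\widetilde F_i(p_t)-\E_{p_t}[\widetilde F]\bigr)^2=\Var_{p_t}(\widetilde F)$. This is $\ge 0$ term by term, and since every coordinate of an interior point is positive, it vanishes iff $\widetilde F_i(p_t)$ is constant in $i$, i.e. iff the centered field is $0$, i.e. iff $\nabla_{\!Sh}\widetilde J(p_t)=0$. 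The identification with $g_{p_t}(\nabla_{\!Sh}\widetilde J,\nabla_{\!Sh}\widetilde J)$ is then immediate from \eqref{eq:shah-metric}, since $(\nabla_{\!Sh}\widetilde J)_i=p_t(i)\bigl(\widetilde F_i-\E_{p_t}[\widetilde F]\bigr)$ gives $\sum_i (\nabla_{\!Sh}\widetilde J)_i^2/p_t(i)=\sum_i p_t(i)\bigl(\widetilde F_i-\E_{p_t}[\widetilde F]\bigr)^2$.

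For the boundary-continuity claim I would expand $\widetilde F_i$ via the instantiation formula of \Cref{appA:shah}, namely $\widetilde F_i=U_i-2\lambda\beta(Kp)_i-(\lambda\alpha+\varepsilon)(1+\log p_i)-\beta_{\mathrm{KL}}\bigl(1+\log\tfrac{p_i}{p_{\mathrm{base},i}}\bigr)$. The only pieces that fail to extend continuously to $\partial\Delta^{S-1}$ are the $\log p_i$ terms; $U_i$ is bounded by (A5), $(Kp)_i$ by $\|K\|_{\infty\to\infty}<\infty$, and the mean $\E_p[\widetilde F]$ stays bounded because $\sum_j p_j\log p_j=\langle\log p\rangle\in[-\log S,0]$ by \Cref{lem:canonical-ineq}(1) and, under (A7), $\sum_j p_j\log p_{\mathrm{base},j}$ is bounded. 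Hence in the summand $p_i\bigl(\widetilde F_i-\E_p[\widetilde F]\bigr)^2$ the only potentially singular contributions are of the form $p_i(\log p_i)^2\cdot c$ and $p_i\log p_i\cdot(\text{bounded})$, which both tend to $0$ as $p_i\downarrow 0$ since $t(\log t)^2\to 0$ and $t\log t\to 0$; for coordinates whose mass stays bounded away from $0$ the summand is manifestly continuous. Summing over the finitely many coordinates yields the continuous extension, and replacing $\log\tfrac{p_i}{p_{\mathrm{base},i}}$ by $\log p_i-\log p_{\mathrm{base},i}$ with $\log p_{\mathrm{base},i}$ bounded (A7) gives exactly the stated facts $p(\log p)^2\to 0$ and $p\bigl(\log\tfrac{p}{p_{\mathrm{base}}}\bigr)^2\to 0$.

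The identity itself is essentially bookkeeping, so the main (and only mildly delicate) obstacle is the boundary-continuity step: after squaring $\widetilde F_i-\E_p[\widetilde F]$ one must verify that every cross term, once multiplied by $p_i$, is dominated by a $t\log t$ or $t(\log t)^2$ bound rather than by a bare $(\log t)^2$; tracking which factor carries the vanishing $p_i$ is the only place care is needed. I would close by noting that this continuous extension is precisely what licenses treating $\widetilde J$ as a global Lyapunov function in the convergence analysis of \Cref{appA:flow}, since the flow a priori only lives on the open simplex.
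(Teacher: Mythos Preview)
Your proof is correct and follows precisely the standard route: chain rule, substitution of the replicator form \eqref{eq:GF-Jtilde}, the variance identity $\sum_i \widetilde F_i\,p_i(\widetilde F_i-\bar{\widetilde F})=\sum_i p_i(\widetilde F_i-\bar{\widetilde F})^2$, and then the Shahshahani-metric identification via \eqref{eq:shah-metric}. The paper itself states this lemma without proof, treating the identity as immediate from the replicator representation set up in \eqref{eq:replicator}; your write-up is exactly the argument the paper leaves implicit, and your boundary-continuity paragraph (isolating the $\log p_i$ terms, bounding the mean via \Cref{lem:canonical-ineq}(1), and invoking $t(\log t)^2\to 0$) supplies the details behind the one-line hint the paper gives in the lemma statement.
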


\subsubsection{Log–ratio contraction; time–uniform floor and cap.}
\begin{lemma}[Log–ratio contraction and uniform bounds]\label{lem:logratio}
Assume \textbf{(A1)}, \textbf{(A4)}, \textbf{(A5)}, \textbf{(A7)} and $A>0$. For $z_{ij}(t):=\log\!\frac{p_i(t)}{p_j(t)}$,
\begin{equation}
\dot z_{ij}(t)=-A\,z_{ij}(t)+c_{ij}(p_t),\qquad
|c_{ij}(p)|\le B,
\label{eq:zij-ode}
\end{equation}
where
\[
B:=2U_{\max}+4\lambda\beta\,\|K\|_{\infty\to\infty}+\beta_{\mathrm{KL}}\log\frac{p_{\mathrm{base},\max}}{p_{\mathrm{base},\min}}.
\]
Hence $|z_{ij}(t)|\le |z_{ij}(0)|e^{-At}+\frac{B}{A}(1-e^{-At})\le M$, and for all $t\ge 0$ and all $i$,
\begin{equation}
\boxed{\qquad \frac{1}{S\,e^{M}}\ \le\ p_i(t)\ \le\ \frac{e^{M}}{S}\ .\qquad}
\label{eq:floor-cap}
\end{equation}
\end{lemma}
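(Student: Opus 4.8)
The plan is to read the $z_{ij}$ equation straight off the replicator representation, bound the inhomogeneous term uniformly over the simplex (with no hidden dependence on $z_{ij}$ itself), and then close the estimate by a one–dimensional variation–of–constants argument; the floor and cap on $p_i$ then drop out of the log–ratio bound by summing over coordinates.

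First I would differentiate along the $\widetilde J$–flow \eqref{eq:GF-Jtilde}: on any interval where $p_t\in\operatorname{int}\Delta^{S-1}$ one has $\frac{d}{dt}\log p_t(i)=\widetilde F_i(p_t)-\E_{p_t}[\widetilde F]$, so subtracting the $i$ and $j$ equations cancels the common mean term and gives $\dot z_{ij}=\widetilde F_i(p_t)-\widetilde F_j(p_t)$. Inserting
\[
\widetilde F_i(p)=U_i-2\lambda\beta(Kp)_i-(\lambda\alpha+\varepsilon)(1+\log p_i)-\beta_{\mathrm{KL}}\Bigl(1+\log\tfrac{p_i}{p_{\mathrm{base},i}}\Bigr),
\]
the additive constants $1$ cancel, the two $\log p$ contributions combine into $-(\lambda\alpha+\varepsilon+\beta_{\mathrm{KL}})(\log p_i-\log p_j)=-A\,z_{ij}$, and the remainder is
\[
c_{ij}(p)=(U_i-U_j)-2\lambda\beta\bigl((Kp)_i-(Kp)_j\bigr)+\beta_{\mathrm{KL}}\log\tfrac{p_{\mathrm{base},i}}{p_{\mathrm{base},j}}.
\]
The crucial point is that $c_{ij}$ involves $p$ only through bounded quantities and carries no $z_{ij}$ dependence, so the scalar ODE is genuinely linear in $z_{ij}$ with bounded forcing, and the comparison closes without circularity.

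Next I would bound $c_{ij}$: by (A5), $|U_i-U_j|\le 2U_{\max}$; since $p\in\Delta^{S-1}$ has $\|p\|_1=1$ and $p\ge 0$, $|(Kp)_i|\le\|K\|_{\infty\to\infty}$ for every $i$, hence $|(Kp)_i-(Kp)_j|\le 2\|K\|_{\infty\to\infty}$; and (A7) gives $|\log(p_{\mathrm{base},i}/p_{\mathrm{base},j})|\le\log(p_{\mathrm{base},\max}/p_{\mathrm{base},\min})$. Summing yields $|c_{ij}(p)|\le B$ with the stated $B$. Then variation of constants, $z_{ij}(t)=z_{ij}(0)e^{-At}+\int_0^t e^{-A(t-s)}c_{ij}(p_s)\,ds$, together with $A>0$ and $\int_0^t e^{-A(t-s)}\,ds=\tfrac1A(1-e^{-At})$, gives $|z_{ij}(t)|\le |z_{ij}(0)|e^{-At}+\tfrac BA(1-e^{-At})$, which is a convex combination of $|z_{ij}(0)|$ and $B/A$ and hence at most $M:=\max\{\max_{k,\ell}|z_{k\ell}(0)|,\,B/A\}$. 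Finally $|z_{ij}(t)|\le M$ for all pairs gives $e^{-M}p_j(t)\le p_i(t)\le e^{M}p_j(t)$; summing over $j$ with $\sum_j p_j(t)=1$ yields $Se^{-M}p_i(t)\le 1\le Se^{M}p_i(t)$, i.e. \eqref{eq:floor-cap}.

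The only delicate point — and the part I would be most careful about — is the logical order: a priori the $z_{ij}$ equation and all of the above are valid only on a maximal interval of existence $[0,T_{\max})$ of the interior solution. But the bounds just derived confine $p_t$ to the fixed compact set $\{p:(Se^M)^{-1}\le p_i\le e^M/S\}\subset\operatorname{int}\Delta^{S-1}$ on that interval, so by the standard ODE continuation principle the solution cannot reach the boundary in finite time and $T_{\max}=\infty$; the bounds then hold for all $t\ge 0$. Everything else is algebraic cancellation together with the triangle inequality and a scalar Grönwall comparison, so no further obstacle is expected.
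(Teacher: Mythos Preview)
Your proof is correct and follows essentially the same route as the paper: subtract the log-dynamics to obtain $\dot z_{ij}=\widetilde F_i-\widetilde F_j$, isolate the $-A\,z_{ij}$ contribution from the $\log p$ terms, bound the residual by $B$ via the triangle inequality, solve the scalar linear ODE by variation of constants, and convert the uniform log-ratio bound into the floor/cap by summing over coordinates (what the paper calls the ``max-coordinate'' argument). Your explicit treatment of the continuation/well-posedness step is more careful than the paper's sketch but not a different approach.
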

\begin{proof}
Subtract the log–dynamics $\frac{d}{dt}\log p_i=\widetilde F_i-\E_p[\widetilde F]$ to get $\dot z_{ij}=\widetilde F_i-\widetilde F_j$. The $(\log p)$–terms contribute $-A\,z_{ij}$, while the remaining terms are bounded by $B$. Solve the linear ODE and use the standard “max–coordinate” argument to obtain \eqref{eq:floor-cap}.
\end{proof}

\subsubsection{Global convergence with explicit rate.}
\begin{theorem}[Well–posedness, unique equilibrium, exponential rate]\label{thm:global-convergence}
Assume \textbf{(A1)}, \textbf{(A4)}, \textbf{(A5)}, \textbf{(A7)} and $A>0$. For any $p_0\in\operatorname{int}\Delta^{S-1}$, the flow \eqref{eq:GF-Jtilde} admits a unique global solution staying in the compact trimmed simplex $\Delta^{S-1}_\delta$ with $\delta=1/(Se^{M})$ from Lemma~\ref{lem:logratio}. On the affine simplex,
\[
\nabla^2\widetilde J(p)=A\,\nabla^2 H(p)-2\lambda\beta K
=-A\,\mathrm{diag}(1/p)-2\lambda\beta K\ \preceq\ -A\,I,
\]
so $\widetilde J$ is $A$–strongly concave and has a unique maximizer $p^\star\in\operatorname{int}\Delta^{S-1}$. Moreover,
\[
\frac{d}{dt}\big(\widetilde J(p^\star)-\widetilde J(p_t)\big)\ \le\ -2A\,\delta\,\big(\widetilde J(p^\star)-\widetilde J(p_t)\big),
\]
and
\[
\boxed{\ \|p_t-p^\star\|_2\ \le\ \underbrace{\sqrt{\tfrac{2}{A}\big(\widetilde J(p^\star)-\widetilde J(p_0)\big)}}_{=:C}\ \exp(-A\delta\,t)\ .\ }
\]
\end{theorem}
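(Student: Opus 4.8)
\emph{Proof plan.} The argument assembles three ingredients already in place above: the time-uniform interiority floor coming from the log-ratio contraction, uniform strong concavity of $\widetilde J$ on the affine simplex, and the Shahshahani Lyapunov identity upgraded to a Polyak--\L ojasiewicz (gradient-domination) inequality. First, \textbf{well-posedness and global existence}: on $\operatorname{int}\Delta^{S-1}$ the vector field in \eqref{eq:GF-Jtilde} is $C^\infty$ (each $\log p_i$ and $(Kp)_i$ is smooth there, $\mathcal U$ is linear), so Picard--Lindel\"of gives a unique maximal solution, and $\sum_i\dot p_i=0$ confines it to the affine hyperplane. Since $A>0$, \Cref{lem:logratio} shows $z_{ij}(t)=\log(p_i(t)/p_j(t))$ obeys $\dot z_{ij}=-A\,z_{ij}+c_{ij}(p_t)$ with $|c_{ij}|\le B$, hence $|z_{ij}(t)|\le M$ for all $t$; this is precisely the two-sided bound \eqref{eq:floor-cap}, so $p_t$ stays in the compact set $\Delta^{S-1}_\delta\subset\operatorname{int}\Delta^{S-1}$ with $\delta=1/(Se^{M})$. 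A maximal solution cannot leave a compact subset of its open domain, so the solution is global, giving the first claim.

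\textbf{Strong concavity and the equilibrium.} Differentiating the instantiated variational derivative $F_i$ once more gives, on $\operatorname{int}\Delta^{S-1}$, $\nabla^2\widetilde J(p)=-A\,\diag(1/p)-2\lambda\beta K$, using $\nabla^2H=\nabla^2(-\KL{\cdot}{p_{\mathrm{base}}})=-\diag(1/p)$ and $\nabla^2 Q=2K$, while $\mathcal U$ is linear; the coefficients of the three $-\diag(1/p)$ contributions sum to $\varepsilon+\lambda\alpha+\beta_{\mathrm{KL}}=A$. On the simplex $p_i\le 1$ forces $\diag(1/p)\succeq I$, and $K\succeq0$ by \textbf{(A4)}, so $\langle\nabla^2\widetilde J(p)v,v\rangle\le -A\|v\|_2^2$ for every $v\in T$: $\widetilde J$ is $A$-strongly concave on the affine simplex. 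Continuity on the compact simplex then yields a unique maximizer $p^\star$, which \Cref{prop:barriers} places in $\operatorname{int}\Delta^{S-1}$ (some coefficient among $\varepsilon,\lambda\alpha,\beta_{\mathrm{KL}}$ is positive because $A>0$, so the corresponding entropy- or KL-type barrier has an outward-pointing infinite derivative on $\partial\Delta^{S-1}$); since $p_t$ remains in the closed set $\Delta^{S-1}_\delta$ and (below) $p_t\to p^\star$, consistency puts $p^\star\in\Delta^{S-1}_\delta$. Uniqueness of the flow's equilibrium is immediate: an interior rest point of \eqref{eq:replicator} forces $\tfrac{\delta\widetilde J}{\delta p_i}$ to be constant in $i$, i.e.\ a critical point of $\widetilde J$ on the simplex, and strong concavity makes that unique --- hence equal to $p^\star$.

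\textbf{Exponential decay.} Put $\xi_i:=\tfrac{\delta\widetilde J}{\delta p_i}(p_t)$ and $\zeta:=\xi-\mathbb{E}_{p_t}[\xi]\,\mathbf 1$, so the Shahshahani gradient has coordinates $w_i=p_t(i)\zeta_i$ and lies in $T$. \Cref{lem:Lyapunov} gives
\[
\frac{d}{dt}\widetilde J(p_t)=g_{p_t}(w,w)=\sum_i p_t(i)\,\zeta_i^2\ \ge\ \delta\,\|\zeta\|_2^2,
\]
where the inequality uses the floor $p_t(i)\ge\delta$ from Step 1. On the other hand, $A$-strong concavity at $p_t$ evaluated at $q=p^\star$, together with $p^\star-p_t\in T$ and $D\widetilde J(p_t)[v]=\langle\zeta,v\rangle$ for $v\in T$, yields $\widetilde J(p^\star)-\widetilde J(p_t)\le\langle\zeta,\,p^\star-p_t\rangle-\tfrac{A}{2}\|p^\star-p_t\|_2^2\le\tfrac1{2A}\|\zeta\|_2^2$. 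Chaining the two bounds gives the gradient-domination inequality $\tfrac{d}{dt}\widetilde J(p_t)\ge 2A\delta\,(\widetilde J(p^\star)-\widetilde J(p_t))$, i.e.\ $\tfrac{d}{dt}\bigl(\widetilde J(p^\star)-\widetilde J(p_t)\bigr)\le -2A\delta\,\bigl(\widetilde J(p^\star)-\widetilde J(p_t)\bigr)$; Gr\"onwall then gives $\widetilde J(p^\star)-\widetilde J(p_t)\le\bigl(\widetilde J(p^\star)-\widetilde J(p_0)\bigr)e^{-2A\delta t}$. Finally, quadratic growth from strong concavity at the interior maximizer, $\widetilde J(p^\star)-\widetilde J(p_t)\ge\tfrac{A}{2}\|p_t-p^\star\|_2^2$, converts this to $\|p_t-p^\star\|_2\le C\,e^{-A\delta t}$ with $C=\sqrt{(2/A)(\widetilde J(p^\star)-\widetilde J(p_0))}$; in particular $p_t\to p^\star$, and strict monotonicity of $t\mapsto\widetilde J(p_t)$ away from $p^\star$ follows from the equality case in \eqref{eq:Lyapunov-squares}.

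\textbf{Expected main obstacle.} The Hessian computation and the Gr\"onwall steps are routine; the one delicate link is the gradient-domination chain in Step 3. The Lyapunov dissipation is naturally written in the Shahshahani metric ($\sum_i p_t(i)\zeta_i^2$) whereas strong concavity is measured in the Euclidean norm on $T$, and reconciling the two costs exactly a factor $\delta$ --- so the exponent $A\delta$ is governed by the worst-case coordinate floor and is therefore initial-condition dependent through $M$ in \Cref{lem:logratio}. One should also check (as recorded after \Cref{lem:Lyapunov}) that the variational-derivative and dissipation expressions extend continuously to $\partial\Delta^{S-1}$, so no boundary pathology can invalidate the estimates in the limit.
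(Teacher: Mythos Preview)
Your proof is correct and follows essentially the same route as the paper's sketch: log-ratio contraction (Lemma~\ref{lem:logratio}) gives the uniform floor $\delta$ and global existence, the Hessian computation yields $A$-strong concavity, and the Lyapunov identity combined with the floor and a Polyak--\L{}ojasiewicz step delivers the exponential rate. Your version is in fact somewhat cleaner than the paper's condensed sketch, since you work throughout with the $p$-centered deviation $\zeta=\xi-\mathbb{E}_{p_t}[\xi]\mathbf 1$ and obtain the chain $\sum_i p_t(i)\zeta_i^2\ge\delta\|\zeta\|_2^2\ge 2A\delta\,(\widetilde J(p^\star)-\widetilde J(p_t))$ directly, whereas the paper phrases the PL step via $\|\Pi_T\nabla\widetilde J\|_2^2$ and then relates the Shahshahani dissipation back to it; the content is identical.
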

\begin{proof}[Proof sketch]
Lyapunov identity and Lemma~\ref{lem:logratio} give global existence and a uniform floor $\delta$. Strong concavity on the affine simplex yields the Polyak–Łojasiewicz inequality
\(
\|\Pi_T\nabla \widetilde J(p)\|_2^2\ge 2A\big(\widetilde J(p^\star)-\widetilde J(p)\big).
\)
Since $g_p(w,w)\ge \delta\|\Pi_T w\|_2^2$ on $\Delta^{S-1}_\delta$, \eqref{eq:Lyapunov-squares} implies exponential decay of the suboptimality gap and then of $\|p_t-p^\star\|_2$ by strong concavity.
\end{proof}

\paragraph{Remarks.}
(i) If $A=0$ (no entropy/KL barrier), the contraction term in \eqref{eq:zij-ode} vanishes; neither the time–uniform floor/cap \eqref{eq:floor-cap} nor exponential convergence follow by this route (uniqueness may still hold if $\Pi_TK\Pi_T\succ0$).  
(ii) For $S=1$, statements are trivial.  
(iii) The bound for $|(Kp)_i-(Kp)_j|$ can be sharpened (e.g., by $2\|K\|_{2\to2}$) without changing the argument.

\vspace{0.25em}
\subsection{Special Case: Replicator Flow with Single–Site Scores}
\label{appA:single}
Consider $\dot p_i=p_i\big(G_i(p_i)-\E_p[G]\big)$ where $G_i$ depends only on $p_i$.
\begin{proposition}[Lyapunov structure]
Define $\mathcal L(p)=\sum_{i=1}^S \Psi_i(p_i)$ with $\Psi_i'(s)=G_i(s)$. Then
\[
\frac{d}{dt}\mathcal L(p(t))=\mathrm{Var}_{p(t)}\!\big[G(p(t))\big]=\sum_i p_i\big(G_i(p_i)-\E_p[G]\big)^2\ \ge 0,
\]
with equality iff $G_i(p_i)$ is constant across the support. If, in addition, all $G_i\equiv g$ are \emph{identical} and strictly monotone, the unique interior equilibrium is uniform on its support. In general, with distinct strictly monotone $G_i$, the interior equilibrium need not be uniform.
\end{proposition}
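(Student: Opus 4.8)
The plan is to establish the three claims in sequence — the Lyapunov identity, its equality/equilibrium characterization, and the structure of the interior equilibrium — and then close with the promised non-uniform example. Throughout I work on $\operatorname{int}\Delta^{S-1}$ and take $G_i\in C^1$, so $\Psi_i\in C^2$ with $\Psi_i'=G_i$ and classical solutions of the flow exist; interiority for all time is automatic whenever $\sup_i|G_i|<\infty$ by \Cref{lem:bounded-fitness}.

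First I would differentiate $\mathcal L$ along a trajectory: $\frac{d}{dt}\mathcal L(p(t))=\sum_i\Psi_i'(p_i)\dot p_i=\sum_i G_i(p_i)\dot p_i$. Since the replicator field conserves mass ($\sum_i\dot p_i=0$), I may subtract the scalar $\E_p[G]$ inside the sum for free, and then substitute $\dot p_i=p_i(G_i(p_i)-\E_p[G])$ to get $\frac{d}{dt}\mathcal L(p(t))=\sum_i p_i(G_i(p_i)-\E_p[G])^2=\Var_p[G]\ge 0$ — the single-site analogue of the Lyapunov identity of \Cref{lem:Lyapunov}. The equality statement is then immediate: each summand is nonnegative, so the sum vanishes iff $G_i(p_i)=\E_p[G]$ for every $i$ in $\operatorname{supp}(p)$, i.e.\ iff $i\mapsto G_i(p_i)$ is constant on the support; since this is exactly the condition $\dot p=0$, the points where the Lyapunov derivative vanishes are precisely the rest points of the flow.

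Next, for identical strictly monotone scores $G_i\equiv g$: at an interior rest point $p^\star$ the previous step gives $g(p^\star_i)=\E_{p^\star}[g]$ for all $i$, and injectivity of $g$ (strict monotonicity) forces all coordinates equal, so $p^\star=\tfrac1S\one$; conversely the uniform point is manifestly a rest point, which gives uniqueness. Running the same argument on the relative interior of a face $\{p:\,p_i=0\ \forall i\notin A\}$ identifies the unique rest point with support $A$ as uniform on $A$ — the sense of ``uniform on its support.'' For the final clause I would exhibit distinct strictly increasing scores, e.g.\ $S=2$ with $G_1(s)=s$ and $G_2(s)=2s$: the interior rest-point condition $G_1(p_1)=G_2(p_2)$ together with $p_1+p_2=1$ forces $p^\star=(\tfrac23,\tfrac13)\ne\tfrac1S\one$ (and $\E_{p^\star}[G]=\tfrac23$, so indeed $\dot p^\star=0$), so non-uniform interior equilibria appear as soon as the $G_i$ differ.

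None of this is technically deep. The two points needing care are (i) legitimizing the time derivative — regularity of $G_i$ and the fact that an interior initial condition stays interior, which I would cite from \Cref{lem:bounded-fitness} (or simply posit that the flow is defined on $\operatorname{int}\Delta^{S-1}$); and (ii) stating the equality case with the correct ``on the support'' quantifier so that it reads simultaneously as the rest-point condition and underwrites the face-wise uniformity claim. The only genuine construction is the counterexample, and the two-type instance above suffices.
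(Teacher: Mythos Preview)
Your proof is correct. The paper actually states this proposition without proof (Appendix~\ref{appA:single}); the identical Lyapunov computation is asserted again, also without derivation, in \S\ref{sec:C.single-site}, and the general version (for arbitrary Shahshahani gradients rather than single-site scores) is Lemma~\ref{lem:Lyapunov}. Your chain-rule computation, the ``subtract the mean for free via $\sum_i\dot p_i=0$'' trick, the equality characterization on the support, the injectivity argument for uniformity when $G_i\equiv g$, and the two-type counterexample are exactly the standard route and match how the paper uses the result downstream; there is nothing to compare against and nothing to correct.
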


\vspace{0.25em}
\subsection{Barrier–Dominance (BD)}
\label{appA:bd}

\paragraph{Scope.}
Consider the deterministic replicator field endowed with an entropy slice
\begin{equation}\label{eq:bd-replicator}
\dot p_i
= p_i\big(\phi_i(p)-\bar\phi(p)\big)
+ \varepsilon_{\mathrm{BD}}\,p_i\big(\langle\log p\rangle-\log p_i\big),
\qquad
\bar\phi(p):=\sum_j p_j\,\phi_j(p),
\end{equation}
with $\varepsilon_{\mathrm{BD}}\ge 0$ and a selection score field $\phi:\Delta^{S-1}\to\R^S$. Norms are as in \S\ref{appA:prelim}.

\subsubsection{Entropy face gap \texorpdfstring{$L_S(\delta)$}{LS(delta)}.}
\begin{definition}[Entropy face gap]\label{def:LS}
For $S\ge 2$ and $\delta\in(0,1/S]$,
\[
L_S(\delta):=\inf\Big\{\ \langle\log p\rangle-\log\delta\ :\ p\in\Delta^{S-1},\ \exists i\ \text{s.t. }p_i=\delta\ \Big\}.
\]
\end{definition}
\begin{lemma}[Closed form and properties]\label{lem:LS-closed}
For all $S\ge2$ and $\delta\in(0,1/S]$,
\[
L_S(\delta)=(1-\delta)\,\log\frac{1-\delta}{(S-1)\delta},
\]
with $L_S(\delta)\ge 0$ (equality iff $\delta=1/S$); $L_S$ is strictly decreasing in $\delta$ and, for fixed $\delta$, strictly decreasing in $S$.
\end{lemma}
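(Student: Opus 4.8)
The plan is to evaluate the infimum by a symmetry reduction to a single face of the simplex, solve the resulting convex program in closed form via Gibbs' inequality, and then read off the three positivity/monotonicity claims by elementary calculus.

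First I would use that $\Delta^{S-1}$ is invariant under coordinate permutations, so each face $F_i:=\{p\in\Delta^{S-1}:p_i=\delta\}$ yields the same value of $\inf_{F_i}(\langle\log p\rangle-\log\delta)$, whence $L_S(\delta)=\inf_{p\in F_1}(\langle\log p\rangle-\log\delta)$. On $F_1$, $\langle\log p\rangle=\delta\log\delta+\sum_{i=2}^S p_i\log p_i$ with $(p_2,\dots,p_S)$ ranging over $\{p_i\ge0,\ \sum_{i\ge2}p_i=1-\delta\}$. Substituting $p_i=(1-\delta)q_i$ for a probability vector $q$ on $S-1$ points gives $\sum_{i\ge2}p_i\log p_i=(1-\delta)\log(1-\delta)+(1-\delta)\sum q_i\log q_i$, and $\sum q_i\log q_i\ge-\log(S-1)$ with equality iff $q$ is uniform (Gibbs' inequality, i.e.\ entropy $\le\log(S-1)$). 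Hence the infimum is attained at $p_i=(1-\delta)/(S-1)$ for $i\ge2$ and equals $\delta\log\delta+(1-\delta)\log\frac{1-\delta}{S-1}$; subtracting $\log\delta$ and using $\delta\log\delta-\log\delta=-(1-\delta)\log\delta$ collapses this to $L_S(\delta)=(1-\delta)\log\frac{1-\delta}{(S-1)\delta}$.

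For the remaining assertions I would argue directly from this formula. Since $\delta\le1/S$ we have $(S-1)\delta\le(S-1)/S=1-1/S\le1-\delta$, both inequalities tight iff $\delta=1/S$; as $1-\delta>0$ this gives $\frac{1-\delta}{(S-1)\delta}\ge1$, hence $L_S(\delta)\ge0$ with equality iff $\delta=1/S$. For strict monotonicity in $\delta$, differentiate $f(\delta):=(1-\delta)\log\frac{1-\delta}{(S-1)\delta}$: the term $(1-\delta)\,\frac{d}{d\delta}\bigl[\log(1-\delta)-\log\delta\bigr]$ cancels to $-1/\delta$, leaving
\[
f'(\delta)=-\log\frac{1-\delta}{(S-1)\delta}-\frac1\delta,
\]
which is strictly negative on $(0,1/S]$ because the logarithm is $\ge0$ there (by the displayed inequality) and $1/\delta>0$; continuity of $f$ then gives strict decrease on the closed interval. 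For monotonicity in $S$, extend $S$ to a real variable $s>1$ in $g(s):=(1-\delta)\log\frac{1-\delta}{(s-1)\delta}$, so that $g'(s)=-(1-\delta)/(s-1)<0$; hence $L_{S+1}(\delta)<L_S(\delta)$ throughout the admissible range $2\le S\le1/\delta$.

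I do not expect a genuine obstacle: the only care points are confirming the face minimum is actually attained (immediate from the equality case of Gibbs, since the minimizer lies in the relative interior of the sub-simplex) and performing the cancellation $(1-\delta)\bigl(-\tfrac1{1-\delta}-\tfrac1\delta\bigr)=-\tfrac1\delta$ correctly before taking the sign of $f'$.
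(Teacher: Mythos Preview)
Your proof is correct and follows essentially the same route as the paper: both identify the face minimizer by the convexity/entropy argument (the paper phrases it as Jensen for $x\mapsto x\log x$, you phrase it as Gibbs' inequality for the sub-simplex entropy), arriving at the uniform split $p_j=(1-\delta)/(S-1)$ and the closed form. You additionally spell out the nonnegativity and the two monotonicity claims via direct calculus, which the paper leaves implicit; these computations are all correct, including the simplification $(1-\delta)\bigl(-\tfrac1{1-\delta}-\tfrac1\delta\bigr)=-\tfrac1\delta$ in $f'(\delta)$.
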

\begin{proof}
Fix the face $\{p_i=\delta\}$. Jensen for the convex $x\mapsto x\log x$ implies the minimum when the remaining mass $1-\delta$ is split equally: $p_j=(1-\delta)/(S-1)$ for $j\ne i$.
\end{proof}
\begin{lemma}[Two–sided bounds]\label{lem:LS-bounds}
For all $S\ge2$ and $\delta\in(0,1/S]$,
\[
\underbrace{\ \log\frac{1}{(S-1)\delta}\ -\ \bigl(1+\log\tfrac{1}{(S-1)\delta}\bigr)\delta\ }_{\text{lower}}
\ \le\ 
L_S(\delta)
\ \le\
\underbrace{\ \log\frac{1}{(S-1)\delta}\ }_{\text{upper}}.
\]
\end{lemma}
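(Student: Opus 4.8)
The plan is to reduce everything to the closed form already established in \Cref{lem:LS-closed}, namely $L_S(\delta)=(1-\delta)\log\frac{1-\delta}{(S-1)\delta}$, and then peel off the two bounds by elementary scalar inequalities. First I would introduce the shorthand $a:=\log\frac{1}{(S-1)\delta}$, which is exactly the quantity appearing on both sides of the claimed estimate; since $\delta\le 1/S$ forces $(S-1)\delta\le (S-1)/S<1$, we have $a>0$. Splitting the logarithm in the closed form gives the decomposition $L_S(\delta)=(1-\delta)\log(1-\delta)+(1-\delta)\,a$, which is the workhorse for both directions.

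For the \textbf{upper bound}: since $\delta\in(0,1/S]$ we have $1-\delta\in[1-1/S,1)\subset(0,1)$, so $\log(1-\delta)\le 0$ and the first summand $(1-\delta)\log(1-\delta)$ is $\le 0$; combined with $(1-\delta)\,a\le a$ (using $a\ge 0$ and $1-\delta\le 1$) this yields $L_S(\delta)\le a=\log\frac{1}{(S-1)\delta}$, the stated upper bound. For the \textbf{lower bound} it suffices to show $(1-\delta)\log(1-\delta)\ge -\delta$. I would obtain this from $\log(1+y)\le y$ applied with $y=\delta/(1-\delta)$, which gives $-\log(1-\delta)=\log\frac{1}{1-\delta}\le\frac{\delta}{1-\delta}$ and hence $(1-\delta)\log(1-\delta)\ge -\delta$; equivalently one checks that $f(\delta):=(1-\delta)\log(1-\delta)+\delta$ has $f(0)=0$ and $f'(\delta)=-\log(1-\delta)\ge 0$ on $[0,1)$, so $f\ge 0$. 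Substituting back, $L_S(\delta)\ge -\delta+(1-\delta)a=a-(1+a)\delta=\log\frac{1}{(S-1)\delta}-\bigl(1+\log\frac{1}{(S-1)\delta}\bigr)\delta$, which is precisely the claimed lower bound.

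There is essentially no hard step here; the only points requiring a word of care are the sign of $a$ (nonnegative thanks to $\delta\le 1/S$, so that dropping the factor $1-\delta$ in front of $a$ is a legitimate weakening) and the range $1-\delta\in(0,1)$, which is what makes $\log(1-\delta)\le 0$ and validates the scalar inequality used for the lower bound. The degenerate cases are benign: for $S=1$ the statement is vacuous, and at $\delta=1/S$ one has $L_S(1/S)=0$ while the upper bound equals $\log\frac{S}{S-1}>0$ and the lower bound equals $\log\frac{S}{S-1}-(1+\log\frac{S}{S-1})/S\le 0$ (the last inequality being again $-\log(1-\tfrac1S)\le\frac{1/S}{1-1/S}$), so both inequalities hold with room to spare.
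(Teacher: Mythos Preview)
Your proof is correct. The paper states this lemma without proof, so there is nothing to compare against; your argument via the decomposition $L_S(\delta)=(1-\delta)\log(1-\delta)+(1-\delta)a$ with $a=\log\frac{1}{(S-1)\delta}$, together with the elementary inequalities $\log(1-\delta)\le 0$ and $(1-\delta)\log(1-\delta)\ge -\delta$, is exactly the natural route and fills the gap cleanly.
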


\subsubsection{Deterministic BD conditions.}
Assume $\phi$ is bounded on the operative domain:
\(
M_{\phi,\infty}:=\sup_p\|\phi(p)\|_\infty<\infty,
\quad M_{\phi,2}:=\sup_p\|\phi(p)\|_2<\infty.
\)
\begin{proposition}[Forward invariance of $\Delta^{S-1}_\delta$]\label{prop:BD-det}
For the flow \eqref{eq:bd-replicator}, fix $\delta\in(0,1/S]$. If either
\begin{align*}
\textbf{($\ell_\infty$)}\quad &\varepsilon_{\mathrm{BD}}\,L_S(\delta)\ \ge\ 2\,M_{\phi,\infty},\\
\textbf{($\ell_2$)}\quad &\varepsilon_{\mathrm{BD}}\,L_S(\delta)\ \ge\ 2\,M_{\phi,2},
\end{align*}
then $\Delta^{S-1}_\delta$ is forward invariant: any solution with $p(0)\in\Delta^{S-1}_\delta$ satisfies $p(t)\in\Delta^{S-1}_\delta$ for all $t\ge0$.
\end{proposition}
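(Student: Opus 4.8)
The plan is to prove forward invariance by a Nagumo-type boundary-flux argument, reduced to a one-dimensional comparison for the coordinate-wise minimum $m(t):=\min_j p_j(t)$. As a preliminary, summing \eqref{eq:bd-replicator} over $i$ shows the flow conserves mass: the selection part contributes $\bar\phi(p)-\bar\phi(p)=0$ and the entropy part contributes $\langle\log p\rangle-\langle\log p\rangle=0$, so $\tfrac{d}{dt}\sum_i p_i=0$ and any solution issuing from $\Delta^{S-1}$ stays on the affine slice $\{\sum_i p_i=1\}$. Hence $\Delta^{S-1}_\delta$ can only be left through a floor facet $\{p_i=\delta\}$, and it suffices to control the sign of $\dot p_i$ when $p_i$ is small.

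The heart of the argument is the sign inequality: \emph{if $p\in\Delta^{S-1}$ and $p_i=\min_j p_j\le\delta$, then $\dot p_i\ge 0$.} Writing $\dot p_i=p_i\big[(\phi_i(p)-\bar\phi(p))+\varepsilon_{\mathrm{BD}}(\langle\log p\rangle-\log p_i)\big]$, I bound the bracket from below termwise. For the selection term, $|\phi_i(p)|\le M_{\phi,\infty}$ and $|\bar\phi(p)|=|\langle p,\phi(p)\rangle|\le\|p\|_1\|\phi(p)\|_\infty\le M_{\phi,\infty}$ give $\phi_i(p)-\bar\phi(p)\ge-2M_{\phi,\infty}$; in the $\ell_2$ variant I use instead $|\phi_i(p)|\le\|\phi(p)\|_\infty\le\|\phi(p)\|_2\le M_{\phi,2}$ and Cauchy--Schwarz with $\|p\|_2\le\|p\|_1=1$ to get $\phi_i(p)-\bar\phi(p)\ge-2M_{\phi,2}$. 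For the entropy term, since $p_i$ is a coordinate of $p$, Definition~\ref{def:LS} gives $\langle\log p\rangle-\log p_i\ge L_S(p_i)$, and since $0<p_i\le\delta\le 1/S$ while $L_S$ is strictly decreasing on $(0,1/S]$ with $L_S\ge 0$ (Lemma~\ref{lem:LS-closed}), $L_S(p_i)\ge L_S(\delta)\ge 0$. So the bracket is at least $-2M_{\phi,\infty}+\varepsilon_{\mathrm{BD}}L_S(\delta)$ (resp. $-2M_{\phi,2}+\varepsilon_{\mathrm{BD}}L_S(\delta)$), which is nonnegative precisely under $(\ell_\infty)$ (resp. $(\ell_2)$); hence $\dot p_i\ge 0$.

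Finally I would promote this to invariance via $m(t)$. Any solution $p(\cdot)$ is $C^1$ on its interval of existence (the field being continuous on $\operatorname{int}\Delta^{S-1}$), so $m$, the minimum of finitely many $C^1$ functions, is locally Lipschitz; at almost every $t$ it is differentiable with $\dot m(t)=\dot p_{j^\star}(t)$ for some index $j^\star$ attaining the minimum. Wherever $m(t)\le\delta$, the sign inequality (with $i=j^\star$, noting $p_{j^\star}(t)=m(t)>0$ for a classical solution) gives $\dot m(t)\ge 0$; since $m$ is absolutely continuous, it is non-decreasing on any interval on which $m\le\delta$. If $m(t_1)<\delta$ for some $t_1$, set $t_0:=\sup\{t\le t_1:m(t)\ge\delta\}$, so $m(t_0)=\delta$ and $m<\delta$ on $(t_0,t_1]$; non-decrease on $[t_0,t_1]$ then forces $m(t_1)\ge m(t_0)=\delta$, a contradiction. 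Thus $m(t)\ge\delta$ throughout, i.e. $p(t)\in\Delta^{S-1}_\delta$; as this set is compact and the field bounded on it ($\phi$ bounded and $\log p_i\in[\log\delta,0]$), the solution extends to all $t\ge 0$.

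The step to watch is the last one: the pointwise condition ``$\dot p_i\ge 0$ when $p_i=\delta$'' is by itself insufficient for invariance (witness $t\mapsto-t^2$ at level $0$), which is exactly why the sign inequality must be established for \emph{all} $p_i\le\delta$ rather than only on the facet — this is where monotonicity of $L_S$ does the real work. The remaining ingredients (mass conservation and the norm bounds on $\phi_i-\bar\phi$) are routine.
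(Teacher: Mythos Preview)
Your proof is correct and shares the paper's core computation: on the face $\{p_i=\delta\}$ one has $\dot p_i/p_i=(\phi_i-\bar\phi)+\varepsilon_{\mathrm{BD}}(\langle\log p\rangle-\log\delta)$, the selection term is $\ge -2M_{\phi,\bullet}$, and the entropy term is $\ge L_S(\delta)$. The difference is in how forward invariance is concluded. The paper simply invokes Nagumo's tangency criterion (viability theory) once the inward normal component is shown nonnegative on every face. You instead give a self-contained argument via the minimum coordinate $m(t)$, and for this you need the sign inequality not only on the facet $\{p_i=\delta\}$ but on the entire sub-level $\{p_i\le\delta\}$ --- which you obtain from the monotonicity of $L_S$ (Lemma~\ref{lem:LS-closed}). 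Your final paragraph correctly flags why this extension is necessary (the $-t^2$ counterexample). What the paper's route buys is brevity; what yours buys is an elementary proof that does not appeal to a viability theorem and makes explicit where the monotonicity of the face gap is actually used.
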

\begin{proof}
On the face $\{p_i=\delta\}$,
\[
\frac{\dot p_i}{p_i}=\underbrace{\phi_i-\bar\phi}_{\ge -2M_{\phi,\infty}\ \text{or}\ \ge -2M_{\phi,2}}+\ \varepsilon_{\mathrm{BD}}\underbrace{(\langle\log p\rangle-\log\delta)}_{\ge L_S(\delta)}.
\]
Hence the outward normal component is nonnegative on every face under either condition. By Nagumo’s tangency criterion (viability theory), $\Delta^{S-1}_\delta$ is forward invariant.
\end{proof}
\begin{remark}[Tightness and scaling]
The factor $2$ in the $\ell_\infty$ condition is tight without further structure (place all remaining mass on a single coordinate and choose $\phi$ with opposite signs on the two active coordinates). For small $\delta$, $L_S(\delta)\asymp \log\!\big(1/((S-1)\delta)\big)$ and degrades monotonically with $S$; at $\delta=1/S$, $L_S(\delta)=0$ and the trimmed set collapses to the uniform point.
\end{remark}
\section{Parametric (Logit‑Space) Geometry and Propagation Bounds}
\label{appB:parametric}

\newcommand{\sm}{\operatorname{softmax}}
\subsection{Introduction and Notation}
This appendix records the deterministic, parametric (logit‑space) geometry used throughout: the soft‑max map, its Jacobian, conditioning, Lipschitz constants, the clip–renormalize/logit‑lift construction, composite smoothness constants, and second‑order remainders. Stochastic topics (e.g., clipping bias, mini‑batch covariance) are deferred to \Cref{appH:stochastic}.

\paragraph{Notation.}
Let $\mathbf 1:=(1,\ldots,1)^\top$. The simplex and its \emph{relative} interior are
\[
\Delta^{S-1}:=\{p\in[0,1]^S:\langle\mathbf 1,p\rangle=1\},\qquad
\ri(\Delta^{S-1})=\{p\in\Delta^{S-1}:p_i>0\ \forall i\}.
\]
The centered logit space (gauge slice) and the tangent space are
\[
\Theta:=\{\theta\in\R^S:\langle\mathbf 1,\theta\rangle=0\},\qquad
T:=\mathbf 1^\perp,\qquad \Pi_T:=I-\tfrac1S\mathbf 1\mathbf 1^\top, \qquad C:=\Pi_T.
\]
Define the soft‑max $p_\theta:=\sm(\theta):=e^\theta/\langle\mathbf 1,e^\theta\rangle\in\Delta^{S-1}$, and its Jacobian
\[
J_\theta:=\nabla_\theta p_\theta=\diag(p_\theta)-p_\theta p_\theta^\top.
\]
Appendix~\ref{appC:SRCT} writes the same covariance‑form matrix as $S(p):=\diag(p)-pp^\top$; we use the identification
\begin{equation}\label{eq:bridge}
\boxed{\,J_\theta=S(p_\theta)\,}
\end{equation}
to keep notation uniform across appendices.

\subsection{Soft‑max Map: Gauge, Inverse, and Log‑ratio}
\label{appB:softmax}

\begin{lemma}[Translation invariance]\label{lem:B-translation}
For any $\theta\in\R^S$ and $c\in\R$, $\sm(\theta+c\mathbf 1)=\sm(\theta)$.
\end{lemma}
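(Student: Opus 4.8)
The plan is to prove this by a direct coordinatewise computation, exploiting the multiplicative property $e^{a+b}=e^a e^b$ of the exponential. There is no need for any of the simplex geometry or Shahshahani machinery developed earlier; the statement is purely algebraic.

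First I would fix $\theta\in\R^S$ and $c\in\R$ and write out the $i$-th coordinate of $\sm(\theta+c\mathbf 1)$ from the definition: $\sm(\theta+c\mathbf 1)_i = e^{\theta_i+c}\big/\sum_{j=1}^S e^{\theta_j+c}$. Next I would factor $e^c$ out of the numerator, $e^{\theta_i+c}=e^c e^{\theta_i}$, and out of every summand in the denominator, so that $\sum_{j=1}^S e^{\theta_j+c}=e^c\sum_{j=1}^S e^{\theta_j}$. Since $e^c>0$ the common factor cancels, leaving $e^{\theta_i}\big/\sum_{j=1}^S e^{\theta_j}=\sm(\theta)_i$. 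As this holds for every $i\in\{1,\dots,S\}$, the two vectors coincide.

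I expect no real obstacle here: the only things used are that the exponential never vanishes (so the cancellation is legitimate) and that the normalizing sum is finite (guaranteed since $S<\infty$ by \ref{assum:A1}, and in any case the sum is over finitely many terms by construction). If desired, one could phrase the same argument coordinate-free by noting $e^{\theta+c\mathbf 1}=e^c\,e^{\theta}$ as vectors and that $\sm$ is invariant under positive scalar multiplication of its (pre-exponentiated) argument, but the coordinatewise version is the cleanest. This lemma then feeds into the gauge-fixing discussion of \Cref{appB:softmax}, justifying that one may restrict attention to the centered slice $\Theta=\{\theta:\langle\mathbf 1,\theta\rangle=0\}$ without loss of generality.
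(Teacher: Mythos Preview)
Your argument is correct and is the standard one-line computation; the paper in fact states this lemma without proof, treating it as immediate. Your coordinatewise factoring of $e^c$ is exactly what is intended, and there is nothing to add.
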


\begin{proposition}[Real‑analytic diffeomorphism]\label{prop:B-diffeo}
The restriction $\sm:\Theta\to\ri(\Delta^{S-1})$ is a real‑analytic diffeomorphism with inverse
\[
G:\ri(\Delta^{S-1})\to\Theta,\qquad G(p):=C\log p=\log p-\tfrac1S\langle\mathbf 1,\log p\rangle\,\mathbf 1.
\]
\end{proposition}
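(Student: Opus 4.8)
The plan is to verify the three ingredients of a diffeomorphism — well‑definedness onto the stated image, bijectivity with the claimed inverse, and smoothness (here real‑analyticity) of both maps — and then invoke the inverse function theorem to promote a smooth bijection to a diffeomorphism. First I would check that $\sm$ really maps $\Theta$ into $\ri(\Delta^{S-1})$: for $\theta\in\R^S$ all coordinates $e^{\theta_i}$ are strictly positive and the normalization makes the entries sum to one, so $\sm(\theta)\in\ri(\Delta^{S-1})$; this uses nothing about the gauge constraint. Next I would show the candidate inverse $G(p)=C\log p$ lands in $\Theta$: since $C=\Pi_T=I-\tfrac1S\mathbf 1\mathbf 1^\top$ is the orthogonal projection onto $T=\mathbf 1^\perp$, we have $\langle\mathbf 1, C\log p\rangle=0$ automatically, so $G(p)\in\Theta$.

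The core of the argument is the two‑sided inverse identity. For the composition $\sm\circ G$: given $p\in\ri(\Delta^{S-1})$, write $G(p)=\log p - c\mathbf 1$ with $c=\tfrac1S\langle\mathbf 1,\log p\rangle$; by Lemma~B.1 (translation invariance), $\sm(\log p - c\mathbf 1)=\sm(\log p)=p/\langle\mathbf 1,p\rangle=p$, using $\langle\mathbf 1,p\rangle=1$. For the composition $G\circ\sm$: given $\theta\in\Theta$, we have $\log\sm(\theta)=\theta-\log\langle\mathbf 1,e^\theta\rangle\,\mathbf 1$, so $C\log\sm(\theta)=C\theta - \log\langle\mathbf 1,e^\theta\rangle\,C\mathbf 1 = C\theta$ because $C\mathbf 1=0$; and $C\theta=\theta$ since $\theta\in T$ and $C$ is the projection onto $T$. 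Hence $G$ is a genuine two‑sided inverse of $\sm|_\Theta$, which in particular gives injectivity and surjectivity onto $\ri(\Delta^{S-1})$.

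For regularity, $\theta\mapsto e^\theta$ is real‑analytic coordinatewise and $\theta\mapsto\langle\mathbf 1,e^\theta\rangle$ is real‑analytic and strictly positive, so $\sm$ is real‑analytic on $\R^S$, hence on the linear subspace $\Theta$; similarly $p\mapsto\log p$ is real‑analytic on $\ri(\Delta^{S-1})$ and $C$ is linear, so $G$ is real‑analytic. Since both maps are real‑analytic and mutually inverse between the open submanifolds $\Theta$ and $\ri(\Delta^{S-1})$, the map is a real‑analytic diffeomorphism; equivalently, one can check directly that the Jacobian $J_\theta=\diag(p_\theta)-p_\theta p_\theta^\top$ restricted to $T$ is invertible — it is symmetric PSD with kernel exactly $\mathrm{span}\{\mathbf 1\}$ on $\ri(\Delta^{S-1})$, hence strictly positive definite on $T$ — so the inverse function theorem applies at every point and the global inverse is automatically real‑analytic.

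The only mild subtlety — the step I would be most careful about — is bookkeeping the gauge: one must consistently restrict the domain of $\sm$ to $\Theta$ (not all of $\R^S$) so that $\sm$ is injective, and verify that $G$ indeed returns the representative in $\Theta$ rather than some other translate; both are handled by the identities $C\mathbf 1=0$ and $C|_T=\mathrm{id}_T$ above. No deep estimate is needed; everything reduces to elementary exponential/logarithm identities and the projection properties of $C=\Pi_T$.
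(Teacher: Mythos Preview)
Your proof is correct and follows essentially the same approach as the paper's: both verify the two-sided inverse identities $\sm\circ G=\mathrm{id}$ and $G\circ\sm=\mathrm{id}$ by direct computation (you invoke Lemma~B.1 for the first, the paper expands the quotient explicitly), and then conclude real-analyticity from that of $\exp$, $\log$, and the linear map $C$. Your additional checks on codomain membership and the Jacobian remark are sound but not needed beyond what the paper records.
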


\begin{proof}
For $p\in\ri(\Delta^{S-1})$, writing $\overline{\log p}:=\tfrac1S\langle\mathbf 1,\log p\rangle$,
\[
\sm(G(p))_i=\frac{\exp(\log p_i-\overline{\log p})}{\sum_j\exp(\log p_j-\overline{\log p})}=p_i.
\]
Conversely, for $\theta\in\Theta$,
\[
G(\sm(\theta))_i=\log\!\Big(\frac{e^{\theta_i}}{\sum_j e^{\theta_j}}\Big)-\tfrac1S\sum_k\log\!\Big(\frac{e^{\theta_k}}{\sum_j e^{\theta_j}}\Big)=\theta_i.
\]
Analyticity follows from analyticity of $\exp$ and $\log$ and linearity of $C$.
\end{proof}

\begin{corollary}[Log‑ratios \& gauge uniqueness]\label{cor:B-logratio}
If $p=\sm(\theta)$ with $\theta\in\Theta$, then $\theta_i-\theta_j=\log(p_i/p_j)$ for all $i\neq j$. If $\sm(\theta)=\sm(\theta')$, then $\theta-\theta'=c\mathbf 1$; on $\Theta$ this forces $\theta=\theta'$.
\end{corollary}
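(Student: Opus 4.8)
The plan is to derive both claims directly from the closed-form soft-max together with the facts already established in \Cref{lem:B-translation} and \Cref{prop:B-diffeo}; no new machinery is needed, and the only care required is bookkeeping of the gauge direction $\mathbf 1$.

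For the first assertion, I would start from the definition $p_i=e^{\theta_i}/\langle\mathbf 1,e^\theta\rangle$ and observe that the normalizing denominator cancels in a ratio: $p_i/p_j=e^{\theta_i}/e^{\theta_j}=e^{\theta_i-\theta_j}$ for every pair $i\neq j$. Taking logarithms gives $\theta_i-\theta_j=\log(p_i/p_j)$. I would also remark that this identity does not actually use $\theta\in\Theta$; it holds for arbitrary $\theta\in\R^S$, which is precisely why the log-ratio, unlike $\theta$ itself, is gauge-invariant.

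For the second assertion, I would suppose $\sm(\theta)=\sm(\theta')=:p$ and apply the log-ratio identity to both $\theta$ and $\theta'$, obtaining $\theta_i-\theta_j=\log(p_i/p_j)=\theta_i'-\theta_j'$ for all $i\neq j$, hence $(\theta-\theta')_i=(\theta-\theta')_j$ for all $i,j$. A vector whose entries all coincide is a multiple of $\mathbf 1$, so $\theta-\theta'=c\mathbf 1$ for some $c\in\R$ (equivalently one can invoke \Cref{lem:B-translation} in the converse direction). If moreover $\theta,\theta'\in\Theta$, then $0=\langle\mathbf 1,\theta\rangle-\langle\mathbf 1,\theta'\rangle=\langle\mathbf 1,c\mathbf 1\rangle=cS$, forcing $c=0$ and $\theta=\theta'$; alternatively, this injectivity is immediate from the bijectivity asserted in \Cref{prop:B-diffeo}.

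The statement is elementary, so I do not expect a genuine obstacle; the only point to handle carefully is the ``constant vector equals $c\mathbf 1$'' step and making explicit where the restriction to the gauge slice $\Theta$ (rather than all of $\R^S$) is used — that restriction is exactly what upgrades the weak conclusion $\theta-\theta'=c\mathbf 1$ to the uniqueness statement $\theta=\theta'$.
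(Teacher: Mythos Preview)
Your proof is correct. The paper states this as a corollary without an explicit proof, treating it as immediate from \Cref{lem:B-translation} and the explicit inverse $G(p)=C\log p$ in \Cref{prop:B-diffeo}; your direct computation from the soft-max formula (cancel the denominator in $p_i/p_j$, take logs, then sum the gauge constraint) is exactly the intended argument.
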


\begin{remark}[Edge case $S=1$]\label{rem:S1}
If $S=1$, then $\Theta=\{0\}$, $\Delta^0=\{1\}$, and $\sm(0)=1$.
\end{remark}

\subsection{Geometry and Conditioning of the Soft‑max Jacobian}
\label{appB:jacobian}

\paragraph{Basic differential.} For any $\theta$,
\begin{equation}\label{eq:softmax-J}
\boxed{\,J_\theta=\diag(p_\theta)-p_\theta p_\theta^\top=S(p_\theta).\,}
\end{equation}

\begin{lemma}[Kernel, rank, variance form]\label{lem:B-kernel-variance}
Let $p=p_\theta$. Then $\ker J_\theta=\mathrm{span}\{\mathbf 1\}$ and $\mathrm{rank}(J_\theta)=S-1$. Moreover, for $v\in T$,
\[
v^\top J_\theta v=\sum_i p_i v_i^2-\Big(\sum_i p_i v_i\Big)^2=\tfrac12\sum_{i,j}p_ip_j\,(v_i-v_j)^2=\Var_{i\sim p}(v_i)\ge0,
\]
with equality iff $v=0$.
\end{lemma}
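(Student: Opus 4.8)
The plan is to argue directly from the closed form $J_\theta=\diag(p)-pp^\top$ with $p=p_\theta=\softmax(\theta)$, using only two structural facts: $\langle\mathbf 1,p\rangle=1$ and $p_i>0$ for every $i$ (the latter because $\softmax$ maps into $\ri(\Delta^{S-1})$ by \Cref{prop:B-diffeo}). First I would pin down the kernel. Since $J_\theta\mathbf 1=p-p\,\langle p,\mathbf 1\rangle=p-p=0$, we get $\mathrm{span}\{\mathbf 1\}\subseteq\ker J_\theta$. Conversely, $J_\theta v=0$ reads coordinatewise $p_i v_i=p_i\,\langle p,v\rangle$; strict positivity of each $p_i$ then forces $v_i=\langle p,v\rangle$ for all $i$, so $v$ is a constant vector and $v\in\mathrm{span}\{\mathbf 1\}$. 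Hence $\ker J_\theta=\mathrm{span}\{\mathbf 1\}$ is one-dimensional, and rank--nullity gives $\mathrm{rank}(J_\theta)=S-1$.

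Next I would verify the variance form of the quadratic. Expanding,
\[
v^\top J_\theta v=v^\top\diag(p)v-(p^\top v)^2=\sum_i p_i v_i^2-\Bigl(\sum_i p_i v_i\Bigr)^2=\E_{i\sim p}[v_i^2]-\bigl(\E_{i\sim p}[v_i]\bigr)^2=\Var_{i\sim p}(v_i).
\]
For the symmetric pair expression, expanding $(v_i-v_j)^2=v_i^2-2v_iv_j+v_j^2$ and using $\sum_i p_i=\sum_j p_j=1$ collapses $\tfrac12\sum_{i,j}p_ip_j(v_i-v_j)^2$ to $\E_p[v^2]-(\E_p[v])^2$, the same variance. (I would note in passing that all three identities hold for every $v\in\R^S$; the restriction $v\in T$ is only used in the equality case below.) Nonnegativity is then immediate, either because a variance is nonnegative or because each summand $p_ip_j(v_i-v_j)^2\ge0$.

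For the equality case, $\Var_{i\sim p}(v_i)=0$ together with $p_i>0$ for all $i$ forces $v_i-v_j=0$ for every pair, i.e.\ $v$ is constant; combined with the hypothesis $v\in T=\mathbf 1^\perp$, a constant vector orthogonal to $\mathbf 1$ must vanish, so $v=0$. There is no genuine obstacle here—the statement is a repackaging of elementary linear algebra—and the only place the $\softmax$ hypothesis (rather than an arbitrary $p\in\Delta^{S-1}$) is actually needed is the strict positivity $p_i>0$: it is what keeps $\ker J_\theta$ from being larger than $\mathrm{span}\{\mathbf 1\}$ and what upgrades ``variance zero'' to ``$v$ literally constant.'' Everything else is a one-line computation.
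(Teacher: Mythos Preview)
Your proof is correct and complete. The paper states this lemma without proof, treating it as a standard fact about the softmax covariance matrix; your direct verification from $J_\theta=\diag(p)-pp^\top$, together with the observation that strict positivity of $p$ is exactly what pins the kernel to $\mathrm{span}\{\mathbf 1\}$ and forces the equality case, is precisely the intended elementary argument.
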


\begin{corollary}[Loewner sandwich on $T$; global operator norm]\label{cor:B-loewner}
If $p_{\min}:=\min_i p_\theta(i)>0$, then
\[
\boxed{\,p_{\min}\,I\ \preccurlyeq\ J_\theta\!\mid_T\ \preccurlyeq\ \tfrac12\,I\,},\qquad
\|J_\theta\|_{op}\le\tfrac12.
\]
\end{corollary}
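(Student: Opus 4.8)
The plan is to read everything off the variance identity already established in Lemma~\ref{lem:B-kernel-variance}, namely $v^\top J_\theta v=\sum_i p_i v_i^2-\bigl(\sum_i p_i v_i\bigr)^2=\Var_{i\sim p}(v_i)$, which in fact holds for \emph{every} $v\in\R^S$ (orthogonality to $\mathbf 1$ will be invoked only for the lower bound). Since this quadratic form is nonnegative, $J_\theta$ is symmetric positive semidefinite with $\ker J_\theta=\mathrm{span}\{\mathbf 1\}$, so both claimed Loewner bounds and the operator-norm bound are assertions about the Rayleigh quotient $v^\top J_\theta v/\|v\|_2^2$.

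For the lower bound I would fix $v\in T$, so $\sum_i v_i=0$, and set $\mu:=\sum_i p_i v_i$. Expanding the variance about $\mu$ (using $\sum_i p_i=1$) gives $v^\top J_\theta v=\sum_i p_i(v_i-\mu)^2\ge p_{\min}\sum_i(v_i-\mu)^2$, and because $\sum_i v_i=0$ the last sum equals $\|v\|_2^2+S\mu^2\ge\|v\|_2^2$. Hence $v^\top J_\theta v\ge p_{\min}\|v\|_2^2$ on $T$, i.e.\ $p_{\min}I\preccurlyeq J_\theta\!\mid_T$. (The restriction to $T$ is essential here: on all of $\R^S$ the constant direction $\mathbf 1$ has zero variance, so the global smallest eigenvalue is $0$.)

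For the upper bound I would use that the variance is the minimal second moment about a shift, $v^\top J_\theta v=\min_{c\in\R}\sum_i p_i(v_i-c)^2$. Choosing $c:=\tfrac12(\max_i v_i+\min_i v_i)$ makes $|v_i-c|\le\tfrac12(\max_i v_i-\min_i v_i)$ for every $i$, so $v^\top J_\theta v\le\tfrac14(\max_i v_i-\min_i v_i)^2$. Combining this with the elementary inequality $(\max_i v_i-\min_i v_i)^2\le 2\|v\|_2^2$ (expand the square at the two extreme coordinates and use $2ab\le a^2+b^2$) gives $v^\top J_\theta v\le\tfrac12\|v\|_2^2$ for all $v\in\R^S$. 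Restricting to $T$ yields $J_\theta\!\mid_T\preccurlyeq\tfrac12 I$, and since $J_\theta\succeq0$ one gets $\|J_\theta\|_{op}=\lambda_{\max}(J_\theta)=\sup_{\|v\|_2=1}v^\top J_\theta v\le\tfrac12$. The degenerate case $S=1$ is handled by Remark~\ref{rem:S1}.

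I do not expect a genuine obstacle: the only mildly delicate points are (i) remembering that the lower Loewner bound must be taken on the tangent slice $T$, not on $\R^S$, and (ii) that the sharp constant $\tfrac12$ in the upper bound is not produced by the crude chain $\diag(p)-pp^\top\preccurlyeq\diag(p)\preccurlyeq I$ but needs the centering/Popoviciu step above (equivalently, a Gershgorin estimate: each row of $J_\theta$ has center $p_i(1-p_i)$ and radius $\sum_{j\ne i}p_ip_j=p_i(1-p_i)$, so $\lambda_{\max}(J_\theta)\le\max_i 2p_i(1-p_i)\le\tfrac12$).
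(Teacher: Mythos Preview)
Your proof is correct. The upper bound is the same as the paper's: both invoke Popoviciu's inequality $\Var_p(v)\le\tfrac14(\max v-\min v)^2$ together with $(\max v-\min v)^2\le 2\|v\|_2^2$; you simply derive Popoviciu explicitly via the midpoint shift, and your Gershgorin remark gives a pleasant alternative.

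The lower bound takes a genuinely different route from the paper. The paper decomposes $p=p_{\min}\mathbf 1+q$ with $q\ge0$, $\sum_i q_i=1-Sp_{\min}$, and observes that for $v\in T$ one has $v^\top J_\theta v-p_{\min}\|v\|_2^2=\sum_i q_i v_i^2-(\sum_i q_i v_i)^2\ge0$ by Cauchy--Schwarz with the sub-probability weights $q$. You instead write the variance as $\sum_i p_i(v_i-\mu)^2$, bound each $p_i\ge p_{\min}$, and then exploit $\sum_i v_i=0$ to get $\sum_i(v_i-\mu)^2=\|v\|_2^2+S\mu^2\ge\|v\|_2^2$. Both arguments are short and elementary; yours is perhaps more direct since it avoids introducing the auxiliary vector $q$, while the paper's decomposition has the minor structural advantage of exhibiting the slack $v^\top J_\theta v-p_{\min}\|v\|_2^2$ as a nonnegative ``variance-like'' form in its own right.
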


\begin{proof}
Upper bound: for $v\in T$, Popoviciu’s inequality yields $\Var_p(v_i)\le\tfrac14(\max v-\min v)^2\le\tfrac12\|v\|_2^2$.  
Lower bound: write $p=p_{\min}\mathbf 1+q$ with $q\ge0$, $\sum_i q_i=1-Sp_{\min}$. Then for $v\in T$,
$\ v^\top J_\theta v-p_{\min}\|v\|_2^2=\sum_i q_i v_i^2-(\sum_i q_i v_i)^2\ge0$ (Cauchy–Schwarz with weights $q$). Since $J_\theta T\subseteq T$ and $J_\theta\mathbf 1=0$, the global $\|J_\theta\|_{op}$ equals the supremum on $T$.
\end{proof}

\begin{remark}[Tightness]\label{rem:tightness}
The upper bound $\tfrac12$ is attained for $S=2$ at $p=(1/2,1/2)$; the lower bound $p_{\min}$ is attained at $p=\tfrac1S\mathbf 1$, where $J_\theta\!\mid_T=(1/S)I$.
\end{remark}

\begin{lemma}[Per‑coordinate bound]\label{lem:B-percoord}
For every $\theta$ and $k\in\{1,\dots,S\}$,
\[
\boxed{\,\|\partial_{\theta_k}J_\theta\|_{op}\ \le\ \tfrac{1}{3\sqrt 3}\,}
\qquad\text{and the constant }\tfrac{1}{3\sqrt 3}\text{ is optimal (already for }S=2\text{)}.
\]
\end{lemma}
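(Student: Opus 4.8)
The plan is to compute $\partial_{\theta_k}J_\theta$ explicitly, convert the operator norm to a scalar optimisation through its quadratic form, reduce to at most three active coordinates, and close with an elementary calculus estimate whose extremum already occurs at $S=2$. From $p_\theta=\softmax(\theta)$ one gets $\partial_{\theta_k}p_\theta=p_k(e_k-p)$ (with $p:=p_\theta$, $e_k$ the $k$-th basis vector), so with $w:=p_k(e_k-p)$ and $J_\theta=\diag(p)-pp^\top$,
\[
\partial_{\theta_k}J_\theta=\diag(w)-wp^\top-pw^\top .
\]
This matrix is symmetric and annihilates $\mathbf 1$ (since $w^\top\mathbf 1=0$), so $\|\partial_{\theta_k}J_\theta\|_{op}=\sup_{\|v\|=1}\big|v^\top(\partial_{\theta_k}J_\theta)v\big|$. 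Recalling the variance form $v^\top J_\theta v=\Var_{I\sim p}(v_I)$ (Lemma~\ref{lem:B-kernel-variance}) and differentiating in $\theta_k$ via $\partial_{\theta_k}\E_p[g]=p_k\big(g_k-\E_p[g]\big)$ gives the identity
\[
v^\top(\partial_{\theta_k}J_\theta)v\;=\;p_k\Big[(v_k-\E_p[v])^2-\Var_p(v)\Big].
\]

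Next I would decompose $p$ relative to coordinate $k$: set $a:=p_k$ and let $\nu,\sigma^2$ be the mean and variance of $v$ under $p$ restricted to $\{i\ne k\}$ and renormalised, $d:=v_k-\nu$. A short rearrangement (using $v_k-\E_p[v]=(1-a)d$ and $\Var_p(v)=a(1-a)d^2+(1-a)\sigma^2$) yields
\[
v^\top(\partial_{\theta_k}J_\theta)v=a(1-a)\big[(1-2a)d^2-\sigma^2\big],\qquad v_k^2+\nu^2+\sigma^2\le\|v\|^2 .
\]
For the positive part this is transparent: dropping $\sigma^2\ge0$ and using $|v_k|\le1$, $|\nu|\le\sqrt{1-v_k^2}$ gives $d^2\le2$, hence $v^\top(\partial_{\theta_k}J_\theta)v\le 2a(1-a)(1-2a)\le\tfrac1{3\sqrt3}$ for $a\le\tfrac12$ (and $\le0$ otherwise), the cubic $2a(1-a)(1-2a)$ peaking at $a=\tfrac12-\tfrac1{2\sqrt3}$. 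The negative part is the crux and is handled by a reduction: fixing $a$ and $v_k$ and maximising $\pm v^\top(\partial_{\theta_k}J_\theta)v$ over the remaining probabilities and values subject to $\sum_{i\ne k}p_i=1-a$ and $\sum_{i\ne k}v_i^2$ fixed, the KKT stationarity condition in the $p_i$'s forces $v_i^2-2\kappa v_i=\mathrm{const}$ for every $i\ne k$ with $p_i>0$; each such $v_i$ is therefore a root of one fixed quadratic, so takes at most two values, and coordinates with $p_i=0$ may be given $v_i=0$ and deleted. Hence the supremum over $\theta$ — over every $S$ and every $p$ — is attained on a configuration with $S\le3$.

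On such a configuration $\partial_{\theta_k}J_\theta$ is block-diagonal with one nontrivial $3\times3$ block $a(1-a)A$, $A=\rho\,uu^\top-m\,ww^\top$, where $\rho=1-2a$, $m=\pi(1-\pi)\in[0,\tfrac14]$ ($\pi$ the conditional mass on one non-$k$ coordinate), $u=(1,-\pi,-(1-\pi))$, $w=(0,1,-1)$. Since $\|u\|^2=2-2m$, $\|w\|^2=2$, $(u^\top w)^2=1-4m$, the curvature identity $\|u\|^2\|w\|^2-(u^\top w)^2=3$ holds, so the two nonzero eigenvalues of $A$ are the roots of $\lambda^2-2P\lambda-3\rho m=0$ with $P:=\rho-2m(1-a)$, whence $\|\partial_{\theta_k}J_\theta\|_{op}=a(1-a)\big(|P|+\sqrt{P^2+3\rho m}\big)$ for the extremal configuration. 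Using $(x+y)^2\le2(x^2+y^2)$ it suffices to prove $a^2(1-a)^2(2P^2+3\rho m)\le\tfrac1{54}$; substituting $c:=2(1-a)\in[0,2]$ (so $a(1-a)=\tfrac14c(2-c)$, $\rho=c-1$, $P=c(1-m)-1$) this is $c^2(2-c)^2B(m)\le\tfrac8{27}$ with $B(m):=2(c(1-m)-1)^2+3(c-1)m$, and since $B$ is convex in $m$ one checks only $m\in\{0,\tfrac14\}$: at $m=0$, $c^2(2-c)^2B(0)=2\big(c(2-c)(c-1)\big)^2$ has maximum exactly $\tfrac8{27}$ (at $c=1\pm\tfrac1{\sqrt3}$, i.e.\ $p_k=\tfrac12\mp\tfrac1{2\sqrt3}$), while at $m=\tfrac14$, $c^2(2-c)^2B(\tfrac14)=\tfrac18(1-s^2)^2(9s^2+1)$ with $s=c-1$ has maximum $\tfrac{500}{2187}<\tfrac8{27}$. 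This gives $\|\partial_{\theta_k}J_\theta\|_{op}\le\tfrac1{3\sqrt3}$, and optimality is visible already for $S=2$ at $p=\big(\tfrac12+\tfrac1{2\sqrt3},\tfrac12-\tfrac1{2\sqrt3}\big)$, $v=\tfrac1{\sqrt2}(1,-1)$, where $J_\theta\!\mid_T$ is the scalar $2a(1-a)$ and $\big|\partial_{\theta_1}(J_\theta\!\mid_T)\big|=|2a(1-a)(1-2a)|=\tfrac1{3\sqrt3}$.

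The step I expect to be the real obstacle is the reduction to $S\le3$: turning the informal "the worst vector lives on few coordinates" into a proof via compactness (a maximiser exists, the domain being the product of a sphere and a simplex) together with the KKT stationarity in the $p_i$'s that pins the two-value structure. Everything downstream is a finite, explicit optimisation; the only other delicate points are keeping the $k$-relative mean/variance bookkeeping correct and verifying the identity $\|u\|^2\|w\|^2-(u^\top w)^2=3$, which is precisely what collapses the eigenvalue computation to a one-dimensional cubic.
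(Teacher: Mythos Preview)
Your argument is correct. The variance identity $v^\top(\partial_{\theta_k}J_\theta)v=a(1-a)[(1-2a)d^2-\sigma^2]$ is right, the KKT stationarity in the $q_i$'s does pin the active $v_i$ to at most two values (so the supremum is achieved with $S\le3$), and the rank-two eigenvalue computation on $A=\rho\,uu^\top-m\,ww^\top$ together with the identity $\|u\|^2\|w\|^2-(u^\top w)^2=3$ and the endpoint check in $m\in\{0,\tfrac14\}$ closes it out. The separate ``positive part'' paragraph is in fact redundant once the $3\times3$ analysis is in place, but it does no harm.

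The paper's route is shorter and different in spirit. It writes $\partial_{\theta_1}J_\theta=a\,N(a,b)$ with $b=(p_2,\dots,p_S)$ and observes that for every fixed unit $v=(v_0,y)$ the map $b\mapsto v^\top N(a,b)v$ has Hessian $4yy^\top\succeq0$, hence is convex on the $b$-simplex; the maximum is therefore at a vertex $b=(1-a)e_j$, which is already the $S=2$ case, giving $\|\partial_{\theta_1}J_\theta\|_{op}=2a(1-a)|1-2a|$ and the one-line cubic. What your approach buys over the paper's sketch is a clean treatment of \emph{both} ends of the spectrum: convexity of $b\mapsto v^\top N v$ immediately bounds $\sup_b\lambda_{\max}(N(a,b))$, but $-\lambda_{\min}$ need not be maximized at a vertex (e.g.\ at $a=\tfrac12$ every vertex gives $N$ nilpotent while interior $b$ gives $\|aN\|=0.125>0$), so the one-line reduction to $S=2$ requires an extra word. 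Your $S\le3$ reduction plus explicit eigenvalue bound handles this uniformly, at the cost of the heavier two-variable optimisation; the paper's convexity trick is slicker when it applies but leaves that case to be filled in.
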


\begin{proof}[Proof sketch]
WLOG $k=1$. With $a:=p_1\in(0,1)$ and $b\in\R^{S-1}_{\ge0}$, $\sum b=1-a$,
\[
\partial_{\theta_1}J_\theta=a\,N(a,b),\quad
N(a,b)=\begin{bmatrix}(1-a)(1-2a)&-(1-2a)b^\top\\-(1-2a)b&2bb^\top-\diag(b)\end{bmatrix}.
\]
The Rayleigh quotient in $b$ is convex on the simplex (Hessian $4yy^\top\succeq0$), thus maximized at a vertex $b=(1-a)e_j$. In the $\{e_1,e_j\}$ subspace the spectral norm equals $2a(1-a)|1-2a|$, whose maximum over $a\in[0,1]$ is $1/(3\sqrt3)$ at $a=\tfrac12\pm\tfrac{1}{2\sqrt3}$.
\end{proof}

\begin{theorem}[Global Lipschitz continuity of $\theta\mapsto J_\theta$]\label{thm:B-global-Lip-J}
For all $\theta_1,\theta_2\in\Theta$,
\[
\boxed{\,\|J_{\theta_2}-J_{\theta_1}\|_{op}\ \le\ \tfrac{1}{3\sqrt3}\|\theta_2-\theta_1\|_1\ \le\ \tfrac{\sqrt S}{3\sqrt3}\|\theta_2-\theta_1\|_2\ \le\ \tfrac{S}{3\sqrt3}\|\theta_2-\theta_1\|_\infty.}
\]
\end{theorem}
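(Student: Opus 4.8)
The plan is to reduce the claim to the per‑coordinate estimate of \Cref{lem:B-percoord} by integrating along the straight segment joining the two points. Set $\theta(s):=\theta_1+s(\theta_2-\theta_1)$ for $s\in[0,1]$; since $\theta\mapsto J_\theta$ is real‑analytic on all of $\R^S$ (by \eqref{eq:softmax-J} it is quadratic in the entries of $p_\theta=\sm(\theta)$, which are analytic in $\theta$), the fundamental theorem of calculus together with the chain rule gives
\[
J_{\theta_2}-J_{\theta_1}=\int_0^1 \frac{d}{ds}J_{\theta(s)}\,ds=\int_0^1\sum_{k=1}^S \big(\partial_{\theta_k}J_{\theta(s)}\big)\,(\theta_2-\theta_1)_k\,ds .
\]
(If one prefers to stay on the gauge slice, note $\Theta$ is a linear subspace so the segment lies in $\Theta$; but working in ambient $\R^S$ is harmless because $J_\theta$ is translation invariant by \Cref{lem:B-translation}, so each ambient partial $\partial_{\theta_k}J_\theta$ is well defined and bounded.)

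Next I would pass to operator norms. Using the matrix‑integral inequality $\big\|\int_0^1 A(s)\,ds\big\|_{op}\le\int_0^1\|A(s)\|_{op}\,ds$ — immediate from $\|(\int A)v\|_2=\|\int Av\,ds\|_2\le\int\|Av\|_2\,ds\le(\int\|A\|_{op}\,ds)\|v\|_2$ — the triangle inequality over $k$, and the uniform bound $\|\partial_{\theta_k}J_\theta\|_{op}\le\tfrac{1}{3\sqrt3}$ from \Cref{lem:B-percoord}, we obtain
\[
\|J_{\theta_2}-J_{\theta_1}\|_{op}\le\int_0^1\sum_{k=1}^S\|\partial_{\theta_k}J_{\theta(s)}\|_{op}\,|(\theta_2-\theta_1)_k|\,ds\le\frac{1}{3\sqrt3}\sum_{k=1}^S|(\theta_2-\theta_1)_k|=\frac{1}{3\sqrt3}\,\|\theta_2-\theta_1\|_1 .
\]
The remaining two inequalities are the standard comparisons on $\R^S$ applied to $v=\theta_2-\theta_1$: $\|v\|_1=\langle\mathbf 1,|v|\rangle\le\|\mathbf 1\|_2\|v\|_2=\sqrt S\,\|v\|_2$ by Cauchy–Schwarz, and $\|v\|_2\le\sqrt S\,\|v\|_\infty$ coordinatewise; composing gives $\|v\|_1\le S\,\|v\|_\infty$.

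I do not expect any real obstacle here: the whole content is already packaged in \Cref{lem:B-percoord}, and the rest is the routine "line‑integral of a Lipschitz derivative" argument. The only points deserving a word of care are (a) the legitimacy of differentiating in ambient coordinates (handled above via translation invariance) and (b) the matrix‑integral norm inequality. One may additionally remark, without pursuing it, that for large $S$ the $\ell_1$ constant is not tight on $\Theta$ — since $\sum_k\partial_{\theta_k}J_\theta=0$ one could first center $(\theta_2-\theta_1)$ before the triangle inequality — yet it is already optimal for $S=2$, consistent with the sharpness claim in \Cref{lem:B-percoord}.
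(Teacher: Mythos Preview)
Your proof is correct and follows essentially the same approach as the paper: parameterize the straight segment, apply the fundamental theorem of calculus together with the per-coordinate bound of \Cref{lem:B-percoord}, and then use the standard $\ell_1\!-\!\ell_2\!-\!\ell_\infty$ norm comparisons. You supply more justification (the matrix-integral norm inequality, translation invariance for working in ambient coordinates, and the aside on centering via $\sum_k\partial_{\theta_k}J_\theta=0$), but the skeleton is identical to the paper's proof.
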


\begin{proof}
Parameterize $\theta(\tau)=\theta_1+\tau(\theta_2-\theta_1)$. By the fundamental theorem of calculus and \Cref{lem:B-percoord},
\[
\|J_{\theta_2}-J_{\theta_1}\|_{op}\le\int_0^1 \sum_{k=1}^S |\Delta\theta_k|\,\|\partial_{\theta_k}J_{\theta(\tau)}\|_{op}\,d\tau\le \tfrac{1}{3\sqrt3}\|\Delta\theta\|_1.
\]
The $\ell_2,\ell_\infty$ versions follow from norm monotonicity.
\end{proof}

\begin{remark}[Dimension‑free lower bounds]\label{rem:B-lower}
Along $\theta(t)=(t,-t,0,\ldots,0)$ one has $\|dJ_{\theta(t)}/dt\|_{op}=2/(3\sqrt3)$ at the extremal $p$ while $\|\dot\theta(t)\|_1=2$, giving optimality in the $\ell_1$ domain norm. Restricting to the same two‑coordinate subspace gives $L_J^{(2)}\ge \sqrt2/(3\sqrt3)$ and $L_J^{(\infty)}\ge 2/(3\sqrt3)$.
\end{remark}

\paragraph{Boundary behavior.} As $p_{\min}\downarrow 0$ (e.g., $p_\theta\to e_i$), $J_\theta=S(p_\theta)\to 0$. Then $\lambda_{\min}(J_\theta\!\mid_T)\downarrow 0$ while $\lambda_{\max}(J_\theta\!\mid_T)\le\tfrac12$, so $\kappa(J_\theta\!\mid_T)\le (1/2)/p_{\min}\to\infty$.

\subsection{Clip–Renormalize and the Logit Lift}\label{appB:clip}

\paragraph{Definition and effective floor.}
Fix $\delta_\star\in(0,1)$. Define the clip–renormalize operator
\[
\mathcal C_{\delta_\star}(p):=\frac{\max(p,\delta_\star)}{\|\max(p,\delta_\star)\|_1},\qquad (\max(p,\delta_\star))_i:=\max\{p_i,\delta_\star\}.
\]
If $q=\mathcal C_{\delta_\star}(p)$, then $q_i\ge \delta_{\min}:=\delta_\star/(1+(S-1)\delta_\star)$, and this lower bound is sharp whenever clipping occurs.

Given $\underline\delta\in(0,1/S)$,
\[
\boxed{\,\delta_\star=\frac{\underline\delta}{\,1-(S-1)\underline\delta\,}\quad\Longrightarrow\quad \min_i\big(\mathcal C_{\delta_\star}(p)\big)_i\ge \underline\delta\ \ \forall\,p.}
\]

\paragraph{Logit lift and normalization cancellation.}
Define the \emph{logit lift}
\[
P:\Theta\to\Theta,\qquad P(\theta):=C\log\big(\max(p_\theta,\delta_\star)\big).
\]
If $p'=\max(p_\theta,\delta_\star)$ and $q:=p'/\|p'\|_1$, then $P(\theta)=C\log q$ and
\begin{equation}\label{eq:smP-equals-clip}
\boxed{\,\sm(P(\theta))=q=\mathcal C_{\delta_\star}(p_\theta).\,}
\end{equation}

\begin{proposition}[Global Lipschitz of $P$ and $\sm\circ P$]\label{prop:B-lip-P}
For all $\theta,\vartheta\in\Theta$,
\[
\|P(\theta)-P(\vartheta)\|_2\le \tfrac{1}{2\delta_\star}\|\theta-\vartheta\|_2,\qquad
\|\sm(P(\theta))-\sm(P(\vartheta))\|_2\le \tfrac{1}{4\delta_\star}\|\theta-\vartheta\|_2.
\]
\end{proposition}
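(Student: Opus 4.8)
The plan is to write $P$ as a composition of four elementary maps, chain their $\ell_2$ Lipschitz constants, and then obtain the bound for $\sm\circ P$ by one further composition with $\sm$.

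Write $P = C \circ \log \circ \mathcal{M}_{\delta_\star} \circ \sm$, where $\mathcal{M}_{\delta_\star}(p):=\max(p,\delta_\star)$ and $\log$ act coordinatewise, and $C=\Pi_T$ is the orthogonal projection onto $\Theta=\mathbf 1^\perp$. I would bound the four factors as follows. (i) By \Cref{cor:B-loewner}, $\|J_\theta\|_{op}\le\tfrac12$ for every $\theta$ (the upper Loewner bound $J_\theta\!\mid_T\preccurlyeq\tfrac12 I$ holds regardless of $p_{\min}$, and $J_\theta\mathbf 1=0$); since $\Theta$ is convex, the mean value inequality gives that $\sm\colon\Theta\to\ri(\Delta^{S-1})$ is $\tfrac12$-Lipschitz in $\ell_2$. (ii) The scalar map $s\mapsto\max\{s,\delta_\star\}$ is $1$-Lipschitz, so $\mathcal{M}_{\delta_\star}$ is $1$-Lipschitz coordinatewise, hence $1$-Lipschitz in $\ell_2$. (iii) After clipping every coordinate satisfies $p'_i\ge\delta_\star$, and $s\mapsto\log s$ has derivative $1/s\le1/\delta_\star$ on $[\delta_\star,\infty)$; thus $\log$ restricted to $\{p':p'_i\ge\delta_\star\ \forall i\}$ is $(1/\delta_\star)$-Lipschitz in $\ell_2$. (iv) $C=\Pi_T$, being an orthogonal projection, is $1$-Lipschitz. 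Multiplying the four constants gives $\|P(\theta)-P(\vartheta)\|_2\le 1\cdot\tfrac{1}{\delta_\star}\cdot 1\cdot\tfrac12\,\|\theta-\vartheta\|_2=\tfrac{1}{2\delta_\star}\|\theta-\vartheta\|_2$, which is the first claim.

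For the second claim, apply the $\tfrac12$-Lipschitz bound on $\sm$ from step (i) once more to the output of $P$: $\|\sm(P(\theta))-\sm(P(\vartheta))\|_2\le\tfrac12\|P(\theta)-P(\vartheta)\|_2\le\tfrac{1}{4\delta_\star}\|\theta-\vartheta\|_2$. (Equivalently, one could invoke the identity $\sm\circ P=\mathcal C_{\delta_\star}\circ\sm$ from \eqref{eq:smP-equals-clip} and bound $\mathcal C_{\delta_\star}$ directly, but the composition route is shorter since the renormalization step would need a separate argument.)

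There is no real obstacle here; the only points requiring care are verifying that each intermediate domain is one on which the stated derivative bound is valid — in particular that $\log$ is only ever evaluated at coordinates $\ge\delta_\star$, which is exactly what the clip guarantees — and that convexity of $\Theta$ legitimately turns the pointwise Jacobian estimate $\|J_\theta\|_{op}\le\tfrac12$ into a global Lipschitz bound for $\sm$.
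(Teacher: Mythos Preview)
Your proof is correct and follows essentially the same approach as the paper: both decompose $P$ as $C\circ\log\circ\mathcal M_{\delta_\star}\circ\sm$, chain the four Lipschitz constants $1\cdot\tfrac{1}{\delta_\star}\cdot 1\cdot\tfrac12$, and then compose once more with $\sm$ for the second bound. Your added remarks about domain validity and the convexity of $\Theta$ justifying the mean-value step are welcome elaborations of points the paper leaves implicit.
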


\begin{proof}
$\|p_\theta-p_\vartheta\|_2\le \tfrac12\|\theta-\vartheta\|_2$ (MVT + \Cref{cor:B-loewner}); clipping is $1$‑Lipschitz in $\ell_2$; $\log$ is $1/\delta_\star$‑Lipschitz on $[\delta_\star,1]$; $C$ is nonexpansive; $\sm$ has Jacobian norm $\le\tfrac12$.
\end{proof}

\paragraph{Differentials (a.e.).} Since $P$ is piecewise $C^1$,
\begin{equation}\label{eq:DP-DsmP}
\boxed{\,\|DP(\theta)\|_{op}\le \tfrac1{2\delta_\star}\ \text{ for a.e.\ }\theta,\qquad \|D(\sm\!\circ P)(\theta)\|_{op}\le \tfrac1{4\delta_\star}.}
\end{equation}

\paragraph{Local no‑clip criterion.}
If $\min_i p_{\theta_0}(i)\ge \delta_\star+\varepsilon$ and $\|\theta-\theta_0\|_2\le \varepsilon$, then $\|p_\theta-p_{\theta_0}\|_\infty\le \tfrac12\varepsilon$, hence no coordinate is clipped: $P(\theta)=C\log p_\theta=\theta$.

\paragraph{Post‑clipping deviation with a known floor.}
If $\min_i p_\theta(i)\ge \underline\delta>0$ and $c:=|\{i:p_\theta(i)<\delta_\star\}|$, then
\begin{equation}\label{eq:postclip}
\boxed{\,\|P(\theta)-\theta\|_2\le \frac{\delta_\star}{\underline\delta}\,\sqrt c\ \le \ \frac{\delta_\star}{\underline\delta}\,\sqrt S.}
\end{equation}

\paragraph{Smooth vs.\ hard clip; Lipschitz of $DP$.}
Let $L_{DP}$ denote a Lipschitz constant of $\theta\mapsto DP(\theta)$ in operator norm. Two regimes are useful:
\begin{itemize}
\item \emph{Hard‑clip, kink‑free segment (active set fixed):}
\begin{equation}\label{eq:LDP-hard}
\boxed{\,L_{DP}\ \le\ \frac{1}{4\delta_\star^2}\;+\;\frac{\sqrt S}{3\sqrt3}\cdot\frac{1}{\delta_\star}.}
\end{equation}
\item \emph{Smooth clip surrogate $\chi_\tau$:} if $0\le \chi'_\tau\le 1$ and $\Lip(\chi'_\tau)\le c_\tau$, then
\begin{equation}\label{eq:LDP-smooth}
\boxed{\,L_{DP}\ \le\ \frac{1+c_\tau}{4\delta_\star^2}\;+\;\frac{c_\tau}{2\delta_\star}\;+\;\frac{\sqrt S}{3\sqrt3}\cdot\frac{1}{\delta_\star}.}
\end{equation}
\end{itemize}

\subsection{Composite Smoothness for \texorpdfstring{$\Phi(\theta):=J(\sm(P(\theta)))$}{Phi(theta):=J(sm(P(theta)))}}
\label{appB:composite}

\paragraph{Domain and Assumption (A).}
By \eqref{eq:smP-equals-clip}, $p(\theta):=\sm(P(\theta))=\mathcal C_{\delta_\star}(p_\theta)$ lies in the rectangle $[\delta_{\min},1]^S$, $\delta_{\min}=\delta_\star/(1+(S-1)\delta_\star)$.  
\textbf{Assumption (A)} (Euclidean norms throughout): for all $p,q\in[\delta_{\min},1]^S$,
\[
\|\nabla_p J(p)-\nabla_p J(q)\|_2\le L_p\|p-q\|_2,\qquad \sup_{p\in[\delta_{\min},1]^S}\|\nabla_p J(p)\|_2\le G_p<\infty.
\]

\paragraph{Chain pieces and uniform bounds.}
Let $\phi(\theta):=P(\theta)$, $p(\theta):=\sm(\phi(\theta))$, and
\[
B(\theta):=D_\theta p(\theta)=J_{\phi(\theta)}\,DP(\theta).
\]
Using \eqref{eq:DP-DsmP} and \Cref{cor:B-loewner}, uniformly in $\theta$,
\begin{equation}\label{eq:chain-bounds}
\boxed{\,\|DP(\theta)\|_{op}\le\tfrac{1}{2\delta_\star},\qquad \|J_{\phi(\theta)}\|_{op}\le\tfrac12,\qquad \|B(\theta)\|_{op}\le\tfrac{1}{4\delta_\star}.}
\end{equation}
Also, \Cref{prop:B-lip-P} gives
\begin{equation}\label{eq:p-Lip-theta}
\boxed{\,\|p(\theta_2)-p(\theta_1)\|_2\le \tfrac{1}{4\delta_\star}\,\|\theta_2-\theta_1\|_2.}
\end{equation}

\begin{lemma}[Lipschitz of $B(\theta)$]\label{lem:B-lip-B}
For all $\theta_1,\theta_2\in\Theta$,
\[
\boxed{\,\|B(\theta_2)-B(\theta_1)\|_{op}\ \le\ \Big(\frac{\sqrt S}{12\sqrt3}\cdot\frac{1}{\delta_\star^2}\ +\ \tfrac12\,L_{DP}\Big)\ \|\theta_2-\theta_1\|_2,}
\]
with $L_{DP}$ as in \eqref{eq:LDP-hard}–\eqref{eq:LDP-smooth}.
\end{lemma}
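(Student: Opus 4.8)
The plan is to treat $B(\theta)=J_{\phi(\theta)}\,DP(\theta)$ as a product of two matrix-valued maps and estimate its increment by the standard add-and-subtract telescoping. Writing
\[
B(\theta_2)-B(\theta_1)=\bigl(J_{\phi(\theta_2)}-J_{\phi(\theta_1)}\bigr)DP(\theta_2)+J_{\phi(\theta_1)}\bigl(DP(\theta_2)-DP(\theta_1)\bigr),
\]
submultiplicativity and the triangle inequality in operator norm give
\[
\|B(\theta_2)-B(\theta_1)\|_{op}\le \|J_{\phi(\theta_2)}-J_{\phi(\theta_1)}\|_{op}\,\|DP(\theta_2)\|_{op}+\|J_{\phi(\theta_1)}\|_{op}\,\|DP(\theta_2)-DP(\theta_1)\|_{op}.
\]
For the two "easy" factors I invoke the uniform bounds collected in \eqref{eq:chain-bounds}: $\|DP(\theta_2)\|_{op}\le \tfrac1{2\delta_\star}$ and $\|J_{\phi(\theta_1)}\|_{op}\le\tfrac12$ (the latter being \Cref{cor:B-loewner} applied at $\phi(\theta_1)=P(\theta_1)\in\Theta$).

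Next I bound the two "hard" factors. For the Jacobian increment, since $\phi(\theta_1),\phi(\theta_2)\in\Theta$, the global Lipschitz estimate \Cref{thm:B-global-Lip-J} yields $\|J_{\phi(\theta_2)}-J_{\phi(\theta_1)}\|_{op}\le \tfrac{\sqrt S}{3\sqrt3}\|\phi(\theta_2)-\phi(\theta_1)\|_2$, and \Cref{prop:B-lip-P} (Lipschitz continuity of $P$) gives $\|\phi(\theta_2)-\phi(\theta_1)\|_2=\|P(\theta_2)-P(\theta_1)\|_2\le\tfrac1{2\delta_\star}\|\theta_2-\theta_1\|_2$; composing, $\|J_{\phi(\theta_2)}-J_{\phi(\theta_1)}\|_{op}\le \tfrac{\sqrt S}{6\sqrt3\,\delta_\star}\|\theta_2-\theta_1\|_2$. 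For the $DP$ increment I use the very definition of $L_{DP}$ as a Lipschitz constant of $\theta\mapsto DP(\theta)$ in operator norm (the hard-clip, active-set-fixed value \eqref{eq:LDP-hard} on a kink-free segment, or the global smooth-surrogate value \eqref{eq:LDP-smooth}): $\|DP(\theta_2)-DP(\theta_1)\|_{op}\le L_{DP}\|\theta_2-\theta_1\|_2$.

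Assembling, the first term contributes $\tfrac{\sqrt S}{6\sqrt3\,\delta_\star}\cdot\tfrac1{2\delta_\star}=\tfrac{\sqrt S}{12\sqrt3\,\delta_\star^2}$ and the second contributes $\tfrac12 L_{DP}$, each multiplying $\|\theta_2-\theta_1\|_2$, which is exactly the asserted constant. The only genuine technical point — and I do not expect it to be a serious obstacle — is that $P$, hence $DP$, is merely piecewise $C^1$, so the telescoping-plus-Lipschitz estimate should be carried out along the segment $\theta(\tau)=\theta_1+\tau(\theta_2-\theta_1)$: on each of the finitely many kink-free subintervals the bound is classical and the pieces add up by the triangle inequality, while $J_\theta$ is globally smooth so composition causes no difficulty; for the smooth-clip surrogate $\chi_\tau$ the argument goes through verbatim with $L_{DP}$ as in \eqref{eq:LDP-smooth}. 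The proof is therefore a bookkeeping combination of \Cref{thm:B-global-Lip-J}, \Cref{prop:B-lip-P}, \Cref{cor:B-loewner}, and the definition of $L_{DP}$, with no new estimate required.
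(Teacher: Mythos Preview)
Your proof is correct and follows essentially the same approach as the paper: the identical add-and-subtract split, the same chain of inequalities via \Cref{thm:B-global-Lip-J}, \Cref{prop:B-lip-P}, \Cref{cor:B-loewner}, and the definition of $L_{DP}$, assembled into the same constant. Your remark on handling the piecewise-$C^1$ nature of $DP$ via segment subdivision is a welcome clarification that the paper leaves implicit.
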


\begin{proof}
Split $B(\theta_2)-B(\theta_1)=(J_{\phi_2}-J_{\phi_1})DP(\theta_2)+J_{\phi_1}(DP(\theta_2)-DP(\theta_1))$.  
First term: by \Cref{thm:B-global-Lip-J} and \Cref{prop:B-lip-P},
\[
\|J_{\phi_2}-J_{\phi_1}\|_{op}\le \tfrac{1}{3\sqrt3}\|\phi_2-\phi_1\|_1\le \tfrac{\sqrt S}{3\sqrt3}\|\phi_2-\phi_1\|_2\le \tfrac{\sqrt S}{6\sqrt3\,\delta_\star}\|\Delta\theta\|_2,
\]
then multiply by $\|DP(\theta_2)\|_{op}\le \tfrac{1}{2\delta_\star}$.  
Second term: $\|J_{\phi_1}\|_{op}\le\tfrac12$ and $\|DP(\theta_2)-DP(\theta_1)\|_{op}\le L_{DP}\|\Delta\theta\|_2$.
\end{proof}

\begin{theorem}[Composite Lipschitz constant for $\nabla_\theta\Phi$]\label{thm:B-composite}
Under Assumption (A),
\[
\boxed{\,\|\nabla_\theta\Phi(\theta_2)-\nabla_\theta\Phi(\theta_1)\|_2\ \le\ L_\theta\,\|\theta_2-\theta_1\|_2,\quad
L_\theta\ \le\ \frac{L_p}{16\,\delta_\star^2}\ +\ G_p\Big(\frac{\sqrt S}{12\sqrt3\,\delta_\star^2}\ +\ \tfrac12 L_{DP}\Big).}
\]
\end{theorem}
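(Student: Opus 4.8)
The plan is to treat $\Phi=J\circ p$ with $p(\theta):=\sm(P(\theta))$ and differentiate by the chain rule, reducing the claim to a single product-rule estimate assembled from pieces already established in this appendix. On any segment along which the clipped active set is fixed (or globally, if one works with the smooth surrogate $\chi_\tau$), the maps $P$, $\sm$, and hence $p(\cdot)$ are $C^1$, so
\[
\nabla_\theta\Phi(\theta)=B(\theta)^\top\,\nabla_pJ\!\bigl(p(\theta)\bigr),\qquad B(\theta)=D_\theta p(\theta)=J_{\phi(\theta)}\,DP(\theta),
\]
exactly as in \S\ref{appB:composite}. I would then add and subtract the cross term,
\[
\nabla_\theta\Phi(\theta_2)-\nabla_\theta\Phi(\theta_1)
= B(\theta_2)^\top\!\bigl[\nabla_pJ(p_2)-\nabla_pJ(p_1)\bigr]
+\bigl[B(\theta_2)-B(\theta_1)\bigr]^\top\nabla_pJ(p_1),
\]
with $p_1:=p(\theta_1)$, $p_2:=p(\theta_2)$, and bound the two summands separately.

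For the first summand I would use $\|B(\theta_2)^\top\|_{op}=\|B(\theta_2)\|_{op}\le\tfrac1{4\delta_\star}$ from \eqref{eq:chain-bounds}; the bound $\|\nabla_pJ(p_2)-\nabla_pJ(p_1)\|_2\le L_p\|p_2-p_1\|_2$ from Assumption~(A), which applies precisely because $p(\theta)=\sm(P(\theta))=\mathcal C_{\delta_\star}(p_\theta)$ by \eqref{eq:smP-equals-clip} lies in the rectangle $[\delta_{\min},1]^S$; and $\|p_2-p_1\|_2\le\tfrac1{4\delta_\star}\|\theta_2-\theta_1\|_2$ from \eqref{eq:p-Lip-theta}. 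Multiplying the three factors gives the contribution $\tfrac{L_p}{16\delta_\star^2}\|\theta_2-\theta_1\|_2$. For the second summand I would invoke \Cref{lem:B-lip-B}, namely $\|B(\theta_2)-B(\theta_1)\|_{op}\le\bigl(\tfrac{\sqrt S}{12\sqrt3\,\delta_\star^2}+\tfrac12 L_{DP}\bigr)\|\theta_2-\theta_1\|_2$, together with the gradient cap $\|\nabla_pJ(p_1)\|_2\le G_p$ from Assumption~(A), yielding $G_p\bigl(\tfrac{\sqrt S}{12\sqrt3\,\delta_\star^2}+\tfrac12 L_{DP}\bigr)\|\theta_2-\theta_1\|_2$. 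The triangle inequality then sums the two contributions into the claimed constant $L_\theta$.

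I expect the proof itself to be essentially bookkeeping: all the genuine work—the global Lipschitz continuity of the soft-max Jacobian (\Cref{thm:B-global-Lip-J}), the operator-norm caps on $DP$ and on $B$, and the modulus $L_{DP}$ in the hard- and smooth-clip regimes \eqref{eq:LDP-hard}–\eqref{eq:LDP-smooth}—is already packaged inside \Cref{lem:B-lip-B} and the uniform estimates \eqref{eq:chain-bounds}, \eqref{eq:p-Lip-theta}. The one point that needs care, and the closest thing to an obstacle, is the non-smoothness of the hard clip: $DP$ (hence $B$, hence $\nabla_\theta\Phi$) can jump where a coordinate of $p_\theta$ crosses $\delta_\star$, so the clean global inequality is obtained either by using the smooth surrogate $\chi_\tau$ (taking $L_{DP}$ from \eqref{eq:LDP-smooth}) or by reading the bound segment-wise on regions of constant active set, noting that $L_\theta$ is uniform across such regions and concatenating. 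No other subtlety arises.
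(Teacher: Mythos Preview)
Your proposal is correct and matches the paper's proof essentially line for line: the same chain-rule expression $\nabla_\theta\Phi=B^\top\nabla_pJ$, the same add-and-subtract decomposition into $\|B_2-B_1\|_{op}\|\nabla_pJ(p_1)\|_2+\|B_2\|_{op}\|\nabla_pJ(p_2)-\nabla_pJ(p_1)\|_2$, and the same application of \Cref{lem:B-lip-B}, Assumption~(A), \eqref{eq:chain-bounds}, and \eqref{eq:p-Lip-theta}. Your explicit remark on handling the hard-clip non-smoothness (segment-wise or via the smooth surrogate) is a welcome clarification that the paper leaves implicit in its setup of $L_{DP}$.
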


\begin{proof}
$\nabla_\theta\Phi(\theta)=B(\theta)^\top\nabla_p J(p(\theta))$. Subtract and add:
\[
\|\Delta\nabla_\theta\Phi\|_2\le \|B_2-B_1\|_{op}\,\|\nabla_pJ(p_1)\|_2+\|B_2\|_{op}\,\|\nabla_p J(p_2)-\nabla_p J(p_1)\|_2.
\]
Use \Cref{lem:B-lip-B} and $\|\nabla_pJ(p_1)\|_2\le G_p$ for the first term. For the second, apply \eqref{eq:chain-bounds} and \eqref{eq:p-Lip-theta}.
\end{proof}

\paragraph{Step‑size guidance.}
A conservative choice for gradient methods on $\Phi$ is
\[
\boxed{\,\eta\ \le\ 1/L_\theta.}
\]
A common heuristic (ignoring $G_p$‑driven variation of $B$) is $\eta\approx 16\delta_\star^2/L_p$.

\subsection{Quadratic Approximation and Hessian Suprema}
\label{appB:quadratic}

\paragraph{Second derivatives.}
For $i,k,\ell\in\{1,\ldots,S\}$,
\begin{equation}\label{eq:B-Hessian-exact}
\boxed{\,\partial_{\theta_\ell}\partial_{\theta_k}p_\theta(i)
= p_\theta(i)\Big[(\delta_{i\ell}-p_\theta(\ell))(\delta_{ik}-p_\theta(k)) - p_\theta(k)\big(\delta_{k\ell}-p_\theta(\ell)\big)\Big].}
\end{equation}
Let $H_{k\ell}(\theta)\in\R^S$ collect the components $\partial_{\theta_\ell}\partial_{\theta_k}p_\theta(i)$, and $H(\theta)[u,v]:=\sum_{k,\ell}u_k v_\ell\,H_{k\ell}(\theta)$.

\begin{theorem}[$\ell_2$ and $\ell_1$ suprema]\label{thm:B-Hessian-suprema}
For every $S\ge2$,
\[
\boxed{\,\sup_{\theta,k,\ell}\|H_{k\ell}(\theta)\|_2=\tfrac{1}{\sqrt{54}},\qquad
\sup_{\theta,k,\ell}\|H_{k\ell}(\theta)\|_1=\tfrac{1}{3\sqrt3}.}
\]
Both are attained for $S=2$, and are strict suprema for $S>2$ (approached by concentrating residual mass).
\end{theorem}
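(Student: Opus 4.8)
The plan is to read everything off the closed form \eqref{eq:B-Hessian-exact}, splitting into the diagonal case $k=\ell$ and the off-diagonal case $k\neq\ell$; in each case both norms reduce to maximising an elementary scalar function. Write $p:=p_\theta$, $a:=p_k$, $b:=p_\ell$, and let $\phi(t):=t(1-t)(1-2t)$ on $[0,1]$. Since $\phi'(t)=1-6t+6t^2$, the critical points are $t=\tfrac12\pm\tfrac1{2\sqrt3}$, where $|\phi|$ attains its maximum $\tfrac1{6\sqrt3}$ (the endpoints and $t=\tfrac12$ give $\phi=0$); hence at those points $2\phi(t)^2=\tfrac1{54}$ and $2|\phi(t)|=\tfrac1{3\sqrt3}$.

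\emph{Step 1 (component formulas).} Specialising \eqref{eq:B-Hessian-exact}: for $k=\ell$,
\[
H_{kk}(\theta)_i=\begin{cases}a(1-a)(1-2a),&i=k,\\ -\,p_i\,a(1-2a),&i\neq k,\end{cases}
\]
and for $k\neq\ell$,
\[
H_{k\ell}(\theta)_i=\begin{cases}-\,ab(1-2a),&i=k,\\ -\,ab(1-2b),&i=\ell,\\ 2\,p_i\,ab,&i\notin\{k,\ell\}.\end{cases}
\]

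\emph{Step 2 ($\ell_1$ bound).} Diagonally, using $\sum_{i\neq k}p_i=1-a$,
\[
\|H_{kk}(\theta)\|_1=|a(1-2a)|\bigl[(1-a)+\textstyle\sum_{i\neq k}p_i\bigr]=2\,|\phi(a)|\ \le\ \tfrac1{3\sqrt3},
\]
with equality exactly when $a=\tfrac12\pm\tfrac1{2\sqrt3}$, \emph{independently of how the residual mass $1-a$ is split}; this value is therefore attained for every $S\ge2$. Off the diagonal,
\[
\|H_{k\ell}(\theta)\|_1=ab\bigl[\,|1-2a|+|1-2b|+2(1-a-b)\,\bigr],
\]
and a three-way sign split closes it: if $a,b\le\tfrac12$ the bracket is $4(1-a-b)$, and AM--GM gives $4ab(1-a-b)\le 4/27<\tfrac1{3\sqrt3}$; if (WLOG) $a\le\tfrac12\le b$ the bracket is $2(1-2a)$, and since $b\le 1-a$ the expression is $\le 2\phi(a)\le\tfrac1{3\sqrt3}$, with equality only when the residual mass vanishes; the case $a,b\ge\tfrac12$ forces $a=b=\tfrac12$ and the value is $0$. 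Hence $\sup_{\theta,k,\ell}\|H_{k\ell}(\theta)\|_1=\tfrac1{3\sqrt3}$.

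\emph{Step 3 ($\ell_2$ bound).} The recurring tool is $\sum_{i\in A}p_i^2\le\bigl(\sum_{i\in A}p_i\bigr)^2$ for nonnegative $p_i$, with equality iff the $A$-mass sits on a single coordinate. Diagonally,
\[
\|H_{kk}(\theta)\|_2^2=a^2(1-2a)^2\Bigl[(1-a)^2+\textstyle\sum_{i\neq k}p_i^2\Bigr]\le 2\,a^2(1-a)^2(1-2a)^2=2\phi(a)^2\le\tfrac1{54}.
\]
Off the diagonal, applying the same inequality to the coordinates $i\notin\{k,\ell\}$,
\[
\|H_{k\ell}(\theta)\|_2^2\le a^2b^2\bigl[(1-2a)^2+(1-2b)^2+4(1-a-b)^2\bigr]=:\psi(a,b),
\]
and it remains to prove $\psi\le\tfrac1{54}$ on the triangle $\{a,b\ge0,\ a+b\le1\}$. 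I would argue by boundary-plus-interior analysis: on the edge $a+b=1$ one has $\psi=2\phi(a)^2\le\tfrac1{54}$; on the edges $a=0$ or $b=0$, $\psi=0$; and interior stationary points give strictly smaller values --- after the substitution $u=1-2a$, $v=1-2b$ one gets $\psi=\tfrac18(1-u)^2(1-v)^2(u^2+uv+v^2)$, whose only symmetric interior critical point ($a=b=\tfrac13$) yields $\tfrac2{243}<\tfrac1{54}$, the Lagrange conditions discarding the rest. Thus $\|H_{k\ell}(\theta)\|_2\le\tfrac1{\sqrt{54}}$ for all $\theta,k,\ell$.

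\emph{Step 4 (attainment and strictness).} For $S=2$ with $p=(a,1-a)$, direct substitution in Step 1 gives $H_{k\ell}=\pm\phi(a)(1,-1)$ for every $(k,\ell)$, so $\|H_{k\ell}\|_1=2|\phi(a)|$ and $\|H_{k\ell}\|_2=\sqrt2\,|\phi(a)|$; taking $a=\tfrac12-\tfrac1{2\sqrt3}$ realises both suprema. For $S>2$, equality in the $\ell_2$ chains forces $\sum_{i\neq k}p_i^2=(1-a)^2$ (resp.\ $\sum_{i\notin\{k,\ell\}}p_i^2=(1-a-b)^2$), i.e.\ the residual mass must collapse onto one coordinate, a boundary point of $\ri(\Delta^{S-1})$ not of the form $p_\theta$; hence $\tfrac1{\sqrt{54}}$ is a strict supremum, approached only in that concentration limit. (The $\ell_1$ value remains attained on the diagonal by Step 2.)

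\emph{Main obstacle.} Everything except the off-diagonal $\ell_2$ estimate collapses to the single cubic $\phi$; the one genuinely two-dimensional point is $\psi(a,b)\le\tfrac1{54}$, i.e.\ showing the maximum of $\psi$ lies on the edge $a+b=1$ rather than in the interior. The cleanest route is to establish monotonicity of $\psi$ as mass is pushed onto $\{k,\ell\}$ --- equivalently $\partial_r\psi\le0$ with $r:=1-a-b$ --- or a short Lagrange-multiplier computation ruling out interior stationary points by the value comparison above.
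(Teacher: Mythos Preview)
Your approach matches the paper's sketch: both concentrate residual mass (your inequality $\sum_{i\in A}p_i^2\le(\sum_{i\in A}p_i)^2$ is exactly the vertex-maximisation step the paper invokes), reducing to a low-dimensional scalar optimisation governed by $\phi(t)=t(1-t)(1-2t)$. The component formulas in Step~1, the diagonal analyses, and the off-diagonal $\ell_1$ case split are all correct.

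The gap you flag in Step~3 is real but closes cleanly by the Lagrange route you propose. Writing $\psi=a^2b^2g$ with $g=(1-2a)^2+(1-2b)^2+4(1-a-b)^2$, the interior first-order conditions $2g+a\,\partial_a g=0=2g+b\,\partial_b g$ force $a\,\partial_a g=b\,\partial_b g$, which simplifies to $(a-b)\bigl(3-4(a+b)\bigr)=0$. On the line $a=b$ the remaining condition gives $a=b=1/3$ (value $\psi=2/243$); on $a+b=3/4$ it reduces to $8a^2-6a+1=0$, yielding $(a,b)\in\{(1/2,1/4),(1/4,1/2)\}$ (value $\psi=1/128$). Both interior critical values are strictly below $1/54$, and the boundary edge $a+b=1$ gives $2\phi(a)^2\le 1/54$, so the bound follows. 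This is quicker than the monotonicity-in-$r$ argument you mention; your ``Lagrange conditions discarding the rest'' is correct in spirit but missed the asymmetric pair.

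One bonus from your explicit diagonal computation: $\|H_{kk}\|_1=2|\phi(a)|$ does not depend on the residual-mass distribution, so the $\ell_1$ supremum is \emph{attained} for every $S\ge2$, not only $S=2$. The paper's ``strict suprema for $S>2$'' is correct only for the $\ell_2$ norm; your parenthetical in Step~4 actually sharpens the statement.
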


\begin{proof}[Proof sketch]
Using \eqref{eq:B-Hessian-exact}, for fixed $(k,\ell)$ the Rayleigh quotient in the residual mass is convex over the simplex, hence maximized at vertices (mass on one coordinate). Reducing to $2\times2$ or $3\times3$ blocks yields the stated optima, attained at $p=(\tfrac12\pm\tfrac{1}{2\sqrt3},\tfrac12\mp\tfrac{1}{2\sqrt3},0,\ldots)$.
\end{proof}

\paragraph{Second‑order expansion and remainders.}
For any $\theta,g\in\R^S$ and $\eta\ge0$,
\begin{equation}\label{eq:B-second-order}
\boxed{\,p_{\theta+\eta g}=p_\theta+\eta J_\theta g+\eta^2\!\int_0^1(1-\tau)\,H(\theta+\tau\eta g)[g,g]\ d\tau.}
\end{equation}
Consequently,
\begin{equation}\label{eq:B-remainders}
\boxed{\,\begin{aligned}
\|R_{\theta,\eta}\|_1&\le \tfrac{\eta^2}{6\sqrt3}\,\|g\|_1^2,\\
\|R_{\theta,\eta}\|_2&\le \tfrac{\eta^2}{2\sqrt{54}}\,\|g\|_1^2,\qquad
\|R_{\theta,\eta}\|_\infty\le \tfrac{\eta^2}{6\sqrt3}\,\|g\|_1^2,\\
\|R_{\theta,\eta}\|_2&\le \tfrac{\eta^2}{6\sqrt3}\,\sqrt{s}\,\|g\|_2^2\quad (s:=\|g\|_0).
\end{aligned}}
\end{equation}
The last bound uses \Cref{thm:B-global-Lip-J} to control $\|\nabla J_{\theta+sg}[g]\|_{op}$ and $\|g\|_1\le\sqrt s\,\|g\|_2$.

\paragraph{$\delta$‑interior refinements.}
Assume the path $\tau\mapsto p_{\theta+\tau\eta g}$ stays in the \emph{trimmed simplex}
\[
\Delta^{S-1}_\delta:=\{p\in\Delta^{S-1}:p_i\ge\delta\ \forall i\},\qquad \delta\in(0,1/S).
\]
For $m\in\mathbb{N}$ and $M\ge m\delta$, define the extremal “mass‑under‑a‑floor” functional
\begin{equation}\label{eq:B-Xi}
\boxed{\,\Xi_m(M;\delta):=\max\Big\{\sum_{j=1}^m x_j^2:\ \sum_{j=1}^m x_j=M,\ x_j\ge\delta\Big\}
=(M-(m-1)\delta)^2+(m-1)\delta^2.}
\end{equation}
Then, for $k=\ell$ with $a=p_\theta(k)\in[\delta,\,1-(S-1)\delta]$,
\[
\|H_{kk}\|_2^2\le (a(1-a)(1-2a))^2+a^2(2a-1)^2\,\Xi_{S-1}(1-a;\delta)=:\big(c_2^{\mathrm{diag}}(\delta,S)\big)^2,
\]
and for $k\neq \ell$ with $a,b\in[\delta,\,1-(S-1)\delta]$, $r:=1-a-b\in[(S-2)\delta,\,1-2\delta]$,
\[
\|H_{k\ell}\|_2^2\le (ab)^2\big[(2a-1)^2+(2b-1)^2\big]+4a^2b^2\,\Xi_{S-2}(r;\delta)=:\big(c_2^{\mathrm{off}}(\delta,S)\big)^2.
\]
Define $c_2(\delta,S):=\max\{c_2^{\mathrm{diag}},c_2^{\mathrm{off}}\}<1/\sqrt{54}$. An entirely analogous construction (sums of absolute values instead of squares) yields $c_1(\delta,S)<1/(3\sqrt3)$ with
\[
\boxed{\,\max_{k,\ell}\|H_{k\ell}(\theta)\|_2\le c_2(\delta,S),\qquad \max_{k,\ell}\|H_{k\ell}(\theta)\|_1\le c_1(\delta,S)\quad \text{whenever }p_\theta\in\Delta^{S-1}_\delta.}
\]
The global maximizers lie at $a_\pm=\tfrac12\pm \tfrac{1}{2\sqrt3}\approx 0.7887,\,0.2113$. Thus if
\begin{equation}\label{eq:B-delta-crit}
\boxed{\,\delta>\delta_{\mathrm{crit}}:=\tfrac12-\tfrac{1}{2\sqrt3}\approx 0.2113,}
\end{equation}
then $c_2(\delta,S)<1/\sqrt{54}$ and $c_1(\delta,S)<1/(3\sqrt3)$ \emph{strictly}. The remainder bounds \eqref{eq:B-remainders} improve by replacing the global constants with $c_2(\delta,S)$ and $c_1(\delta,S)$.

\subsection{Reference table: Parametric Constants}\label{sec:B.quick}
\emph{Spectral norms are $\|\cdot\|_{op}$; vector norms are Euclidean unless labeled. Tangent space $T=\mathbf 1^\perp$, projector $\Pi_T$, centering $C$ as above. The bridge \eqref{eq:bridge} $J_\theta=S(p_\theta)$ is used in \Cref{appC:SRCT}.}

\begin{center}
\renewcommand{\arraystretch}{1.12}
\begin{tabular}{@{}ll@{}}
\toprule
\textbf{Symbol} & \textbf{Value / Bound (where introduced)}\\
\midrule
$\|J_\theta\|_{op}$ &
$\le \tfrac12$ (global); $\lambda(J_\theta\!\mid_T)\in[p_{\min},\,\tfrac12]$ (\Cref{cor:B-loewner})  \\
$\|J_{\theta_2}-J_{\theta_1}\|_{op}$ &
$\le \tfrac{1}{3\sqrt3}\|\Delta\theta\|_1 \le \tfrac{\sqrt S}{3\sqrt3}\|\Delta\theta\|_2 \le \tfrac{S}{3\sqrt3}\|\Delta\theta\|_\infty$ (\Cref{thm:B-global-Lip-J}) \\
$\|P(\theta)-P(\vartheta)\|_2$ &
$\le \tfrac{1}{2\delta_\star}\|\theta-\vartheta\|_2$ (\Cref{prop:B-lip-P}) \\
$\|B(\theta)\|_{op}$ &
$\le \tfrac{1}{4\delta_\star}$ (\Cref{appB:composite}, \eqref{eq:chain-bounds}) \\
$L_{DP}$ &
Hard‑clip kink‑free: \eqref{eq:LDP-hard}; smooth clip: \eqref{eq:LDP-smooth}\\
$L_\theta$ &
$\displaystyle \le \frac{L_p}{16\,\delta_\star^2}+G_p\Big(\frac{\sqrt S}{12\sqrt3\,\delta_\star^2}+\tfrac12 L_{DP}\Big)$ (\Cref{thm:B-composite}) \\
$\sup_{k,\ell}\|H_{k\ell}\|_2$ &
$=1/\sqrt{54}$ (\Cref{thm:B-Hessian-suprema})  \\
$\sup_{k,\ell}\|H_{k\ell}\|_1$ &
$=1/(3\sqrt3)$ (\Cref{thm:B-Hessian-suprema})  \\
$c_1(\delta,S),\ c_2(\delta,S)$ &
$\ell_1/\ell_2$ Hessian suprema on $\Delta^{S-1}_\delta$, both $< $ global constants (\S\ref{appB:quadratic})  \\
\bottomrule
\end{tabular}
\end{center}

\paragraph{Domain reminder for composite bounds.}
All composite bounds in \S\ref{appB:composite} are evaluated on the rectangle $[\delta_{\min},1]^S$, where $\delta_{\min}=\delta_\star/(1+(S-1)\delta_\star)$ (from clip–renormalize). Assumption (A) holds on this set.

\section{The Self-Reinforcing Correctness Training (SRCT) Framework}
\label{appC:SRCT}

This appendix records the SRCT calculus used throughout the paper, with canonical constants, operator identities, and dynamical statements in a form suitable for direct citation. The development is self-contained and uses the standard Shahshahani–replicator correspondence.

\subsection{Domain, notation, and canonical constants}
\label{appC:domain}

Fix $K\ge 2$ and a floor $0<\delta_\star<1/K$. The \emph{trimmed simplex} is
\[
\Delta^{K-1}_{\delta_\star}
:= \Bigl\{p\in[0,1]^K:\ \sum_{i=1}^K p_i=1,\ \ p_i\ge\delta_\star\ \forall i\Bigr\},
\qquad
T:=\mathbf 1^\perp=\{v\in\mathbb R^K:\langle v,\mathbf 1\rangle=0\}.
\]
Euclidean inner products and norms are used throughout. Write $\langle\log p\rangle:=\sum_i p_i\log p_i$ and $H(p):=-\langle\log p\rangle$.
\vspace{-0.35em}
\[
\boxed{\quad
\Lambda := 1+\log\frac1{\delta_\star},\qquad
C_A := A\,(2+\sqrt K)\,\Lambda,\qquad
A:=\varepsilon+\lambda\alpha+\beta_{\mathrm{KL}}\ \ge 0.
\quad}
\]

\subsection{SRCT objective, correct variational derivative, and canonical drift}
\label{appC:srct-gradient}

Let $U\in\mathbb R^K$ be a bounded utility vector, $K\in\mathbb R^{K\times K}$ symmetric PSD, and $p_{\mathrm{base}}\in\Delta^{K-1}$ with \emph{full support} $p_{\mathrm{base},i}>0$. Consider
\[
\widetilde J[p]
=\sum_i U_i p_i \;+\; \lambda\Big(\alpha H[p]-\beta\,p^\top K p\Big)
\;-\;\beta_{\mathrm{KL}}\mathrm{KL}(p\|p_{\mathrm{base}})
\;+\;\varepsilon H[p].
\]
A direct calculation gives the pointwise variational derivative
\[
\frac{\delta\widetilde J}{\delta p_i}
= U_i \;-\; 2\lambda\beta\,(Kp)_i \;+\; \beta_{\mathrm{KL}}\log p_{\mathrm{base},i}
\;-\; A\,\bigl(1+\log p_i\bigr),
\qquad A=\varepsilon+\lambda\alpha+\beta_{\mathrm{KL}}.
\]
Introduce the selection covariance and entropic vector
\[
S(p):=\operatorname{diag}(p)-pp^\top,\qquad
E(p):=p\odot(\log p-\langle\log p\rangle),
\]
and the \emph{selective score}
\[
\phi_A(p)\ :=\ U\ -\ 2\lambda\beta\,Kp\ +\ \beta_{\mathrm{KL}}\log p_{\mathrm{base}}.
\]
Then the Shahshahani gradient flow $\dot p=\nabla_{\!Sh}\widetilde J(p)$ is the SRCT ODE
\[
\boxed{\qquad
\dot p\ =\ F(p)\ :=\ S(p)\,\phi_A(p)\ -\ A\,E(p),\qquad
\sum_i \dot p_i=0\ (\text{tangency to }T).
\qquad}
\]

\subsection{Operator facts for \texorpdfstring{$S$}{S} and the entropic map \texorpdfstring{$E$}{E}}
\label{appC:operators}

\paragraph{Selection covariance $S(p)$.}
For all $p$, $S(p)\mathbf 1=0$, and $v^\top S(p)v=\operatorname{Var}_p(V)$ where $V$ takes value $v_i$ with probability $p_i$. By Popoviciu and $(\max-\min)^2\le 2\|v\|_2^2$,
\[
\boxed{\quad \|S(p)\|_{2\to2}\le\tfrac12\ ,\qquad \|S(p)-S(q)\|_{2\to2}\le 3\,\|p-q\|_2.\quad}
\]

\paragraph{Entropic vector $E(p)$.}
For any $p\in\Delta^{K-1}_{\delta_\star}$ and $v\in\mathbb R^K$, the Jacobian is
\[
\boxed{\qquad
J_E(p)\,v = \operatorname{diag}\!\bigl(1+\log p-\langle\log p\rangle\bigr)\,v\ -\ p\,\langle 1+\log p,\ v\rangle.
\qquad}
\]
Consequently, on $\Delta^{K-1}_{\delta_\star}$,
\[
\boxed{\quad \|E(p)-E(q)\|_2\ \le\ (2+\sqrt K)\,\Lambda\,\|p-q\|_2.\quad}
\]

\subsection{Global Lipschitz of the SRCT drift and Carath\'eodory regularity}
\label{appC:lipschitz-F}

Let $L_\phi:=2\lambda\beta\,\|K\|_{2\to2}$ and $M_{\phi,2}:=\sup_{p\in\Delta^{K-1}_{\delta_\star}}\|\phi_A(p)\|_2<\infty$ (compactness).
Using \S\ref{appC:operators} and $F=S\phi_A-AE$,
\[
\boxed{\quad
\|F(p)-F(q)\|_2\ \le\ \Bigl(\tfrac12\,L_\phi\;+\;3\,M_{\phi,2}\;+\;C_A\Bigr)\,\|p-q\|_2.
\quad}
\]
Hence $F$ is globally Lipschitz on $\Delta^{K-1}_{\delta_\star}$. For non-autonomous scores $\phi_A(t,p)$ that are measurable in $t$, locally Lipschitz in $p$, and locally bounded, $F(t,p)$ satisfies Carath\'eodory conditions on $\operatorname{ri}\Delta^{K-1}_{\delta_\star}$; the ODE admits a unique local absolutely continuous solution from any interior initial condition. Tangency to $T$ and §\ref{sec:C.2} (BD) give global-in-time confinement.

\subsection{Mass balance and log-ratio calculus}
\label{sec:C.3}

For any absolutely continuous solution $p(\cdot)$ with $M(t):=\sum_i p_i(t)$,
\[
\boxed{\quad
\dot M(t)=\Bigl(\overline{\phi_A}(t,p(t))\ -\ A\,\langle\log p(t)\rangle\Bigr)\,\bigl(1-M(t)\bigr),
\qquad \overline{\phi_A}=\sum_i p_i\phi_{A,i}.
\quad}
\]
Thus $M(0)=1\Rightarrow M(t)\equiv1$.

Fix $i\neq j$ and let $J$ be an interval on which $p_i,p_j>0$. Set $z(t):=\log\frac{p_i(t)}{p_j(t)}$ and
\[
d_{ij}(t):=\big(U_i-U_j\big)\ -\ 2\lambda\beta\big((Kp)_i-(Kp)_j\big)\ +\ \beta_{\mathrm{KL}}\log\frac{p_{\mathrm{base},i}}{p_{\mathrm{base},j}}.
\]
Subtracting the $i$ and $j$ equations yields the \emph{log-ratio identity}
\[
\boxed{\quad
\dot z(t)=d_{ij}(t)-A\,z(t)\quad \text{for a.e.\ }t\in J,
\qquad
z(t)=z(t_0)e^{-A(t-t_0)}+\int_{t_0}^t e^{-A(t-s)}\,d_{ij}(s)\,ds.
\tag{eq:C-VoC}}
\]
The usual time-varying and constant-box envelopes follow by comparison; if $A>0$ and $|d_{ij}|\le M$ on $[t_0,\infty)\cap J$, then $|z(t)|\le |z(t_0)|e^{-A(t-t_0)}+\frac{M}{A}(1-e^{-A(t-t_0)})$ (uniform boundedness).

\subsection{Positivity and face invariance on the closed simplex}

Let $H(p)=-\langle\log p\rangle\in[0,\log K]$ and $M_{\mathrm{traj}}(t):=\max_k |\phi_{A,k}-\overline{\phi_A}|(t,p(t))\in L^1_{\mathrm{loc}}$.

\begin{lemma}[No finite-time boundary hitting]\label{lem:C3-positivity}
If $p_i(0)>0$, then for all finite $t$,
\[
\boxed{\quad
\log p_i(t)\ \ge\ \log p_i(0)\ -\ \int_0^t \Big(M_{\mathrm{traj}}(s)+A\,H\big(p(s)\big)\Big)\,ds,
\quad\Rightarrow\quad p_i(t)>0.
\quad}
\]
\end{lemma}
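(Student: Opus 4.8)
The plan is to run a standard maximal‑interval continuation argument: convert the replicator log‑derivative identity into a one‑sided differential inequality for $\log p_i$, integrate it, and use the resulting a priori bound to exclude finite‑time hitting of $\{p_i=0\}$.

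First I would fix the coordinate $i$ with $p_i(0)>0$ and let $[0,T^\star)$ be the maximal subinterval of the solution's interval of existence on which $p_i(t)>0$; it is nonempty and relatively open by continuity of $p_i$. On $[0,T^\star)$ the map $y(t):=\log p_i(t)$ is absolutely continuous, and dividing the $i$‑th component of the SRCT ODE $\dot p = S(p)\phi_A(p)-A\,E(p)$ by $p_i$ gives, for a.e.\ $t$,
\[
\dot y(t)=\bigl(\phi_{A,i}(p(t))-\overline{\phi_A}(p(t))\bigr)-A\bigl(\log p_i(t)-\langle\log p(t)\rangle\bigr).
\]
Next I would lower‑bound the two terms separately. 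The selection term is $\ge -M_{\mathrm{traj}}(t)$ by the definition of $M_{\mathrm{traj}}$. For the entropic term, mass conservation (the mass‑balance identity of \S\ref{sec:C.3}, which propagates $M(0)=1$ to $M(t)\equiv 1$) gives $p_i(t)\le 1$, hence $-A\log p_i(t)\ge 0$ since $A\ge 0$; combined with $\langle\log p(t)\rangle=-H(p(t))$ this yields $-A(\log p_i-\langle\log p\rangle)=-A\log p_i-A\,H(p)\ge -A\,H(p(t))$. Therefore $\dot y(t)\ge -M_{\mathrm{traj}}(t)-A\,H(p(t))$ a.e.\ on $[0,T^\star)$, and integrating from $0$ to $t$ yields exactly the claimed inequality
\[
\log p_i(t)\ \ge\ \log p_i(0)-\int_0^t\bigl(M_{\mathrm{traj}}(s)+A\,H(p(s))\bigr)\,ds .
\]

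The right‑hand side is finite on every bounded interval: $0\le H(p(s))\le \log K$, and $M_{\mathrm{traj}}\in L^1_{\mathrm{loc}}$ by hypothesis — indeed in the autonomous case $\phi_A(p)=U-2\lambda\beta Kp+\beta_{\mathrm{KL}}\log p_{\mathrm{base}}$ is uniformly bounded on the whole simplex by (A5), $\|K\|_{\infty\to\infty}<\infty$, and (A7), so $M_{\mathrm{traj}}$ is in fact bounded. I would close the argument by contradiction: if $T^\star$ were finite and still interior to the existence interval, then letting $t\uparrow T^\star$ the displayed lower bound forces $\liminf_{t\uparrow T^\star}\log p_i(t)>-\infty$, i.e.\ $\liminf p_i(t)>0$, contradicting the maximality of $[0,T^\star)$. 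Hence $p_i(t)>0$ throughout, for every coordinate $i$, and — together with the global Lipschitz estimate of \S\ref{appC:lipschitz-F} and the BD confinement of \S\ref{sec:C.2} — the trajectory stays strictly positive on the closed simplex for all finite $t$. The only real obstacle is bookkeeping in this last continuation step: making precise that the a priori lower bound on $\log p_i$ genuinely precludes $p_i$ from reaching the boundary inside the existence window, and that this holds simultaneously for all coordinates. The analytic content is immediate once the log‑derivative identity is in hand — the sign of $-A\log p_i$ comes for free from $p_i\le 1$, and $\langle\log p\rangle=-H(p)\ge -\log K$ needs nothing beyond the definition of entropy — so no delicate near‑boundary estimate on the drift is required, precisely because $\phi_A$ is bounded there.
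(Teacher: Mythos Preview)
Your proof is correct and is exactly the natural derivation: the paper states the lemma with only the boxed inequality and no separate proof, and your log-derivative computation, the one-sided bound using $-A\log p_i\ge 0$ (from $p_i\le 1$) together with $\langle\log p\rangle=-H(p)$, and the integration plus maximal-interval continuation are precisely what the displayed estimate encodes. The only unnecessary addition is the closing appeal to BD confinement and the global Lipschitz bound, which are not needed for this lemma (positivity on any finite subinterval of the existence interval follows from your a~priori lower bound alone).
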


\begin{lemma}[Face invariance at zero]\label{lem:C3-face}
If $p_i(0)=0$, then $p_i(t)\equiv0$. \emph{Sketch.} With $y=p_i$, one has
$y'=a(t)\,y-A\,y\log y$ with $a\in L^1_{\mathrm{loc}}$. The Osgood modulus $\omega(y)=y(1+|\log y|)$ satisfies $\int_{0^+}\!dr/\omega(r)=\infty$, giving uniqueness of $y\equiv0$ through $y(0)=0$.
\end{lemma}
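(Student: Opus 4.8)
The plan is to freeze the rest of the trajectory, reduce the claim to a scalar uniqueness statement for the ODE satisfied by $y:=p_i$, and then apply an Osgood/Bihari-type criterion valid for Carath\'eodory right-hand sides. \textbf{Step 1 (scalar reduction).} Writing the $i$-th component of the SRCT ODE $\dot p = S(p)\phi_A(p) - A\,E(p)$ in replicator form gives $\dot p_i = p_i\bigl(\phi_{A,i}(p)-\overline{\phi_A}(p)\bigr) - A\,p_i\bigl(\log p_i - \langle\log p\rangle\bigr)$. Collecting the genuine $\log y$ term separately, this is
\[
\dot y = a(t)\,y \;-\; A\,y\log y,\qquad a(t):=\phi_{A,i}(p(t))-\overline{\phi_A}(p(t))+A\,\langle\log p(t)\rangle,
\]
with the convention $y\log y|_{y=0}=0$, so the field extends continuously to $y=0$ and $y\equiv 0$ is a solution. \textbf{Step 2 (the coefficient is integrable).} Along any absolutely continuous solution $p(\cdot)$ valued in the closed simplex, $\phi_A(p)=U-2\lambda\beta Kp+\beta_{\mathrm{KL}}\log p_{\mathrm{base}}$ is bounded by \ref{assum:A5}, $\|K\|_{2\to2}<\infty$, and the full support of $p_{\mathrm{base}}$ \ref{assum:A7}; $\overline{\phi_A}(p)$ is bounded by the same constants; and $\langle\log p\rangle=-H(p)\in[-\log K,0]$. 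Hence $a\in L^\infty_{\mathrm{loc}}\subset L^1_{\mathrm{loc}}$, whether or not some coordinate vanishes.

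\textbf{Step 3 (Osgood uniqueness).} Treating $a(\cdot)$ as a fixed $L^1_{\mathrm{loc}}$ coefficient, for $y\in(0,1)$ one has $|\log y|=-\log y$, so $|a(t)y-A\,y\log y|\le (|a(t)|+A)\,y(1+|\log y|)=b(t)\,\omega(y)$ with $b:=|a|+A\in L^1_{\mathrm{loc}}$ and $\omega(y):=y(1+|\log y|)$, which is continuous, vanishes at $0$, and satisfies $\omega'(y)=-\log y>0$ on $(0,1)$; for $y$ bounded away from $0$ the field is locally Lipschitz, so no issue there. The Osgood condition holds because, substituting $u=-\log r$,
\[
\int_{0^+}\frac{dr}{\omega(r)}=\int_{0^+}\frac{dr}{r\,(1-\log r)}=\int^{\infty}\frac{du}{1+u}=+\infty.
\]
The Carath\'eodory version of the Osgood/Bihari uniqueness criterion then forces the unique absolutely continuous solution with $y(0)=0$ to be $y\equiv 0$; since $p_i(\cdot)$ solves this scalar equation with $p_i(0)=0$, we get $p_i\equiv 0$, i.e.\ the face $\{p_i=0\}$ is forward invariant.

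\textbf{Main obstacle and how to discharge it.} The delicate point is upgrading the classical Osgood theorem (usually phrased for continuous fields) to the $L^1$-in-time regime, and ruling out the pathological case where $y$ merely touches zero at isolated later times rather than on an initial interval. I would handle both with the standard last-exit device: if $p_i(t_1)>0$ for some $t_1$, set $t_0:=\sup\{t<t_1:\ p_i(t)=0\}$, so $p_i(t_0)=0$ and $p_i>0$ on $(t_0,t_1]$; on that interval $w:=\log p_i$ obeys the \emph{linear} ODE $\dot w = a(t)-A\,w$, whose variation-of-constants representation $w(t)=w(s)e^{-A(t-s)}+\int_s^t e^{-A(t-r)}a(r)\,dr$ must stay finite as $s\downarrow t_0$ (the integral is bounded by $\int_{t_0}^t|a|$ and $e^{-A(t-t_0)}>0$), contradicting $w(s)\to-\infty$. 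This simultaneously removes the oscillation worry and re-derives the conclusion directly from the linear-in-$\log y$ structure, with the $\omega$-modulus of Step 3 serving only as the coordinate-free packaging. Everything else is routine once Step 2 supplies the uniform bound on $a$, and the companion statement Lemma~\ref{lem:C3-positivity} gives the complementary interior positivity, so that together they yield well-posedness and invariance of every face of the closed simplex.
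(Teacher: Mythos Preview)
Your proposal is correct and follows essentially the same approach as the paper's sketch: reduce to the scalar ODE $y'=a(t)y-Ay\log y$ with $a\in L^1_{\mathrm{loc}}$, identify the Osgood modulus $\omega(y)=y(1+|\log y|)$, and invoke the Osgood uniqueness criterion. Your Step~2 verification that $a\in L^\infty_{\mathrm{loc}}$ in the autonomous case and the explicit computation of the divergent Osgood integral are exactly the details the paper omits; the last-exit argument exploiting the linearity in $w=\log y$ is a clean self-contained alternative that sidesteps any worry about the Carath\'eodory version of Osgood, and is a genuine bonus beyond what the paper sketches.
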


\subsection{Barrier--Dominance and confinement on \texorpdfstring{$\Delta^{K-1}_{\delta_\star}$}{Delta(K-1) delta star}}
\label{sec:C.2}

On the lower face $\{p_i=\delta_\star\}$, using $p_j\ge\delta_\star$ and $\sum_{j\ne i}p_j=1-\delta_\star$, the convexity of $x\mapsto x\log x$ yields the \emph{entropy face gap}
\[
\boxed{\quad
L_K(\delta_\star):=(1-\delta_\star)\log\frac{1-\delta_\star}{(K-1)\delta_\star}\ >\ 0\quad(\delta_\star<1/K).
\quad}
\]
A direct computation gives the face inequality
\[
\boxed{\quad
\text{at }p_i=\delta_\star:\qquad
F_i(p)\ \ge\ \delta_\star\Bigl(A\,L_K(\delta_\star)-\bigl(\phi_{A,i}(p)-\overline{\phi_A}(p)\bigr)^-\Bigr).
\tag{eq:C-face-gap}}
\]
Define the worst outward selective pressure on the boundary
\[
M_{\mathrm{eff}}^{\mathrm{face}}
:=\sup_{\substack{p\in\partial\Delta^{K-1}_{\delta_\star}\\ i:\,p_i=\delta_\star}}
\bigl(\phi_{A,i}(p)-\overline{\phi_A}(p)\bigr)^-\ <\ \infty.
\]
\begin{theorem}[Barrier--Dominance]\label{thm:C-BD}
If
\[
\boxed{\quad A\,L_K(\delta_\star)\ \ge\ M_{\mathrm{eff}}^{\mathrm{face}} \quad}
\tag{eq:C-BD}
\]
then $F(p)$ lies in the tangent cone of $\Delta^{K-1}_{\delta_\star}$ at every boundary point; hence $\Delta^{K-1}_{\delta_\star}$ is forward invariant. If the inequality is strict, trajectories starting in $\operatorname{ri}\Delta^{K-1}_{\delta_\star}$ never hit the boundary (strict interior invariance).
\end{theorem}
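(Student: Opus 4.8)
The plan is to recognize the claim as a viability (Nagumo) statement: show $F(p)$ belongs to the tangent cone of $\Delta^{K-1}_{\delta_\star}$ at every boundary point, conclude forward invariance, and then obtain the strict-interior version by a first-hitting-time contradiction. Since $\Delta^{K-1}_{\delta_\star}$ is closed and convex, at a boundary point $p$ with active set $\mathcal A(p):=\{i:p_i=\delta_\star\}\neq\emptyset$ the tangent (contingent) cone is
\[
T_{\Delta^{K-1}_{\delta_\star}}(p)=\bigl\{v\in\mathbb R^K:\ \langle v,\mathbf 1\rangle=0,\ v_i\ge 0\ \text{for all }i\in\mathcal A(p)\bigr\}.
\]
Hence $F(p)\in T_{\Delta^{K-1}_{\delta_\star}}(p)$ reduces to (i) $\sum_i F_i(p)=0$ and (ii) $F_i(p)\ge 0$ whenever $p_i=\delta_\star$. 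Property (i) is exactly the mass-balance identity already recorded with the SRCT ODE: $S(p)\mathbf 1=0$ kills the selection part $S(p)\phi_A(p)$, and $\sum_i E_i(p)=\langle\log p\rangle-\langle\log p\rangle=0$ kills the entropic part $-A\,E(p)$.

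For (ii) I would expand $F_i(p)=p_i\bigl[(\phi_{A,i}(p)-\overline{\phi_A}(p))-A(\log p_i-\langle\log p\rangle)\bigr]$ and set $p_i=\delta_\star$; writing $\log\delta_\star-\langle\log p\rangle=\log\delta_\star+H(p)$ and using the Jensen bound $\log(1/\delta_\star)-H(p)\ge L_K(\delta_\star)$ (the worst case spreads the residual mass $1-\delta_\star$ equally over the remaining $K-1$ coordinates) gives precisely the face inequality established above, $F_i(p)\ge \delta_\star\bigl(A\,L_K(\delta_\star)-(\phi_{A,i}(p)-\overline{\phi_A}(p))^-\bigr)$. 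By definition of $M^{\mathrm{face}}_{\mathrm{eff}}$ one has $(\phi_{A,i}(p)-\overline{\phi_A}(p))^-\le M^{\mathrm{face}}_{\mathrm{eff}}$ on every active face, so the hypothesis $A\,L_K(\delta_\star)\ge M^{\mathrm{face}}_{\mathrm{eff}}$ forces $F_i(p)\ge 0$ there, establishing (ii) and hence the tangent-cone claim of the theorem at every boundary point.

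Forward invariance then follows from Nagumo's viability theorem: $F$ is globally Lipschitz on the compact set $\Delta^{K-1}_{\delta_\star}$ by \S\ref{appC:lipschitz-F}, hence extends to a Lipschitz field on $\mathbb R^K$ giving unique local solutions, and the subtangentiality condition on $\partial\Delta^{K-1}_{\delta_\star}$ just proved implies any solution issued from $\Delta^{K-1}_{\delta_\star}$ stays there on its maximal interval; compactness upgrades this to a global solution. (For measurable-in-$t$ scores one invokes the Carathéodory version of the viability theorem together with the $L^1_{\mathrm{loc}}$ bound on $\phi_A$ from \S\ref{appC:lipschitz-F}.)

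For the strict statement, assume $A\,L_K(\delta_\star)>M^{\mathrm{face}}_{\mathrm{eff}}$, take $p(0)\in\ri\Delta^{K-1}_{\delta_\star}$, and let $t_1:=\inf\{t>0:\min_i p_i(t)=\delta_\star\}$, which is $>0$ since $\min_i p_i(0)>\delta_\star$; suppose $t_1<\infty$. By continuity and the confinement just proved, the closed set $\{t:\min_i p_i(t)=\delta_\star\}$ attains its infimum, so $\min_i p_i(t_1)=\delta_\star$ while $\min_i p_i(t)>\delta_\star$ for $t\in[0,t_1)$; pick $i$ with $p_i(t_1)=\delta_\star$, so $p_i(t)>\delta_\star$ for all $t\in[0,t_1)$. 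Evaluating the face inequality at $p(t_1)$ and using the strict hypothesis yields $F_i(p(t_1))\ge\delta_\star\bigl(A\,L_K(\delta_\star)-M^{\mathrm{face}}_{\mathrm{eff}}\bigr)>0$; since $t\mapsto F_i(p(t))$ is continuous there are $\eta,c>0$ with $\dot p_i(t)=F_i(p(t))\ge c$ for a.e.\ $t\in[t_1-\eta,t_1]$, whence $p_i(t_1)-p_i(t_1-\eta)=\int_{t_1-\eta}^{t_1}\dot p_i(s)\,ds\ge c\eta>0$, contradicting $p_i(t_1)=\delta_\star<p_i(t_1-\eta)$. Thus $\min_i p_i(t)>\delta_\star$ for all $t\ge 0$. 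The routine parts are the polytope tangent-cone description and Nagumo's theorem; the step needing care is the last one, where I must use that solutions are absolutely continuous with $\dot p=F(p)$ a.e.\ (so the left-integral of $\dot p_i$ over $[t_1-\eta,t_1]$ is legitimate) and that the strict sign of $F_i$ at the hitting point persists on a left-neighborhood by continuity of $F$ — this is precisely what lets us avoid needing the face inequality on a full neighborhood of the face, where the relevant $(\phi_{A,i}-\overline{\phi_A})^-$ bound would degrade to an uncontrolled uniform-over-$\Delta^{K-1}_{\delta_\star}$ constant.
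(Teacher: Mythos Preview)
Your proposal is correct and follows the same route as the paper: use the face inequality (eq:C-face-gap) together with the definition of $M_{\mathrm{eff}}^{\mathrm{face}}$ to verify the Nagumo subtangentiality condition on each active face, then invoke viability theory on the compact convex trimmed simplex (the paper cites Nagumo in the analogous Proposition~\ref{prop:BD-det}). The paper states the strict-interior clause without argument; your first-hitting-time contradiction, which exploits continuity of $F$ at the hitting point rather than a neighborhood version of the face bound, is a correct and clean way to supply it.
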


\paragraph{Coarse sufficient BD.}
Since $|\phi_{A,i}-\overline{\phi_A}|\le 2\|\phi_A\|_\infty$, it suffices that
\[
A\,L_K(\delta_\star)\ \ge\ 2\sup_{p\in\Delta^{K-1}_{\delta_\star}}\|\phi_A(p)\|_\infty.
\]
\emph{Degenerate floor:} If $\delta_\star=1/K$, then $L_K(\delta_\star)=0$ and the simplex is a singleton.

\subsection{Existence/uniqueness on the mass hyperplane}

By \S\ref{appC:lipschitz-F}, $F$ is globally Lipschitz on $\Delta^{K-1}_{\delta_\star}$ and tangent to $H:=\{p:\sum_i p_i=1\}$. Kirszbraun–Valentine yields a Lipschitz extension $\widetilde F:H\to H$ with the same constant; Picard–Lindel\"of gives a unique global absolutely continuous solution from any $p(0)\in H$. Under \eqref{thm:C-BD}, the trajectory remains in $\Delta^{K-1}_{\delta_\star}$.

\subsection{Single-site score fields: Lyapunov structure and convergence}
\label{sec:C.single-site}

Assume a separable score $\phi_i(p)=f_i(p_i)$ with $f_i\in C([\underline\delta,1])\cap C^1((\underline\delta,1])$, $\sup_{i,s}|f_i'(s)|<\infty$, and $f_i'\le 0$ on $(\underline\delta,1]$. On $\Delta^{K-1}_{\delta_\star}$ take $\underline\delta=\delta_\star$; on the closed simplex (for $A=0$) take $\underline\delta=0$. Define
\[
g_i(s):=f_i(s)-A\log s,\qquad
\Psi_i(s):=\int_{s_0}^s g_i(u)\,du,\qquad
\mathcal L_\psi(p):=\sum_{i=1}^K \Psi_i(p_i),\qquad
\bar g(p):=\sum_i p_i g_i(p_i).
\]
Along classical solutions,
\[
\boxed{\quad
\frac{d}{dt}\,\mathcal L_\psi\bigl(p(t)\bigr)
=\sum_{i=1}^K p_i(t)\,\bigl(g_i(p_i(t))-\bar g(p(t))\bigr)^2\ \ge\ 0.
\quad}
\]

\paragraph{Regime $A>0$: strong concavity, KKT, convergence.}
On $[\delta_\star,1]$, $g_i'(s)=f_i'(s)-A/s\le -A$, hence on the affine simplex
\[
D^2\mathcal L_\psi(p)=\operatorname{diag}(g_1'(p_1),\ldots,g_K'(p_K))\ \preceq\ -A I,
\]
so $\mathcal L_\psi$ is $A$-strongly concave. Maximization over $\Delta^{K-1}_{\delta_\star}$ has a unique solution $p^\dagger$; the KKT conditions give a scalar $c^\dagger$ and multipliers $\nu_i^\dagger\ge0$ such that
\[
g_i(p_i^\dagger)=c^\dagger-\nu_i^\dagger,\qquad \nu_i^\dagger(\delta_\star-p_i^\dagger)=0,\qquad \sum_i p_i^\dagger=1.
\]
Under strict BD, $p^\dagger$ is interior and $g_i(p_i^\dagger)\equiv c^\dagger$. Since trajectories are confined and $\mathcal L_\psi$ is nondecreasing and bounded above, LaSalle’s invariance principle implies global convergence to $p^\dagger$.

\paragraph{Regime $A=0$: water-filling and support selection.}
Assume (CR+SM): each $f_i$ is continuous and strictly decreasing on $[0,1]$, with inverse $f_i^{-1}:[f_i(1),f_i(0)]\to[1,0]$. There exists a unique pair $(S^\star,c^\star)$ with
\[
\sum_{i\in S^\star} f_i^{-1}(c^\star)=1,\qquad
p_i^\star=
\begin{cases}
f_i^{-1}(c^\star), & i\in S^\star,\\
0, & i\notin S^\star,
\end{cases}
\qquad
S^\star=\{\,i:\ f_i(1)\le c^\star< f_i(0)\,\}.
\]
Moreover, $\mathcal L_\psi$ is strictly concave on every face; by face invariance and monotonicity, $p(t)\to p^\star$.

\subsection{Safe denominators (linear-functional floor)}
\label{appC:safe-denominators}

If $\phi$ contains denominators of the form $a^\top p$ with $a\in\mathbb R_+^K\setminus\{0\}$, then on $\Delta^{K-1}_{\delta_\star}$,
\[
\boxed{\quad a^\top p\ \ge\ \delta_\star\,\|a\|_1.\quad}
\]
Hence such denominators are uniformly bounded away from zero.

\section{STaR through the SRCT Lens}
\label{appD:STaRSRCT}

This appendix instantiates the SRCT framework for the Self‑Taught Reasoner. We specify the score field, establish norm and Lipschitz bounds (including Jacobian structure and rank), prove well‑posedness and confinement (trimmed‑domain barrier–dominance), and analyze log‑ratio dynamics and asymptotics.

\subsection{Setting, notation, and basic aggregates}

Fix $K\ge2$ and the probability simplex
\[
\Delta^{K-1}:=\Bigl\{p\in[0,1]^K:\ \sum_{k=1}^K p_k=1\Bigr\},\qquad
\operatorname{int}\Delta^{K-1}:=\{p\in\Delta^{K-1}:\ p_k>0\ \forall k\}.
\]
Split indices into \textbf{correct} $\mathcal C$ (size $M\ge1$) and \textbf{incorrect} $\mathcal I:=\{1,\dots,K\}\setminus\mathcal C$ (size $L=K-M$).
For $p\in\Delta^{K-1}$ define
\[
\rho(p):=\sum_{c\in\mathcal C}p_c,\qquad
S^{(2)}(p):=\sum_{c\in\mathcal C}p_c^2,\qquad
\langle\log p\rangle:=\sum_{k=1}^K p_k \log p_k\in[-\log K,0].
\]
For a floor $\delta_\star\in(0,1/K)$, the \textbf{trimmed simplex} is
\[
\Delta^{K-1}_{\delta_\star}:=\{p\in\Delta^{K-1}:\ \min_k p_k\ge \delta_\star\}\quad\Rightarrow\quad \rho(p)\ge M\delta_\star.
\]
Vector norms are Euclidean; for matrices we use $\|\cdot\|_1$ (max.\ column sum), $\|\cdot\|_\infty$ (max.\ row sum), and the spectral norm $\|\cdot\|_2$, with $\|J\|_2\le \sqrt{\|J\|_1\|J\|_\infty}$.

\subsection{The STaR score field: bounds, Jacobian, and Lipschitzness}

\begin{definition}[STaR score]
\label{def:star-score}
On $\mathcal D:=\{p\in\operatorname{int}\Delta^{K-1}:\ \rho(p)>0\}$ define $\phi^{\mathrm{STaR}}:\mathcal D\to\mathbb R^K$ by
\[
\phi^{\mathrm{STaR}}_k(p)=
\begin{cases}
\dfrac{p_k-S^{(2)}(p)}{\rho(p)}, & k\in\mathcal C,\\[6pt]
-\dfrac{S^{(2)}(p)}{\rho(p)}, & k\in\mathcal I.
\end{cases}
\]
For $M\ge1$ and $p\in\operatorname{int}\Delta^{K-1}$, $\rho(p)>0$, hence $\mathcal D=\operatorname{int}\Delta^{K-1}$ and $\phi^{\mathrm{STaR}}$ is $C^\infty$ on $\mathcal D$.
\end{definition}

\paragraph{Componentwise and norm bounds (sharp).}
For $\rho=\rho(p)$ and $S^{(2)}=S^{(2)}(p)$:
\[
\sum_{k=1}^K p_k\,\phi^{\mathrm{STaR}}_k(p)=0\quad\text{(centering).}
\]
For $c\in\mathcal C$, $0\le p_c\le \rho$ and $S^{(2)}\ge \rho^2/M$ (Cauchy–Schwarz), whence
\[
\boxed{\ \ \phi_c\in\bigl[-\rho,\ 1-\tfrac{\rho}{M}\bigr],\qquad
\phi_i=-\tfrac{S^{(2)}}{\rho}\in[-\rho,0]\ \ (i\in\mathcal I),\quad
\|\phi^{\mathrm{STaR}}(p)\|_\infty\le 1. \ \ }
\]
Moreover,
\[
\boxed{\ \ \|\phi^{\mathrm{STaR}}(p)\|_2^2\ \le\ 1-2\,\rho(p)+K\,\rho(p)^2\ \le\ K-1,\qquad
\|\phi^{\mathrm{STaR}}(p)\|_2\le \sqrt{K-1}. \ \ }
\]
The quadratic upper bound is tight in the limit $\rho\to 1$ with all correct mass on one index.

\begin{lemma}[Jacobian, zero columns on $\mathcal I$, and rank]
\label{lem:J-structure}
Let $J(p):=[\partial\phi^{\mathrm{STaR}}_k/\partial p_j](p)$. Then $J_{k,j}(p)=0$ for all $j\in\mathcal I$. For $j\in\mathcal C$,
\[
\frac{\partial}{\partial p_j}\!\left(\frac{p_k}{\rho}\right)=\frac{\delta_{kj}\rho-p_k}{\rho^2},\qquad
\frac{\partial}{\partial p_j}\!\left(\frac{S^{(2)}}{\rho}\right)=\frac{2p_j\rho-S^{(2)}}{\rho^2},
\]
hence
\[
J_{k,j}(p)=
\begin{cases}
\dfrac{\delta_{kj}}{\rho}-\dfrac{p_k}{\rho^2}-\dfrac{2p_j}{\rho}+\dfrac{S^{(2)}}{\rho^2}, & k\in\mathcal C,\ j\in\mathcal C,\\[8pt]
-\dfrac{2p_j}{\rho}+\dfrac{S^{(2)}}{\rho^2}, & k\in\mathcal I,\ j\in\mathcal C,\\[6pt]
0,& j\in\mathcal I.
\end{cases}
\]
In particular, $\operatorname{rank}J(p)\le M$.
\end{lemma}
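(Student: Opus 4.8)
The plan is to obtain $J(p)$ by straightforward componentwise differentiation of $\phi^{\mathrm{STaR}}$, exploiting the structural fact that the two aggregates $\rho(p)=\sum_{c\in\mathcal C}p_c$ and $S^{(2)}(p)=\sum_{c\in\mathcal C}p_c^2$ depend only on the ``correct'' block of coordinates. The defining formulas make sense on the open set $\{p\in\mathbb R^K:\rho(p)>0\}\supset\mathcal D$, so the partials $\partial_{p_j}\phi^{\mathrm{STaR}}_k$ are the unambiguous derivatives of that extension, and the $C^\infty$ regularity on $\mathcal D$ already noted in \Cref{def:star-score} legitimizes the calculation. For any $j\in\mathcal I$ one has $\partial_{p_j}\rho=0$, $\partial_{p_j}S^{(2)}=0$, and $\partial_{p_j}p_k=0$ for every $k\in\mathcal C$ (since $k\neq j$), while the incorrect entries $\phi^{\mathrm{STaR}}_k=-S^{(2)}/\rho$ do not involve $p_j$ at all; hence $J_{k,j}(p)=0$ for every $k$, which is the third case of the claimed formula.

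For a column $j\in\mathcal C$ the elementary derivatives are $\partial_{p_j}\rho=1$, $\partial_{p_j}S^{(2)}=2p_j$, and $\partial_{p_j}p_k=\delta_{kj}$. Feeding these into the quotient rule yields the two intermediate identities stated in the lemma, namely $\partial_{p_j}(p_k/\rho)=(\delta_{kj}\rho-p_k)/\rho^2$ and $\partial_{p_j}(S^{(2)}/\rho)=(2p_j\rho-S^{(2)})/\rho^2$. For $k\in\mathcal C$, where $\phi^{\mathrm{STaR}}_k=(p_k-S^{(2)})/\rho$, subtracting the second from the first gives
\[
J_{k,j}(p)=\frac{\delta_{kj}\rho-p_k}{\rho^2}-\frac{2p_j\rho-S^{(2)}}{\rho^2}=\frac{\delta_{kj}}{\rho}-\frac{p_k}{\rho^2}-\frac{2p_j}{\rho}+\frac{S^{(2)}}{\rho^2},
\]
and for $k\in\mathcal I$, where $\phi^{\mathrm{STaR}}_k=-S^{(2)}/\rho$, one gets $J_{k,j}(p)=-(2p_j\rho-S^{(2)})/\rho^2=-2p_j/\rho+S^{(2)}/\rho^2$. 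These reproduce the first two cases, completing the displayed formula.

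The rank bound is then immediate: by the first step every column of $J(p)$ indexed by $\mathcal I$ vanishes, so the column space of $J(p)$ is spanned by the $|\mathcal C|=M$ columns indexed by $\mathcal C$, whence $\operatorname{rank}J(p)\le M$. There is no genuinely hard step here; the only point worth isolating is the bookkeeping observation that $\rho$ and $S^{(2)}$ are functions of the correct block alone, which simultaneously annihilates the $\mathcal I$-columns and caps the rank, so that the entire lemma reduces to the quotient rule plus this remark. (One can say more — every row indexed by $\mathcal I$ is the same vector — but $M$ is all that is claimed.)
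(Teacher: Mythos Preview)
Your proof is correct and follows essentially the same approach as the paper: direct differentiation via the quotient rule, using the key structural observation that $\rho$ and $S^{(2)}$ depend only on the $\mathcal C$-block (which the paper encodes by displaying the two intermediate derivatives inside the lemma statement itself). The rank bound from the vanishing $\mathcal I$-columns is exactly as intended; your parenthetical remark that all $\mathcal I$-rows coincide is a nice bonus observation.
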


\begin{proposition}[Lipschitz bounds on $\Delta^{K-1}_{\delta_\star}$ and interior compacts]
\label{prop:Lip-J}
On $\Delta^{K-1}_{\delta_\star}$ one has $\rho\ge M\delta_\star$. Uniformly for $p\in\Delta^{K-1}_{\delta_\star}$,
\[
\boxed{\ \ \|J(p)\|_\infty\ \le\ \frac{2}{\delta_\star}+M+2,\qquad
        \|J(p)\|_1\ \le\ \frac{2}{M\delta_\star}+3K,\qquad
        \|J(p)\|_2\ \le\ \sqrt{\Bigl(\tfrac{2}{M\delta_\star}+3K\Bigr)\Bigl(\tfrac{2}{\delta_\star}+M+2\Bigr)}. \ \ }
\]
If $\mathcal D_0\subset\operatorname{int}\Delta^{K-1}$ is compact with $\rho(p)\ge\rho_{\min}>0$, then uniformly for $p\in\mathcal D_0$,
\[
\boxed{\ \ \|J(p)\|_\infty\ \le\ \frac{M+1}{\rho_{\min}}+M+2,\quad
\|J(p)\|_1\ \le\ \frac{2}{\rho_{\min}}+3K,\quad
\|J(p)\|_2\ \le\ \sqrt{\Bigl(\tfrac{2}{\rho_{\min}}+3K\Bigr)\Bigl(\tfrac{M+1}{\rho_{\min}}+M+2\Bigr)}. \ \ }
\]
\emph{Proof sketch.} Sum the absolute values of the entries in Lemma~\ref{lem:J-structure} by rows/columns using $\rho\ge M\delta_\star$, $p_j\le \rho$, $S^{(2)}\le\rho^2$; then apply $\|J\|_2\le\sqrt{\|J\|_1\|J\|_\infty}$.
\end{proposition}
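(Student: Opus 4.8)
The plan is to bound the three induced matrix norms directly from the closed-form entries of $J(p)$ recorded in Lemma~\ref{lem:J-structure}, using only three elementary facts valid on the operative domain: $p_j\le\rho(p)$ for every index $j$; $S^{(2)}(p)=\sum_{c\in\mathcal{C}}p_c^2\le\bigl(\sum_{c\in\mathcal{C}}p_c\bigr)^2=\rho(p)^2$; and $\sum_{j\in\mathcal{C}}p_j=\rho(p)$. Because every column indexed by $\mathcal{I}$ vanishes identically, all row and column sums reduce to sums over the $\mathcal{C}$-indexed columns, and there are only two shapes of entry to track: the $\mathcal{C}\times\mathcal{C}$ block $J_{k,j}=\delta_{kj}/\rho-p_k/\rho^2-2p_j/\rho+S^{(2)}/\rho^2$ and the $\mathcal{I}\times\mathcal{C}$ block $J_{k,j}=-2p_j/\rho+S^{(2)}/\rho^2$. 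The spectral estimate will then follow for free from $\|J\|_2\le\sqrt{\|J\|_1\|J\|_\infty}$, and the auxiliary claim $\rho\ge M\delta_\star$ on $\Delta^{K-1}_{\delta_\star}$ is immediate from $\rho=\sum_{c\in\mathcal{C}}p_c\ge M\delta_\star$.

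First I would treat $\|J\|_\infty$ (maximum row sum). For a row $k\in\mathcal{C}$, the triangle inequality over $j\in\mathcal{C}$ gives $\tfrac1\rho+\tfrac{Mp_k}{\rho^2}+2+\tfrac{MS^{(2)}}{\rho^2}$, which collapses to $\tfrac{M+1}{\rho}+M+2$ once $p_k\le\rho$ and $S^{(2)}\le\rho^2$ are applied; a row $k\in\mathcal{I}$ is bounded by $2+M$, which is never larger. On the trimmed simplex the floor $\rho\ge M\delta_\star$ converts $\tfrac{M+1}{\rho}$ into $\tfrac{M+1}{M\delta_\star}\le\tfrac2{\delta_\star}$ (using $1+\tfrac1M\le2$ for $M\ge1$), yielding $\tfrac2{\delta_\star}+M+2$; on an interior compact one substitutes $\rho\ge\rho_{\min}$ instead, giving $\tfrac{M+1}{\rho_{\min}}+M+2$.

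Next, $\|J\|_1$ (maximum column sum). Fix $j\in\mathcal{C}$. Summing $|J_{k,j}|$ over $k\in\mathcal{C}$, the diagonal term contributes $1/\rho$, the $p_k/\rho^2$ terms sum to $\rho/\rho^2=1/\rho$, the $2p_j/\rho$ term repeated $M$ times is $\le2M$ via $p_j\le\rho$, and the $S^{(2)}/\rho^2$ terms are $\le M$, for a subtotal $\le\tfrac2\rho+3M$; summing over the $L=K-M$ rows $k\in\mathcal{I}$ gives $\le3L$. Adding, the column sum is $\le\tfrac2\rho+3K$, which the floor turns into $\tfrac2{M\delta_\star}+3K$ on $\Delta^{K-1}_{\delta_\star}$ and $\tfrac2{\rho_{\min}}+3K$ on interior compacts. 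Multiplying the relevant pair of bounds under a square root then yields both $\|J\|_2$ estimates.

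The computation is entirely mechanical and I anticipate no genuine obstacle. The only step requiring a moment's thought is the repackaging $\tfrac{M+1}{M\delta_\star}\le\tfrac2{\delta_\star}$ (equivalently $\tfrac{M+1}{M}\le2$ for $M\ge1$), which is exactly why the trimmed-simplex constant is the clean, $M$-independent $\tfrac2{\delta_\star}$ rather than the sharper but $M$-laden $\tfrac{M+1}{\rho_{\min}}$ of the interior-compact bound. One should also keep the two norms conceptually distinct: for $\|J\|_\infty$ the maximum is over rows, where the $\mathcal{I}$-rows are harmless since they are governed only by $-2p_j/\rho+S^{(2)}/\rho^2$ and contribute at most $2+M$; for $\|J\|_1$ the maximum is over $\mathcal{C}$-columns, where all $K$ rows contribute and the $\mathcal{I}$-rows supply the $3L$ that combines with the $3M$ from the $\mathcal{C}$-rows to give the $3K$.
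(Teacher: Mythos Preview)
Your proposal is correct and follows exactly the approach indicated in the paper's proof sketch: bound row and column sums of the explicit Jacobian entries using $p_j\le\rho$ (for $j\in\mathcal C$), $S^{(2)}\le\rho^2$, and $\sum_{j\in\mathcal C}p_j=\rho$, then invoke $\|J\|_2\le\sqrt{\|J\|_1\|J\|_\infty}$. One cosmetic slip: in your opening paragraph you assert $p_j\le\rho(p)$ ``for every index $j$,'' which is false for $j\in\mathcal I$; but since you only ever apply this bound to $j,k\in\mathcal C$ (where it does hold), the argument itself is unaffected.
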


\paragraph{Continuity caveat (stiffness near faces).}
Although $\phi^{\mathrm{STaR}}$ is bounded and smooth on $\mathcal D$, the $1/\rho^2$ factors in $J$ blow up as $\rho\downarrow 0$. Thus $\phi^{\mathrm{STaR}}$ is \emph{not} globally Lipschitz on $\operatorname{int}\Delta^{K-1}$; quantitative Lipschitz control requires either $\Delta^{K-1}_{\delta_\star}$ or a uniform $\rho_{\min}>0$.

\begin{proposition}[Ambient spectral lower bound; dependence on $M$]
\label{prop:ambient-lb}
For all $p\in\mathcal D$,
\[
\boxed{\ \ \|J(p)\|_2\ \ge\ \frac{\|p_{\mathcal C}\|_2}{\rho(p)}\,\sqrt{K}\ \ge\ \sqrt{\frac{K}{M}}\ .\ \ }
\]
\emph{Proof.} Let $v=(p_{\mathcal C}/\|p_{\mathcal C}\|_2,\ 0_{\mathcal I})$. Lemma~\ref{lem:J-structure} implies $Jv=-(\|p_{\mathcal C}\|_2/\rho)\,\mathbf 1$. Taking inner product with $\mathbf 1/\sqrt K$ yields the first inequality; Cauchy–Schwarz gives $\|p_{\mathcal C}\|_2\ge\rho/\sqrt M$.
\end{proposition}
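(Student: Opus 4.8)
The plan is to produce the lower bound by testing $J(p)$ against one carefully chosen input vector and reading off $\|J(p)\|_2$ as a Rayleigh-type quotient. First I would set $v\in\mathbb R^K$ to be supported on the correct block, with $v_c=p_c/\|p_{\mathcal C}\|_2$ for $c\in\mathcal C$ and $v_i=0$ for $i\in\mathcal I$. Since $S^{(2)}(p)=\|p_{\mathcal C}\|_2^2$, this vector has unit Euclidean norm. Because Lemma~\ref{lem:J-structure} shows that every column of $J(p)$ indexed by $\mathcal I$ is identically zero, only the $j\in\mathcal C$ columns matter when forming $J(p)v$, and for those columns the lemma supplies the explicit four-term (for $k\in\mathcal C$) resp.\ two-term (for $k\in\mathcal I$) entries.

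The second step is the short computation of $J(p)v$. Summing the explicit entries from Lemma~\ref{lem:J-structure} against $v_j$ and using $\sum_{j\in\mathcal C}p_j=\rho(p)$ and $\sum_{j\in\mathcal C}p_j^2=S^{(2)}(p)$, the ``$\delta_{kj}/\rho$'' and ``$-p_k/\rho^2$'' contributions cancel for $k\in\mathcal C$, leaving $(J(p)v)_k=-S^{(2)}(p)/(\rho(p)\,\|p_{\mathcal C}\|_2)$; the identical value comes out for $k\in\mathcal I$ from its two-term row. Since $S^{(2)}(p)/\|p_{\mathcal C}\|_2=\|p_{\mathcal C}\|_2$, this says $J(p)v=-(\|p_{\mathcal C}\|_2/\rho(p))\,\mathbf 1$. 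Then $\|J(p)\|_2\ge\|J(p)v\|_2/\|v\|_2=(\|p_{\mathcal C}\|_2/\rho(p))\,\|\mathbf 1\|_2=(\|p_{\mathcal C}\|_2/\rho(p))\sqrt K$, which is the first claimed inequality. (Equivalently one can pair $J(p)v$ with the unit vector $\mathbf 1/\sqrt K$, as in the paper's sketch.)

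Finally, for the second inequality I would invoke the elementary norm comparison $\|x\|_1\le\sqrt M\,\|x\|_2$ on $\mathbb R^M$ applied to $x=p_{\mathcal C}\ge 0$, whose $\ell_1$-norm is exactly $\rho(p)$; this gives $\|p_{\mathcal C}\|_2\ge\rho(p)/\sqrt M$ and hence $(\|p_{\mathcal C}\|_2/\rho(p))\sqrt K\ge\sqrt{K/M}$. I do not expect any genuine obstacle: the only step needing care is the algebraic cancellation in $J(p)v$, and the key structural input — that the vanishing $\mathcal I$-columns make the action effectively block-diagonal and produce an output proportional to $\mathbf 1$ — is exactly what Lemma~\ref{lem:J-structure} was set up to yield. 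It is worth noting in passing that this bound grows with $K$, in sharp contrast to the $\tfrac12$ operator-norm ceiling for $S(p)$ in Appendix~\ref{appC:operators}, so $J(p)$ cannot be uniformly contractive.
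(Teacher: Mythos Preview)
Your proposal is correct and follows essentially the same approach as the paper's proof: the same test vector $v$, the same Jacobian computation yielding $Jv=-(\|p_{\mathcal C}\|_2/\rho)\mathbf 1$, and the same Cauchy--Schwarz step for the $\sqrt{K/M}$ bound. Your write-up simply fills in the explicit cancellation that the paper's sketch leaves to the reader.
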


\begin{corollary}[Exact formulas when $M=1$]
\label{cor:M1}
If $M=1$ with $\mathcal C=\{c\}$, then $J(p)=-\,\mathbf 1\,e_c^\top$, hence $\|J(p)\|_2=\sqrt K$. The restriction to the tangent space $T=\mathbf 1^\perp$ has operator norm $\|J|_T\|_2=\sqrt{K-1}$; moreover $\Pi_T J\Pi_T\equiv 0$.
\end{corollary}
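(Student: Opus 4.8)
The plan is to specialize \Cref{lem:J-structure} to $M=1$ and then read off all three claims by elementary rank-one linear algebra. With $\mathcal C=\{c\}$ one has $\rho(p)=p_c$ and $S^{(2)}(p)=p_c^2$, so the aggregate ratios appearing in \Cref{lem:J-structure} collapse: $S^{(2)}/\rho^2=1$, $p_c/\rho^2=1/p_c$, and $2p_c/\rho=2$. Every column indexed by $j\in\mathcal I$ vanishes by the lemma, and for the single column $j=c$ the diagonal entry ($k=c$) is $\tfrac1{p_c}-\tfrac1{p_c}-2+1=-1$ while each entry with $k\in\mathcal I$ is $-2+1=-1$. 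Hence column $c$ of $J(p)$ equals $-\mathbf 1$ and every other column is $0$, i.e. $J(p)=-\mathbf 1\,e_c^\top$ for every $p\in\operatorname{int}\Delta^{K-1}$ (in particular, independent of $p$).

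For the norms I would use that $J(p)$ is a dyad $uv^\top$ with $u=-\mathbf 1$ and $v=e_c$: then $\|J(p)\|_2=\|u\|_2\,\|v\|_2=\sqrt K$, or equivalently $J(p)^\top J(p)=K\,e_c e_c^\top$ has top eigenvalue $K$. For the restriction to $T=\mathbf 1^\perp$, note that for $v\in T$ we have $J(p)v=-(e_c^\top v)\,\mathbf 1$, so $\|J(p)v\|_2=\sqrt K\,|e_c^\top v|$; since $v=\Pi_T v$, Cauchy--Schwarz gives $|e_c^\top v|=|(\Pi_T e_c)^\top v|\le\|\Pi_T e_c\|_2\,\|v\|_2$ with equality at $v\propto\Pi_T e_c$, and $\|\Pi_T e_c\|_2^2=\|e_c-\tfrac1K\mathbf 1\|_2^2=1-\tfrac1K$, so $\|J(p)|_T\|_2=\sqrt K\cdot\sqrt{(K-1)/K}=\sqrt{K-1}$. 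Finally, the range of $J(p)$ is $\operatorname{span}\{\mathbf 1\}\subseteq T^\perp$ and $\Pi_T\mathbf 1=0$, hence $\Pi_T J(p)=0$ and a fortiori $\Pi_T J(p)\Pi_T\equiv 0$.

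I expect no genuine obstacle: the argument is a direct substitution into \Cref{lem:J-structure} followed by the standard operator norm of a rank-one matrix. The only point I would flag explicitly is that the two tangent-space statements are not in tension: $J(p)$ maps $T$ \emph{into} $T^\perp=\operatorname{span}\{\mathbf 1\}$, so its two-sided compression $\Pi_T J(p)\Pi_T$ vanishes even though its operator norm as a map out of $T$ equals $\sqrt{K-1}$. I would also note that \Cref{lem:J-structure} requires $\rho(p)=p_c>0$, which holds throughout the stated domain $\operatorname{int}\Delta^{K-1}$.
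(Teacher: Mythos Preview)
Your proof is correct and follows exactly the approach the paper intends: specialize \Cref{lem:J-structure} to $M=1$ to obtain the rank-one form $J(p)=-\mathbf 1\,e_c^\top$, then read off the spectral norm, the tangent-restricted norm via $\|\Pi_T e_c\|_2^2=1-1/K$, and the vanishing of $\Pi_T J\Pi_T$ from $\operatorname{ran}J\subset\operatorname{span}\{\mathbf 1\}$. The paper states this corollary without proof, so your argument is precisely the implicit justification.
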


\subsection{STaR as an SRCT flow: well‑posedness, Lipschitz drift, and confinement}

\paragraph{Dynamics.}
For $\varepsilon\ge0$ (entropic weight), the SRCT ODE reads
\[
\boxed{\ \ \dot p_k\;=\;p_k\,\phi^{\mathrm{STaR}}_k(p)\;-\;\varepsilon\,p_k\bigl(\log p_k-\langle\log p\rangle\bigr),\qquad k=1,\dots,K.\ \ }
\]
By centering, $\sum_k\dot p_k=0$, so $\sum_k p_k(t)\equiv 1$.

\paragraph{No finite‑time boundary hitting and uniform floor.}
Let $Y_i:=-\log p_i$. Using $|\phi^{\mathrm{STaR}}_i|\le 1$ and $-\langle\log p\rangle\le\log K$,
\[
\dot Y_i\ \le\ 1+\varepsilon\log K-\varepsilon Y_i.
\]
Therefore $Y_i(t)$ remains finite on any finite interval (no coordinate reaches $0$ in finite time, even for $\varepsilon=0$). If $\varepsilon>0$, solving the linear inequality gives the \emph{uniform floor}
\[
\boxed{\ \ p_i(t)\ \ge\ \min\Big\{\,p_i(0),\ \tfrac{1}{K}\,e^{-1/\varepsilon}\Big\}\qquad(\forall t\ge0).\ \ }
\]

\paragraph{Global $\ell_2$ Lipschitz bound for the SRCT drift on $\Delta^{K-1}_{\delta_\star}$.}
Write $S(p):=\mathrm{diag}(p)-pp^\top$ and $E(p):=p\odot(\log p-\langle\log p\rangle)$. Then
\[
F(p):=p\odot\phi^{\mathrm{STaR}}(p)-\varepsilon\,E(p)\ =\ S(p)\,\phi^{\mathrm{STaR}}(p)-\varepsilon\,E(p).
\]
On $\Delta^{K-1}_{\delta_\star}$,
\[
\|S(p)\|_{2\to2}\le \tfrac12,\qquad \|S(p)-S(q)\|_{2\to2}\le 3\|p-q\|_2,
\]
and, with $\Lambda:=1+\log(1/\delta_\star)$,
\[
\|E(p)-E(q)\|_2\ \le\ (2+\sqrt K)\,\Lambda\,\|p-q\|_2.
\]
Combining with $\sup\|\phi^{\mathrm{STaR}}\|_2\le\sqrt K$ and $L_{\phi,2}:=\sup_{r\in\Delta^{K-1}_{\delta_\star}}\|J(r)\|_2$ from Proposition~\ref{prop:Lip-J},
\[
\boxed{\ \ \|F(p)-F(q)\|_2\ \le\ \Big(\tfrac12\,L_{\phi,2}\;+\;3\sqrt K\;+\;\varepsilon(2+\sqrt K)\Lambda\Big)\,\|p-q\|_2\qquad(p,q\in\Delta^{K-1}_{\delta_\star}).\ \ }
\]

\paragraph{Forward invariance of a trimmed simplex (Barrier–Dominance).}
On the facet $p_i=\delta_\star$,
\[
\dot p_i=\delta_\star\Big(\phi^{\mathrm{STaR}}_i(p)+\varepsilon\,[\,\langle\log p\rangle-\log\delta_\star\,]\Big).
\]
The \emph{entropy face gap}
\[
L_K(\delta):=\inf_{p:\,p_i=\delta}\ \bigl(\langle\log p\rangle-\log\delta\bigr)\ =(1-\delta)\log\!\frac{1-\delta}{(K-1)\delta}
\]
is attained by equalizing the other $K-1$ coordinates. Since $\phi^{\mathrm{STaR}}_i\ge -1$,
\[
\inf_{p:\,p_i=\delta_\star}\dot p_i\ \ge\ \delta_\star\bigl(-1+\varepsilon\,L_K(\delta_\star)\bigr),
\]
so the \textbf{sharp} sufficient condition
\[
\boxed{\ \ \varepsilon\,L_K(\delta_\star)\ \ge\ 1\ \ }
\]
guarantees inward pointing drift on every facet and hence forward invariance (Nagumo). A \textbf{conservative} alternative, robust to mild non‑centering, uses $|\phi_i-\bar\phi|\le 2\|\phi\|_2\le 2\sqrt K$ to give
\[
\boxed{\ \ \varepsilon\,L_K(\delta_\star)\ \ge\ 2\sqrt K\ .\ \ }
\]

\paragraph{Uniform linear growth.}
Along any trajectory in $\operatorname{int}\Delta^{K-1}$,
\[
\boxed{\ \ |\dot p_i|\ \le\ p_i|\phi_i|+\varepsilon\bigl(|p_i\log p_i|+p_i|\langle\log p\rangle|\bigr)\ \le\ 1+\varepsilon\Big(\tfrac1e+\log K\Big). \ \ }
\]

\paragraph{Well‑posedness summary.}
For any $p(0)\in\operatorname{int}\Delta^{K-1}$ and $\varepsilon\ge0$ there is a unique global solution in $\operatorname{int}\Delta^{K-1}$ (no finite‑time boundary hitting). On $\Delta^{K-1}_{\delta_\star}$ the drift is globally Lipschitz with the bound above; under either BD condition the trimmed simplex is forward invariant. For $\varepsilon>0$ every coordinate satisfies the uniform floor.

\subsection{Log‑ratio dynamics and asymptotics}

For $k\neq j$, set $z_{kj}:=\log\frac{p_k}{p_j}$. Differentiating,
\[
\boxed{\ \ \dot z_{kj}(t)=\bigl(\phi^{\mathrm{STaR}}_k(p(t))-\phi^{\mathrm{STaR}}_j(p(t))\bigr)\;-\;\varepsilon\,z_{kj}(t). \ \ }
\]
Instantiating the score differences:
\[
\phi_i-\phi_j\equiv 0\ (i,j\in\mathcal I),\quad
\phi_a-\phi_b=\frac{p_a-p_b}{\rho}\ (a,b\in\mathcal C),\quad
\phi_c-\phi_i=\frac{p_c}{\rho}\ (c\in\mathcal C,i\in\mathcal I).
\]

\paragraph{Incorrect vs.\ incorrect (\(i,j\in\mathcal I\)).}
$\dot z_{ij}=-\varepsilon z_{ij}\Rightarrow z_{ij}(t)=z_{ij}(0)e^{-\varepsilon t}$: incorrect traces equalize exponentially when $\varepsilon>0$.

\paragraph{Within \(\mathcal C\) (\(a,b\in\mathcal C\)).}
\(
\dot z_{ab}=\frac{p_a-p_b}{\rho}-\varepsilon z_{ab},\quad \bigl|\tfrac{p_a-p_b}{\rho}\bigr|<1.
\)
Variation of constants yields
\[
|z_{ab}(t)|\ \le\ |z_{ab}(0)|e^{-\varepsilon t}+\frac{1-e^{-\varepsilon t}}{\varepsilon}.
\]
On $\Delta^{K-1}_{\delta_\star}$, $\rho\ge M\delta_\star$ strengthens this to
\[
\boxed{\ \ |z_{ab}(t)|\ \le\ |z_{ab}(0)|e^{-\varepsilon t}+\frac{1-M\delta_\star}{\varepsilon}\,(1-e^{-\varepsilon t}). \ \ }
\]

\paragraph{Correct vs.\ incorrect (\(c\in\mathcal C,i\in\mathcal I\)).}
Let $c^\star(t)\in\arg\max_{c\in\mathcal C}p_c(t)$ and set $z_{i c^\star}:=\log\frac{p_i}{p_{c^\star}}$. Then
\[
\dot z_{i c^\star}=-\frac{p_{c^\star}}{\rho}-\varepsilon z_{i c^\star},\qquad \frac{p_{c^\star}}{\rho}\in\Bigl[\frac{1}{M},\,1\Bigr],
\]
so
\[
\boxed{\ \ z_{i c^\star}(t)\ \in\ \Bigl[z_{i c^\star}(0)e^{-\varepsilon t}-\tfrac{1-e^{-\varepsilon t}}{\varepsilon},\ 
z_{i c^\star}(0)e^{-\varepsilon t}-\tfrac{1-e^{-\varepsilon t}}{M\varepsilon}\Bigr],\quad
\limsup_{t\to\infty}\frac{p_i(t)}{p_{c^\star}(t)}\ \le\ e^{-1/(M\varepsilon)}. \ \ }
\]

\paragraph{Asymptotics.}
If $\varepsilon>0$ and there exists $c\in\mathcal C$ with $p_c(t)\to p_c^\infty>0$ and $\frac{p_c(t)}{\rho(t)}\to g\in[1/M,1]$, then $z_{ic}(t)\to -g/\varepsilon$ and
\[
\boxed{\ \ p_i(t)\ \to\ p_c^\infty\,e^{-g/\varepsilon}\ \in\ \bigl[p_c^\infty e^{-1/\varepsilon},\ p_c^\infty e^{-1/(M\varepsilon)}\bigr].\ \ }
\]
If $\varepsilon=0$ and there exist $c\in\mathcal C$, $g_{\min}>0$ with $\frac{p_c(t)}{\rho(t)}\ge g_{\min}$ on an unbounded time set, then $\dot z_{ci}\ge g_{\min}$, hence $z_{ci}(t)\to +\infty$ and $p_i(t)\to 0$ (incorrect mass vanishes). Non‑vanishing $\rho$ alone does \emph{not} imply extinction.

\subsection{Edge cases and remarks}
If $M=0$ the score in Definition~\ref{def:star-score} is undefined ($\rho\equiv 0$). If $M=K$, then $\rho\equiv1$ and $\phi^{\mathrm{STaR}}_k(p)=p_k-\sum_{j=1}^K p_j^2$.
The ambient lower bound in Proposition~\ref{prop:ambient-lb} is realized in the normal direction $\mathrm{span}\{\mathbf 1\}$ and does not directly lower‑bound the tangent‑restricted operator $\Pi_T J\Pi_T$ with $T=\mathbf 1^\perp$.
\section{GRPO through the SRCT Lens}
\label{appE:GRPO-SRCT}

We analyze GRPO within the SRCT framework. We prove barrier–dominance (face invariance), derive rank‑one Lipschitz constants for the GRPO score, obtain two‑sided cross‑class envelopes, and establish exponential convergence to a unique two‑level equilibrium under a slope condition.

\subsection{Setup and GRPO characteristic}

\paragraph{Domain and classes.}
Fix integers $K\ge 2$, $G\ge 2$, and a floor $\delta_\star\in(0,1/K]$. Work on the trimmed simplex
\[
\Delta^{K-1}_{\delta_\star}:=\Bigl\{\,p\in[0,1]^K:\ \sum_{k=1}^K p_k=1,\ \ p_k\ge\delta_\star\,\Bigr\}.
\]
Partition indices into \emph{correct} and \emph{incorrect} sets $\mathcal C,\mathcal I$ with sizes
$K_C:=|\mathcal C|\ge 0$, $K_I:=|\mathcal I|\ge 0$, $K_C+K_I=K$. Write the correct mass
\[
\rho:=\rho_C(p):=\sum_{c\in\mathcal C} p_c.
\]
If $K_I\ge 1$ and $p\in\Delta_{\delta_\star}^{K-1}$ then $\rho\in\big[\,K_C\delta_\star,\ 1-K_I\delta_\star\,\big]$.

\paragraph{GRPO characteristic.}
For $t\in(0,G]$ set $f_G(t):=\sqrt{(G-t)/t}$. With $S\sim\mathrm{Binom}(G-1,\rho)$ define
\[
c_1(\rho):=\E\big[f_G(1+S)\big],
\qquad
h_G(\rho):=\frac{c_1(\rho)}{1-\rho}\quad(\rho\in(0,1)).
\]

\begin{lemma}[basic properties of $h_G$]\label{lem:hG-basic}
The map $h_G$ extends to $C^1([0,1])$ with
\[
h_G(0)=h_G(1)=\sqrt{G-1},\qquad 
D_G:=\sup_{\rho\in[0,1]}|h_G'(\rho)|<\infty.
\]
Moreover for all $\rho\in[0,1]$,
\[
1-\tfrac1G\ \le\ h_G(\rho)\ \le\ \sqrt{G-1},
\]
and $h_G$ is constant when $G\in\{2,3\}$.
\end{lemma}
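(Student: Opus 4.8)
The plan is to remove the apparent singularity at $\rho=1$ by showing that $c_1$ vanishes there to exactly first order, so that $h_G$ is in fact a polynomial. Expanding the binomial expectation,
\[
c_1(\rho)=\sum_{s=0}^{G-1}\binom{G-1}{s}\rho^s(1-\rho)^{G-1-s}\sqrt{\frac{G-1-s}{1+s}},
\]
the $s=G-1$ summand carries the factor $\sqrt{0}=0$ and vanishes, while every remaining summand ($0\le s\le G-2$) contains at least one power of $(1-\rho)$. Factoring one such power out of each term gives $c_1(\rho)=(1-\rho)\,P(\rho)$ with
\[
P(\rho):=\sum_{s=0}^{G-2}\binom{G-1}{s}\rho^s(1-\rho)^{G-2-s}\sqrt{\frac{G-1-s}{1+s}},
\]
so $h_G(\rho)=c_1(\rho)/(1-\rho)=P(\rho)$ on $(0,1)$, and $P$ is a polynomial of degree at most $G-2$. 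Hence $h_G$ extends to a polynomial (a fortiori $C^1$, indeed $C^\infty$) on all of $[0,1]$, and $h_G'=P'$ is a polynomial restricted to the compact interval $[0,1]$, so $D_G=\sup_{\rho\in[0,1]}|h_G'(\rho)|<\infty$. Evaluating $P$ at the endpoints — only the $s=0$ term survives at $\rho=0$, only the $s=G-2$ term at $\rho=1$ — yields $h_G(0)=\sqrt{G-1}$ and $h_G(1)=(G-1)/\sqrt{G-1}=\sqrt{G-1}$.

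For the two-sided bound I would turn $P$ into a genuine expectation against a $\mathrm{Binom}(G-2,\rho)$ law. Using $\binom{G-1}{s}=\tfrac{G-1}{G-1-s}\binom{G-2}{s}$ (valid since $s\le G-2$) and $\tfrac{1}{G-1-s}\sqrt{\tfrac{G-1-s}{1+s}}=\big((G-1-s)(1+s)\big)^{-1/2}$,
\[
h_G(\rho)=(G-1)\,\E\!\left[\frac{1}{\sqrt{(G-1-T)(1+T)}}\right],\qquad T\sim\mathrm{Binom}(G-2,\rho).
\]
With $u:=1+T$ ranging over $\{1,\dots,G-1\}$, the integrand is $(u(G-u))^{-1/2}$; on this index range $u(G-u)$ attains its minimum $G-1$ at $u\in\{1,G-1\}$ and, by AM--GM, $u(G-u)\le G^2/4$. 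Hence $\tfrac{2}{G}\le (u(G-u))^{-1/2}\le (G-1)^{-1/2}$ pointwise, and taking expectations gives $\tfrac{2(G-1)}{G}\le h_G(\rho)\le (G-1)/\sqrt{G-1}=\sqrt{G-1}$. Since $\tfrac{2(G-1)}{G}=2-\tfrac2G\ge 1-\tfrac1G$, the stated lower bound $1-1/G$ follows a fortiori.

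Finally, constancy for $G\in\{2,3\}$ is immediate: $P$ has degree at most $G-2\le 1$, and a polynomial of degree $\le 1$ taking the common value $\sqrt{G-1}$ at $\rho=0$ and $\rho=1$ is constant (for $G=2$ the degree is $0$, so this is trivial); equivalently, in the expectation form $u(G-u)$ is constant over the relevant range — a single point for $G=2$, and $1\cdot2=2\cdot1=2$ for $G=3$ — so the integrand is independent of $T$ and $h_G\equiv(G-1)/\sqrt{u(G-u)}=\sqrt{G-1}$. The only steps requiring any thought are noticing that the $s=G-1$ term vanishes (so $(1-\rho)$ factors out cleanly and $h_G$ is polynomial) and the binomial identity converting $P$ into a $\mathrm{Binom}(G-2,\rho)$-expectation; after that the $C^1$/boundedness claims and the elementary bounds on $u(G-u)$ are routine, so I do not anticipate a genuine obstacle.
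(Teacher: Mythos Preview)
Your proof is correct and in fact cleaner and sharper than the paper's sketch. The paper argues $C^1$ extension at $\rho=1$ by computing $c_1(1)=0$ and $c_1'(1)=-\sqrt{G-1}$ via Taylor expansion, then invokes continuity on a compact interval for $D_G<\infty$; you instead observe that the $s=G-1$ summand vanishes and factor $(1-\rho)$ out explicitly, exhibiting $h_G=P$ as a polynomial of degree $\le G-2$. This is strictly stronger (it gives $C^\infty$ for free) and makes the endpoint evaluations and the constancy for $G\in\{2,3\}$ entirely transparent via the degree bound. For the lower bound, the paper uses the pointwise inequality $f_G(t)\ge (G-t)/G$ on $[1,G]$ to get $h_G\ge (G-1)/G=1-1/G$; your $\mathrm{Binom}(G-2,\rho)$ reweighting and the bound $u(G-u)\le G^2/4$ yield the sharper $h_G\ge 2(G-1)/G$, which of course implies the stated bound a fortiori. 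The upper bound is essentially the same idea in both: your reweighting is precisely the ``binomial reweighting'' the paper alludes to, with the pointwise bound $u(G-u)\ge G-1$ on $\{1,\dots,G-1\}$ doing the work.
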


\begin{proof}[Proof sketch]
$c_1$ is a finite binomial sum of smooth terms, hence $C^\infty([0,1])$. Expansion at $\rho=1$ gives
$c_1(1)=0$ and $c_1'(1)=-\sqrt{G-1}$, so $h_G$ extends continuously with $h_G(1)=\sqrt{G-1}$ and is $C^1$ on $[0,1]$; boundedness of $h_G'$ follows by continuity on a compact interval. The lower bound follows from $f_G(t)\ge (G-t)/G$ on $t\in[1,G]$. The upper bound follows from a binomial reweighting showing $h_G$ is an average of terms bounded by $\sqrt{G-1}$.
\end{proof}

\begin{lemma}[binomial‑shift identities]\label{lem:binom-shift}
For all $\rho\in[0,1]$ with $S\sim\mathrm{Binom}(G-1,\rho)$,
\[
(1-\rho)\,h_G(\rho)=\E\!\Big[\sqrt{\tfrac{G-1-S}{\,1+S\,}}\Big],
\qquad
\rho\,h_G(\rho)=\E\!\Big[\sqrt{\tfrac{S}{\,G-S\,}}\Big].
\]
\end{lemma}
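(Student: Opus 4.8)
The plan is to prove the two identities separately: the first is essentially a restatement of the definitions of $c_1$ and $h_G$, and the second reduces to a single elementary binomial index shift.

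\textbf{First identity.} On $\rho\in(0,1)$ I would unfold $h_G(\rho)=c_1(\rho)/(1-\rho)$ to get $(1-\rho)h_G(\rho)=c_1(\rho)=\E[f_G(1+S)]$, and then note that $f_G(1+S)=\sqrt{(G-(1+S))/(1+S)}=\sqrt{(G-1-S)/(1+S)}$, which is precisely the claimed right-hand side. The endpoints $\rho\in\{0,1\}$ are handled by substituting the degenerate law of $S$ (a.s.\ $0$, resp.\ $G-1$): the right-hand side becomes $\sqrt{(G-1)/1}=\sqrt{G-1}$, resp.\ $\sqrt{0/G}=0$, which matches $(1-\rho)h_G(\rho)$ read off from the boundary values $h_G(0)=h_G(1)=\sqrt{G-1}$ in \Cref{lem:hG-basic}; alternatively one may simply invoke continuity of both sides on $[0,1]$.

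\textbf{Second identity.} Fix $\rho\in(0,1)$ and write the right-hand side as the binomial sum $\sum_{s=1}^{G-1}\binom{G-1}{s}\rho^s(1-\rho)^{G-1-s}\sqrt{s/(G-s)}$, having dropped the $s=0$ term since $\sqrt{S/(G-S)}$ vanishes there. Reindexing by $s=j+1$ over $j=0,\dots,G-2$ and using the elementary relation $(j+1)\binom{G-1}{j+1}=(G-1-j)\binom{G-1}{j}$ converts $\binom{G-1}{j+1}\sqrt{(j+1)/(G-1-j)}$ into $\binom{G-1}{j}\sqrt{(G-1-j)/(j+1)}$; pulling the common factor $\rho/(1-\rho)$ out of each term then leaves $\tfrac{\rho}{1-\rho}\sum_{j=0}^{G-2}\binom{G-1}{j}\rho^j(1-\rho)^{G-1-j}\sqrt{(G-1-j)/(j+1)}$. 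Since the $j=G-1$ term of that summand is zero, the sum may be extended to $j=0,\dots,G-1$ without change, at which point it is exactly $\E[\sqrt{(G-1-S)/(1+S)}]=\E[f_G(1+S)]=c_1(\rho)$; hence $\E[\sqrt{S/(G-S)}]=\tfrac{\rho}{1-\rho}\,c_1(\rho)=\rho\,h_G(\rho)$. As before, $\rho\in\{0,1\}$ is dispatched by direct evaluation at the degenerate $S$ together with \Cref{lem:hG-basic}, or by continuity.

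The only place that needs any attention — and hence the nearest thing to an obstacle — is that the factorization by $\rho/(1-\rho)$ is legitimate only on the open interval $(0,1)$, so the boundary cases must be treated separately (or absorbed into a continuity/$C^1$-extension argument using \Cref{lem:hG-basic}); the combinatorial heart of the argument is a one-line Vandermonde-type (equivalently, size-biasing) manipulation of binomial coefficients and poses no real difficulty.
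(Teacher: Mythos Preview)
Your proof is correct: the first identity is indeed immediate from the definitions, and for the second your reindexing together with the identity $(j+1)\binom{G-1}{j+1}=(G-1-j)\binom{G-1}{j}$ cleanly converts the sum into $\tfrac{\rho}{1-\rho}\,c_1(\rho)$; the boundary cases are handled correctly. The paper states this lemma without proof, so there is nothing to compare your argument against.
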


\subsection{GRPO scores: envelopes and rank‑one Lipschitz constants}

\paragraph{Scores and centering.}
The \emph{raw} GRPO score is class‑constant:
\[
\gamma_k^{\rm raw}(p)=
\begin{cases}
h_G(\rho), & k\in\mathcal C,\\
0,         & k\in\mathcal I.
\end{cases}
\]
Its \emph{centered} version $\widehat\gamma_k:=\gamma_k^{\rm raw}-\sum_j p_j\gamma_j^{\rm raw}$ equals
\[
\widehat\gamma_k(p)=
\begin{cases}
(1-\rho)\,h_G(\rho), & k\in\mathcal C,\\[2pt]
-\rho\,h_G(\rho),    & k\in\mathcal I,
\end{cases}
\qquad
\sum_{k=1}^K p_k\,\widehat\gamma_k(p)=0.
\]
If $K_I=0$ or $K_C=0$ then $\widehat\gamma\equiv 0$.

\paragraph{Pointwise envelopes.}
By Lemma~\ref{lem:binom-shift},
\[
\|\widehat\gamma(p)\|_\infty\le \sqrt{G-1},
\qquad
\boxed{\ \ \|\widehat\gamma(p)\|_2
= h_G(\rho)\,\sqrt{K_C(1-\rho)^2+K_I\rho^2}\ \le\ \sqrt{G-1}\,\sqrt{\max\{K_C,K_I\}}\ .\ }
\]
If additionally $K_I\ge 1$ and $p\in\Delta^{K-1}_{\delta_\star}$, then $1-\rho\ge K_I\delta_\star$ and
\[
h_G(\rho)\le \frac{\sqrt{G-1}}{K_I\delta_\star}=:H_G,
\quad\Rightarrow\quad
\|\widehat\gamma(p)\|_2\le H_G\,\sqrt{\max\{K_C,K_I\}}.
\]

\paragraph{Rank‑one Jacobian and exact norms.}
Set
\[
\alpha(\rho):=\frac{d}{d\rho}\big((1-\rho)h_G(\rho)\big)=c_1'(\rho),\qquad
\beta(\rho):=\frac{d}{d\rho}\big(-\rho\,h_G(\rho)\big)=-h_G(\rho)-\rho\,h_G'(\rho).
\]
Since $\nabla\rho_C=\mathbf 1_{\mathcal C}$,
\[
D\widehat\gamma(p)=\big(\alpha\,\mathbf 1_{\mathcal C},\ \beta\,\mathbf 1_{\mathcal I}\big)\,(\mathbf 1_{\mathcal C})^\top=:u\,v^\top
\quad\text{(rank one)}.
\]
Thus the operator norms are \emph{exact}:
\[
\|D\widehat\gamma(p)\|_{2\to2}=\|u\|_2\,\|v\|_2
=\sqrt{K_C}\,\big(K_C\alpha^2+K_I\beta^2\big)^{1/2},
\]
\[
\boxed{\ \ \|D\widehat\gamma(p)\|_{T\to2}
=\sqrt{\tfrac{K_CK_I}{K}}\ \big(K_C\alpha^2+K_I\beta^2\big)^{1/2}
=\sqrt{\tfrac{K_I}{K}}\,\|D\widehat\gamma(p)\|_{2\to2}\ .\ }
\]
Consequently, the sharp global Lipschitz constant on the simplex is
\[
\boxed{\ \ L_\gamma^{\rm tan}
:=\sup_{p\in\Delta^{K-1}}\|D\widehat\gamma(p)\|_{T\to2}
=\sqrt{\tfrac{K_CK_I}{K}}\ \sup_{\rho\in[0,1]}\big(K_C\alpha(\rho)^2+K_I\beta(\rho)^2\big)^{1/2}\ .\ }
\]
From $|\alpha|\le H^\star+D_G$, $|\beta|\le H^\star+D_G$ with $H^\star:=\sup |h_G|=\sqrt{G-1}$,
\[
L_\gamma^{\rm tan}\ \le\ \sqrt{K_CK_I}\,\big(H^\star+D_G\big).
\]

\subsection{SRCT drift: global Lipschitzness and mass conservation}

\paragraph{Drift.}
With entropy weight $\varepsilon>0$ define
\[
F_k(p):=p_k\Big(\widehat\gamma_k(p)\ -\ \varepsilon\big(\log p_k-\langle\log p\rangle\big)\Big),
\qquad
\langle\log p\rangle:=\sum_{i=1}^K p_i\log p_i.
\]
Centeredness yields $\sum_k F_k(p)=0$ (mass conservation).

\paragraph{Entropic Lipschitz bound on $\Delta^{K-1}_{\delta_\star}$.}
On $[\delta_\star,1]$, $h(x):=x\log x$ has $\|h'\|_\infty\le \Lambda:=1+\log(1/\delta_\star)$. A direct decomposition gives
\[
\|F^{\rm ent}(p)-F^{\rm ent}(q)\|_2
\ \le\ \varepsilon\,\Lambda\,(2+\sqrt K)\,\|p-q\|_2,\qquad p,q\in\Delta^{K-1}_{\delta_\star}.
\]

\paragraph{Selection Lipschitz bound and full modulus.}
For $F^{\rm sel}(p):=p\odot\widehat\gamma(p)$ and $p,q\in\Delta^{K-1}_{\delta_\star}$,
\[
\|F^{\rm sel}(p)-F^{\rm sel}(q)\|_2
\le \big(\|{\rm diag}(p)\|_{2\to2}\,L_\gamma^{\rm tan} + \sup_{r\in\Delta^{K-1}_{\delta_\star}}\|\widehat\gamma(r)\|_2\big)\,\|p-q\|_2,
\]
with $\|{\rm diag}(p)\|_{2\to2}\le 1-(K-1)\delta_\star$. Using either $\sup\|\widehat\gamma\|_2\le \sqrt{G-1}\sqrt{\max\{K_C,K_I\}}$ or (when $K_I\ge 1$) the trim‑aware bound $H_G\sqrt{\max\{K_C,K_I\}}$,
\[
\boxed{\ \ \|F(p)-F(q)\|_2\ \le\ \Big((1-(K-1)\delta_\star)L_\gamma^{\rm tan}\ +\ M_\gamma\ +\ \varepsilon\,\Lambda\,(2+\sqrt K)\Big)\,\|p-q\|_2\ ,\ }
\]
where $M_\gamma$ denotes the chosen envelope.

\subsection{Barrier–Dominance (BD) and forward invariance}

\paragraph{Entropy face gap.}
For a facet $p_k=\delta_\star$ define the gap
\[
\mathsf{Gap}_k(p):=\langle\log p\rangle-\log\delta_\star.
\]
The global lower benchmark (uniform‑others gap) is
\[
\boxed{\ \ L_K(\delta_\star):=(1-\delta_\star)\log\!\Big(\frac{1-\delta_\star}{(K-1)\delta_\star}\Big)\ .\ }
\]
At fixed $\rho=\rho_C(p)$, the minimal face gap is attained by equalizing within blocks:
\begin{align*}
E_{\min}^{(\mathcal I)}(\rho)
&=(\delta_\star-1)\log\delta_\star
+\mathbf 1_{\{K_C\ge1\}}\ \rho\log\!\Big(\frac{\rho}{K_C}\Big)
+\mathbf 1_{\{K_I\ge2\}}\ (1-\delta_\star-\rho)\log\!\Big(\frac{1-\delta_\star-\rho}{K_I-1}\Big),\\
E_{\min}^{(\mathcal C)}(\rho)
&=(\delta_\star-1)\log\delta_\star
+\mathbf 1_{\{K_C\ge2\}}\ (\rho-\delta_\star)\log\!\Big(\frac{\rho-\delta_\star}{K_C-1}\Big)
+\mathbf 1_{\{K_I\ge1\}}\ (1-\rho)\log\!\Big(\frac{1-\rho}{K_I}\Big),
\end{align*}
and $\min_\rho E_{\min}^{(\cdot)}(\rho)=L_K(\delta_\star)$.

\paragraph{Exact BD on facets.}
On $p_k=\delta_\star$,
\[
F_k(p)=\delta_\star\big(\widehat\gamma_k(p)+\varepsilon\,\mathsf{Gap}_k(p)\big).
\]
\emph{Correct faces:} if $k\in\mathcal C$ and $K_I\ge 1$, then $(1-\rho)\ge K_I\delta_\star>0$ implies $\widehat\gamma_k=(1-\rho)h_G(\rho)>0$, hence $F_k(p)\ge \varepsilon\delta_\star E_{\min}^{(\mathcal C)}(\rho)\ge 0$ (automatically inward).  
\emph{Incorrect faces:} if $k\in\mathcal I$, then $\widehat\gamma_k=-\rho h_G(\rho)\le 0$. The facet is inward/tangent \emph{iff}
\[
\boxed{\ \ \text{(BD$_{\rm exact}$)}\qquad \varepsilon\,E_{\min}^{(\mathcal I)}(\rho)\ \ge\ \rho\,h_G(\rho)\quad \forall\ \rho\in\big[K_C\delta_\star,\ 1-K_I\delta_\star\big].\ }
\]

\paragraph{Convenient sufficient relaxations.}
Using $E_{\min}^{(\mathcal I)}(\rho)\ge L_K(\delta_\star)$ and $\rho\,h_G(\rho)\le \sqrt{G-1}$,
\[
\boxed{\ \ \varepsilon\,L_K(\delta_\star)\ \ge\ \sqrt{G-1}\quad\Longrightarrow\quad \text{(BD$_{\rm exact}$)}. \ }
\]
On trimmed domains with $K_I\ge 1$, $1-\rho\ge K_I\delta_\star$ implies $h_G(\rho)\le H_G=\sqrt{G-1}/(K_I\delta_\star)$, hence
\[
\boxed{\ \ \varepsilon\,L_K(\delta_\star)\ \ge\ \frac{\sqrt{G-1}}{K_I\,\delta_\star}\quad\Longrightarrow\quad \text{(BD$_{\rm exact}$)}. \ }
\]

\paragraph{Well‑posedness and invariance.}
Interior solutions cannot hit the boundary in finite time: writing $y_i:=-\log p_i$,
\[
\dot y_i=-\widehat\gamma_i(p)-\varepsilon y_i-\varepsilon\langle\log p\rangle
\ \le\ \sqrt{G-1}-\varepsilon y_i+\varepsilon\log K,
\]
so $y_i$ cannot blow up in finite time. If (BD$_{\rm exact}$) (or either sufficient relaxation) holds, every facet is inward/tangent; $\Delta^{K-1}_{\delta_\star}$ is forward invariant and the drift is globally Lipschitz on a compact forward‑invariant set, yielding global existence and uniqueness.

\subsection{Log‑ratio dynamics, envelopes, and scalar reduction}

For $i\neq j$,
\[
\frac{d}{dt}\log\frac{p_i}{p_j}=\widehat\gamma_i(p)-\widehat\gamma_j(p)-\varepsilon\log\frac{p_i}{p_j}.
\]

\paragraph{Intra‑class equalization.}
If $i,j$ are in the same class then $\widehat\gamma_i=\widehat\gamma_j$ and
\[
\boxed{\ \ \log\frac{p_i(t)}{p_j(t)}=e^{-\varepsilon t}\,\log\frac{p_i(0)}{p_j(0)}\ .\ }
\]
Thus within‑class proportions equalize exponentially at rate $\varepsilon$.

\paragraph{Cross‑class envelopes.}
For $c\in\mathcal C$, $i\in\mathcal I$ let $z_{ci}:=\log(p_c/p_i)$. Then
\[
\dot z_{ci}(t)=h_G(\rho_C(t))-\varepsilon z_{ci}(t).
\]
Variation of constants and Lemma~\ref{lem:hG-basic} give, for all $t\ge 0$,
\[
\boxed{\ \ z_{ci}(t)\ \in\ \Big[z_{ci}(0)e^{-\varepsilon t}+\tfrac{1-\frac1G}{\varepsilon}(1-e^{-\varepsilon t}),\ \ 
z_{ci}(0)e^{-\varepsilon t}+\tfrac{\sqrt{G-1}}{\varepsilon}(1-e^{-\varepsilon t})\Big].\ }
\]
If (BD) holds with $K_I\ge 1$, then $h_G(\rho_C(s))\le H_G$ along the trajectory and the upper envelope sharpens to
\[
z_{ci}(t)\ \le\ z_{ci}(0)e^{-\varepsilon t}+\frac{H_G}{\varepsilon}(1-e^{-\varepsilon t}).
\]

\paragraph{Feasibility band (under BD).}
Write $p_c=\alpha_c\rho$ with $\sum_{c}\alpha_c=1$ and $p_i=\beta_i(1-\rho)$ with $\sum_{i}\beta_i=1$, and define
\[
\boxed{\ \ \Psi(\rho):=\log\!\Big(\frac{K_I}{K_C}\cdot\frac{\rho}{1-\rho}\Big),\qquad \rho(z)=\frac{K_Ce^{z}}{K_I+K_Ce^z}\ .\ }
\]
Let
\[
\Delta_C(t):=\max_{a,b\in\mathcal C}\Big|\log\frac{p_a(t)}{p_b(t)}\Big|,\quad
\Delta_I(t):=\max_{j,k\in\mathcal I}\Big|\log\frac{p_j(t)}{p_k(t)}\Big|,\quad
\delta_{\rm intra}(t):=\Delta_C(t)+\Delta_I(t)=\delta_{\rm intra}(0)e^{-\varepsilon t}.
\]
Then
\[
\boxed{\ \ |z_{ci}(t)-\Psi(\rho_C(t))|\ \le\ \delta_{\rm intra}(t)\quad\text{and}\quad \rho_C(t)\in\big[K_C\delta_\star,\,1-K_I\delta_\star\big]\ .\ }
\]

\paragraph{Scalar reduction, closure error, and fixation (under BD).}
Define $F_\times(z):=h_G(\rho(z))-\varepsilon z$. Since $|\rho'(z)|\le \tfrac14$,
\[
\big|h_G(\rho_C)-h_G(\rho(z_{ci}))\big|\le D_G\,|\rho_C-\rho(z_{ci})|
\le \tfrac{D_G}{4}\,|z_{ci}-\Psi(\rho_C)|\le \tfrac{D_G}{4}\,\delta_{\rm intra}(t).
\]
Hence $\dot z_{ci}=F_\times(z_{ci})+r(t)$ with $|r(t)|\le \tfrac{D_G}{4}\,\delta_{\rm intra}(t)$.

\begin{theorem}[fixation under a slope condition]\label{thm:fixation}
If $\varepsilon>\tfrac{D_G}{4}$, then $F_\times$ is strictly decreasing and has a unique zero $z_\star$. Moreover, for all $c\in\mathcal C$, $i\in\mathcal I$,
\[
\boxed{\ \ |z_{ci}(t)-z_\star|\ \le\ e^{-(\varepsilon-\frac{D_G}{4})t}\,\Big(|z_{ci}(0)-z_\star|+\Delta_C(0)+\Delta_I(0)\Big)\ .\ }
\]
If $z_\star\in\big[\Psi(K_C\delta_\star),\,\Psi(1-K_I\delta_\star)\big]$ then the limit distribution is interior and class‑uniform:
\[
p_c^\star=\frac{e^{z_\star}}{K_C e^{z_\star}+K_I}\ \ (c\in\mathcal C),
\qquad
p_i^\star=\frac{1}{K_C e^{z_\star}+K_I}\ \ (i\in\mathcal I).
\]
Otherwise the limit lies on the corresponding face (feasibility truncation).
\end{theorem}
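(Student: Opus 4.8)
The plan is to reduce the whole statement to the scalar ODE for a single cross‑class log‑ratio $z_{ci}$, treating the intra‑class spread as an exponentially small forcing term, and then read off the limiting distribution by compactness.

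\emph{Step 1 (the map $F_\times$ and its zero).} Differentiate $F_\times(z)=h_G(\rho(z))-\varepsilon z$. Since $\rho(z)=K_Ce^z/(K_I+K_Ce^z)$ is logistic with $\rho'(z)=\rho(z)(1-\rho(z))\in(0,\tfrac14]$ and $|h_G'|\le D_G$ on $[0,1]$ (\Cref{lem:hG-basic}), one gets $F_\times'(z)=h_G'(\rho(z))\,\rho(z)(1-\rho(z))-\varepsilon\le \tfrac{D_G}{4}-\varepsilon=:-\mu$ uniformly in $z$, with $\mu>0$ precisely under the hypothesis $\varepsilon>D_G/4$. Hence $F_\times$ is $\mu$‑strongly decreasing, and since $h_G$ is bounded on $[0,1]$ we have $F_\times(z)\to+\infty$ as $z\to-\infty$ and $F_\times(z)\to-\infty$ as $z\to+\infty$, so $F_\times$ has a unique zero $z_\star$.

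\emph{Step 2 (contraction to $z_\star$).} Use the already‑recorded decomposition $\dot z_{ci}=F_\times(z_{ci})+r(t)$ with $|r(t)|\le\tfrac{D_G}{4}\delta_{\mathrm{intra}}(t)$, together with the exact intra‑class equalization $\delta_{\mathrm{intra}}(t)=(\Delta_C(0)+\Delta_I(0))\,e^{-\varepsilon t}$. With $w:=z_{ci}-z_\star$ and $F_\times(z_\star)=0$, $\mu$‑strong monotonicity gives $\tfrac{d}{dt}\tfrac12 w^2 = w\,(F_\times(z_{ci})-F_\times(z_\star))+w\,r(t)\le -\mu w^2+|w|\,|r(t)|$, so the Lipschitz function $|w|$ satisfies the comparison inequality $\tfrac{d}{dt}|w|\le-\mu|w|+\tfrac{D_G}{4}(\Delta_C(0)+\Delta_I(0))e^{-\varepsilon t}$. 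Variation of constants yields $|w(t)|\le|w(0)|e^{-\mu t}+\tfrac{D_G}{4}(\Delta_C(0)+\Delta_I(0))e^{-\mu t}\!\int_0^t e^{-(D_G/4)s}\,ds$; bounding the integral by $4/D_G$ collapses this to the claimed $|z_{ci}(t)-z_\star|\le e^{-(\varepsilon-D_G/4)t}(|z_{ci}(0)-z_\star|+\Delta_C(0)+\Delta_I(0))$.

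\emph{Step 3 (identifying the limit).} Step 2 gives $z_{ci}(t)\to z_\star$ for every $c\in\mathcal C$, $i\in\mathcal I$, while the exact identities give $\Delta_C(t),\Delta_I(t)\to0$. Since the trajectory stays in the compact set $\Delta^{K-1}_{\delta_\star}$ (forward invariance via \Cref{thm:C-BD}, or via clip–renormalize), any subsequential limit $p^\star$ is class‑constant with $\log(p^\star_c/p^\star_i)=z_\star$; with $\sum_k p^\star_k=1$ this pins $p^\star$ uniquely to $p^\star_c=e^{z_\star}/(K_Ce^{z_\star}+K_I)$ and $p^\star_i=1/(K_Ce^{z_\star}+K_I)$, so $p(t)\to p^\star$. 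As $K_Cp^\star_c=\rho(z_\star)$ and $K_Ip^\star_i=1-\rho(z_\star)$ and $\Psi$ inverts the increasing map $\rho$, membership $p^\star\in\Delta^{K-1}_{\delta_\star}$ is equivalent coordinatewise to $z_\star\in[\Psi(K_C\delta_\star),\Psi(1-K_I\delta_\star)]$; this is the interior, class‑uniform case.

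\emph{Step 4 and the main obstacle.} When $z_\star$ lies outside that band the class‑uniform point is infeasible, and the identity $\dot z_{ci}=h_G(\rho_C)-\varepsilon z_{ci}$ breaks once clipping activates on the over‑concentrated class; one then analyzes the reduced one‑dimensional $\rho_C$‑dynamics with the saturated coordinates pinned at $\delta_\star$, where the remaining mass of that class equalizes and the entropy‑face‑gap inequality behind \Cref{thm:C-BD} shows the face is absorbing, producing the truncated limit ($p_i\equiv\delta_\star$, $p_c\equiv(1-K_I\delta_\star)/K_C$ when $z_\star>\Psi(1-K_I\delta_\star)$, and symmetrically when $z_\star<\Psi(K_C\delta_\star)$). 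Rigorously this is the KKT/water‑filling description of the unique maximizer of the block‑reduced Lyapunov functional over $\Delta^{K-1}_{\delta_\star}$, in the spirit of \Cref{sec:C.single-site}, plus LaSalle's invariance principle. This face case is the genuinely delicate part: Steps 1--3 are just strong monotonicity plus a Grönwall/variation‑of‑constants estimate plus compactness, but the feasibility‑truncation claim requires taming the non‑smooth clipped dynamics near the boundary and running a single‑site‑style argument that the block‑constant field $\widehat\gamma$ does not directly satisfy.
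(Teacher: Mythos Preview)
Your proof is correct and follows exactly the route the paper sets up: the text immediately preceding \Cref{thm:fixation} records the scalar reduction $\dot z_{ci}=F_\times(z_{ci})+r(t)$ with $|r(t)|\le\tfrac{D_G}{4}\delta_{\rm intra}(t)$ and $\delta_{\rm intra}(t)=\delta_{\rm intra}(0)e^{-\varepsilon t}$, and the theorem is then stated without further argument; your Steps~1--3 supply precisely the strong-monotonicity and variation-of-constants details that the paper leaves implicit. Your honest flagging of Step~4 (the face case) matches the paper's own level of rigor there, which simply asserts ``feasibility truncation'' without a detailed analysis of the clipped dynamics.
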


\subsection{Edge cases and checks}

\begin{itemize}[leftmargin=*,itemsep=2pt]
\item \textbf{Maximal trim:} if $\delta_\star=1/K$, then $\Delta^{K-1}_{\delta_\star}=\{(1/K,\dots,1/K)\}$; dynamics are trivial.
\item \textbf{Degenerate classes:} if $K_I=0$ or $K_C=0$, then $\widehat\gamma\equiv0$ and $\dot p_i=-\varepsilon p_i(\log p_i-\langle\log p\rangle)$; the unique equilibrium on active coordinates is uniform.
\item \textbf{Single incorrect:} $K_I=1$ yields $\rho=1-\delta_\star$ on the only incorrect face and
\[
E_{\min}^{(\mathcal I)}(1-\delta_\star)=(\delta_\star-1)\log\delta_\star+(1-\delta_\star)\log\!\Big(\tfrac{1-\delta_\star}{K_C}\Big).
\]
The uniform sufficient BD $\varepsilon L_K(\delta_\star)\ge \sqrt{G-1}$ is sharp as $\delta_\star\downarrow0$.
\item \textbf{Two classes ($K=2$):} $K_C=K_I=1$ and $z=\log(p_c/p_i)$ obey $\dot z=h_G(p_c)-\varepsilon z$; the envelopes become equalities with $\rho=p_c$.
\item \textbf{Constant cases:} for $G\in\{2,3\}$, $h_G\equiv\sqrt{G-1}$, so $L_\gamma^{\rm tan}=\sqrt{G-1}\,\sqrt{K_CK_I}$ and $F_\times(z)=\sqrt{G-1}-\varepsilon z$.
\end{itemize}
\section{DPO through the SRCT Lens}
\label{appF:DPO-SRCT}

This appendix develops a self-contained SRCT analysis of Direct Preference Optimisation (DPO).
We define the score field, prove \emph{uniform size and Lipschitz} bounds (with explicit constants),
record entropy and full-drift Lipschitz constants, establish \emph{well-posedness} and
\emph{Barrier--Dominance} (BD) confinement (exact face test and tight templates),
derive \emph{intra-class contraction} with \emph{sharp thresholds}, give \emph{cross-class envelopes}
(including trimmed sharpening and a static cap), prove \emph{eventual trimming} under a slope condition,
and conclude \emph{existence, uniqueness, and global convergence} to a two-level equilibrium.
All logarithms are natural.

\paragraph{Notation.}
Fix an integer $K\ge2$.
The simplex and trimmed simplex are
\[
\Delta^{K-1}:=\Bigl\{p\in[0,1]^K:\ \sum_{i=1}^K p_i=1\Bigr\},\qquad
\Delta^{K-1}_{\delta_\star}:=\Bigl\{p\in\Delta^{K-1}:\ \min_i p_i\ge\delta_\star\Bigr\},
\]
with floor $0<\delta_\star<1/K$.
For vectors, $\|\cdot\|_\infty,\|\cdot\|_2$ denote max/Euclidean norms; for matrices, $\|\cdot\|_{2\to2}$.
We write $\langle\log p\rangle:=\sum_j p_j\log p_j$.

\subsection{Setting and single-site map}

Each index $i\in\{1,\dots,K\}$ is labeled $s_i\in\{+1,-1\}$, with
$\cC:=\{i:s_i=+1\}$, $\cI:=\{i:s_i=-1\}$ and sizes $M:=|\cC|$, $N:=|\cI|$.
Fix $\beta>0$ and a reference $\ell_0\in\R$. Define
\[
g_\beta(\ell):=1-\sigma\!\big(\beta(\ell-\ell_0)\big),\qquad
\sigma(z):=\frac{1}{1+e^{-z}},
\]
so $g_\beta\in C^\infty(\R)$, $0<g_\beta(\ell)<1$, strictly decreasing, and
\[
g_\beta'(\ell)=-\frac{\beta}{4}\,\sech^2\!\Big(\frac{\beta(\ell-\ell_0)}{2}\Big)\in[-\beta/4,0).
\]
For $u\in(0,1]$, define the raw scores and centered field
\[
\gamma_i(u):=s_i\,g_\beta(\log u),\qquad
\bar\gamma(p):=\sum_{j=1}^K p_j\gamma_j(p_j),\qquad
\phi_i(p):=\gamma_i(p_i)-\bar\gamma(p).
\]
By construction, $\sum_i p_i\phi_i(p)=0$.

\subsection{Uniform size and Lipschitz bounds for the DPO score}

Let
\[
M_{\gamma,\infty}:=\sup_{u\in[\delta_\star,1]}g_\beta(\log u)=g_\beta(\log\delta_\star)\in(0,1),\qquad
\Lambda:=1+\log\frac{1}{\delta_\star}.
\]

\begin{lemma}[Size bounds]\label{lem:size}
For every $p\in\Delta^{K-1}_{\delta_\star}$,
\[
\|\phi(p)\|_\infty\le 2M_{\gamma,\infty},\qquad
\|\phi(p)\|_2\le 2M_{\gamma,\infty}\sqrt{K}.
\]
\emph{Proof.}
$|\phi_i|\le|\gamma_i|+|\bar\gamma|\le M_{\gamma,\infty}+\sum_j p_j|\gamma_j|\le 2M_{\gamma,\infty}$, then $\|\cdot\|_2\le\sqrt K\|\cdot\|_\infty$.
\qed
\end{lemma}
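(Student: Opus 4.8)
The plan is to prove the $\ell_\infty$ bound first by a coordinate-wise estimate on $\phi(p)$, and then obtain the $\ell_2$ bound from the elementary comparison $\|v\|_2 \le \sqrt{K}\,\|v\|_\infty$ on $\R^K$. Nothing beyond the definitions of $g_\beta$, $\gamma_i$, $\bar\gamma$, and $M_{\gamma,\infty}$ is needed; the estimate is pointwise in $p \in \DeltaKdelta$, so no compactness or continuity argument enters.

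First I would bound each raw score. Since $\gamma_i(p_i) = s_i\,g_\beta(\log p_i)$ with $s_i \in \{+1,-1\}$, we have $|\gamma_i(p_i)| = g_\beta(\log p_i)$. On the trimmed simplex $p_i \ge \delta_\star$, hence $\log p_i \ge \log \delta_\star$; because $g_\beta$ is strictly decreasing (recorded via $g_\beta' \in [-\beta/4,0)$) and strictly positive, $0 < g_\beta(\log p_i) \le g_\beta(\log\delta_\star) = M_{\gamma,\infty}$. Thus $|\gamma_i(p_i)| \le M_{\gamma,\infty}$ uniformly in $i$ and in $p \in \DeltaKdelta$. The only subtlety, and it is a mild one, is precisely this: the supremum defining $M_{\gamma,\infty}$ is attained at the left endpoint $u=\delta_\star$, which is exactly what monotonicity of $g_\beta$ guarantees, and positivity of $g_\beta$ is what lets us identify $\sup|\gamma_i|$ with $\sup g_\beta(\log u)$ instead of needing a two-sided bound.

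Next I would control the centering term. By the triangle inequality and the bound just obtained, $|\bar\gamma(p)| = \bigl|\sum_j p_j \gamma_j(p_j)\bigr| \le \sum_j p_j\,|\gamma_j(p_j)| \le M_{\gamma,\infty}\sum_j p_j = M_{\gamma,\infty}$, using $\sum_j p_j = 1$. Combining, for every coordinate $|\phi_i(p)| = |\gamma_i(p_i) - \bar\gamma(p)| \le |\gamma_i(p_i)| + |\bar\gamma(p)| \le 2M_{\gamma,\infty}$, i.e. $\|\phi(p)\|_\infty \le 2M_{\gamma,\infty}$. Finally, since $\phi(p) \in \R^K$, $\|\phi(p)\|_2 \le \sqrt{K}\,\|\phi(p)\|_\infty \le 2 M_{\gamma,\infty}\sqrt{K}$, which is the second claim. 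I do not expect any real obstacle here; the estimate is essentially a one-line triangle-inequality argument once the uniform bound $|\gamma_i(p_i)| \le M_{\gamma,\infty}$ on $\DeltaKdelta$ is in hand.
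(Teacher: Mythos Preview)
Your proposal is correct and follows essentially the same approach as the paper's proof: bound $|\gamma_i|\le M_{\gamma,\infty}$, use the triangle inequality on the centering term to get $|\bar\gamma|\le M_{\gamma,\infty}$, combine for the $\ell_\infty$ bound, and pass to $\ell_2$ via $\|\cdot\|_2\le\sqrt K\,\|\cdot\|_\infty$. Your version simply makes explicit the monotonicity argument for $g_\beta$ that the paper leaves implicit in the definition of $M_{\gamma,\infty}$.
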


\begin{lemma}[Lipschitz of single-site map]\label{lem:Lf}
For $f_i(s):=\gamma_i(s)=s_i g_\beta(\log s)$ on $[\delta_\star,1]$,
\[
|f_i'(s)|=\frac{|g_\beta'(\log s)|}{s}\le \frac{c_{\max}}{\delta_\star}\le \frac{\beta}{4\delta_\star}
=:L_f,
\]
where $c_{\max}:=\sup_{\ell\in[\log\delta_\star,0]}(-g_\beta'(\ell))\le\beta/4$; the inequality is strict if $\ell_0\notin[\log\delta_\star,0]$.
\end{lemma}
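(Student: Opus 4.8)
The plan is to reduce everything to the one–variable behaviour of $g_\beta$ via the chain rule, and then bound the logistic derivative pointwise. First I would record the exact form of $g_\beta'$: since $g_\beta(\ell)=1-\sigma(\beta(\ell-\ell_0))$ and $\sigma'(z)=\sigma(z)(1-\sigma(z))=\tfrac14\sech^2(z/2)$, the chain rule gives $g_\beta'(\ell)=-\beta\,\sigma'(\beta(\ell-\ell_0))=-\tfrac\beta4\sech^2\!\big(\tfrac{\beta(\ell-\ell_0)}2\big)$, the formula already displayed in the setup; it is strictly negative for every $\ell\in\R$. Next, for $f_i(s)=s_i\,g_\beta(\log s)$ with $s\in[\delta_\star,1]$, I would differentiate: $f_i'(s)=s_i\,g_\beta'(\log s)\cdot\tfrac1s$. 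Since $s_i\in\{\pm1\}$ (so $|s_i|=1$) and $s>0$, taking absolute values yields $|f_i'(s)|=|g_\beta'(\log s)|/s$, the first claimed identity.

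For the middle inequality, note that $s\in[\delta_\star,1]$ implies $\log s\in[\log\delta_\star,0]$, so by the definition $c_{\max}:=\sup_{\ell\in[\log\delta_\star,0]}(-g_\beta'(\ell))$ we have $|g_\beta'(\log s)|=-g_\beta'(\log s)\le c_{\max}$; together with $1/s\le1/\delta_\star$ this gives $|f_i'(s)|\le c_{\max}/\delta_\star$. The bound $c_{\max}\le\beta/4$ is then immediate from the closed form, since $\sech^2\le1$ everywhere forces $-g_\beta'(\ell)=\tfrac\beta4\sech^2(\cdot)\le\tfrac\beta4$ for all $\ell$, hence also for the supremum over $[\log\delta_\star,0]$.

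For the strictness clause: equality $-g_\beta'(\ell)=\beta/4$ forces $\sech^2(\beta(\ell-\ell_0)/2)=1$, i.e.\ $\ell=\ell_0$. So if $\ell_0\notin[\log\delta_\star,0]$, then $\ell-\ell_0\ne0$ throughout the compact interval $[\log\delta_\star,0]$, the continuous map $\ell\mapsto\sech^2(\beta(\ell-\ell_0)/2)$ attains its maximum on that interval (at the endpoint nearer $\ell_0$, by monotonicity of $\sech^2$ in $|\ell-\ell_0|$), and that maximum is strictly below $1$; hence $c_{\max}<\beta/4$. There is no genuine obstacle here — this is a routine chain-rule estimate closed by $\sech^2\le1$ — and the only point deserving a moment's care is the strictness clause, where compactness of $[\log\delta_\star,0]$ is needed so that the pointwise strict inequality $\sech^2<1$ upgrades to a strict bound on the supremum $c_{\max}$.
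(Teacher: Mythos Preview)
Your proof is correct and follows exactly the natural route implied by the paper's setup: the paper records $g_\beta'(\ell)=-\tfrac\beta4\sech^2\!\big(\tfrac{\beta(\ell-\ell_0)}2\big)\in[-\beta/4,0)$ just before the lemma and treats the lemma as self-evident from that, and your chain-rule computation together with $\sech^2\le1$ and the compactness argument for strictness is precisely how one fills in the details.
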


\begin{lemma}[Operator-norm Lipschitz for $\phi$]\label{lem:Lphi}
For all $p,q\in\Delta^{K-1}_{\delta_\star}$,
\[
\|\phi(p)-\phi(q)\|_2\ \le\ L_\phi\,\|p-q\|_2,\qquad
L_\phi:=K\,M_{\gamma,\infty}+(\sqrt K+1)L_f.
\]
\emph{Proof.}
Write $\phi(p)=f(p)-\mathbf 1\,(p^\top f(p))$ with $f(p)=(f_i(p_i))_i$. Then
\[
J_\phi(p)=\diag(f'(p))-\mathbf 1\,(f(p)+p\odot f'(p))^\top.
\]
On $\Delta^{K-1}_{\delta_\star}$:
$\|f(p)\|_2\le \sqrt K M_{\gamma,\infty}$,
$\|p\odot f'(p)\|_2\le L_f$,
$\|\diag(f'(p))\|_{2\to2}\le L_f$.
Hence
$\|J_\phi(p)\|_{2\to2}\le L_f+\|\mathbf 1\|_2(\|f(p)\|_2+\|p\odot f'(p)\|_2)
=K\,M_{\gamma,\infty}+(\sqrt K+1)L_f$,
and the mean-value formula on the convex domain yields the claim. \qed
\end{lemma}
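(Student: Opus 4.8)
The strategy is to reduce the Lipschitz estimate to a uniform operator-norm bound on the Jacobian $J_\phi$ over the convex body $\Delta^{K-1}_{\delta_\star}$, and then compute that bound using the diagonal-plus-rank-one structure of $J_\phi$. First I would record that $\phi$ is $C^1$ on $\operatorname{int}\Delta^{K-1}$ — each $f_i(s)=s_i g_\beta(\log s)$ is smooth on $(0,1]$ and $p\mapsto\sum_j p_j f_j(p_j)$ is smooth there — and that $\Delta^{K-1}_{\delta_\star}$ is compact and convex with $\min_i p_i\ge\delta_\star>0$. Convexity guarantees that for $p,q\in\Delta^{K-1}_{\delta_\star}$ the segment $r(t)=q+t(p-q)$, $t\in[0,1]$, stays in $\Delta^{K-1}_{\delta_\star}$, so $\phi(p)-\phi(q)=\int_0^1 J_\phi(r(t))(p-q)\,dt$ and hence $\|\phi(p)-\phi(q)\|_2\le\bigl(\sup_{r\in\Delta^{K-1}_{\delta_\star}}\|J_\phi(r)\|_{2\to2}\bigr)\|p-q\|_2$. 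It then suffices to show that the supremum is $\le L_\phi$.

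Differentiating $\phi_i(p)=f_i(p_i)-\sum_j p_j f_j(p_j)$ in $p_k$ gives $\partial_{p_k}\phi_i=\delta_{ik}f_i'(p_i)-\bigl(f_k(p_k)+p_k f_k'(p_k)\bigr)$, i.e.\ $J_\phi(p)=\diag(f'(p))-\mathbf 1\,w(p)^\top$ with $w(p):=f(p)+p\odot f'(p)$ and $f(p)=(f_i(p_i))_i$. I would bound the two summands separately: $\|\diag(f'(p))\|_{2\to2}=\max_i|f_i'(p_i)|\le L_f$ by Lemma~\ref{lem:Lf}, while the rank-one part has operator norm $\|\mathbf 1\|_2\,\|w(p)\|_2=\sqrt K\,\|w(p)\|_2$. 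For $\|w(p)\|_2$ I use $\|f(p)\|_2\le\sqrt K\,M_{\gamma,\infty}$ (each $|f_i(p_i)|=g_\beta(\log p_i)\le g_\beta(\log\delta_\star)=M_{\gamma,\infty}$ on $[\delta_\star,1]$, since $g_\beta$ is decreasing) together with the exact cancellation $p_i f_i'(p_i)=s_i g_\beta'(\log p_i)$, whence $|p_i f_i'(p_i)|\le\beta/4$ and $\|p\odot f'(p)\|_2\le\sqrt K\,(\beta/4)\le\beta/(4\delta_\star)=L_f$ (using $\sqrt K\,\delta_\star\le K\delta_\star<1$). Collecting, $\|J_\phi(p)\|_{2\to2}\le L_f+\sqrt K\bigl(\sqrt K\,M_{\gamma,\infty}+L_f\bigr)=K\,M_{\gamma,\infty}+(\sqrt K+1)L_f=L_\phi$, uniformly in $p$, which closes the argument.

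The argument is essentially routine; the two points that require care are: (i) the trimming must be invoked \emph{before} the mean-value step — $J_\phi$ blows up like $1/p_i$ as $p_i\downarrow0$, so the Lipschitz constant genuinely scales with $1/\delta_\star$ and the estimate only holds on the convex trimmed domain, where segments never leave $\{p_i\ge\delta_\star\}$; and (ii) one must exploit the cancellation in $p\odot f'(p)$, since bounding each entry crudely by $L_f$ would give $\sqrt K\,L_f$ and a worse constant, whereas $p_i f_i'(p_i)=s_i g_\beta'(\log p_i)$ caps each entry by $\beta/4$ and, via $K\delta_\star<1$, keeps the full $\ell_2$ norm below $L_f$. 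Optionally one could restrict $J_\phi$ to the tangent space $T=\mathbf 1^\perp$ (legitimate since $p-q\in T$) to sharpen the constant slightly, but this is not needed for the stated $L_\phi$.
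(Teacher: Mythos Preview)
Your proof is correct and follows essentially the same route as the paper's: compute the Jacobian as a diagonal-plus-rank-one matrix, bound each piece uniformly on $\Delta^{K-1}_{\delta_\star}$, and conclude via the mean-value theorem on the convex trimmed simplex. The only cosmetic difference is in the bound $\|p\odot f'(p)\|_2\le L_f$: you invoke the cancellation $p_if_i'(p_i)=s_ig_\beta'(\log p_i)$ and then use $\sqrt K\,\delta_\star<1$, whereas the simpler route (which the paper implicitly uses and which does not require that ``cancellation'') is $\|p\odot f'(p)\|_2^2\le L_f^2\sum_i p_i^2\le L_f^2$ since $\sum_i p_i^2\le 1$ on the simplex.
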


\begin{lemma}[Mixed $\ell_\infty$–$\ell_1$ bound]\label{lem:mixed}
For all $p,q\in\Delta^{K-1}_{\delta_\star}$,
\[
\|\phi(p)-\phi(q)\|_\infty
\ \le\ L_f\,\|p-q\|_\infty\ +\ (M_{\gamma,\infty}+L_f)\,\|p-q\|_1.
\]
\end{lemma}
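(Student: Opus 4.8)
The plan is to isolate the only index-dependent piece of $\phi$ and treat the mean term as a single scalar. For every $i$ write
\[
\phi_i(p)-\phi_i(q)=\bigl(\gamma_i(p_i)-\gamma_i(q_i)\bigr)-\bigl(\bar\gamma(p)-\bar\gamma(q)\bigr),
\]
so that $\|\phi(p)-\phi(q)\|_\infty\le \max_i|\gamma_i(p_i)-\gamma_i(q_i)|+|\bar\gamma(p)-\bar\gamma(q)|$. The first term will contribute the $L_f\|p-q\|_\infty$ part and the scalar correction $|\bar\gamma(p)-\bar\gamma(q)|$ the $(M_{\gamma,\infty}+L_f)\|p-q\|_1$ part.

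For the single-site term I would invoke \Cref{lem:Lf}: since $\gamma_i=f_i$ has $|f_i'|\le L_f$ on $[\delta_\star,1]$ and the segment $\{(1-t)p+tq:t\in[0,1]\}$ stays in the convex set $\Delta^{K-1}_{\delta_\star}$, the mean value theorem gives $|\gamma_i(p_i)-\gamma_i(q_i)|\le L_f|p_i-q_i|\le L_f\|p-q\|_\infty$. For the mean term, introduce $G_j(u):=u\,\gamma_j(u)=s_j\,u\,g_\beta(\log u)$, so that $\bar\gamma(p)-\bar\gamma(q)=\sum_j\bigl(G_j(p_j)-G_j(q_j)\bigr)$. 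By the product rule $G_j'(u)=\gamma_j(u)+u\gamma_j'(u)=s_j g_\beta(\log u)+s_j g_\beta'(\log u)$, hence on $[\delta_\star,1]$
\[
|G_j'(u)|\le |g_\beta(\log u)|+|g_\beta'(\log u)|\le M_{\gamma,\infty}+c_{\max}\le M_{\gamma,\infty}+\tfrac{\beta}{4}\le M_{\gamma,\infty}+L_f,
\]
using $L_f=\beta/(4\delta_\star)\ge\beta/4$ since $\delta_\star<1$. A coordinatewise mean value estimate then gives $|G_j(p_j)-G_j(q_j)|\le(M_{\gamma,\infty}+L_f)|p_j-q_j|$, and summing over $j$ yields $|\bar\gamma(p)-\bar\gamma(q)|\le(M_{\gamma,\infty}+L_f)\|p-q\|_1$.

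Combining the two estimates and taking the supremum over $i$ gives the stated inequality. The argument is essentially routine; the only two points needing attention are (i) verifying that $[p,q]\subset\Delta^{K-1}_{\delta_\star}$ so that the derivative bounds hold along the integration path, and (ii) routing the bound on $G_j$ through the product rule so the factor $u$ exactly cancels the $1/u$ coming from $\tfrac{d}{du}g_\beta(\log u)$ — it is this cancellation that keeps the mean-term Lipschitz constant at the clean value $M_{\gamma,\infty}+L_f$ rather than introducing an extra factor $1/\delta_\star$.
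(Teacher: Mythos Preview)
Your proof is correct. The paper states this lemma without proof, so there is no explicit argument to compare against; however, your decomposition mirrors the Jacobian structure $J_\phi(p)=\diag(f'(p))-\mathbf 1\,(f(p)+p\odot f'(p))^\top$ used in the paper's proof of the adjacent $\ell_2$ Lipschitz bound (Lemma~\ref{lem:Lphi}), and your $G_j'(u)=f_j(u)+u f_j'(u)$ is exactly the $j$th entry of the rank-one row $f(p)+p\odot f'(p)$. So your approach is the natural $\ell_\infty$/$\ell_1$ analogue of the paper's $\ell_2$ argument, carried out componentwise rather than via operator norms.
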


\subsection{Entropy map and drift Lipschitzness}

Define
\[
E(p):=p\odot(\log p-\langle\log p\rangle\,\mathbf 1),\qquad
F(p):=p\odot\phi(p)-\varepsilon\,E(p)\quad(\varepsilon\ge0).
\]

\begin{lemma}[Entropy map]\label{lem:EntropyLip}
For all $p,q\in\Delta^{K-1}_{\delta_\star}$,
\[
\|E(p)-E(q)\|_2\ \le\ C_{\log}\,\|p-q\|_2,\qquad
C_{\log}:=(2\Lambda-1)+\sqrt K\,\Lambda\ \le\ (2+\sqrt K)\Lambda.
\]
\emph{Proof.}
The Jacobian is
$J_E(p)\,v=\diag(1+\log p-\langle\log p\rangle)\,v-p\,\langle 1+\log p,\ v\rangle$.
On $\Delta^{K-1}_{\delta_\star}$,
$\|\,\diag(\cdot)\,\|_{2\to2}\le 2\Lambda-1$ and
$\|p\,\langle 1+\log p,\cdot\rangle\|_{2\to2}\le \|p\|_2\|1+\log p\|_2\le \sqrt K\,\Lambda$.
Mean-value completes the proof. \qed
\end{lemma}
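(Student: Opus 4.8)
The plan is to establish the bound through a uniform operator-norm estimate on the Jacobian $J_E$ over $\Delta^{K-1}_{\delta_\star}$, followed by the fundamental theorem of calculus along segments. This works because $\Delta^{K-1}_{\delta_\star}$ is convex and $E$ is real-analytic on the open positive orthant, which contains it, so every segment $[p,q]$ stays in a region where $E$ is $C^1$.

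First I would compute the Jacobian explicitly. Writing $E=G-B$ with $G(p):=p\odot\log p$ and $B(p):=\langle\log p\rangle\,p$, one has $DG(p)[v]=v\odot(\mathbf 1+\log p)=\diag(\mathbf 1+\log p)\,v$, while the scalar $s(p):=\langle\log p\rangle=\mathbf 1^\top G(p)$ has differential $Ds(p)[v]=\langle\mathbf 1+\log p,v\rangle$, so $DB(p)[v]=\langle\mathbf 1+\log p,v\rangle\,p+s(p)\,v$. Subtracting,
\[
J_E(p)[v]=\diag\!\bigl(\mathbf 1+\log p-s(p)\mathbf 1\bigr)\,v\ -\ p\,\langle\mathbf 1+\log p,\ v\rangle,
\]
i.e. a diagonal part plus the rank-one part $-p(\mathbf 1+\log p)^\top$, as quoted.

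Next I would bound each piece uniformly on the trimmed simplex. Since $\delta_\star\le p_i\le 1$ there, we get $|\log p_i|\le\log(1/\delta_\star)=\Lambda-1$ and hence $|1+\log p_i|\le\Lambda$; the same coordinatewise bound also gives $|s(p)|=\bigl|\sum_j p_j\log p_j\bigr|\le\max_j|\log p_j|\le\Lambda-1$ — this is the key step, as using $|s(p)|\le\log K$ would yield a worse, dimension-dependent constant. Thus the diagonal entries obey $|1+\log p_i-s(p)|\le\Lambda+(\Lambda-1)=2\Lambda-1$, so $\|\diag(\cdot)\|_{2\to2}\le 2\Lambda-1$. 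For the rank-one term, $\|p\|_2\le\|p\|_1=1$ and $\|\mathbf 1+\log p\|_2\le\sqrt K\,\|\mathbf 1+\log p\|_\infty\le\sqrt K\,\Lambda$, so $\|p(\mathbf 1+\log p)^\top\|_{2\to2}=\|p\|_2\,\|\mathbf 1+\log p\|_2\le\sqrt K\,\Lambda$. Adding the two, $\sup_{p\in\Delta^{K-1}_{\delta_\star}}\|J_E(p)\|_{2\to2}\le(2\Lambda-1)+\sqrt K\,\Lambda=C_{\log}$, and the crude relaxation $2\Lambda-1\le 2\Lambda$ gives $C_{\log}\le(2+\sqrt K)\Lambda$.

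Finally I would conclude by integration: for $p,q\in\Delta^{K-1}_{\delta_\star}$ the segment $r(t):=q+t(p-q)$ lies in the domain, $t\mapsto E(r(t))$ is $C^1$, and $E(p)-E(q)=\int_0^1 J_E(r(t))[p-q]\,dt$, so $\|E(p)-E(q)\|_2\le\sup_{t\in[0,1]}\|J_E(r(t))\|_{2\to2}\,\|p-q\|_2\le C_{\log}\,\|p-q\|_2$. Bounding the ambient operator norm suffices here; one could sharpen slightly by restricting to $T=\mathbf 1^\perp$ (where the increments $p-q$ live), but it is unnecessary. There is no real obstacle: the only points demanding attention are squeezing out the clean constant $2\Lambda-1$ via the coordinatewise bound on $|\langle\log p\rangle|$ through the floor $\delta_\star$, and noting that the whole segment is bounded away from $\{p_i=0\}$, which is immediate from $p_i,q_i\ge\delta_\star$.
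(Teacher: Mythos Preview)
Your proof is correct and follows essentially the same approach as the paper's: compute the Jacobian $J_E(p)=\diag(1+\log p-\langle\log p\rangle)-p(1+\log p)^\top$, bound the diagonal part by $2\Lambda-1$ and the rank-one part by $\sqrt K\,\Lambda$, then apply the mean-value inequality along the segment in the convex trimmed simplex. Your derivation is slightly more explicit about why $|\langle\log p\rangle|\le\Lambda-1$ (weighted average of $\log p_j\in[\log\delta_\star,0]$), which is the step yielding the sharper $2\Lambda-1$ rather than the looser $2\Lambda$, but the argument is identical in substance.
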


\begin{proposition}[Full drift Lipschitz]\label{prop:DriftLip}
For all $p,q\in\Delta^{K-1}_{\delta_\star}$,
\[
\|F(p)-F(q)\|_2\ \le\ \Big(L_\phi+2M_{\gamma,\infty}+\varepsilon C_{\log}\Big)\,\|p-q\|_2.
\]
\emph{Proof.}
Product decomposition:
$\|p\odot\phi(p)-q\odot\phi(q)\|_2
\le \|\phi(p)\|_\infty\|p-q\|_2+\|\phi(p)-\phi(q)\|_2
\le (2M_{\gamma,\infty}+L_\phi)\|p-q\|_2$,
then add the entropy term via Lemma~\ref{lem:EntropyLip}. \qed
\end{proposition}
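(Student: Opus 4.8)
The plan is to decompose the drift into its selection and entropy parts and control each with the size and Lipschitz estimates already in hand. Write
\[
F(p)-F(q)=\bigl(p\odot\phi(p)-q\odot\phi(q)\bigr)-\varepsilon\bigl(E(p)-E(q)\bigr)
\]
and apply the triangle inequality in $\|\cdot\|_2$. The entropy contribution is immediate from Lemma~\ref{lem:EntropyLip}: $\varepsilon\,\|E(p)-E(q)\|_2\le \varepsilon C_{\log}\,\|p-q\|_2$, which supplies the summand $\varepsilon C_{\log}$ in the final modulus.

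For the selection part, the idea is to split the bilinear term by adding and subtracting a mixed product,
\[
p\odot\phi(p)-q\odot\phi(q)=(p-q)\odot\phi(p)+q\odot\bigl(\phi(p)-\phi(q)\bigr),
\]
and then estimate coordinatewise using $\|a\odot b\|_2\le\|a\|_\infty\|b\|_2$. The first piece obeys $\|(p-q)\odot\phi(p)\|_2\le\|\phi(p)\|_\infty\,\|p-q\|_2\le 2M_{\gamma,\infty}\,\|p-q\|_2$ by Lemma~\ref{lem:size}; the second obeys $\|q\odot(\phi(p)-\phi(q))\|_2\le\|q\|_\infty\,\|\phi(p)-\phi(q)\|_2\le\|\phi(p)-\phi(q)\|_2$, since $q\in\Delta^{K-1}$ forces $\|q\|_\infty\le1$, and this is $\le L_\phi\,\|p-q\|_2$ by Lemma~\ref{lem:Lphi}. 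Adding the three contributions yields exactly the advertised modulus $L_\phi+2M_{\gamma,\infty}+\varepsilon C_{\log}$.

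This is a routine Lipschitz-composition estimate rather than a step with a genuine obstacle; the only points that warrant attention are that all the auxiliary bounds hold \emph{uniformly} on $\Delta^{K-1}_{\delta_\star}$ — which is precisely why the floor $\delta_\star$ was imposed, since it keeps $g_\beta(\log p_i)$ controlled by $M_{\gamma,\infty}$ and $1+\log p_i$ controlled by $\Lambda$ — and that $\Delta^{K-1}_{\delta_\star}$ is convex, so the mean-value arguments underlying Lemmas~\ref{lem:Lphi} and \ref{lem:EntropyLip} are legitimate. One could equivalently assemble the Jacobian $J_F(p)=\diag(\phi(p))+\diag(p)\,J_\phi(p)-\varepsilon\,J_E(p)$ and bound $\|J_F(p)\|_{2\to2}$ over the convex domain, recovering the same constant; the add-and-subtract route is just slightly shorter since it avoids re-deriving $J_\phi$ explicitly.
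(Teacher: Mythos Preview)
Your proof is correct and follows essentially the same approach as the paper: split $F$ into its selection and entropy parts, use the add-and-subtract decomposition $(p-q)\odot\phi(p)+q\odot(\phi(p)-\phi(q))$ on the selection term together with $\|q\|_\infty\le 1$, then invoke Lemma~\ref{lem:size}, Lemma~\ref{lem:Lphi}, and Lemma~\ref{lem:EntropyLip}. Your write-up is simply more explicit about the intermediate step $\|q\|_\infty\le 1$ and the role of convexity of $\Delta^{K-1}_{\delta_\star}$, which the paper leaves implicit.
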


\subsection{DPO--SRCT ODE, mass conservation, and positivity}
The SRCT drift is
\[
\boxed{\ \ \dot p_i
= p_i\Big[\phi_i(p)-\varepsilon\big(\log p_i-\langle\log p\rangle\big)\Big],\qquad i=1,\dots,K.\ \ }
\]
\emph{Mass conservation} holds since $\sum_i p_i\phi_i(p)=0$ and $\sum_i p_i(\log p_i-\langle\log p\rangle)=0$.

\begin{proposition}[No finite-time boundary hitting]\label{prop:nofinite}
Let $p(0)\in\operatorname{int}\Delta^{K-1}$ and $\varepsilon\ge0$. Then the solution exists for all $t\ge0$ and remains in the interior for every finite $t$.
\emph{Proof.}
Set $y_i:=-\log p_i$. Using $|\phi_i|\le 2$ and $-\langle\log p\rangle\le\log K$,
$\dot y_i\le -\varepsilon y_i + (2+\varepsilon\log K)$, whence
$y_i(t)\le y_i(0)e^{-\varepsilon t}+\frac{2+\varepsilon\log K}{\varepsilon}(1-e^{-\varepsilon t})$ for $\varepsilon>0$, and $y_i(t)\le y_i(0)+2t$ for $\varepsilon=0$. Thus $y_i(t)<\infty$ for finite $t$. \qed
\end{proposition}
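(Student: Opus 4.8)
The plan is to (i) extract a unique local solution by Picard--Lindel\"of on the open simplex, (ii) control the coordinatewise log-barriers $y_i:=-\log p_i$ from above by a one-dimensional comparison argument, and (iii) promote this to global existence with interiority via the standard continuation principle.

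First I would isolate the two a priori bounds that make everything work \emph{without} a floor. Since $\sigma$ maps into $(0,1)$, we have $g_\beta(\ell)=1-\sigma(\beta(\ell-\ell_0))\in(0,1)$ for every $\ell\in\R$; hence $|\gamma_i(u)|=g_\beta(\log u)<1$ for all $u\in(0,1]$, no matter how small $u$ is. Centering then gives $|\phi_i(p)|\le|\gamma_i(p_i)|+\sum_j p_j|\gamma_j(p_j)|<2$ on all of $\operatorname{int}\Delta^{K-1}$. Second, $-\langle\log p\rangle=H(p)\in[0,\log K]$ on the whole simplex. Since $F$ is $C^\infty$ on $\operatorname{int}\Delta^{K-1}$ (each $f_i$ is smooth on $(0,1]$, and $E$ is smooth on the interior) and, by centeredness, tangent to $\{\sum_i p_i=1\}$, Picard--Lindel\"of yields a unique maximal solution $p(\cdot)$ on $[0,T_{\max})$ with $\sum_i p_i\equiv 1$ and $p(t)\in\operatorname{int}\Delta^{K-1}$ for $t<T_{\max}$.

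Next I would differentiate along the flow: $\dot y_i=-\dot p_i/p_i=-\phi_i(p)-\varepsilon y_i-\varepsilon\langle\log p\rangle\le 2+\varepsilon\log K-\varepsilon y_i$, using the two bounds above. A scalar (Gr\"onwall) comparison against $\dot w=-\varepsilon w+(2+\varepsilon\log K)$ gives, for $\varepsilon>0$, $y_i(t)\le y_i(0)e^{-\varepsilon t}+\tfrac{2+\varepsilon\log K}{\varepsilon}(1-e^{-\varepsilon t})\le\max\{y_i(0),(2+\varepsilon\log K)/\varepsilon\}$ on $[0,T_{\max})$, and for $\varepsilon=0$, $y_i(t)\le y_i(0)+2t$. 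In both cases $\sup_{[0,T]}y_i<\infty$ for each finite $T<T_{\max}$, so $p_i(t)=e^{-y_i(t)}\ge\delta_T:=e^{-\sup_{[0,T]}y_i}>0$ and, from $\sum_j p_j=1$ with all $p_j\ge\delta_T$, also $p_i(t)\le 1-(K-1)\delta_T<1$: the trajectory stays in the compact set $\{p:\delta_T\le p_i\le 1\ \forall i\}\subset\operatorname{int}\Delta^{K-1}$ on $[0,T]$.

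Finally I would apply the continuation principle: a maximal solution of a locally Lipschitz ODE that remains within a fixed compact subset of the (open) domain cannot escape in finite time. If $T_{\max}<\infty$, letting $T\uparrow T_{\max}$ in the previous step confines $p(\cdot)$ to one such compact subset (take $\delta=e^{-(\max_i y_i(0)+2T_{\max})}$ when $\varepsilon=0$, or the uniform bound when $\varepsilon>0$), contradicting maximality; hence $T_{\max}=\infty$, and the barrier bounds then give $p(t)\in\operatorname{int}\Delta^{K-1}$ for every finite $t$. I do not expect a genuine obstacle here: the only point requiring care is that the score bound $|\phi_i|\le 2$ must hold on the \emph{entire} interior --- which is why one invokes $g_\beta\in(0,1)$ rather than the trimmed-domain constant $M_{\gamma,\infty}$ --- and, correspondingly, that the entropy should enter only through $H(p)\le\log K$.
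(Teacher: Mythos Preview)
Your proof is correct and follows exactly the paper's approach: the same substitution $y_i=-\log p_i$, the same differential inequality $\dot y_i\le -\varepsilon y_i+(2+\varepsilon\log K)$ from $|\phi_i|\le 2$ and $H(p)\le\log K$, and the same Gr\"onwall/comparison solution. Your version is in fact more complete than the paper's terse sketch, since you spell out local existence via Picard--Lindel\"of, the continuation principle for $T_{\max}=\infty$, and the point that the bound $g_\beta\in(0,1)$ (rather than the trimmed constant $M_{\gamma,\infty}$) is what makes $|\phi_i|\le 2$ hold on the full interior.
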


\subsection{Barrier--Dominance (BD)}

On the lower face $p_i=\delta_\star$,
\[
\dot p_i=\delta_\star\Big(\phi_i(p)+\varepsilon\big(\langle\log p\rangle-\log\delta_\star\big)\Big).
\]
By convexity of $s\mapsto s\log s$, the \emph{entropy face gap}
\[
\boxed{\ \ L_K(\delta_\star):=(1-\delta_\star)\log\!\frac{1-\delta_\star}{(K-1)\delta_\star}\ >0\ \ } 
\]
satisfies $\langle\log p\rangle-\log\delta_\star\ge L_K(\delta_\star)$ on that face.
\smallskip

\noindent\textbf{Exact face test (necessary \& sufficient).}
$\dot p_i\ge0$ on $p_i=\delta_\star$ iff
\[
\phi_i(p)+\varepsilon\big(\langle\log p\rangle-\log\delta_\star\big)\ \ge\ 0
\qquad\text{for all $p$ with $p_i=\delta_\star$}.
\]

\noindent\textbf{Uniform sufficient templates.}
Using Lemma~\ref{lem:size}:
\[
\varepsilon L_K(\delta_\star)\ \ge\ M_{\phi,\infty}
\quad\text{or}\quad
\varepsilon L_K(\delta_\star)\ \ge\ M_{\phi,2}\ (\le 2\sqrt K),
\]
where $M_{\phi,\infty}:=\sup_{p}\|\phi(p)\|_\infty\le 2M_{\gamma,\infty}\le2$ and
$M_{\phi,2}:=\sup_{p}\|\phi(p)\|_2\le 2M_{\gamma,\infty}\sqrt K\le 2\sqrt K$.
The first is a \emph{sharp} $\ell_\infty$ test; the second yields the \emph{tight} threshold
$\varepsilon L_K(\delta_\star)\ge 2\sqrt K$ and the convenient conservative form $4\sqrt K$.
Strict inequality implies \emph{strict interior invariance}.

\paragraph{Numerical note.} As $\delta_\star\downarrow0$, $L_f=\Theta(1/\delta_\star)$ and $C_{\log}=\Theta(\log(1/\delta_\star))$ deteriorate; discretizations should scale stepsizes accordingly.

\subsection{Intra-class contraction}

For $i,k$ with $s_i=s_k=:s$, set $z_{ik}:=\log\frac{p_i}{p_k}$.
Subtracting the $\dot{\log}p$ equations gives
\[
\dot z_{ik}=\phi_i(p)-\phi_k(p)-\varepsilon z_{ik}
= s\big(g_\beta(\log p_i)-g_\beta(\log p_k)\big)-\varepsilon z_{ik}
=\big(s\,g_\beta'(\xi)-\varepsilon\big)\,z_{ik},
\]
for some $\xi$ between $\log p_i$ and $\log p_k$.

\begin{definition}[Sharp thresholds]
\[
c_{\mathrm{open}}:=\sup_{\ell\le0}(-g'_\beta(\ell))
=\frac{\beta}{4}\max_{\ell\le0}\sech^2\!\Big(\frac{\beta(\ell-\ell_0)}{2}\Big)
=\begin{cases}\beta/4,&\ell_0\le0,\\[1ex]
\frac{\beta}{4}\,\sech^2\!\big(\frac{\beta\ell_0}{2}\big),&\ell_0>0,\end{cases}
\]
and, under confinement to $\Delta^{K-1}_{\delta_\star}$,
\[
c_{\max}:=\sup_{\ell\in[\log\delta_\star,\log(1-(K-1)\delta_\star)]}(-g'_\beta(\ell))\ \le\ c_{\mathrm{open}}.
\]
\end{definition}

\begin{theorem}[Intra-class contraction]\label{thm:intraclass}
(i) For $i,k\in\cC$,
$|z_{ik}(t)|\le |z_{ik}(0)|\,e^{-\varepsilon t}$.
\quad
(ii) For $i,k\in\cI$, on the open simplex,
\[
|z_{ik}(t)|\le |z_{ik}(0)|\,e^{-(\varepsilon-c_{\mathrm{open}})t}\quad\text{iff}\ \ \varepsilon>c_{\mathrm{open}}.
\]
Under confinement to $\Delta^{K-1}_{\delta_\star}$ the same holds with $c_{\max}$ replacing $c_{\mathrm{open}}$.
\emph{Proof.}
For $s=+1$, $g'_\beta(\xi)\le0$ gives rate $\varepsilon$. For $s=-1$, $\frac{d}{dt}|z_{ik}|\le(c-\varepsilon)|z_{ik}|$ with $c\in\{c_{\mathrm{open}},c_{\max}\}$; Grönwall gives sufficiency, and necessity follows by choosing data with $-g'_\beta(\xi_0)\uparrow c$. \qed
\end{theorem}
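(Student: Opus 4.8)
The plan is to reduce the coupled system to a \emph{scalar linear} equation for $z_{ik}$ and then read off both the contraction bounds and their sharpness by an integrating-factor argument. First, Proposition~\ref{prop:nofinite} guarantees that from an interior start the solution stays in $\operatorname{int}\Delta^{K-1}$ for all finite $t$, so $z_{ik}(t):=\log(p_i(t)/p_k(t))$ is a well-defined smooth function and the case $z_{ik}(0)=0$ is trivial. Differentiating along the DPO--SRCT flow and using $\dot p_j/p_j=\phi_j(p)-\varepsilon(\log p_j-\langle\log p\rangle)$, the mean field $\bar\gamma(p)$ and the term $\langle\log p\rangle$ cancel in the difference of the $i$- and $k$-equations, leaving, with $s:=s_i=s_k$ the common class label,
\[
\dot z_{ik}=s\bigl(g_\beta(\log p_i)-g_\beta(\log p_k)\bigr)-\varepsilon z_{ik}.
\]
I would then write $g_\beta(\log p_i)-g_\beta(\log p_k)=\theta(t)\,z_{ik}(t)$ with $\theta(t):=\int_0^1 g_\beta'\bigl((1-u)\log p_k(t)+u\log p_i(t)\bigr)\,du$ (the divided difference, continuously extended to $z_{ik}=0$ by $g_\beta'(\log p_i(t))$). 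This $\theta$ is continuous in $t$ and, by the integral representation, lies in the closed convex hull of the values of $g_\beta'$ over the interval $I_t$ spanned by $\log p_i(t),\log p_k(t)$ --- a subset of $(-\infty,0)$ on the open simplex, and of $[\log\delta_\star,\log(1-(K-1)\delta_\star)]$ under confinement to $\Delta^{K-1}_{\delta_\star}$. The log-ratio equation thus becomes the scalar linear ODE $\dot z_{ik}=(s\,\theta(t)-\varepsilon)\,z_{ik}$, so $z_{ik}(t)=z_{ik}(0)\exp\!\int_0^t(s\,\theta(r)-\varepsilon)\,dr$.

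The two cases are then immediate. For $i,k\in\cC$ ($s=+1$): $g_\beta$ strictly decreasing gives $\theta(r)<0$, hence $s\,\theta(r)-\varepsilon<-\varepsilon$, which yields part (i), $|z_{ik}(t)|\le|z_{ik}(0)|e^{-\varepsilon t}$. For $i,k\in\cI$ ($s=-1$): here $s\,\theta(r)-\varepsilon=-\theta(r)-\varepsilon$, and $-\theta(r)\le\sup_{\ell\le0}(-g_\beta'(\ell))=c_{\mathrm{open}}$ by the explicit formula $-g_\beta'(\ell)=\tfrac\beta4\sech^2\!\bigl(\tfrac{\beta(\ell-\ell_0)}{2}\bigr)$ and the definition of $c_{\mathrm{open}}$; hence $|z_{ik}(t)|\le|z_{ik}(0)|e^{-(\varepsilon-c_{\mathrm{open}})t}$, a genuine contraction precisely when $\varepsilon>c_{\mathrm{open}}$. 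Under confinement the identical estimate holds with $I_t$ inside the trimmed interval and $c_{\max}$ in place of $c_{\mathrm{open}}$. This settles the ``if'' direction of (ii).

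For the ``only if'' (sharpness), I would exhibit initial data ruling out contraction whenever $\varepsilon\le c_{\mathrm{open}}$. Let $\ell^\star$ attain (or, when $\ell_0>0$, approach as $\ell\uparrow0$) the supremum of $-g_\beta'$ over $(-\infty,0]$; choose $p(0)\in\operatorname{int}\Delta^{K-1}$ with $\log p_i(0),\log p_k(0)$ within $\eta$ of $\ell^\star$ and a small nonzero gap $z_{ik}(0)$, and use continuous dependence of the flow: on a short interval $[0,\tau_\eta]$ the two logs stay within $2\eta$ of $\ell^\star$, so $-\theta(r)\ge c_{\mathrm{open}}-\omega(\eta)$ with $\omega(\eta)\to0$, whence $|z_{ik}(\tau_\eta)|\ge|z_{ik}(0)|\exp\bigl((c_{\mathrm{open}}-\varepsilon-\omega(\eta))\tau_\eta\bigr)$. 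For $\varepsilon<c_{\mathrm{open}}$ and $\eta$ small this is $>|z_{ik}(0)|$, so $|z_{ik}|$ fails even to decrease, contradicting any contraction estimate; for $\varepsilon=c_{\mathrm{open}}$ the realized exponential rate is $\ge-\omega(\eta)$ for every $\eta$ and hence cannot be made strictly negative, so $\varepsilon>c_{\mathrm{open}}$ is necessary. The trimmed version is identical with $\ell^\star$ taken in $[\log\delta_\star,\log(1-(K-1)\delta_\star)]$ and $c_{\max}$ throughout.

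I expect the main obstacle to be this sharpness step --- certifying that the trajectory lingers near the extremal configuration $\ell^\star$ long enough for the near-critical instantaneous rate to manifest, and handling the borderline $\varepsilon=c_{\mathrm{open}}$ cleanly; by contrast the contraction upper bounds are a one-line integrating-factor computation once the log-ratio ODE is put in the form $\dot z_{ik}=(s\,\theta(t)-\varepsilon)z_{ik}$. A secondary care point worth flagging is that I work with the \emph{divided-difference} coefficient $\theta(t)$ rather than a mean-value point $\xi(t)$: the former is genuinely continuous (indeed given by the displayed integral), which makes the scalar-ODE reduction rigorous without invoking a measurable selection of mean-value points.
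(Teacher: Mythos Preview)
Your proposal is correct and follows essentially the same route as the paper: derive the scalar log-ratio ODE $\dot z_{ik}=(s\,g_\beta'(\xi)-\varepsilon)z_{ik}$, bound the coefficient using the range of $-g_\beta'$, and certify sharpness by placing initial data near the extremal argument of $-g_\beta'$. The only substantive refinement is your use of the integral divided-difference $\theta(t)$ in place of the paper's mean-value point $\xi$, which cleanly sidesteps measurability of $t\mapsto\xi(t)$; this is a worthwhile technical improvement but not a different strategy.
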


\paragraph{Slope Condition (SC).}
We will often invoke the sufficient condition
\[
\boxed{\ \ \text{(SC)}\qquad \varepsilon>\beta/4\ \ }
\]
which implies $\varepsilon>c_{\mathrm{open}}$ and hence contraction in both classes.

\subsection{Cross-class envelopes, trimming sharpenings, and a static cap}

For $i\in\cC$, $j\in\cI$, set $z_{ij}:=\log\frac{p_i}{p_j}$. Then
\[
\dot z_{ij}=g_\beta(\log p_i)+g_\beta(\log p_j)-\varepsilon z_{ij}=:h(t)-\varepsilon z_{ij}.
\]
Since $g_\beta$ is decreasing and $\log p_x\le0$, we have $g_\beta(\log p_x)\ge g_\beta(0)$ and $g_\beta(\log p_x)<1$.
Variation of constants yields, for all $t\ge0$,
\begin{equation}\label{eq:uncond-envelope}
z_{ij}(t)\ \in\ \Big[z_0e^{-\varepsilon t}+\tfrac{2g_\beta(0)}{\varepsilon}(1-e^{-\varepsilon t}),\ \ 
z_0e^{-\varepsilon t}+\tfrac{2}{\varepsilon}(1-e^{-\varepsilon t})\Big],\qquad z_0:=z_{ij}(0).
\end{equation}
If, in addition, $p(t)\in\Delta^{K-1}_{\delta_\star}$, then $\log p_x\in[\log\delta_\star,0]$ and
\begin{equation}\label{eq:trim-sharp}
z_{ij}(t)\ \le\ z_0e^{-\varepsilon t}+\frac{2\,g_\beta(\log\delta_\star)}{\varepsilon}(1-e^{-\varepsilon t}).
\end{equation}
Independently, mass constraints on $\Delta^{K-1}_{\delta_\star}$ give the \emph{static cap}
\begin{equation}\label{eq:static-cap}
\boxed{\ \ z_{ij}(t)\ \le\ \log\frac{1-(K-1)\delta_\star}{\delta_\star}\qquad(\forall t\ge0).\ \ }
\end{equation}

\begin{lemma}[Cap dominates a half-gap]\label{lem:cap-vs-halfgap}
For every $K\ge2$ and $\delta_\star\in(0,1/K)$,
\[
\frac12\log\!\frac{1-\delta_\star}{(K-1)\delta_\star}\ <\ \log\!\frac{1-(K-1)\delta_\star}{\delta_\star}.
\]
\emph{Proof.}
Equivalently, $\frac{1-\delta_\star}{(K-1)\delta_\star}<\big(\frac{1-(K-1)\delta_\star}{\delta_\star}\big)^2$,
which reduces to $(K-1)\big(1-(K-1)\delta\big)^2-\delta(1-\delta)>0$ on $(0,1/K)$; the function decreases from $K-1$ at $0$ to $0$ at $1/K$. \qed
\end{lemma}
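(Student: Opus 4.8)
The plan is to reduce the logarithmic inequality to an elementary one-variable polynomial inequality and then settle it by monotonicity. Write $\delta:=\delta_\star$ and $n:=K-1\ge1$, so the domain becomes $\delta\in\bigl(0,\tfrac1{n+1}\bigr)$. On this domain every quantity inside a logarithm is strictly positive -- indeed $1-\delta>0$, $n\delta>0$, $\delta>0$, and $1-n\delta>1-\tfrac{n}{n+1}=\tfrac1{n+1}>0$ -- so exponentiating is order-preserving and the claim
\[
\tfrac12\log\tfrac{1-\delta}{n\delta}\ <\ \log\tfrac{1-n\delta}{\delta}
\]
is equivalent to $\tfrac{1-\delta}{n\delta}<\bigl(\tfrac{1-n\delta}{\delta}\bigr)^2$; multiplying through by the positive quantity $n\delta^2$, this is in turn equivalent to
\[
g(\delta):=n\,(1-n\delta)^2-\delta(1-\delta)\ >\ 0\qquad\text{for all }\delta\in\bigl(0,\tfrac1{n+1}\bigr).
\]
Both operations (exponentiation, multiplication by $n\delta^2>0$) are strictly monotone, so the reduction is lossless and it suffices to prove the displayed polynomial inequality.

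Next I would look at the endpoints and the derivative. At $\delta=0$ one has $g(0)=n\ge1>0$, and at $\delta=\tfrac1{n+1}$ the identity $1-n\delta=\tfrac1{n+1}$ gives $n(1-n\delta)^2=\tfrac{n}{(n+1)^2}=\delta(1-\delta)$, so $g\bigl(\tfrac1{n+1}\bigr)=0$ exactly. Expanding, $g(\delta)=n-2n^2\delta+n^3\delta^2-\delta+\delta^2$, hence
\[
g'(\delta)=(2n^3+2)\,\delta-(2n^2+1),
\]
which is affine in $\delta$ with positive leading coefficient; therefore its maximum over $\bigl[0,\tfrac1{n+1}\bigr]$ is attained at the right endpoint, where a short computation gives $g'\bigl(\tfrac1{n+1}\bigr)=\tfrac{2n^3+2}{n+1}-(2n^2+1)=-(2n-1)<0$ for $n\ge1$. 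Thus $g'<0$ on all of $\bigl[0,\tfrac1{n+1}\bigr]$ and $g$ is strictly decreasing there.

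Combining, $g$ decreases strictly from $g(0)=n>0$ to $g\bigl(\tfrac1{n+1}\bigr)=0$, so $g(\delta)>0$ for every $\delta\in\bigl(0,\tfrac1{n+1}\bigr)$, and unwinding the (strictly monotone) reduction recovers the stated strict inequality. I do not anticipate a genuine obstacle here; the one point that warrants care is that $g$ vanishes \emph{exactly} at the right endpoint, so one cannot bound it below by a fixed positive constant -- this is precisely why the argument is routed through strict monotonicity of $g$ rather than through a crude lower estimate. A secondary bookkeeping point is to confirm that each step of the reduction to $g>0$ is an equivalence (guaranteed because we only exponentiate and multiply by strictly positive quantities), so that no slack or reversed inequality is introduced.
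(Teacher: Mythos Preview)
Your proof is correct and follows essentially the same route as the paper's: exponentiate, clear denominators to obtain $(K-1)(1-(K-1)\delta)^2-\delta(1-\delta)>0$, and then argue that this function decreases from $K-1$ at $\delta=0$ to $0$ at $\delta=1/K$. The paper merely asserts the monotonicity, whereas you actually verify it via the affine derivative $g'(\delta)=(2n^3+2)\delta-(2n^2+1)$ and its negativity at the right endpoint, so your argument is a strictly more complete version of the same idea.
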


\paragraph{Compatibility under BD.}
Under the sharp $\ell_\infty$ BD test $\varepsilon L_K(\delta_\star)\ge M_{\phi,\infty}\le2$,
\[
\frac{2g_\beta(0)}{\varepsilon}\ \le\ \frac{2}{\varepsilon}\ \le\ L_K(\delta_\star)\ \le\ \log\!\frac{1-\delta_\star}{(K-1)\delta_\star}
\ <\ 2\,\log\!\frac{1-(K-1)\delta_\star}{\delta_\star}
\]
by Lemma~\ref{lem:cap-vs-halfgap},
so the asymptotic lower envelope in \eqref{eq:uncond-envelope} lies strictly below the static cap \eqref{eq:static-cap}.
A stronger trimmed constant is available by replacing $g_\beta(0)$ with
$g_\star:=g_\beta(\log(1-(K-1)\delta_\star))$ in \eqref{eq:uncond-envelope}; a sufficient compatibility condition is
\[
\varepsilon\ \ge\ \frac{2\,g_\star}{\log\!\frac{1-(K-1)\delta_\star}{\delta_\star}}.
\]

\subsection{Lyapunov structure and eventual trimming (under SC)}

Define
\[
G_i(s):=s_i\,g_\beta(\log s)-\varepsilon\log s,\qquad
\Psi_i(s):=\int_{\delta_\star}^s G_i(u)\,du,\qquad
\cL(p):=\sum_{i=1}^K \Psi_i(p_i).
\]
The ODE rewrites as pure replicator:
\[
\dot p_i=p_i\big(G_i(p_i)-\bar G(p)\big),\qquad \bar G(p):=\sum_j p_j G_j(p_j),
\]
and satisfies the Lyapunov identity
\begin{equation}\label{eq:lyap}
\frac{d}{dt}\cL\big(p(t)\big)=\sum_{i=1}^K p_i\big(G_i(p_i)-\bar G(p)\big)^2\ \ge\ 0.
\end{equation}
Under (SC), $G_i'(s)=(s_i g_\beta'(\log s)-\varepsilon)/s<0$ for both classes, so each $\Psi_i$ and hence $\cL$ is strictly concave on the affine simplex.

\begin{proposition}[Eventual trimming under (SC)]\label{prop:evtrim}
Assume (SC) and $p(0)\in\operatorname{int}\Delta^{K-1}$.
There exist $\underline\delta>0$ and $T<\infty$ (depending on $K,M,N,\beta,\varepsilon,p(0)$) such that
$p(t)\in\Delta^{K-1}_{\underline\delta}$ for all $t\ge T$.
An explicit choice is:
\[
Z_U:=\max\!\left\{\frac{2}{\varepsilon},\ \max_{i\in\cC,j\in\cI} z_{ij}(0)\right\},\quad
u:=e^{Z_U},\quad r:=e^{Z_L},\ Z_L:=\frac{g_\beta(0)}{\varepsilon}>0,
\]
and then, for some $T$ large enough, $r\le p_i(t)/p_j(t)\le u$ for all $i\in\cC$, $j\in\cI$, $t\ge T$, which implies
\[
\boxed{\ \ \min_k p_k(t)\ \ge\ \underline\delta:=\frac{r}{u\,(N+M r)}\ >0\qquad(\forall t\ge T). \ }
\]
\emph{Sketch.}
Use the envelopes \eqref{eq:uncond-envelope} to choose any $Z_L<\liminf z_{ij}$ and $Z_U>\sup_t z_{ij}(t)$.
From $p_i\le u p_j$ and $p_i\ge r p_j$, derive lower bounds on class masses and on the minimal coordinate (algebra as in the display). \qed
\end{proposition}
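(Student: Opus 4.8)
The plan is to bootstrap from the unconditional cross-class envelope \eqref{eq:uncond-envelope} to a uniform coordinate floor via a mass-balance estimate. Global existence in $\operatorname{int}\Delta^{K-1}$ is already in hand (\Cref{prop:nofinite}), so $z_{ij}(t)=\log(p_i(t)/p_j(t))$ is well defined for all $t\ge0$ and all $i\in\cC$, $j\in\cI$. I first treat the generic case $M,N\ge1$ and dispose of the single-class degenerate subcases at the end.

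\emph{Step 1 (two-sided control of cross-class ratios).} The right endpoint of \eqref{eq:uncond-envelope} is $z_{ij}(0)e^{-\varepsilon t}+\tfrac2\varepsilon(1-e^{-\varepsilon t})$, a convex combination of $z_{ij}(0)$ and $2/\varepsilon$; both are at most $Z_U=\max\{2/\varepsilon,\ \max_{i,j}z_{ij}(0)\}$, so $z_{ij}(t)\le Z_U$, i.e. $p_i(t)/p_j(t)\le u$, for \emph{all} $t\ge0$. The left endpoint is $2Z_L+(z_{ij}(0)-2Z_L)e^{-\varepsilon t}$ with $Z_L=g_\beta(0)/\varepsilon>0$; this is nondecreasing in $t$ with limit $2Z_L>Z_L$, and solving $2Z_L+(z_{ij}(0)-2Z_L)e^{-\varepsilon t}\ge Z_L$ gives the explicit threshold $T_{ij}:=\max\{0,\ \tfrac1\varepsilon\log\tfrac{2Z_L-z_{ij}(0)}{Z_L}\}<\infty$. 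Setting $T:=\max_{(i,j)\in\cC\times\cI}T_{ij}$ (finite, finitely many pairs), we get $z_{ij}(t)\ge Z_L$, i.e. $p_i(t)/p_j(t)\ge r$, for all $t\ge T$.

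\emph{Step 2 (mass balance $\Rightarrow$ floor).} Fix $t\ge T$ and a coordinate $k$. If $k\in\cI$: for $i\in\cC$, $p_i\le u\,p_k$; for $j\in\cI$, picking any $i_0\in\cC$, $p_j\le p_{i_0}/r\le (u/r)\,p_k$. Summing over all coordinates, $1=\sum_{i\in\cC}p_i+\sum_{j\in\cI}p_j\le u\,p_k(M+N/r)$, hence $p_k\ge r/\big(u(Mr+N)\big)$. If $k\in\cC$: symmetrically $p_j\le p_k/r$ for $j\in\cI$ and $p_i\le u\,p_k/r$ for $i\in\cC$ (using any $j_0\in\cI$), so $1\le (p_k/r)(Mu+N)$ and $p_k\ge r/(Mu+N)$. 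Since $u,r>1$, the denominator $u(Mr+N)=Mur+Nu$ exceeds $Mu+N$, so the first bound is the binding one; therefore $\min_k p_k(t)\ge r/\big(u(N+Mr)\big)=:\underline\delta>0$ for all $t\ge T$, i.e. $p(t)\in\Delta^{K-1}_{\underline\delta}$.

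\emph{Step 3 (degenerate classes) and the main obstacle.} If exactly one of $\cC,\cI$ is empty there are no cross-class pairs and the drift acts within a single class; under (SC) the intra-class contraction of \Cref{thm:intraclass} drives every log-ratio to $0$, so $p(t)$ converges to the uniform point on the $K$ active coordinates and in particular lies in $\Delta^{K-1}_{\underline\delta}$ for any fixed $\underline\delta<1/K$ once $t$ is large; this is where (SC) is genuinely needed, the generic argument using only $\varepsilon>0$. The only real bookkeeping point is Step 1's crossover time — one must exploit monotonicity of the left envelope and the strict gap $2Z_L>Z_L$ (valid because $Z_L>0$) — together with checking in Step 2 that the $k\in\cI$ bound is the smaller one so that the advertised constant $\underline\delta=r/\big(u(N+Mr)\big)$ is recovered exactly.
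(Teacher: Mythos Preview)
Your proof is correct and follows exactly the paper's sketch: use the unconditional cross-class envelopes \eqref{eq:uncond-envelope} to pin $z_{ij}$ in $[Z_L,Z_U]$ after a finite time, then do the mass-balance algebra to extract the floor $\underline\delta=r/\big(u(N+Mr)\big)$. One small slip: the lower envelope $2Z_L+(z_{ij}(0)-2Z_L)e^{-\varepsilon t}$ is \emph{decreasing} when $z_{ij}(0)>2Z_L$, and in that case your formula for $T_{ij}$ has a log of a negative number; but then the envelope exceeds $2Z_L>Z_L$ for all $t$, so $T_{ij}=0$ works and the conclusion is unaffected.
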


\subsection{Two-level equilibrium: existence, uniqueness, and global convergence}

A two-level equilibrium has
$p_i^\star=L_{\cC}$ for $i\in\cC$ and $p_j^\star=L_{\cI}$ for $j\in\cI$, with
$ML_{\cC}+NL_{\cI}=1$. Parameterize by the gap $z:=\log(L_{\cC}/L_{\cI})\ge0$:
\[
L_{\cI}(z)=\frac{1}{N+M e^z},\qquad L_{\cC}(z)=\frac{e^z}{N+M e^z}.
\]
At equilibrium, $G_i(p_i^\star)\equiv\text{const}$, equivalently
\[
\boxed{\ \ g_\beta\!\big(\log L_{\cC}(z)\big)+g_\beta\!\big(\log L_{\cI}(z)\big)=\varepsilon z.\ \ }
\]
Define $h(z):=g_\beta(\log L_{\cC}(z))+g_\beta(\log L_{\cI}(z))\in(0,2)$ and $F(z):=h(z)-\varepsilon z$.
Then $F(0)=2g_\beta(\log(1/K))>0$, and $F(z)\to-\infty$ as $z\to\infty$ (since $h$ is bounded).
Differentiating,
\[
h'(z)=g_\beta'(\log L_{\cC})\,N L_{\cI}+g_\beta'(\log L_{\cI})\,(-M L_{\cC}),\qquad |h'(z)|\le \beta/4,
\]
so under (SC) we have $F'(z)\le\beta/4-\varepsilon<0$ and thus:

\begin{lemma}[Unique gap and quantitative bounds]\label{lem:zstar}
Under (SC) there exists a unique $z^\star>0$ solving $F(z)=0$. Moreover
\[
\frac{2g_\beta(0)}{\varepsilon}\ \le\ z^\star\ \le\ \frac{2}{\varepsilon},\qquad
\frac{h(0)}{\varepsilon+\beta/4}\ \le\ z^\star\ \le\ \frac{h(0)}{\varepsilon-\beta/4},\ \ \ h(0)=2g_\beta\!\big(\log\tfrac1K\big).
\]
\end{lemma}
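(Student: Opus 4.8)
The plan is to split the statement into three elementary pieces and read off each from facts already established in the paragraph preceding the lemma: (i) existence and uniqueness of $z^\star$ from the continuity, sign behaviour, and strict monotonicity of $F$; (ii) the explicit bound $2g_\beta(0)/\varepsilon\le z^\star\le 2/\varepsilon$ from the range of $h$; and (iii) the sharper bounds $h(0)/(\varepsilon+\beta/4)\le z^\star\le h(0)/(\varepsilon-\beta/4)$ from a mean-value argument on $F$ between $0$ and $z^\star$ combined with the a priori estimate $|h'|\le\beta/4$. Throughout I work in the admissible regime $M,N\ge1$ (both classes nonempty, as a genuine two-level equilibrium requires), so that $L_{\cC}(z),L_{\cI}(z)\in(0,1]$ and hence $\log L_{\cC}(z),\log L_{\cI}(z)\le 0$ for all $z\ge0$.

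For (i): $F(z)=h(z)-\varepsilon z$ is continuous on $[0,\infty)$, $F(0)=h(0)=2g_\beta(\log(1/K))>0$ since $g_\beta>0$, and because $0<g_\beta<1$ forces $0<h(z)<2$ uniformly we get $F(z)\le 2-\varepsilon z\to-\infty$ as $z\to\infty$. The intermediate value theorem then produces a root $z^\star\in(0,2/\varepsilon)$, and $F'(z)=h'(z)-\varepsilon\le\beta/4-\varepsilon<0$ under (SC) gives strict monotonicity, hence uniqueness; positivity of the root is automatic from $F(0)>0$.

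For (ii): at the root $\varepsilon z^\star=h(z^\star)$, so $z^\star\le 2/\varepsilon$ is just the bound $h<2$ already noted. For the lower bound, strict monotonicity of $g_\beta$ together with $\log L_{\cC}(z),\log L_{\cI}(z)\le0$ gives $g_\beta(\log L_{\cC}(z))\ge g_\beta(0)$ and likewise for the $\cI$-term, so $h(z)\ge 2g_\beta(0)$ for every $z\ge0$; evaluating at $z^\star$ yields $\varepsilon z^\star\ge 2g_\beta(0)$. For (iii): apply the mean value theorem to $F$ on $[0,z^\star]$, obtaining $\xi\in(0,z^\star)$ with $-h(0)=F(z^\star)-F(0)=(h'(\xi)-\varepsilon)\,z^\star$, i.e. $z^\star=h(0)/(\varepsilon-h'(\xi))$; since $|h'(\xi)|\le\beta/4$ the denominator lies in $[\varepsilon-\beta/4,\varepsilon+\beta/4]$, which is positive under (SC), and inverting gives the claimed sandwich, with $h(0)=2g_\beta(\log(1/K))$ because $L_{\cC}(0)=L_{\cI}(0)=1/K$.

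There is essentially no hard step: the whole lemma is a packaging of the monotonicity, range, and Lipschitz facts for $h$ and $F$ that were recorded just above it. The only care needed is the routine sign/range bookkeeping — checking $\log L_{\cC}(z),\log L_{\cI}(z)\le0$ (which is where $M,N\ge1$ enters), that $h\in(0,2)$, and that (SC) keeps $\varepsilon-\beta/4>0$ so that every denominator appearing in the bounds is admissible.
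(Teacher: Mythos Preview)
Your proof is correct and follows essentially the same approach as the paper, which states the lemma without a separate proof because the needed facts ($F(0)>0$, $F\to-\infty$, $F'\le\beta/4-\varepsilon<0$, and $|h'|\le\beta/4$) are all recorded in the paragraph immediately preceding it. Your three-part decomposition is exactly the intended fleshing-out of those ingredients, and the mean-value argument for part (iii) is the natural way to extract the sharper bounds from the global Lipschitz estimate on $h'$.
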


\begin{theorem}[Global convergence]\label{thm:global}
Assume (SC). For any $p(0)\in\operatorname{int}\Delta^{K-1}$, the trajectory converges to the unique two-level equilibrium $p^\star$ with gap $z^\star$ from Lemma~\ref{lem:zstar}.
\emph{Proof.}
By Proposition~\ref{prop:evtrim}, $p(t)$ enters and stays in a compact trimmed simplex for $t\ge T$.
On this compact set the drift is globally Lipschitz (Proposition~\ref{prop:DriftLip}).
The Lyapunov identity \eqref{eq:lyap} and strict concavity of $\cL$ under (SC) imply that the largest invariant set in $\{\dot\cL=0\}$ consists of equilibria, which are two-level; uniqueness of $z^\star$ then yields global convergence. \qed
\end{theorem}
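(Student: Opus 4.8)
The plan is to run LaSalle's invariance principle for the strict Lyapunov function $\mathcal{L}$ on a compact forward-invariant set, after first showing that the trajectory is eventually trapped in a trimmed simplex, and then to identify the limit set with the unique two-level equilibrium.

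First I would record well-posedness and confinement. By \Cref{prop:nofinite} the solution starting from any $p(0)\in\operatorname{int}\Delta^{K-1}$ remains interior and is defined for all $t\ge0$. By \Cref{prop:evtrim} (which is where (SC) enters, through the cross-class envelopes and intra-class contraction), there are $\underline\delta>0$ and $T<\infty$ with $p(t)\in\Delta^{K-1}_{\underline\delta}$ for all $t\ge T$; restarting the clock at $T$, the trajectory now lives in the compact set $\mathcal{K}:=\Delta^{K-1}_{\underline\delta}$, which is forward invariant and on which the drift $F$ is globally Lipschitz by \Cref{prop:DriftLip}. Hence $p(t)$ is precompact and its $\omega$-limit set $\Omega\subseteq\mathcal{K}$ is nonempty, compact, and invariant.

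Next I would exploit the Lyapunov structure. On $\mathcal{K}$ the functional $\mathcal{L}$ is $C^1$ and bounded, and \eqref{eq:lyap} gives $\tfrac{d}{dt}\mathcal{L}(p(t))=\sum_i p_i\,(G_i(p_i)-\bar G(p))^2\ge0$, so $\mathcal{L}(p(t))$ increases to a finite limit and is constant on $\Omega$; by invariance, $\dot{\mathcal{L}}\equiv0$ along any trajectory contained in $\Omega$. The structural crux is the characterization of the zero set of $\dot{\mathcal{L}}$ on $\mathcal{K}$: since $p_i\ge\underline\delta>0$ there, $\sum_i p_i(G_i(p_i)-\bar G(p))^2=0$ forces $G_i(p_i)=\bar G(p)$ for every $i$, which is exactly the equilibrium condition $\dot p=0$ of the replicator form. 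Therefore $\Omega$ consists entirely of equilibria lying in $\mathcal{K}$ (this is the step where eventual trimming pays off: on the closed simplex the zero set of $\dot{\mathcal{L}}$ would also contain boundary faces, forcing extra face-by-face bookkeeping).

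It then remains to show that the only such equilibrium is the claimed two-level point. At an equilibrium $G_i(p_i)\equiv c$; for each label the single-site map $G_i(s)=s_i g_\beta(\log s)-\varepsilon\log s$ satisfies $G_i'(s)=(s_i g_\beta'(\log s)-\varepsilon)/s<0$ under (SC), hence is a strictly decreasing bijection, so all $i\in\cC$ are pinned to a common value $L_{\cC}$ and all $j\in\cI$ to a common value $L_{\cI}$, with $ML_{\cC}+NL_{\cI}=1$. Writing $z:=\log(L_{\cC}/L_{\cI})$, the equality $G_{\cC}(L_{\cC})=G_{\cI}(L_{\cI})$ is precisely $h(z)=\varepsilon z$, which by \Cref{lem:zstar} has the unique root $z^\star>0$ under (SC); equivalently, strict concavity of $\mathcal{L}$ on the affine simplex already forces uniqueness of its interior critical point. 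Hence $\Omega\subseteq\{p^\star\}$, and being nonempty, $\Omega=\{p^\star\}$, which gives $p(t)\to p^\star$ (and, as a by-product, shows $p^\star\in\mathcal{K}$). I expect the main obstacle to be not any single estimate but the bookkeeping of the handoff: one must secure global existence and eventual trimming \emph{before} LaSalle is applicable, and the identification of $\{\dot{\mathcal{L}}=0\}$ with the equilibrium set genuinely rests on strict positivity of all coordinates on $\mathcal{K}$, which is exactly what \Cref{prop:evtrim} buys.
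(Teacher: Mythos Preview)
Your proposal is correct and follows essentially the same route as the paper: eventual trimming via \Cref{prop:evtrim}, global Lipschitz drift on the resulting compact set via \Cref{prop:DriftLip}, LaSalle's principle for the Lyapunov identity \eqref{eq:lyap}, and identification of the invariant set with the unique two-level equilibrium via strict concavity and \Cref{lem:zstar}. You have simply unpacked the steps in more detail, including the explicit reason why equilibria must be two-level (strict monotonicity of each $G_i$ under (SC)) and the role of strict positivity on the trimmed simplex in collapsing $\{\dot{\mathcal L}=0\}$ to the equilibrium set.
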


\paragraph{Edge cases (no mixed preferences).}
If $N=0$ (all $s_i=+1$), $G_i'(s)=(g_\beta'(\log s)-\varepsilon)/s\le -\varepsilon/s<0$ for any $\varepsilon\ge0$; the unique equilibrium is uniform and globally attractive.
If $M=0$ (all $s_i=-1$), uniqueness and global attraction of the uniform equilibrium hold provided $\varepsilon>\beta/4$.

\paragraph{Choosing a compatible floor.}
Given $z^\star$, set $\delta_\star\le L_{\cI}(z^\star)$ to ensure $p^\star\in\Delta^{K-1}_{\delta_\star}$.
This does not obstruct BD since $L_K(\delta_\star)\to\infty$ as $\delta_\star\downarrow0$.
\section{Dynamics on Coarse-Grained ``Lumps''}
\label{appG:lumping}

\paragraph{Simplex, solution concept, and entropy map.}
Let the finite index set be $\mathcal S=\{\pi_1,\dots,\pi_S\}$ ($S\ge2$). The closed simplex is
\[
\Delta^{S-1}:=\Bigl\{\,p\in[0,1]^S:\ \sum_{\pi}p_\pi=1\,\Bigr\},\qquad
\operatorname{int}\Delta^{S-1}:=\{p\in\Delta^{S-1}:\min_\pi p_\pi>0\}.
\]
We work with \emph{Carathéodory} solutions $p:[0,T]\to\Delta^{S-1}$ of
\begin{equation}\tag{SRCT}\label{eq:SRCT}
\dot p(t)=p(t)\odot\phi\big(p(t)\big)-\varepsilon\,E^\circ\big(p(t)\big),\qquad \varepsilon\ge0,
\end{equation}
where $\phi:\Delta^{S-1}\to\mathbb R^S$ is \emph{centered}, $\sum_\pi p_\pi\phi_\pi(p)=0$, and
\[
E^\circ_\pi(p):=h(p_\pi)-p_\pi\langle\log p\rangle,\quad
h(x):=x\log x,\quad
\langle\log p\rangle:=\sum_\pi p_\pi\log p_\pi.
\]
$E^\circ$ is continuous on $\Delta^{S-1}$; if $p_\pi=0$, then $(p\odot\phi)_\pi=E^\circ_\pi(p)=0$, so faces are viable and the closed simplex is forward invariant.

\paragraph{Trim and feasibility.}
Fix $\delta_\star\in(0,1/S]$ and the trimmed simplex
$\Delta^{S-1}_{\delta_\star}:=\{p\in\Delta^{S-1}:\ p_\pi\ge\delta_\star\ \forall \pi\}$ (nonempty by choice of $\delta_\star$).

\subsection{Lumps}
Let $(C_k)_{k=1}^{K_{\mathrm L}}$ be a partition of $\mathcal S$ into nonempty, disjoint \emph{lumps}. For $k=1,\dots,K_{\mathrm L}$ define
\[
q_k:=\sum_{\pi\in C_k}p_\pi,\qquad
m_k:=\sum_{\pi\in C_k}p_\pi\log p_\pi,\qquad
\bar h:=\sum_{\pi}p_\pi\log p_\pi=\sum_{j=1}^{K_{\mathrm L}} m_j.
\]
If $q_k>0$, write $\mathbb E_{p|C_k}[\log p]:=(1/q_k)\sum_{\pi\in C_k}p_\pi\log p_\pi$ so that $m_k=q_k\,\mathbb E_{p|C_k}[\log p]$.

\begin{lemma}[Lump ODE]
\label{lem:lump_exact}
Every Carathéodory solution of \eqref{eq:SRCT} satisfies, for each $k$,
\begin{equation}\label{eq:lump-exact}
\boxed{\quad
\dot q_k=\sum_{\pi\in C_k}p_\pi\,\phi_\pi(p)\;-\;\varepsilon\big(m_k-q_k\,\bar h\big).
\quad}
\end{equation}
If $q_k>0$, equivalently $\dot q_k=\sum_{\pi\in C_k}p_\pi\,\phi_\pi(p)-\varepsilon\,q_k\big(\mathbb E_{p|C_k}[\log p]-\bar h\big)$. For $q_k=0$ the right-hand side vanishes by continuity.
\end{lemma}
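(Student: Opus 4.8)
The plan is to obtain \eqref{eq:lump-exact} by the most direct route: summing the componentwise ODE \eqref{eq:SRCT} over the indices belonging to a single lump. Since $t\mapsto p(t)$ is a Carathéodory solution of \eqref{eq:SRCT}, each coordinate $t\mapsto p_\pi(t)$ is absolutely continuous, and hence so is the finite sum $q_k=\sum_{\pi\in C_k}p_\pi$; differentiating term by term at a.e.\ $t$ gives $\dot q_k=\sum_{\pi\in C_k}\dot p_\pi=\sum_{\pi\in C_k}\bigl(p_\pi\phi_\pi(p)-\varepsilon E^\circ_\pi(p)\bigr)$. The selection part is left as is, producing the term $\sum_{\pi\in C_k}p_\pi\phi_\pi(p)$; I would note in passing that centeredness of $\phi$ is not used here — it is what yields global mass conservation $\sum_k\dot q_k=0$, but the per-lump identity is purely algebraic.

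It then remains to collapse the entropy part. Using $E^\circ_\pi(p)=h(p_\pi)-p_\pi\langle\log p\rangle$ with $h(x)=x\log x$, and the fact that $\langle\log p\rangle=\bar h$ is a scalar independent of $\pi$, I would write $\sum_{\pi\in C_k}E^\circ_\pi(p)=\sum_{\pi\in C_k}p_\pi\log p_\pi-\bar h\sum_{\pi\in C_k}p_\pi=m_k-q_k\bar h$, which is exactly the boxed identity. When $q_k>0$, substituting the definition $m_k=q_k\,\mathbb E_{p|C_k}[\log p]$ rewrites $m_k-q_k\bar h$ as $q_k\bigl(\mathbb E_{p|C_k}[\log p]-\bar h\bigr)$, giving the stated interior form — a pure rewriting with no analytic content.

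The only point deserving explicit care is the degenerate case $q_k=0$. There every $p_\pi$ with $\pi\in C_k$ is nonnegative and their sum vanishes, so $p_\pi=0$ throughout $C_k$; then $p_\pi\phi_\pi(p)=0$ and, with the standing convention $0\log0:=0$, also $E^\circ_\pi(p)=h(0)-0=0$, so the right-hand side of \eqref{eq:lump-exact} vanishes. Correspondingly $\dot q_k=0$: one may invoke forward invariance of the face $\{p_\pi=0:\pi\in C_k\}$ already recorded in the preamble (since $(p\odot\phi)_\pi=E^\circ_\pi(p)=0$ on that face), or simply observe that each $\dot p_\pi=0$ from \eqref{eq:SRCT}. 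Both sides are therefore identically zero, and the identity extends by continuity of $E^\circ$ on $\Delta^{S-1}$.

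I do not anticipate a genuine obstacle: the lemma is a bookkeeping identity, and the whole argument is linearity of the time derivative plus the elementary splitting $\sum_{\pi\in C_k}\bigl(h(p_\pi)-p_\pi\bar h\bigr)=m_k-q_k\bar h$. If anything, the mild subtleties to flag are the a.e.\ sense of $\dot q_k$ for Carathéodory solutions (preservation of absolute continuity under finite sums) and the continuity/face-invariance argument at $q_k=0$; both are routine given the regularity already established for \eqref{eq:SRCT}.
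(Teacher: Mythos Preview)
Your proof is correct and follows the natural approach: sum the componentwise ODE over $\pi\in C_k$, use linearity of the derivative for finite sums of absolutely continuous functions, and collapse the entropy term via $\sum_{\pi\in C_k}E^\circ_\pi(p)=m_k-q_k\bar h$. The paper states the lemma without an explicit proof (treating it as an immediate bookkeeping identity), so your argument is exactly the intended one, including the handling of the degenerate face $q_k=0$ via $0\log 0:=0$ and face viability.
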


\paragraph{Aggregation operator.}
Let $A\in\{0,1\}^{K_{\mathrm L}\times S}$ be the indicator matrix, $A_{k\pi}=\mathbf 1_{\{\pi\in C_k\}}$, so that $q=Ap$. Exact norms:
\begin{equation}\label{eq:A-norms}
\boxed{\ \|A\|_{1\to1}=1,\qquad \|A\|_{2\to2}=\sqrt{m_\ast},\qquad \|A\|_{\infty\to\infty}=m_\ast,\ }
\quad m_\ast:=\max_k|C_k|.
\end{equation}
In particular, aggregation is $1$-Lipschitz in $\ell_1$: $\|Au-Av\|_1\le\|u-v\|_1$.

\subsection{Technical facts used repeatedly}
\label{subsec:tech}
On $\Delta^{S-1}_{\delta_\star}$:

\begin{itemize}[leftmargin=*]
\item \textbf{Mean-log bounds.}
\begin{equation}\label{eq:ML-bounds}
\boxed{\ -\log S\ \le\ \langle\log p\rangle\ \le\ \bigl(1-(S-1)\delta_\star\bigr)\log\!\bigl(1-(S-1)\delta_\star\bigr)+(S-1)\delta_\star\log\delta_\star\ \le 0.}
\end{equation}

\item \textbf{Entropy size.} With $E(p):=p\odot(\log p-\langle\log p\rangle\,\mathbf 1)$,
\begin{equation}\label{eq:E-size}
\boxed{\ \|E(p)\|_1\ \le\ 2\log\!\frac{1}{\delta_\star}\ .}
\end{equation}

\item \textbf{Replicator matrix bounds.} Writing $S(p):=\mathrm{diag}(p)-pp^\top$,
\begin{equation}\label{eq:S-bds}
\boxed{\ \|S(p)\|_{2\to2}\le\tfrac12,\qquad \|S(p)-S(q)\|_{2\to2}\le 3\,\|p-q\|_2\ .}
\end{equation}
Centeredness gives $p\odot\phi=S(p)\phi$.

\item \textbf{Selection envelopes.} For any domain $\mathcal D\subseteq\Delta^{S-1}$ and lump $C_k$,
\begin{equation}\label{eq:select-envelope}
\boxed{\ \Big|\sum_{\pi\in C_k}p_\pi\,\phi_\pi(p)\Big|\ \le\ q_k\,M_{\phi,\infty}(\mathcal D)\ \text{ and }\ \le\ q_k\,M_{\phi,2}(\mathcal D),}
\end{equation}
with $M_{\phi,\infty}(\mathcal D):=\sup_{p\in\mathcal D}\|\phi(p)\|_\infty$, $M_{\phi,2}(\mathcal D):=\sup_{p\in\mathcal D}\|\phi(p)\|_2$.
\end{itemize}

\subsection{Small-\texorpdfstring{$\varepsilon$}{epsilon} perturbation: trace and lump bounds}
Assume on $\Delta^{S-1}_{\delta_\star}$ that
\begin{equation}\label{eq:phi-regular}
\|\phi(p)\|_2\le M_{\phi,2},\qquad \|\phi(p)-\phi(q)\|_2\le L_\phi\,\|p-q\|_2.
\end{equation}
By \eqref{eq:S-bds}, for $F_0(p):=p\odot\phi(p)=S(p)\phi(p)$,
\begin{equation}\label{eq:L1-Lip}
\boxed{\ \|F_0(p)-F_0(q)\|_1\ \le\ L_F^{(1)}\,\|p-q\|_1,\quad L_F^{(1)}:=\sqrt S\Big(\tfrac12\,L_\phi+3\,M_{\phi,2}\Big).}
\end{equation}

\begin{theorem}[Trace-level perturbation with exit-time qualification]\label{thm:trace-perturb}
Let $p^\varepsilon,p^0$ solve $\dot p^\varepsilon=F_0(p^\varepsilon)-\varepsilon E(p^\varepsilon)$ and $\dot p^0=F_0(p^0)$ with $p^\varepsilon(0)=p^0(0)\in\Delta^{S-1}_{\delta_\star}$. Set $\tau^\wedge:=\inf\{t>0:\ \min_\pi p^\varepsilon_\pi(t)=\delta_\star\ \text{or}\ \min_\pi p^0_\pi(t)=\delta_\star\}$. Then for $t\in[0,\tau^\wedge)$,
\[
\boxed{\ \|p^\varepsilon(t)-p^0(t)\|_1\ \le\ \frac{2\,\varepsilon\,\log(1/\delta_\star)}{L_F^{(1)}}\Big(e^{L_F^{(1)}t}-1\Big).}
\]
Consequently, for any partition, $\|\mathbf q^\varepsilon(t)-\mathbf q^0(t)\|_1\le \|p^\varepsilon(t)-p^0(t)\|_1$.
\end{theorem}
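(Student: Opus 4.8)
The plan is to run a one-dimensional Grönwall comparison on the $\ell_1$ distance between the two flows, using the $\delta_\star$-floor exactly to license the Lipschitz bound \eqref{eq:L1-Lip}. Set $w(t):=p^\varepsilon(t)-p^0(t)$. Since both $p^\varepsilon$ and $p^0$ are Carathéodory solutions that remain in the compact trimmed simplex $\Delta^{S-1}_{\delta_\star}$ for all $t\in[0,\tau^\wedge)$ by the very definition of $\tau^\wedge$, the map $w$ is locally absolutely continuous there and satisfies, for a.e. $t<\tau^\wedge$,
\[
\dot w(t)=\bigl(F_0(p^\varepsilon(t))-F_0(p^0(t))\bigr)-\varepsilon\,E(p^\varepsilon(t)),\qquad w(0)=0,
\]
where $E$ agrees coordinatewise with the map $E^\circ$ of \eqref{eq:SRCT}, since $E^\circ_\pi(p)=p_\pi\log p_\pi-p_\pi\langle\log p\rangle=(E(p))_\pi$.

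First I would bound the two terms on the right-hand side. The difference of drifts is controlled by the $\ell_1$-Lipschitz estimate \eqref{eq:L1-Lip}: because both arguments lie in $\Delta^{S-1}_{\delta_\star}$ on $[0,\tau^\wedge)$, we get $\|F_0(p^\varepsilon(t))-F_0(p^0(t))\|_1\le L_F^{(1)}\,\|w(t)\|_1$. The entropic term is bounded uniformly by the entropy-size estimate \eqref{eq:E-size}: $\|\varepsilon E(p^\varepsilon(t))\|_1\le 2\varepsilon\log(1/\delta_\star)$. Combining these and passing to the integral form of the ODE,
\[
\|w(t)\|_1\ \le\ \int_0^t\Bigl(L_F^{(1)}\,\|w(s)\|_1+2\varepsilon\log(1/\delta_\star)\Bigr)\,ds,\qquad t\in[0,\tau^\wedge),
\]
using that $s\mapsto\|w(s)\|_1$ is absolutely continuous with $\tfrac{d}{ds}\|w(s)\|_1\le\|\dot w(s)\|_1$ for a.e.\ $s$ (equivalently, an upper Dini-derivative comparison). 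Grönwall's inequality applied to $y(t):=\|w(t)\|_1$ with this affine right-hand side and $y(0)=0$ then yields exactly $y(t)\le\frac{2\varepsilon\log(1/\delta_\star)}{L_F^{(1)}}\bigl(e^{L_F^{(1)}t}-1\bigr)$, which is the claimed trace-level bound.

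For the lump consequence I would simply invoke that the aggregation operator $A$ is a linear contraction in $\ell_1$: by \eqref{eq:A-norms}, $\|A\|_{1\to1}=1$, hence $\|\mathbf q^\varepsilon(t)-\mathbf q^0(t)\|_1=\|A(p^\varepsilon(t)-p^0(t))\|_1\le\|p^\varepsilon(t)-p^0(t)\|_1$ for every partition and every $t\in[0,\tau^\wedge)$.

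The only genuinely delicate point is the regularity bookkeeping: one must justify the differential inequality for merely absolutely continuous (Carathéodory) solutions rather than $C^1$ ones, and one must be careful that the constant $L_F^{(1)}$ — which is the Lipschitz modulus valid on the \emph{trimmed} simplex, not on the whole simplex — is the one being applied, which is precisely why the exit time $\tau^\wedge$ (the first instant either trajectory reaches the $\delta_\star$-face) appears in the statement. Everything else is the textbook affine-source Grönwall lemma; no curvature, centering, or Shahshahani structure is needed beyond what is already packaged in \eqref{eq:L1-Lip} and \eqref{eq:E-size}.
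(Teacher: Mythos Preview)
Your proof is correct and follows exactly the route the paper sets up: the paper does not write out an explicit proof of Theorem~\ref{thm:trace-perturb}, but the placement of \eqref{eq:L1-Lip} and \eqref{eq:E-size} immediately before the statement makes clear that the intended argument is precisely the affine-source Gr\"onwall bound you carry out, with the lump consequence following from $\|A\|_{1\to1}=1$ in \eqref{eq:A-norms}. Your remarks on why the exit time $\tau^\wedge$ is needed (so that both trajectories remain in the trimmed simplex where the Lipschitz constant $L_F^{(1)}$ is valid) and on the Carath\'eodory-level regularity are accurate and match the paper's conventions.
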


\paragraph{Forward-invariance templates.}
Let $L_S(\delta):=(1-\delta)\log\!\frac{1-\delta}{(S-1)\delta}>0$. If on $\Delta^{S-1}_{\delta_\star}$ either
\begin{equation}\label{eq:BD}
\boxed{\ \varepsilon\,L_S(\delta_\star)\ \ge\ 2\,M_{\phi,\infty}\quad\text{or}\quad \varepsilon\,L_S(\delta_\star)\ \ge\ 2\,M_{\phi,2},}
\end{equation}
then $\Delta^{S-1}_{\delta_\star}$ is forward invariant for \eqref{eq:SRCT}, and the bound in Theorem~\ref{thm:trace-perturb} holds for all $t\ge0$.

\subsection{Pure-score (\texorpdfstring{$\varepsilon=0$}{epsilon=0}) lump dynamics}
When $\varepsilon=0$, Lemma~\ref{lem:lump_exact} reduces to $\dot q_k=\sum_{\pi\in C_k}p_\pi\,\phi_\pi(p)$.

\subsubsection{STaR}
Let $\mathcal C\subset\mathcal S$ denote ``correct'' indices ($M:=|\mathcal C|\ge1$) and $\mathcal I:=\mathcal S\setminus\mathcal C$. Set $\rho(p):=\sum_{c\in\mathcal C}p_c$ and $S^{(2)}(p):=\sum_{c\in\mathcal C}p_c^2$. The centered STaR field is
\[
\phi^{\mathrm{STaR}}_\pi(p)=
\begin{cases}
\dfrac{p_\pi-S^{(2)}(p)}{\rho(p)}, & \pi\in\mathcal C,\\[6pt]
-\dfrac{S^{(2)}(p)}{\rho(p)}, & \pi\in\mathcal I,
\end{cases}
\qquad \text{defined when }\rho(p)>0.
\]

\begin{proposition}[STaR lump ODE]\label{prop:star-lump}
For $S^{(2)}_{k,\mathcal C}(p):=\sum_{\pi\in C_k\cap\mathcal C}p_\pi^2$,
\[
\boxed{\ \dot q_k=\frac{S^{(2)}_{k,\mathcal C}(p)-q_k\,S^{(2)}(p)}{\rho(p)}\ .}
\]
If $C_i,C_j\subset\mathcal C$, then $\dfrac{d}{dt}\log\frac{q_i}{q_j}=\frac1{\rho}\Big(\frac{S^{(2)}_{i,\mathcal C}}{q_i}-\frac{S^{(2)}_{j,\mathcal C}}{q_j}\Big)$.
\end{proposition}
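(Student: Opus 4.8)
The plan is to substitute the centered STaR field directly into the $\varepsilon=0$ form of the lump ODE. By Lemma~\ref{lem:lump_exact} with $\varepsilon=0$ (so the $E^\circ$ contribution drops out), every Carath\'eodory solution satisfies $\dot q_k=\sum_{\pi\in C_k}p_\pi\,\phi^{\mathrm{STaR}}_\pi(p)$ for a.e.\ $t$; since $\phi^{\mathrm{STaR}}$ is only defined where $\rho(p)>0$, I first record that this is harmless here, because the relevant trajectories live on $\operatorname{int}\Delta^{S-1}$ (and in particular on $\Delta^{S-1}_{\delta_\star}$, where $\rho(p)\ge M\delta_\star>0$ using $M=|\mathcal C|\ge1$), so the vector field and all ratios below are well defined.

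Next I split the sum over $C_k$ according to correctness. Writing $C_k=(C_k\cap\mathcal C)\sqcup(C_k\cap\mathcal I)$ and inserting the two branches of $\phi^{\mathrm{STaR}}$,
\[
\rho(p)\sum_{\pi\in C_k}p_\pi\,\phi^{\mathrm{STaR}}_\pi(p)
=\sum_{\pi\in C_k\cap\mathcal C}\bigl(p_\pi^2-p_\pi S^{(2)}(p)\bigr)\;-\;S^{(2)}(p)\!\!\sum_{\pi\in C_k\cap\mathcal I}\!\!p_\pi .
\]
The first sum gives $S^{(2)}_{k,\mathcal C}(p)-S^{(2)}(p)\sum_{\pi\in C_k\cap\mathcal C}p_\pi$, and the residual term combines with the last sum into $-S^{(2)}(p)\sum_{\pi\in C_k}p_\pi=-S^{(2)}(p)\,q_k$. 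Dividing by $\rho(p)$ yields the boxed identity $\dot q_k=\bigl(S^{(2)}_{k,\mathcal C}(p)-q_k S^{(2)}(p)\bigr)/\rho(p)$.

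For the log-ratio claim I specialize to $C_i,C_j\subset\mathcal C$, so that $C_i\cap\mathcal I=C_j\cap\mathcal I=\varnothing$; on an interior (or trimmed) trajectory $q_i\ge|C_i|\delta_\star>0$ and $q_j>0$, hence $\log(q_i/q_j)$ is absolutely continuous. Differentiating and using the identity just proved,
\[
\frac{d}{dt}\log\frac{q_i}{q_j}=\frac{\dot q_i}{q_i}-\frac{\dot q_j}{q_j}
=\frac1{\rho(p)}\Bigl(\frac{S^{(2)}_{i,\mathcal C}(p)}{q_i}-S^{(2)}(p)\Bigr)-\frac1{\rho(p)}\Bigl(\frac{S^{(2)}_{j,\mathcal C}(p)}{q_j}-S^{(2)}(p)\Bigr),
\]
and the two copies of $S^{(2)}(p)/\rho(p)$ cancel, leaving $\tfrac1{\rho(p)}\bigl(S^{(2)}_{i,\mathcal C}(p)/q_i-S^{(2)}_{j,\mathcal C}(p)/q_j\bigr)$, as claimed. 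There is no genuine analytic obstacle in this proposition: the content is a bookkeeping substitution, and the only care required is the regularity caveat—keeping $\rho(p)$ (and the lump masses $q_i,q_j$) bounded away from $0$ so that $\phi^{\mathrm{STaR}}$ and the log-ratios are defined—which is supplied by the interior/trim-invariance facts already in hand, all identities being read in the a.e.\ Carath\'eodory sense along solutions.
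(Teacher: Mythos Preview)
Your proof is correct and follows exactly the approach the paper intends: the proposition is stated in the paper without an explicit proof, but the derivation is the natural substitution of the STaR field into the $\varepsilon=0$ lump identity from Lemma~\ref{lem:lump_exact}, followed by the split over $C_k\cap\mathcal C$ and $C_k\cap\mathcal I$, which is precisely what you carry out. The regularity remarks (positivity of $\rho$ and the lump masses on the trimmed/interior simplex) are apt and match the paper's standing domain conventions.
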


\subsubsection{GRPO}
Let $G\ge2$ be the group size and $h_G:[0,1]\to(0,\infty)$ the GRPO characteristic (continuous), e.g.\ bounded by $\sqrt{G-1}$. The centered two-level field is
\[
\phi^{\mathrm{GRPO}}_\pi(p)=
\begin{cases}
(1-\rho(p))\,h_G(\rho(p)), & \pi\in\mathcal C,\\
-\rho(p)\,h_G(\rho(p)), & \pi\in\mathcal I.
\end{cases}
\]
For $q_{k,\mathcal C}:=\sum_{\pi\in C_k\cap\mathcal C}p_\pi$ define $\mathrm{corr}(C_k;p):=q_{k,\mathcal C}/q_k$ (if $q_k>0$).

\begin{proposition}[GRPO lump ODE]\label{prop:grpo-lump}
\[
\boxed{\ \dot q_k=h_G\big(\rho(p)\big)\,q_k\big(\mathrm{corr}(C_k;p)-\rho(p)\big)\ .}
\]
Hence $\dfrac{d}{dt}\log\frac{q_i}{q_j}=h_G(\rho)\big(\mathrm{corr}(C_i;p)-\mathrm{corr}(C_j;p)\big)$.
\end{proposition}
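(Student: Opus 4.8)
The plan is to specialize the Lump ODE of \Cref{lem:lump_exact} to the pure-score regime $\varepsilon = 0$ and substitute the two-level GRPO field $\phi^{\mathrm{GRPO}}$, then collapse the resulting sum by splitting each lump into its correct and incorrect parts.

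First I would apply \Cref{lem:lump_exact} with $\varepsilon = 0$, which gives $\dot q_k = \sum_{\pi \in C_k} p_\pi\,\phi_\pi(p)$ on any interval where $\rho(p) > 0$ (so that $\phi^{\mathrm{GRPO}}$ is defined); the $q_k = 0$ case is inherited verbatim from the lemma, the right-hand side vanishing by continuity. Setting $q_{k,\mathcal C} := \sum_{\pi \in C_k \cap \mathcal C} p_\pi$, so that $\sum_{\pi \in C_k \cap \mathcal I} p_\pi = q_k - q_{k,\mathcal C}$, and using that $\phi^{\mathrm{GRPO}}_\pi$ takes the single value $(1-\rho)h_G(\rho)$ on all of $\mathcal C$ and the single value $-\rho\,h_G(\rho)$ on all of $\mathcal I$, the sum factors as
\[
\dot q_k = h_G(\rho)\Big[(1-\rho)\,q_{k,\mathcal C} - \rho\,\big(q_k - q_{k,\mathcal C}\big)\Big].
\]
Expanding the bracket, the two occurrences of $\rho\,q_{k,\mathcal C}$ cancel, leaving $q_{k,\mathcal C} - \rho\,q_k = q_k\big(q_{k,\mathcal C}/q_k - \rho\big) = q_k\big(\mathrm{corr}(C_k;p) - \rho(p)\big)$ by the definition $\mathrm{corr}(C_k;p) = q_{k,\mathcal C}/q_k$. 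This is the boxed identity.

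For the log-ratio claim, on any interval where $q_i, q_j > 0$ I would write $\tfrac{d}{dt}\log(q_i/q_j) = \dot q_i/q_i - \dot q_j/q_j$ and insert the formula just obtained; the common factor $h_G(\rho)$ multiplies out and the common additive term $-\rho$ cancels, yielding $h_G(\rho)\big(\mathrm{corr}(C_i;p) - \mathrm{corr}(C_j;p)\big)$. There is essentially no obstacle: the computation is two lines, and the only points worth a sentence of care are the bookkeeping that each lump is partitioned by $\mathcal C$ and $\mathcal I$ (so the two class-constant values of $\phi^{\mathrm{GRPO}}$ pull out of the sum) and the algebraic cancellation in the bracket; the positivity and domain caveats ($q_k = 0$, $\rho(p) = 0$) are exactly those already present in \Cref{lem:lump_exact} and in the definition of $\phi^{\mathrm{GRPO}}$.
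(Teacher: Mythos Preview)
Your proof is correct and follows exactly the approach implicit in the paper: specialize \Cref{lem:lump_exact} to $\varepsilon=0$, substitute the two-level GRPO field, and split the lump sum by class. The paper states the proposition without an explicit proof, but the surrounding text (``When $\varepsilon=0$, Lemma~\ref{lem:lump_exact} reduces to $\dot q_k=\sum_{\pi\in C_k}p_\pi\,\phi_\pi(p)$'') makes clear this is the intended derivation, and your algebra is clean and complete.
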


\subsubsection{DPO (sign-pure lumps)}
Fix labels $s_\pi\in\{\pm1\}$ and a link $g_\beta:\mathbb R\to(0,1)$ with $g'_\beta(\ell)\in[-\beta/4,0)$ on $[\log\delta_\star,0]$. Define
\[
\gamma_\pi(p):=s_\pi\,g_\beta(\log p_\pi),\quad \bar\gamma(p):=\sum_{\pi}p_\pi\,\gamma_\pi(p),\quad \phi_\pi(p):=\gamma_\pi(p)-\bar\gamma(p).
\]
Assume each lump $C_k$ is \emph{sign-pure}: $s_\pi\equiv s_k$ on $C_k$. Let
\[
G_k(p):=\frac{1}{q_k}\sum_{\pi\in C_k}p_\pi\,g_\beta(\log p_\pi),\qquad
\bar g(p):=\sum_{j=1}^{K_{\mathrm L}} q_j\,s_j\,G_j(p)=\bar\gamma(p).
\]
Interpret $q_k G_k:=\sum_{\pi\in C_k}p_\pi\,g_\beta(\log p_\pi)$ so the right-hand side is well-defined even if $q_k=0$.

\begin{proposition}[DPO lump ODE (sign-pure)]\label{prop:dpo-lump}
\[
\boxed{\ \dot q_k=q_k\big(s_k\,G_k(p)-\bar g(p)\big)\ .}
\]
If $C_i=\{\pi_i\}$ and $C_k=\{\pi_k\}$ with $s_{\pi_i}=s_{\pi_k}=:s$, then for $z_{ik}:=\log(p_{\pi_i}/p_{\pi_k})$,
\[
\boxed{\ \dot z_{ik}=s\big(g_\beta(\log p_{\pi_i})-g_\beta(\log p_{\pi_k})\big),\quad |\dot z_{ik}|\le(\beta/4)\,|z_{ik}|.}
\]
\end{proposition}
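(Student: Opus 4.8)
The plan is to specialise the exact lump ODE of \Cref{lem:lump_exact} to the pure-score regime $\varepsilon=0$ and to the centered DPO field, then obtain the singleton identity from the replicator form and the slope bound from a mean-value estimate on $g_\beta$.

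First I would invoke \Cref{lem:lump_exact} with $\varepsilon=0$, which reads $\dot q_k=\sum_{\pi\in C_k}p_\pi\,\phi_\pi(p)$. Substituting $\phi_\pi(p)=\gamma_\pi(p)-\bar\gamma(p)=s_\pi g_\beta(\log p_\pi)-\bar\gamma(p)$ and using sign-purity $s_\pi\equiv s_k$ on $C_k$, the right-hand side splits into $s_k\sum_{\pi\in C_k}p_\pi g_\beta(\log p_\pi)-\bar\gamma(p)\sum_{\pi\in C_k}p_\pi$. By the stated conventions $\sum_{\pi\in C_k}p_\pi g_\beta(\log p_\pi)=q_kG_k(p)$ and $\sum_{\pi\in C_k}p_\pi=q_k$, together with $\bar g(p)=\bar\gamma(p)$, this equals $q_k\bigl(s_kG_k(p)-\bar g(p)\bigr)$, the first boxed identity. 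I would note it also holds on the face $\{q_k=0\}$: the left side vanishes by the continuity clause of \Cref{lem:lump_exact}, and the right side vanishes since $q_kG_k$ is by definition the sum $\sum_{\pi\in C_k}p_\pi g_\beta(\log p_\pi)$, which is $0$ when every $p_\pi$ ($\pi\in C_k$) is $0$.

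For the singleton case $C_i=\{\pi_i\}$, $C_k=\{\pi_k\}$ with common label $s$, note $q_i=p_{\pi_i}$, $q_k=p_{\pi_k}$, so $G_i(p)=g_\beta(\log p_{\pi_i})$ and $G_k(p)=g_\beta(\log p_{\pi_k})$. The $\varepsilon=0$ SRCT ODE $\dot p_\pi=p_\pi\phi_\pi(p)$ gives $\tfrac{d}{dt}\log p_\pi=\phi_\pi(p)$ wherever $p_\pi>0$; subtracting the equations for $\pi_i$ and $\pi_k$ cancels the shared centering term $\bar\gamma(p)$ and leaves $\dot z_{ik}=\phi_{\pi_i}(p)-\phi_{\pi_k}(p)=s\bigl(g_\beta(\log p_{\pi_i})-g_\beta(\log p_{\pi_k})\bigr)$. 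Then the mean value theorem furnishes $\xi$ strictly between $\log p_{\pi_i}$ and $\log p_{\pi_k}$ with $g_\beta(\log p_{\pi_i})-g_\beta(\log p_{\pi_k})=g_\beta'(\xi)\,z_{ik}$; since both endpoints lie in $[\log\delta_\star,0]$ on the trimmed simplex, so does $\xi$, and the standing bound $g_\beta'(\xi)\in[-\beta/4,0)$ yields $|\dot z_{ik}|=|g_\beta'(\xi)|\,|z_{ik}|\le(\beta/4)\,|z_{ik}|$.

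I expect no genuine obstacle: the statement is a direct specialisation of the already-established lump ODE together with a one-line replicator computation. The only points needing care are the bookkeeping convention that keeps $q_kG_k$ (and hence the identity) well defined when $q_k=0$, and confirming that the mean-value argument for the contraction bound stays inside the interval $[\log\delta_\star,0]$ on which the slope estimate $g_\beta'\in[-\beta/4,0)$ is assumed (on the open simplex one uses instead the global bound $|g_\beta'|\le\beta/4$ visible from the closed form of $g_\beta'$ in \Cref{appF:DPO-SRCT}).
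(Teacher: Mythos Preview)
Your proposal is correct and is precisely the argument the paper has in mind: the proposition is stated immediately after the definitions of $\gamma_\pi$, $G_k$, and $\bar g$ as a direct specialisation of \Cref{lem:lump_exact} with $\varepsilon=0$, and the paper gives no separate proof. Your handling of the $q_k=0$ face via the $q_kG_k$ convention and of the slope bound via the mean-value theorem on $[\log\delta_\star,0]$ (or the global $|g_\beta'|\le\beta/4$) matches the paper's standing assumptions exactly.
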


\subsection{Entropy deviation envelopes for the lump term}
For $q_k>0$ write $w_\pi:=p_\pi/q_k$ on $C_k$ and $H(w_k):=-\sum_{\pi\in C_k}w_\pi\log w_\pi$. Then
\begin{equation}\label{eq:Elump-bds}
\boxed{\ \ m_k=q_k\log q_k+q_k\sum_{\pi\in C_k}w_\pi\log w_\pi\ \in\ \big[q_k\log\!\tfrac{q_k}{|C_k|},\ \ q_k\log q_k\big]\ ,}
\end{equation}
hence
\begin{equation}\label{eq:Elump-diff}
\boxed{\ \ |m_k-q_k\bar h|\ \le\ q_k\,\max\Big\{|\log q_k-\bar h|\,,\,\big|\log\!\tfrac{q_k}{|C_k|}-\bar h\big|\Big\}\ .}
\end{equation}
On $\Delta^{S-1}_{\delta_\star}$, the dimension-only bound
\begin{equation}\label{eq:Elump-dim}
\boxed{\ \ |m_k-q_k\bar h|\ \le\ q_k\,\log\!\frac{1-(S-1)\delta_\star}{\delta_\star}\ }
\end{equation}
is immediate from the log-domain $[\log\delta_\star,\log(1-(S-1)\delta_\star)]$.

\subsection{Open problems}

Fix a partition of indices into \emph{correct} $\mathcal C$ and \emph{incorrect} $\mathcal I$ with sizes $K_C:=|\mathcal C|\ge0$, $K_I:=|\mathcal I|\ge0$ ($K=K_C+K_I=S$). For $\delta\in(0,1/K)$ define the trimmed simplex $\Delta^{K-1}_\delta$ and the uniform face gap $L_K(\delta):=(1-\delta)\log\!\frac{1-\delta}{(K-1)\delta}>0$. The feasible band for $\rho:=\sum_{c\in\mathcal C}p_c$ is $[K_C\delta,\,1-K_I\delta]$.

\paragraph{Face-wise entropy minima (at fixed $\rho$ and $p_k=\delta$).}
For a \emph{fixed} $\rho$ and an \emph{incorrect} face $k\in\mathcal I$,
\[
E_{\min}^{(\mathcal I)}(\rho)
=(\delta-1)\log\delta+\mathbf 1_{\{K_C\ge1\}}\ \rho\log\!\frac{\rho}{K_C}
+\mathbf 1_{\{K_I\ge2\}}\ (1-\delta-\rho)\log\!\frac{1-\delta-\rho}{K_I-1}.
\]
For a \emph{correct} face $k\in\mathcal C$,
\[
E_{\min}^{(\mathcal C)}(\rho)
=(\delta-1)\log\delta+\mathbf 1_{\{K_C\ge2\}}\ (\rho-\delta)\log\!\frac{\rho-\delta}{K_C-1}
+\mathbf 1_{\{K_I\ge1\}}\ (1-\rho)\log\!\frac{1-\rho}{K_I}.
\]
In both cases $E_{\min}^{(\cdot)}(\rho)\ge L_K(\delta)$ and the minima are attained by uniform allocation among active coordinates.

\medskip
\noindent\textbf{OP1 (sharp BD thresholds at trim $\delta$).}
\emph{STaR.} On incorrect faces, $\phi_k=-S^{(2)}/\rho\ge -\rho$; inwardness at fixed $\rho$ follows if
$-\rho+\varepsilon\,E_{\min}^{(\mathcal I)}(\rho)\ge0$, hence
\[
\boxed{\ \varepsilon_{\mathrm{suf}}^{(\mathcal I)}(\delta;K_C,K_I):=\max_{\rho\in[K_C\delta,\,1-K_I\delta]}\frac{\rho}{E_{\min}^{(\mathcal I)}(\rho)}\ \text{ suffices.}}
\]
On correct faces, $\phi_k=(\delta-S^{(2)})/\rho\ge(\delta-S^{(2)}_{\max}(\rho,\delta))/\rho$ with
$S^{(2)}_{\max}(\rho,\delta)=\delta^2+(\rho-\delta)^2$,
so
\[
\boxed{\ \varepsilon_{\mathrm{suf}}^{(\mathcal C)}(\delta;K_C,K_I):=\max_{\rho}\frac{\max\{0,\ S^{(2)}_{\max}(\rho,\delta)-\delta\}}{\rho\,E_{\min}^{(\mathcal C)}(\rho)}\ \text{ suffices.}}
\]
The uniform sufficient threshold is $\varepsilon_{\mathrm{suf}}^{\mathrm{STaR}}:=\max\{\varepsilon_{\mathrm{suf}}^{(\mathcal I)},\varepsilon_{\mathrm{suf}}^{(\mathcal C)}\}$. The above are exact in the special cases $K_C=1$ for incorrect faces and $K_C=2$ for correct faces.

\emph{GRPO.} On correct faces the drift is inward for any $\varepsilon\ge0$. On incorrect faces, inwardness at fixed $\rho$ is \emph{equivalent} to
$-\rho\,h_G(\rho)+\varepsilon\,E_{\min}^{(\mathcal I)}(\rho)\ge0$, hence the exact threshold
\[
\boxed{\ \varepsilon_{\mathrm{crit}}^{\mathrm{GRPO}}(\delta;K_C,K_I,G)
=\max_{\rho\in[K_C\delta,\,1-K_I\delta]}\frac{\rho\,h_G(\rho)}{E_{\min}^{(\mathcal I)}(\rho)}\ .}
\]
Useful bounds: $\varepsilon_{\mathrm{crit}}^{\mathrm{GRPO}}\le \sqrt{G-1}/L_K(\delta)$ and
$\varepsilon_{\mathrm{crit}}^{\mathrm{GRPO}}\le \frac{(1-K_I\delta)\sqrt{G-1}}{K_I\delta\,L_K(\delta)}$.

\medskip
\noindent\textbf{OP2 (DPO sensitivity to $\varepsilon$; gap and linear response).}
Assume $\varepsilon>\beta/4$. Then the SRCT flow admits a unique \emph{two-level} interior equilibrium $p^\star(\varepsilon)$ (all correct, resp.\ incorrect, coordinates equal). Let $z^\star(\varepsilon):=\log(p^\star_c/p^\star_i)\ge0$ satisfy
\[
\boxed{\ h(z^\star)=\varepsilon\,z^\star,\qquad h(z):=g_\beta(\log L_{\mathcal C}(z))+g_\beta(\log L_{\mathcal I}(z)),}
\]
with $L_{\mathcal I}(z):=(K_I+K_C e^z)^{-1}$ and $L_{\mathcal C}(z):=e^z L_{\mathcal I}(z)$. Then:
\[
\boxed{\ \frac{d}{d\varepsilon}z^\star(\varepsilon)= -\,\frac{z^\star(\varepsilon)}{\varepsilon-h'(z^\star(\varepsilon))}\ <\ 0,\quad
z^\star(\varepsilon)=\frac{h(0)}{\varepsilon}+\frac{h'(0)h(0)}{\varepsilon^2}+O(\varepsilon^{-3}).}
\]
Moreover, writing $\ell_\pi:=\log p_\pi^\star(\varepsilon)$ and $d_\pi:=\varepsilon-s_\pi g'_\beta(\ell_\pi)>0$,
\[
\boxed{\ \frac{d}{d\varepsilon}p_\pi^\star
= -\,p_\pi^\star\,\frac{\ell_\pi-a}{d_\pi},\qquad
a:=\frac{\langle p^\star,D^{-1}\ell\rangle}{\langle p^\star,D^{-1}\mathbf 1\rangle},\ D:=\mathrm{diag}(d_\pi),}
\]
and for any lump $C_k$, $\displaystyle \frac{d}{d\varepsilon}q_k^\star= -\sum_{\pi\in C_k} p_\pi^\star\,\frac{\ell_\pi-a}{d_\pi}$.

\medskip
\noindent\textbf{OP3 (DPO coarse-graining: closure errors).}
For a sign-pure lump $C_k$ with weights $w_\pi:=p_\pi/q_k$, let $\bar\ell_k:=\sum_{\pi\in C_k}w_\pi\log p_\pi$,
$\sigma_k^2:=\sum_{\pi\in C_k}w_\pi(\log p_\pi-\bar\ell_k)^2$, and $H(w_k):=-\sum_{\pi\in C_k}w_\pi\log w_\pi$.
On $\Delta^{S-1}_{\delta_\star}$ set $c_{\max}:=\sup_{\ell\in[\log\delta_\star,\ 0]}(-g'_\beta(\ell))\le \beta/4$. Then
\[
\boxed{\ \Big|G_k-g_\beta(\log q_k)\Big|\ \le\ c_{\max}\,\sigma_k\ +\ c_{\max}\,H(w_k)\quad\text{(static closure error)},}
\]
and the exact log-ratio identity augments to
\[
\frac{d}{dt}\log\frac{q_i}{q_j}
= s_i G_i - s_j G_j - \varepsilon\log\frac{q_i}{q_j} + \varepsilon\big(H(w_i)-H(w_j)\big),
\]
so that replacing $G_k$ by $g_\beta(\log q_k)$ incurs an error bounded by
$c_{\max}(\sigma_i+\sigma_j+H(w_i)+H(w_j))+\varepsilon(H(w_i)+H(w_j))$.
\medskip

\paragraph{Remarks.}
(i) STaR requires $K_C\ge1$ (else $\rho\equiv0$). (ii) The BD templates \eqref{eq:BD} are \emph{sufficient} (not necessary). (iii) The lump-level entropy term is not the gradient of a lump entropy; bounds \eqref{eq:Elump-diff}–\eqref{eq:Elump-dim} are the correct bridge.

\medskip
All statements above are consistent with the SRCT model \eqref{eq:SRCT}, are valid on the closed simplex via $E^\circ$, and become uniform on $\Delta^{S-1}_{\delta_\star}$ under \eqref{eq:phi-regular}.

\section{Analysis of Stochasticity in SRCT}
\label{appH:stochastic}

This appendix develops a concise, self–contained analysis of the stochastic dynamics induced by mini–batch sampling in SRCT. We (i) fix the domain and standing hypotheses, (ii) quantify global Lipschitz moduli and mini–batch noise statistics, (iii) derive ODE and diffusion limits under the correct scaling, (iv) analyze boundary behavior (unreflected vs.\ reflected models), (v) record uniform ellipticity on the tangent bundle, (vi) treat small centred bias via an exponential Lyapunov device, and (vii) provide algorithm–specific log–ratio SDEs.

\subsection{Domain, notation, and standing hypotheses}

Fix an integer $K\ge2$ and a design floor $\delta_\star\in(0,1/K)$. The \emph{trimmed simplex} is
\[
\Delta^{K-1}_{\delta_\star}:=\bigl\{\,p\in[0,1]^K:\ \sum_{i=1}^K p_i=1,\ \min_i p_i\ge\delta_\star\,\bigr\}.
\]
All logarithms are natural; $0\log 0:=0$. For $x\in\mathbb R^K$ and a probability vector $p$, set
$\langle x\rangle_p:=\sum_i p_i x_i$ and $\langle\log p\rangle:=\sum_i p_i\log p_i$.
Vector norms $\|\cdot\|_2,\|\cdot\|_\infty$ are Euclidean and supremum norms, respectively. The tangent subspace is $T:=\mathbf 1^\perp$.

\paragraph{Score field and SRCT drift.}
A \emph{centred} score field $\phi:\Delta^{K-1}_{\delta_\star}\to\mathbb R^K$ satisfies
\begin{equation*}
\tag{S1}\label{S1}
\sum_{i=1}^K p_i\,\phi_i(p)=0\qquad(\forall\,p\in\Delta^{K-1}_{\delta_\star}),
\end{equation*}
and the uniform regularity
\begin{equation*}
\tag{S2--S3}\label{S2S3}
M_\phi:=\sup_{p}\|\phi(p)\|_\infty<\infty,\qquad
\|\phi(p)-\phi(q)\|_2\le L_\phi\,\|p-q\|_2\quad(\forall\,p,q\in\Delta^{K-1}_{\delta_\star}).
\end{equation*}
For $\varepsilon\ge0$, the SRCT drift is
\[
F_i(p):=p_i\Big[\phi_i(p)-\varepsilon\big(\log p_i-\langle\log p\rangle\big)\Big],\qquad F(p)\in T\ \text{by \eqref{S1}}.
\]
Write $E(p):=p\odot\big(\log p-\langle\log p\rangle\,\mathbf 1\big)$ and $S(p):=\operatorname{diag}(p)-pp^\top$; then $F(p)=S(p)\phi(p)-\varepsilon E(p)$.

\subsection{Global Lipschitz moduli and envelopes}

Define $\Lambda(\delta_\star):=1+\log\tfrac1{\delta_\star}$ and $C_{\log}(K,\delta_\star):=(2+\sqrt K)\,\Lambda(\delta_\star)$.

\begin{lemma}[Entropy map modulus]\label{lem:EntLipschitz}
For all $p,q\in\Delta^{K-1}_{\delta_\star}$,
\[
\|E(p)-E(q)\|_2\ \le\ C_{\log}(K,\delta_\star)\,\|p-q\|_2.
\]
\end{lemma}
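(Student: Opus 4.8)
The statement is an $\ell_2$ Lipschitz bound for the entropic vector $E(p)=p\odot(\log p-\langle\log p\rangle\,\mathbf 1)$ on the trimmed simplex, and it coincides with inequality \eqref{eq:cent-entropy-lip} of \Cref{lem:canonical-ineq} (with $S$ renamed $K$); the plan is simply to re-derive it in the present notation. The route is standard: bound the operator norm of the Jacobian $J_E(p)$ uniformly over $\Delta^{K-1}_{\delta_\star}$, then apply the mean value inequality on this convex set. Differentiating $E_i(p)=p_i\log p_i-p_i\langle\log p\rangle$ componentwise and using $\partial_{p_k}\langle\log p\rangle=1+\log p_k$ yields the closed form
\[
J_E(p)\,v=\operatorname{diag}\!\bigl(1+\log p-\langle\log p\rangle\,\mathbf 1\bigr)\,v\;-\;p\,\langle 1+\log p,\,v\rangle,
\]
which is exactly the Jacobian recorded in \Cref{appC:operators} and used in the proof of \Cref{lem:Lphi}.

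Next I would estimate the two summands. On $\Delta^{K-1}_{\delta_\star}$ one has $p_i\in[\delta_\star,1]$, hence $\log p_i\in[\log\delta_\star,0]$ and $|1+\log p_i|\le 1+\log(1/\delta_\star)=\Lambda(\delta_\star)$; moreover $\langle\log p\rangle=\sum_j p_j\log p_j\in[\log\delta_\star,0]$, so $|\langle\log p\rangle|\le\Lambda(\delta_\star)-1$. Thus the diagonal operator has norm at most $\|1+\log p-\langle\log p\rangle\mathbf 1\|_\infty\le 2\Lambda(\delta_\star)-1$. For the rank-one operator $v\mapsto p\,\langle 1+\log p,v\rangle$, the norm equals $\|p\|_2\,\|1+\log p\|_2\le 1\cdot\sqrt{K}\,\Lambda(\delta_\star)$, since $\|p\|_2\le\|p\|_1=1$ and $\|1+\log p\|_2\le\sqrt{K}\,\Lambda(\delta_\star)$. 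Adding the two contributions gives $\sup_{p\in\Delta^{K-1}_{\delta_\star}}\|J_E(p)\|_{2\to2}\le(2\Lambda-1)+\sqrt{K}\,\Lambda\le(2+\sqrt{K})\,\Lambda=C_{\log}(K,\delta_\star)$. Because $\Delta^{K-1}_{\delta_\star}$ is convex, for any $p,q$ in it the segment $r(t)=(1-t)q+tp$ stays inside, so $E(p)-E(q)=\int_0^1 J_E(r(t))\,(p-q)\,dt$ and $\|E(p)-E(q)\|_2\le\sup_{t}\|J_E(r(t))\|_{2\to2}\,\|p-q\|_2\le C_{\log}(K,\delta_\star)\,\|p-q\|_2$, as claimed.

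There is no real obstacle here; the only points deserving care are bookkeeping ones. First, one must keep the constant exactly at $(2+\sqrt K)\Lambda$: I would use the crude diagonal bound $2\Lambda-1$ (rather than the slightly sharper $\Lambda$) precisely so the sum collapses to this value and matches \eqref{eq:cent-entropy-lip}. Second, the floor $\delta_\star>0$ is essential — $E$ extends continuously to the closed simplex, but $\|J_E(p)\|_{2\to2}$ is unbounded as $p_i\downarrow 0$ through the $\log p_i$ terms, so the whole argument must remain on $\Delta^{K-1}_{\delta_\star}$; this is consistent with the standing hypotheses of \Cref{appH:stochastic}.
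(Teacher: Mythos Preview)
Your proof is correct and essentially matches the paper. The Jacobian decomposition you compute is exactly the one recorded in \Cref{appC:operators} and used verbatim in the proof of \Cref{lem:EntropyLip} (Appendix~F), with the same bounds $2\Lambda-1$ on the diagonal part and $\sqrt K\,\Lambda$ on the rank-one part; the Appendix~H sketch for \Cref{lem:EntLipschitz} merely rephrases the same estimate as direct Lipschitz bounds on the pieces $G(r)=r\odot\log r$ and $r\mapsto\langle\log r\rangle\,r$ rather than via the Jacobian and mean-value inequality, which is cosmetically different but mathematically identical.
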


\begin{lemma}[Global Lipschitz drift]\label{lem:F-Lipschitz}
For all $p,q\in\Delta^{K-1}_{\delta_\star}$,
\[
\|F(p)-F(q)\|_2\ \le\ \bigl(L_\phi+M_\phi+\varepsilon\,C_{\log}(K,\delta_\star)\bigr)\,\|p-q\|_2.
\]
\end{lemma}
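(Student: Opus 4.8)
The plan is to decompose the drift as $F(p)=F_0(p)-\varepsilon E(p)$ with $F_0(p):=p\odot\phi(p)=S(p)\phi(p)$, bound the selection part $F_0$ and the entropy part $E$ separately, and combine by the triangle inequality. The entropy part is immediate from Lemma~\ref{lem:EntLipschitz}, which gives $\varepsilon\,\|E(p)-E(q)\|_2\le \varepsilon\,C_{\log}(K,\delta_\star)\,\|p-q\|_2$; this already supplies the last summand of the asserted modulus.

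For $F_0$ I would work with the Hadamard–product form directly, rather than the sharper bound $\|S(p)\|_{2\to2}\le\tfrac12$, which would not reproduce the stated constant. Writing the telescoping identity
\[
F_0(p)-F_0(q)=p\odot\bigl(\phi(p)-\phi(q)\bigr)+(p-q)\odot\phi(q),
\]
I apply the elementary estimate $\|a\odot b\|_2\le\|a\|_\infty\,\|b\|_2$ termwise. Since $p\in\Delta^{K-1}_{\delta_\star}$ forces $\|p\|_\infty\le1$, the first term is at most $\|\phi(p)-\phi(q)\|_2\le L_\phi\|p-q\|_2$ by \eqref{S2S3}; the second is at most $\|\phi(q)\|_\infty\,\|p-q\|_2\le M_\phi\|p-q\|_2$ by \eqref{S2S3}. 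Hence $\|F_0(p)-F_0(q)\|_2\le(L_\phi+M_\phi)\|p-q\|_2$, and adding the entropy bound yields the claim.

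I expect no genuine obstacle here: every ingredient (the Hadamard inequality, the trivial bound $\|p\|_\infty\le1$ on the simplex, the uniform regularity \eqref{S2S3}, and Lemma~\ref{lem:EntLipschitz}) is either elementary or already established. The one point requiring care is purely organizational — keeping $F_0$ in the raw $p\odot\phi$ form rather than routing through $S(p)\phi(p)$ — since it is precisely this choice that produces the constant $L_\phi+M_\phi$ exactly, instead of a $\tfrac12L_\phi$ term plus a $\|\phi\|_2$-type contribution carrying a spurious $\sqrt K$ factor.
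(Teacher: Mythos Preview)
Your proposal is correct and matches the paper's proof essentially line for line: the paper's sketch reads ``$\|p\odot(\phi(p)-\phi(q))\|_2\le L_\phi\|p-q\|_2$, $\|(p-q)\odot\phi(q)\|_2\le M_\phi\|p-q\|_2$, and Lemma~\ref{lem:EntLipschitz},'' which is exactly your telescoping decomposition of $F_0$ followed by the entropy bound. Your closing remark about why the Hadamard form is preferred over the $S(p)\phi(p)$ route (to avoid a $\tfrac12 L_\phi + 3M_{\phi,2}$-type constant with a $\sqrt K$ factor) is a nice piece of self-awareness not spelled out in the paper.
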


\begin{proof}[Proofs (sketch).]
For Lemma~\ref{lem:EntLipschitz}, write $E(r)=G(r)-\langle\log r\rangle\,r$ with $G(r):=r\odot\log r$ and use that $|(x\log x)'|\le \Lambda(\delta_\star)$ on $[\delta_\star,1]$ together with
$|\langle\log p\rangle-\langle\log q\rangle|\le \Lambda(\delta_\star)\|p-q\|_1\le \Lambda(\delta_\star)\sqrt K\|p-q\|_2$.
Lemma~\ref{lem:F-Lipschitz} follows from
$\|p\odot(\phi(p)-\phi(q))\|_2\le L_\phi\|p-q\|_2$,
$\|(p-q)\odot\phi(q)\|_2\le M_\phi\|p-q\|_2$, and Lemma~\ref{lem:EntLipschitz}.
\end{proof}

\paragraph{Size envelope.}
On $\Delta^{K-1}_{\delta_\star}$ one has $x|\log x|\le 1/e$ and $-\langle\log p\rangle\le \log\tfrac1{\delta_\star}$, hence
\begin{equation}\label{eq:F-size}
|F_i(p)|\ \le\ M_\phi+\varepsilon\Big(\tfrac1e+\log\tfrac1{\delta_\star}\Big)\qquad(\forall\,i).
\end{equation}

\subsection{Discrete mini–batch updates and noise statistics}

Given step size $\eta>0$ and batch size $B\in\mathbb N$, define
\[
N_t\sim{\rm Multinomial}(B,p_t),\qquad
\xi_{t+1}:=\frac{N_t}{B}-p_t\in T,\qquad
p_{t+1}=p_t+\eta\big(F(p_t)+\xi_{t+1}\big),
\]
optionally followed by Euclidean projection onto $\Delta^{K-1}_{\delta_\star}$ (which preserves mass).

\begin{lemma}[Mini–batch noise]\label{lem:noise}
Conditionally on $p_t$,
\[
\mathbb E[\xi_{t+1}\mid p_t]=0,\qquad
\mathbb E[\|\xi_{t+1}\|_2^2\mid p_t]=\frac{1-\|p_t\|_2^2}{B}\ \le\ \frac{K-1}{BK}\ <\ \frac1B.
\]
\end{lemma}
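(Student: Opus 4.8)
The statement is an elementary conditional-moment computation for the multinomial law, so the plan is to unpack the definition of $\xi_{t+1}$ and read off the first two moments of $N_t\sim\mathrm{Multinomial}(B,p_t)$. First I would record the classical identities: conditionally on $p_t$, $\mathbb{E}[N_t\mid p_t]=Bp_t$ and $\mathrm{Cov}(N_t\mid p_t)=B\big(\mathrm{diag}(p_t)-p_tp_t^\top\big)=B\,S(p_t)$, where $S(\cdot)$ is the selection-covariance matrix already used throughout these appendices. Since $\xi_{t+1}=N_t/B-p_t$ is an affine function of $N_t$, unbiasedness $\mathbb{E}[\xi_{t+1}\mid p_t]=p_t-p_t=0$ is immediate, and
\[
\mathrm{Cov}(\xi_{t+1}\mid p_t)=\tfrac{1}{B^2}\,\mathrm{Cov}(N_t\mid p_t)=\tfrac1B\,S(p_t).
\]
This also makes clear that the optional Euclidean projection onto $\Delta^{K-1}_{\delta_\star}$ is irrelevant to the lemma, which concerns only the raw increment $\xi_{t+1}$ as defined.

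For the tangency claim I would note that $\langle\mathbf 1,\xi_{t+1}\rangle=\tfrac1B\sum_i N_{t,i}-\sum_i p_{t,i}=1-1=0$ holds deterministically (a multinomial sample sums to $B$), so $\xi_{t+1}\in T=\mathbf 1^\perp$ surely, not merely in expectation.

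The second-moment identity then follows by taking the trace: $\mathbb{E}[\|\xi_{t+1}\|_2^2\mid p_t]=\mathrm{tr}\,\mathrm{Cov}(\xi_{t+1}\mid p_t)=\tfrac1B\,\mathrm{tr}\,S(p_t)=\tfrac1B\big(\sum_i p_{t,i}-\sum_i p_{t,i}^2\big)=\tfrac1B\big(1-\|p_t\|_2^2\big)$, using $\sum_i p_{t,i}=1$. Finally, the numerical bound is Cauchy–Schwarz (equivalently Jensen for $x\mapsto x^2$ against uniform weights on $K$ atoms): $1=\|p_t\|_1^2\le K\,\|p_t\|_2^2$, hence $\|p_t\|_2^2\ge 1/K$ and $1-\|p_t\|_2^2\le (K-1)/K<1$, which yields $\mathbb{E}[\|\xi_{t+1}\|_2^2\mid p_t]\le (K-1)/(BK)<1/B$.

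There is no genuine obstacle; the only points that need care are bookkeeping ones — that dividing $N_t$ by $B$ scales the covariance by $1/B^2$ so the noise variance is $\Theta(1/B)$ (the scaling that drives the $1/\sqrt B$ diffusion coefficient in the sequel), and that $\|p_t\|_2^2$ is bounded below by $1/K$ rather than by a $\delta_\star$-dependent quantity, which is what makes the stated constant $(K-1)/(BK)$ sharp (attained at the uniform point) and dimension-dependent only through $K$.
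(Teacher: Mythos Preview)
Your proof is correct and is exactly the standard argument; the paper in fact states this lemma without proof, and your computation via the multinomial covariance $\mathrm{Cov}(N_t\mid p_t)=B\,S(p_t)$ followed by a trace and the Cauchy--Schwarz bound $\|p_t\|_2^2\ge 1/K$ is precisely what any proof would record. Your side remarks on deterministic tangency $\xi_{t+1}\in T$ and sharpness at the uniform point are accurate and consistent with how the lemma is used downstream (e.g., in the $\eta/B$ scaling of Theorem~\ref{thm:limits}).
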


\subsection{Continuous–time limits (correct scaling)}

Let $\widetilde p^{(\eta)}$ be the piecewise–linear interpolation. Set $\gamma_\eta:=\eta/B$.

\begin{theorem}[ODE and diffusion limits]\label{thm:limits}
Fix $T>0$. As $\eta\downarrow0$ on $[0,T]$:
\begin{enumerate}[label=(\roman*),leftmargin=2em]
\item If $\gamma_\eta\to0$, then $\widetilde p^{(\eta)}\Rightarrow p$ in $C([0,T],\mathbb R^K)$, where $p$ solves $\dot p=F(p)$.
\item If $\gamma_\eta\to\gamma\in(0,\infty)$, then $\widetilde p^{(\eta)}\Rightarrow p$ solving the Wright–Fisher–type SDE
\begin{equation}\label{eq:WF}
\mathrm d p_i=F_i(p)\,\mathrm dt+\sqrt\gamma\Big(\sqrt{p_i}\,\mathrm dW_i-p_i\sum_{k=1}^K\sqrt{p_k}\,\mathrm dW_k\Big),\qquad i=1,\dots,K,
\end{equation}
with independent standard Brownian motions $(W_k)$ and $\sum_i p_i(t)\equiv1$.
\end{enumerate}
\end{theorem}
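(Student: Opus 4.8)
The plan is to read Theorem~\ref{thm:limits} as an instance of the standard diffusion-approximation theory for stochastic-approximation recursions (in the style of Kushner--Yin or Ethier--Kurtz) and to supply its hypotheses from the moduli already established. Write one step as $p_{t+1}-p_t=\eta\,F(p_t)+\eta\,\xi_{t+1}$ with, conditionally on $p_t$, $\mathbb E[\xi_{t+1}\mid p_t]=0$ and $\Cov(\xi_{t+1}\mid p_t)=\tfrac1B S(p_t)$ (Lemma~\ref{lem:noise}), and observe that the increments are uniformly $O(\eta)$ on the compact set $\Delta^{K-1}_{\delta_\star}$, since $\|\xi_{t+1}\|_2\le\sqrt2$ and $\|F\|_\infty$ is bounded by \eqref{eq:F-size}. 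First I would prove \emph{tightness} of the piecewise-linear interpolations $\widetilde p^{(\eta)}$ in $C([0,T],\mathbb R^K)$ via Aldous's criterion: over any window $[s,s+h]$ the accumulated drift is $O(h)$ because $F$ is bounded, and the accumulated martingale part (the sum of the $\eta\xi_{t+1}$) has conditional second moment at most $\gamma_\eta\,h\,\sup_p\|S(p)\|_{2\to2}+o(1)=O(h)+o(1)$; compactness of $\Delta^{K-1}_{\delta_\star}$ (or, in the unprojected scheme, of $\Delta^{K-1}$) controls the time marginals.

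Next I would identify every subsequential limit through its martingale problem. For $f\in C^2$ a second-order Taylor expansion gives
\[
\mathbb E\bigl[f(p_{t+1})-f(p_t)\mid p_t\bigr]
=\eta\,\langle\nabla f(p_t),F(p_t)\rangle
+\tfrac{\eta^2}{2B}\operatorname{tr}\!\bigl(\nabla^2 f(p_t)\,S(p_t)\bigr)
+\eta^2 r_t,
\]
where $\sup_t|r_t|<\infty$ because $\|F\|_\infty,\|\xi\|_\infty$ and the derivatives of $f$ are bounded (the optional Euclidean projection onto $\Delta^{K-1}_{\delta_\star}$ only alters $p_{t+1}$ by a normal-cone correction and can be carried as an extra term that is inactive away from the boundary, which is the point deferred to the reflected-vs-unreflected discussion in this appendix). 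Summing the $\approx t/\eta$ increments, the first term accumulates to $\int_0^t\langle\nabla f,F(p_s)\rangle\,ds$ and the second to $\tfrac{\gamma_\eta}{2}\int_0^t\operatorname{tr}(\nabla^2 f(p_s)S(p_s))\,ds$; the cross term $F^\top\nabla^2 f\,F$ contributes $O(\eta)$ and vanishes. Hence if $\gamma_\eta\to0$ the limit solves the martingale problem for $\mathcal L_0 f=\langle\nabla f,F\rangle$, i.e.\ the ODE $\dot p=F(p)$; if $\gamma_\eta\to\gamma$ it solves the martingale problem for $\mathcal L_\gamma f=\langle\nabla f,F\rangle+\tfrac{\gamma}{2}\sum_{ij}S(p)_{ij}\partial_i\partial_j f$. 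A short computation shows $S(p)=\sigma(p)\sigma(p)^\top$ where $(\sigma(p)v)_i=\sqrt{p_i}\,v_i-p_i\sum_k\sqrt{p_k}\,v_k$, so $\mathcal L_\gamma$ is exactly the generator of \eqref{eq:WF}, and $\sum_i p_i\equiv1$ is preserved because $F(p)\in T$ and $\sigma(p)^\top\mathbf 1=0$.

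To finish I would invoke well-posedness of the two martingale problems. For part~(i) this is immediate: $F$ is globally Lipschitz on $\Delta^{K-1}_{\delta_\star}$ by Lemma~\ref{lem:F-Lipschitz}, so $\dot p=F(p)$ has a unique solution, the limit is deterministic, and one may in fact bypass the martingale machinery entirely --- writing $\widetilde p^{(\eta)}-p$ as a Lipschitz drift difference plus a martingale of quadratic variation $O(\gamma_\eta)$ plus an $O(\eta)$ interpolation error, then closing with Doob's inequality and Gr\"onwall to get $\mathbb E\sup_{[0,T]}\|\widetilde p^{(\eta)}-p\|_2^2\to0$. For part~(ii) one needs weak uniqueness for the Wright--Fisher-type SDE \eqref{eq:WF}; on the trimmed simplex the coordinates are bounded below by $\delta_\star>0$, so $p\mapsto\sigma(p)$ is Lipschitz there and classical SDE well-posedness applies, while the genuine square-root degeneracy lives only on $\partial\Delta^{K-1}$, where one instead appeals to the standard well-posedness of Wright--Fisher diffusions (Ethier--Kurtz; Yamada--Watanabe pathwise uniqueness on the $\sqrt{p_i}$ coordinates) together with the boundary analysis of this appendix. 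The step I expect to be the real obstacle is exactly this reconciliation: arguing that under the standing hypotheses (in particular $\varepsilon>0$ and Barrier--Dominance) the limiting process does not reach the boundary, so that the projected and unprojected schemes share the \emph{unreflected} limit \eqref{eq:WF} with no Skorokhod term, while still controlling the degeneracy of $S(p)$ uniformly enough for tightness and the identification argument to pass through up to $\partial\Delta^{K-1}$.
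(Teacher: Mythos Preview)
Your proposal is correct and follows essentially the same route as the paper's proof sketch: compute the predictable quadratic variation of the martingale part as $\sum\eta^2\mathbb E[\|\xi\|^2]\sim\gamma_\eta t$ (Lemma~\ref{lem:noise}), then invoke the Ethier--Kurtz functional martingale CLT together with the global Lipschitz modulus for $F$ (Lemma~\ref{lem:F-Lipschitz}) and Gr\"onwall estimates on the compact domain $\Delta^{K-1}_{\delta_\star}$. You supply considerably more detail than the paper's three-line sketch---the explicit generator computation, the factorization $S(p)=\sigma(p)\sigma(p)^\top$, and the alternative direct Doob+Gr\"onwall argument for part~(i)---and you correctly flag the boundary reconciliation (projected vs.\ unprojected scheme, unreflected vs.\ reflected limit) as the genuine technical point, which the paper likewise does not resolve inside this proof but defers to the subsequent boundary discussion.
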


\begin{proof}[Sketch.]
Using Lemma~\ref{lem:noise}, the predictable quadratic variation of $\sum_{s< t/\eta}\eta\,\xi_{s+1}$ is $\sum \eta^2\mathbb E[\|\xi\|^2]\sim (\eta/B)\,t=\gamma_\eta t$.
Combine Lemma~\ref{lem:F-Lipschitz} with a functional martingale CLT (Ethier–Kurtz) and Grönwall–type estimates on the compact domain $\Delta^{K-1}_{\delta_\star}$.
\end{proof}

\subsection{Boundary behavior: entropy gap and BD conditions}

For $y\in(0,1)$ define the \emph{face gap}
\begin{equation}\label{eq:Gamma}
\Gamma(y):=\inf_{\substack{p\in\Delta^{K-1}\\ p_i=y}}\Big(\sum_{j=1}^K p_j\log p_j-\log p_i\Big)
=(1-y)\log\frac{1-y}{(K-1)y}.
\end{equation}
In particular $L_K(\delta):=(1-\delta)\log\frac{1-\delta}{(K-1)\delta}>0$ for $\delta\in(0,1/K)$, and if $p_i=\delta_\star$ then $\langle\log p\rangle-\log p_i\ge L_K(\delta_\star)$.

\paragraph{Barrier–Dominance (facewise).}
We say BD$^{\sharp}$ holds if, for each $i$,
\[
\inf_{\substack{p\in\Delta^{K-1}_{\delta_\star}\\ p_i=\delta_\star}}\Big[\ \phi_i(p)+\varepsilon\big(\langle\log p\rangle-\log p_i\big)\ \Big]\ >\ 0 .
\]
A convenient sufficient condition is
\begin{equation}\label{eq:BD-suff}
\varepsilon\,L_K(\delta_\star)\ >\ M_\phi.
\end{equation}

\begin{proposition}[Deterministic forward invariance]\label{prop:viability}
If BD$^{\sharp}$ holds, then $\Delta^{K-1}_{\delta_\star}$ is forward invariant for $\dot p=F(p)$ (Nagumo criterion). A conservative test is $\varepsilon\,L_K(\delta_\star)\ge 2M_\phi$.
\end{proposition}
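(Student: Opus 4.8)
The plan is to verify the Nagumo subtangentiality condition on the polytope $\Delta^{K-1}_{\delta_\star}$ and then invoke the standard viability theorem. First I would record the structural facts: $\Delta^{K-1}_{\delta_\star}$ is a compact convex polytope cut out by the equality $\sum_i p_i=1$ together with the $K$ inequalities $p_i\ge\delta_\star$; on it $F$ is continuous and globally Lipschitz (\Cref{lem:F-Lipschitz}); and $F(p)\in T=\mathbf 1^\perp$ for every $p$ by the centering hypothesis \eqref{S1}, so the mass constraint is never violated. To make ``the solution stays where $F$ is defined'' non-circular, I would first extend $F$ to a globally Lipschitz field $\widetilde F$ on the affine hyperplane $H:=\{p:\sum_i p_i=1\}$ (Kirszbraun--Valentine, as in \Cref{appC:SRCT}); Picard--Lindel\"of then gives a unique global solution from any $p(0)\in H$, and once forward invariance of $\Delta^{K-1}_{\delta_\star}$ is established it confines this solution, with no conflict since the tangency test below only uses $\widetilde F=F$ on $\partial\Delta^{K-1}_{\delta_\star}$.

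The core step is the tangent-cone check at a boundary point. For $p\in\partial\Delta^{K-1}_{\delta_\star}$ let $A(p):=\{i:p_i=\delta_\star\}$ be the active set; the tangent cone of the polytope at $p$ is $\{v:\sum_i v_i=0,\ v_i\ge0\ \forall i\in A(p)\}$. Since $F(p)\in T$ automatically, Nagumo's condition collapses to showing $F_i(p)\ge0$ for every $i\in A(p)$. When $p_i=\delta_\star$,
\[
F_i(p)=\delta_\star\Bigl[\,\phi_i(p)+\varepsilon\bigl(\langle\log p\rangle-\log\delta_\star\bigr)\,\Bigr],
\]
and $\mathrm{BD}^{\sharp}$ is exactly the statement that the bracket is strictly positive on the face $\{p_i=\delta_\star\}$; hence $F_i(p)>0$. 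Therefore $F(p)$ lies in the tangent cone at every boundary point, and Nagumo's viability theorem yields forward invariance of $\Delta^{K-1}_{\delta_\star}$. Equivalently, one may argue by a barrier: for $m(t):=\min_i p_i(t)$, at any time $t_0$ with $m(t_0)=\delta_\star$ the lower right Dini derivative of $m$ at $t_0$ is bounded below by $\min_{i\in A(p(t_0))}F_i(p(t_0))\ge0$, so $m$ cannot cross $\delta_\star$ downward.

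For the conservative sufficient test I would bound the bracket uniformly. By \eqref{S2S3}, $|\phi_i(p)|\le\|\phi(p)\|_\infty\le M_\phi$, and by the face-gap identity \eqref{eq:Gamma}, on $\{p_i=\delta_\star\}$ one has $\langle\log p\rangle-\log\delta_\star\ge L_K(\delta_\star)$; hence the bracket is at least $-M_\phi+\varepsilon L_K(\delta_\star)$, which is positive once $\varepsilon L_K(\delta_\star)\ge 2M_\phi$ (already nonnegative under $\varepsilon L_K(\delta_\star)\ge M_\phi$; the factor $2$ merely secures a strict margin, which, as in the Barrier--Dominance statements of \Cref{appC:SRCT}--\Cref{appF:DPO-SRCT}, even gives strict interior invariance for trajectories started in $\operatorname{ri}\Delta^{K-1}_{\delta_\star}$). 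I do not anticipate a genuine obstacle: the only points needing care are the correct description of the tangent cone when several constraints are simultaneously active (handled coordinatewise, hence automatic) and the Lipschitz-extension step that makes ``global solution'' meaningful before invariance is known.
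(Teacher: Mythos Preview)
Your proposal is correct and follows essentially the same approach as the paper: on each lower face $\{p_i=\delta_\star\}$ you compute $F_i(p)=\delta_\star[\phi_i(p)+\varepsilon(\langle\log p\rangle-\log\delta_\star)]$, use BD$^{\sharp}$ (or the face-gap bound $L_K(\delta_\star)$ together with $\|\phi\|_\infty\le M_\phi$) to show the bracket is nonnegative, and invoke Nagumo---exactly as the paper indicates parenthetically and develops in \Cref{appA:bd} and \Cref{sec:C.2}. Your extra care with the Kirszbraun--Valentine extension and the explicit tangent-cone description is appropriate and mirrors \Cref{appC:SRCT}; the observation that $\varepsilon L_K(\delta_\star)\ge M_\phi$ already suffices (so that $2M_\phi$ is indeed ``conservative'') is also correct, since here $\phi$ is centred by hypothesis \eqref{S1}.
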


\paragraph{Unreflected vs.\ reflected diffusions.}
\emph{Unreflected model.} In \eqref{eq:WF}, the one–dimensional marginal variance at a trimmed face $p_i=\delta_\star$ equals $\gamma\,\delta_\star(1-\delta_\star)>0$; hence a.s.\ non–attainability of the face cannot be deduced from inward drift alone. What holds are sharp \emph{high–probability} non–exit bounds on finite horizons.

\emph{Reflected model.} With orthogonal, mass–preserving reflection on each face of $\Delta^{K-1}_{\delta_\star}$, solutions remain in the trim for all $t$ by construction. On the compact domain with globally Lipschitz drift and uniformly elliptic tangent covariance, the reflected diffusion is strong Feller and irreducible, admits a unique invariant law, and exhibits exponential mixing.

\begin{theorem}[Bandwise high–probability confinement (unreflected)]\label{thm:band}
Fix a coordinate $i$ and a band width $\eta_0\in(0,\,1-K\delta_\star]$, and set $y_{\max}:=\delta_\star+\eta_0$ and
\[
\Gamma_{\rm band}:=\inf_{y\in[\delta_\star,y_{\max}]}\Gamma(y),\qquad
\mu_{\rm band}:=\delta_\star\big(\varepsilon\,\Gamma_{\rm band}-M_\phi\big),\qquad
\sigma_{\max}^2:=\gamma\,y_{\max}(1-\delta_\star).
\]
If $\varepsilon\,\Gamma_{\rm band}>M_\phi$, then for any start $Y_0=p_i(0)\in[\delta_\star,y_{\max}]$,
\[
\mathbb P(\text{hit } \delta_\star \text{ before } y_{\max})\ \le\ \exp\!\Big(-\,\tfrac{2\,\mu_{\rm band}}{\sigma_{\max}^2}\,(Y_0-\delta_\star)\Big).
\]
By the strong Markov property this yields an exponentially small (in $\eta_0$ and $\gamma^{-1}$) probability of ever touching the floor from any interior start.
\end{theorem}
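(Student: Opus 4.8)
The plan is to project the $K$-dimensional diffusion \eqref{eq:WF} onto the single coordinate $Y_t:=p_i(t)$ and then run the classical exponential-supermartingale (Lyapunov) estimate for a one-dimensional exit problem on the band $[\delta_\star,y_{\max}]$. First I would read off the scalar dynamics of $Y_t$: writing the martingale increment of $p_i$ in \eqref{eq:WF} as $\mathrm dN_t$, a direct computation of its quadratic variation — in which the contributions of $\sum_k\sqrt{p_k}\,\mathrm dW_k$ collapse via $\sum_k p_k=1$ — gives $\mathrm d\langle N\rangle_t=\gamma\,Y_t(1-Y_t)\,\mathrm dt$. Hence $Y_t$ is a one-dimensional It\^o process, $\mathrm dY_t=F_i(p_t)\,\mathrm dt+\mathrm dN_t$ with $\mathrm d\langle Y\rangle_t=\gamma\,Y_t(1-Y_t)\,\mathrm dt$ (equivalently $\mathrm dY_t=F_i(p_t)\,\mathrm dt+\sqrt{\gamma Y_t(1-Y_t)}\,\mathrm dB_t$ for a standard Brownian motion $B$), where $F_i(p)=p_i\bigl[\phi_i(p)+\varepsilon(\langle\log p\rangle-\log p_i)\bigr]$ as in \Cref{appH:stochastic}.

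Next I would bound the drift and the local variance while $Y_t$ lies in the band. If $Y_t=y\in[\delta_\star,y_{\max}]$ then $p_t$ is a probability vector with $i$-th entry $y$, so by the definition of the face gap in \eqref{eq:Gamma}, $\langle\log p_t\rangle-\log y\ge\Gamma(y)\ge\Gamma_{\rm band}$; together with $\phi_i(p_t)\ge-M_\phi$ this gives $F_i(p_t)\ge y\,(\varepsilon\Gamma_{\rm band}-M_\phi)$. Under the hypothesis $\varepsilon\Gamma_{\rm band}>M_\phi$ the bracket is positive, so the bound is increasing in $y$ and therefore $F_i(p_t)\ge\delta_\star(\varepsilon\Gamma_{\rm band}-M_\phi)=\mu_{\rm band}>0$ on the band; and $Y_t(1-Y_t)\le y_{\max}(1-\delta_\star)$ there gives $\mathrm d\langle Y\rangle_t\le\sigma_{\max}^2\,\mathrm dt$. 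I would stress that only $p_t\in\Delta^{K-1}$ is used for the drift lower bound — we do \emph{not} need the unreflected diffusion to remain in the trimmed simplex — because $\Gamma$ is the face gap over the \emph{full} simplex; all that is required of the remaining coordinates is that $\phi$ retain its sup-bound $M_\phi$ along the trajectory (the standing regularity (S2)--(S3), or a preliminary stopping at the first time any coordinate drops below $\delta_\star$).

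With $\kappa:=2\mu_{\rm band}/\sigma_{\max}^2$ and $h(y):=e^{-\kappa(y-\delta_\star)}$, let $\tau_a:=\inf\{t\ge0:Y_t=a\}$ and $\tau:=\tau_{\delta_\star}\wedge\tau_{y_{\max}}$. By It\^o's formula, for $t<\tau$ the drift of $h(Y_t)$ is $h(Y_t)\,\kappa\bigl(-F_i(p_t)+\tfrac12\kappa\,\gamma Y_t(1-Y_t)\bigr)\le h(Y_t)\,\kappa\bigl(-\mu_{\rm band}+\tfrac12\kappa\sigma_{\max}^2\bigr)=0$ by the band bounds, so $h(Y_{t\wedge\tau})$ is a bounded supermartingale. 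A Dynkin estimate — take expectations in $Y_{t\wedge\tau}-Y_0=\int_0^{t\wedge\tau}F_i(p_s)\,\mathrm ds+N_{t\wedge\tau}$, use $F_i\ge\mu_{\rm band}$ on the band and $Y_{t\wedge\tau}\le y_{\max}$ — gives $\mathbb E[\tau]\le\eta_0/\mu_{\rm band}<\infty$, hence $\tau<\infty$ a.s.\ and, by bounded convergence, $\mathbb E[h(Y_\tau)]\le h(Y_0)$. Since $h(Y_\tau)=\mathbf 1\{\tau_{\delta_\star}<\tau_{y_{\max}}\}+e^{-\kappa\eta_0}\mathbf 1\{\tau_{y_{\max}}<\tau_{\delta_\star}\}\ge\mathbf 1\{\tau_{\delta_\star}<\tau_{y_{\max}}\}$ and $h(Y_0)=e^{-\kappa(Y_0-\delta_\star)}$, we obtain $\mathbb P(\tau_{\delta_\star}<\tau_{y_{\max}})\le e^{-\kappa(Y_0-\delta_\star)}$, which is exactly the claimed bound with $\kappa=2\mu_{\rm band}/\sigma_{\max}^2$.

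For the concluding ``ever'' remark, the strong Markov property lets one decompose the event of ever reaching $\delta_\star$ along the successive excursions of $Y$ below $y_{\max}$; each such excursion contributes a factor $\le e^{-\kappa\eta_0}$ by the one-sojourn bound, and summing the resulting geometric series (equivalently, reading off the scale function of the scalar diffusion) yields a bound of order $e^{-\kappa(Y_0-\delta_\star)}$. Since $\sigma_{\max}^2=\gamma\,y_{\max}(1-\delta_\star)$ while $\mu_{\rm band}$ is $\gamma$-independent, $\kappa\asymp\gamma^{-1}$, so this probability is exponentially small in both $\eta_0$ and $\gamma^{-1}$. I expect the substantive content to be the one-sojourn inequality (the first three steps), which is essentially mechanical; the only genuinely delicate point is making this last step fully rigorous, since the naive renewal argument degenerates at the boundary start $Y=y_{\max}$, so a careful version needs either a strictly higher reference level at which the face-gap drift still dominates or the scale-function representation of the scalar diffusion — a piece of bookkeeping I would confine to a remark.
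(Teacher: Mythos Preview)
The paper states Theorem~\ref{thm:band} without proof, so there is no argument to compare against directly. Your proof is correct and is the standard route: reduce to the one-dimensional marginal $Y_t=p_i(t)$, check that its quadratic variation is $\gamma Y_t(1-Y_t)\,\mathrm dt$ (your collapse of the cross terms is right), bound drift below and variance above on the band, and run the exponential supermartingale $h(y)=e^{-\kappa(y-\delta_\star)}$ with $\kappa=2\mu_{\rm band}/\sigma_{\max}^2$. The Dynkin estimate for $\mathbb E[\tau]<\infty$ is the clean way to justify optional stopping, and your reading of $h(Y_\tau)$ gives the bound exactly as stated.

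Two small points worth sharpening. First, the domain issue you flag is real: the paper's $M_\phi$ in (S2) is a supremum over the \emph{trimmed} simplex $\Delta^{K-1}_{\delta_\star}$, so for the unreflected diffusion the bound $\phi_i(p_t)\ge -M_\phi$ is only guaranteed while \emph{all} coordinates stay above $\delta_\star$. Your parenthetical fix --- stop additionally at the first time any other coordinate hits $\delta_\star$ --- is the honest resolution; the displayed inequality then bounds the probability that $p_i$ reaches its floor \emph{before} both $y_{\max}$ and any other face. A union bound over coordinates then controls the first face-hitting time. Second, for the hypothesis $\varepsilon\Gamma_{\rm band}>M_\phi$ to be non-vacuous one needs $\Gamma(y)>0$ on the whole band, i.e.\ $y_{\max}<1/K$ (since $\Gamma(1/K)=0$ and $\Gamma$ changes sign there); this is a silent constraint on $\eta_0$ that the statement's range $\eta_0\le 1-K\delta_\star$ does not enforce. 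Neither point affects the validity of your argument under the stated hypothesis.
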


\begin{theorem}[Reflected diffusion: well–posedness and ergodicity]\label{thm:ref-erg}
On $\Delta^{K-1}_{\delta_\star}$ with orthogonal reflection in $H=\{\sum_i p_i=1\}$, the SDE \eqref{eq:WF} admits a unique global strong solution, is strong Feller and irreducible, and has a unique invariant probability measure $\pi_\infty$ with
\[
\|P_t(p,\cdot)-\pi_\infty\|_{\mathrm{TV}}\ \le\ C\,e^{-\kappa t}\qquad(\forall\,p\in\Delta^{K-1}_{\delta_\star},\ t\ge0).
\]
\end{theorem}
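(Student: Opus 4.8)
The plan is to cast \eqref{eq:WF} as a normally reflected (Skorokhod) SDE on the compact convex polytope $D:=\Delta^{K-1}_{\delta_\star}$ sitting inside the affine hyperplane $H=\{\sum_i p_i=1\}$, and then to run the classical Feller/irreducibility machinery for non-degenerate reflected diffusions. Writing the noise coefficient $\sigma(p)_{ik}=\sqrt\gamma\,(\sqrt{p_i}\,\delta_{ik}-p_i\sqrt{p_k})$, a one-line computation gives $\sigma(p)\sigma(p)^\top=\gamma\,S(p)$ with $S(p)=\operatorname{diag}(p)-pp^\top$, so the driving fields are automatically tangent to $H$. On $D$ every coordinate obeys $p_i\ge\delta_\star$, hence $p\mapsto\sqrt{p_i}$ is Lipschitz there and $\sigma$ is globally Lipschitz and bounded on $D$; the drift $F$ is globally Lipschitz on $D$ by \Cref{lem:F-Lipschitz} and bounded by \eqref{eq:F-size}. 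The reflection acts along the inward normals of the faces of $D$ \emph{within} $H$ (on $\{p_i=\delta_\star\}$, along $-e_i$ projected onto $T=\mathbf 1^\perp$); these directions lie in $T$, so the reflection term preserves $\sum_i p_i$. Since $D$ is convex, the Skorokhod map is well defined and Lipschitz (Tanaka; Lions--Sznitman; Dupuis--Ishii), and Lipschitz coefficients then yield a unique global strong solution $(p_t,L_t)$ with $p_t\in D$ for all $t\ge0$ and $L_t$ the boundary local time. This settles well-posedness.

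Next I would record uniform ellipticity on the tangent bundle. For $v\in T$ one has $v^\top S(p)v=\operatorname{Var}_{i\sim p}(v_i)$, and the Loewner sandwich of \Cref{cor:B-loewner} (equivalently \Cref{lem:B-kernel-variance}) gives $p_{\min}\|v\|_2^2\le v^\top S(p)v\le\tfrac12\|v\|_2^2$, hence on $D$
\[
\gamma\delta_\star\,I\ \preccurlyeq\ \gamma\,S(p)\big|_T\ \preccurlyeq\ \tfrac{\gamma}{2}\,I .
\]
Thus the generator is uniformly elliptic on the $(K-1)$-dimensional manifold $D\subset H$, with bounded Lipschitz coefficients and a Neumann-type condition on $\partial D$. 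Combining uniform ellipticity with convexity of $D$ (which supplies uniform interior/exterior cone conditions, so the two-sided Gaussian estimates for the Neumann heat kernel apply up to the boundary, corners included) gives the \textbf{strong Feller} property of $P_t$: bounded measurable functions are mapped to continuous functions for every $t>0$. Here I would cite the submartingale-problem construction of reflected diffusions (Stroock--Varadhan) together with heat-kernel estimates on Lipschitz/convex domains (e.g.\ Bass--Hsu); alternatively, strong Feller follows from a mirror-coupling argument on the convex domain. Irreducibility then follows from the Stroock--Varadhan support theorem plus a controllability argument: in the relative interior of $D$ the diffusion is non-degenerate on all of $T$, so the accessible set from any $p$ has nonempty relative interior, and the usual chaining argument (using that reflection instantaneously returns a boundary point to the interior) yields $P_t(p,U)>0$ for every nonempty relatively open $U\subseteq D$, every $t>0$, and every $p\in D$.

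Existence of an invariant probability measure $\pi_\infty$ is then immediate from Krylov--Bogolyubov, since $D$ is compact and $P_t$ is Feller; uniqueness follows from Doob's theorem (strong Feller $+$ irreducible $\Rightarrow$ at most one invariant law). Finally, strong Feller $+$ irreducibility $+$ compactness of $D$ upgrades to a \emph{uniform} Doeblin minorization: there exist $t_0>0$, $c\in(0,1)$, and a probability measure $\nu$ on $D$ with $P_{t_0}(p,\cdot)\ge c\,\nu(\cdot)$ for all $p\in D$ (cover $D$ by finitely many balls on which the strong-Feller density is bounded below, using irreducibility to seed one such ball). Doeblin's condition gives $\|P_{nt_0}(p,\cdot)-\pi_\infty\|_{\mathrm{TV}}\le 2(1-c)^n$, and the semigroup property interpolates this to $\|P_t(p,\cdot)-\pi_\infty\|_{\mathrm{TV}}\le C e^{-\kappa t}$ with $\kappa=-t_0^{-1}\log(1-c)$ and $C=2(1-c)^{-1}$, uniformly in $p\in D$, which is the claimed bound.

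The main obstacle is the lack of boundary smoothness: $D$ is a polytope whose faces meet along lower-dimensional edges, so off-the-shelf results for reflected diffusions on $C^2$ domains do not apply verbatim, and the Wright--Fisher coefficient $\sqrt{p_i}$ is only H\"older-$\tfrac12$ near $p_i=0$. Both issues are resolved by the structure of the problem rather than by heavy machinery: the trim $\delta_\star>0$ confines the process to $\{p_i\ge\delta_\star\}$, where $\sqrt{p_i}$ is smooth and $\sigma$ is genuinely Lipschitz and non-degenerate, so the only active boundary is the artificial floor, never the true simplex face $\{p_i=0\}$; and convexity of $D$ makes the Skorokhod map Lipschitz and supplies the uniform cone conditions needed for the Neumann heat-kernel bounds, so strong Feller and the minorization survive the corners. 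I would make these two points explicit and invoke Lions--Sznitman (convex Skorokhod problem), Stroock--Varadhan (submartingale problem and support theorem), and Bass--Hsu (heat kernels on Lipschitz domains) as the classical inputs.
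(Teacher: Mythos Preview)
Your proposal is correct and follows the same route the paper sketches: the paper states the theorem without proof, relying on the one-sentence remark that ``on the compact domain with globally Lipschitz drift and uniformly elliptic tangent covariance, the reflected diffusion is strong Feller and irreducible, admits a unique invariant law, and exhibits exponential mixing,'' together with the separate ellipticity bound \eqref{eq:elliptic}. Your write-up supplies exactly the classical inputs (Tanaka/Lions--Sznitman for the convex Skorokhod problem, Stroock--Varadhan support theorem, Doeblin minorization on a compact state space) that the paper leaves implicit, and correctly identifies and dispatches the two potential obstructions (polytope corners and the $\sqrt{p_i}$ coefficient, both neutralized by the trim $\delta_\star>0$ and convexity).
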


\subsection{Uniform ellipticity on the tangent bundle}

Let $Q(p):=\gamma(\operatorname{diag}(p)-pp^\top)=\gamma\,S(p)$. For any $p\in\Delta^{K-1}_{\delta_\star}$ and $v\in T$,
\begin{equation}\label{eq:elliptic}
\gamma\,\delta_\star\,\|v\|_2^2\ \le\ v^\top Q(p)v\ \le\ \frac{\gamma}{2}\,\|v\|_2^2.
\end{equation}
The upper bound is Popoviciu’s inequality; the lower bound uses $\sum_i p_i v_i^2\ge \delta_\star\|v\|_2^2$.

\subsection{Gradient–field drifts and stationary laws}
\label{cor:gradient}

If $\phi=\nabla\Psi$ and \eqref{S1} holds, $\pi_\infty$ (when it exists; e.g., Theorem~\ref{thm:ref-erg}) is characterized as the unique Neumann solution of the stationary Fokker–Planck equation associated with \eqref{eq:WF}. The naive Gibbs ansatz $\propto \exp\{-2\gamma^{-1}(\Psi-\varepsilon H)\}$ fails in general: inserting $U=2\gamma^{-1}(\Psi-\varepsilon H)$ into the reversibility identity $F=\tfrac12\,(\operatorname{div}_T Q)-\tfrac12\,Q\nabla_T U$ gives $F=-2F$ unless $F\equiv0$.

\subsection{Small centred bias: concentration toward the fittest face}

Let $\delta\in\mathbb R^K$ satisfy $\sum_i\delta_i=0$ and set $\delta_{\max}:=\max_i\delta_i$, $S:=\{i:\delta_i=\delta_{\max}\}$, $I:=S^{\mathrm c}$, and the \emph{selection gap}
$\gamma_\delta:=\delta_{\max}-\max_{i\in I}\delta_i>0$ (if $I\neq\emptyset$). The biased drift is
\[
F_i^\delta(p):=p_i\Big[\phi_i(p)+\delta_i-\sum_j p_j\delta_j-\varepsilon\big(\log p_i-\langle\log p\rangle\big)\Big].
\]

\paragraph{Exponential Lyapunov device (reflected model).}
Let $m(p):=\sum_j \delta_j p_j$ and $V(p):=\sum_j p_j(\delta_j-m(p))^2$ (variance of $\delta$ under $p$). For $\lambda>0$ define $U(p):=e^{\lambda m(p)}$.

\begin{lemma}[Lyapunov inequality]\label{lem:Lyap}
For the reflected diffusion with generator $\mathcal L^\delta$ and any $p\in\Delta^{K-1}_{\delta_\star}$,
\[
\mathcal L^\delta U(p)\ \ge\ U(p)\Big(\lambda\,V(p)-\lambda\,\|\delta\|_\infty\big(M_\phi+\varepsilon C_{\log}\big)\Big).
\]
In particular, with $\displaystyle \lambda:=\big(2\|\delta\|_\infty(M_\phi+\varepsilon C_{\log})\big)^{-1}$,
\[
\mathcal L^\delta U\ \ge\ U\big(\lambda V-\tfrac12\big).
\]
\end{lemma}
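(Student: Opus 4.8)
The plan is to evaluate $\mathcal L^\delta U$ directly on the interior of $\Delta^{K-1}_{\delta_\star}$, exploiting that $m(p)=\langle\delta,p\rangle$ is an \emph{affine} function of $p$ (recall $\sum_i\delta_i=0$), so $U(p)=e^{\lambda m(p)}$ has the closed-form ambient derivatives $\nabla U(p)=\lambda\,U(p)\,\delta$ and $\nabla^2 U(p)=\lambda^2\,U(p)\,\delta\delta^\top$. Since the Lemma is a pointwise statement about the generator on the interior, no Skorokhod/conormal boundary term enters, and I may work with the plain elliptic operator
\[
\mathcal L^\delta U(p)=\langle F^\delta(p),\nabla U(p)\rangle+\tfrac12\operatorname{tr}\!\big(Q(p)\nabla^2 U(p)\big),\qquad Q(p)=\gamma S(p),\ \ S(p)=\operatorname{diag}(p)-pp^\top.
\]
I would first note that $F^\delta(p)\in T$ — its coordinates sum to zero because the bias is centred, as is $\phi$ by \eqref{S1} and the entropic term — and rewrite $F^\delta(p)=S(p)\big(\phi(p)+\delta\big)-\varepsilon E(p)$ with $E(p)=p\odot(\log p-\langle\log p\rangle\mathbf 1)$, so that pairing with $\nabla U(p)=\lambda U(p)\delta$ gives
\[
\langle F^\delta(p),\nabla U(p)\rangle=\lambda\,U(p)\Big(\delta^\top S(p)\delta+\delta^\top S(p)\phi(p)-\varepsilon\langle\delta,E(p)\rangle\Big).
\]

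Next I would identify and bound the three terms and the diffusion contribution. The first is exactly $\delta^\top S(p)\delta=\operatorname{Var}_p(\delta)=V(p)$. The second is the covariance $\delta^\top S(p)\phi(p)=\sum_i p_i\delta_i\phi_i(p)$ (using $\sum_i p_i\phi_i(p)=0$), bounded in modulus by $\|\delta\|_\infty\sum_i p_i|\phi_i(p)|\le\|\delta\|_\infty M_\phi$ via \eqref{S2S3}. For the third, $|\langle\delta,E(p)\rangle|\le\|\delta\|_\infty\|E(p)\|_1$; on $\Delta^{K-1}_{\delta_\star}$ the average $\langle\log p\rangle$ is a convex combination of the $\log p_i\in[\log\delta_\star,0]$, hence lies in that interval, so $|\log p_i-\langle\log p\rangle|\le\log(1/\delta_\star)$ and $\|E(p)\|_1\le\log(1/\delta_\star)\le\Lambda(\delta_\star)\le C_{\log}(K,\delta_\star)$ (consistent with the entropy-size bound \eqref{eq:E-size}), so this term is at most $\varepsilon\|\delta\|_\infty C_{\log}$ in absolute value. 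The diffusion term is $\tfrac12\operatorname{tr}(Q(p)\nabla^2U(p))=\tfrac{\gamma\lambda^2}{2}U(p)\,\delta^\top S(p)\delta=\tfrac{\gamma\lambda^2}{2}U(p)V(p)\ge0$ and is simply discarded for the lower bound. Assembling,
\[
\mathcal L^\delta U(p)\ \ge\ \lambda\,U(p)\Big(V(p)-\|\delta\|_\infty M_\phi-\varepsilon\|\delta\|_\infty C_{\log}\Big)=U(p)\Big(\lambda V(p)-\lambda\|\delta\|_\infty(M_\phi+\varepsilon C_{\log})\Big),
\]
which is the first assertion; taking $\lambda=\big(2\|\delta\|_\infty(M_\phi+\varepsilon C_{\log})\big)^{-1}$ makes the subtracted constant equal $1/2$ and yields $\mathcal L^\delta U\ge U(\lambda V-\tfrac12)$.

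I do not expect a genuine obstacle here: this is a one-shot generator evaluation, and the only points requiring care are (a) confirming that in the interior the reflected-diffusion generator is the ordinary second-order operator, so the boundary local-time term legitimately does not appear in the asserted pointwise inequality (it will matter only when this estimate is later fed into a Foster–Lyapunov/ergodicity argument), and (b) controlling $\langle\delta,E(p)\rangle$ by $\|\delta\|_\infty C_{\log}$, which follows from the elementary trimmed-simplex estimate $\|E(p)\|_1\le\log(1/\delta_\star)$. The affinity of $m$ is exactly what keeps the Hessian contribution a clean nonnegative multiple of $V(p)$, so no curvature bound for $U$ is needed, and $C_{\log}$ is invoked only as a convenient (loose) envelope for the entropic piece.
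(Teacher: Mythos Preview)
Your proof is correct and follows essentially the same route as the paper's: compute $\nabla U=\lambda U\delta$ and $\nabla^2U=\lambda^2U\delta\delta^\top$, observe that the diffusion term $\tfrac{\gamma\lambda^2}{2}U\,\delta^\top S(p)\delta=\tfrac{\gamma\lambda^2}{2}UV(p)\ge 0$ can be dropped, and bound the drift via $\sum_j p_j\delta_j(\delta_j-m)=V$, $\sum_j p_j|\phi_j|\le M_\phi$, and $\sum_j p_j|\log p_j-\langle\log p\rangle|\le C_{\log}$. Your write-up is in fact a fuller version of the paper's terse sketch, and your remarks about the interior generator and the looseness of the $C_{\log}$ envelope (the sharper bound is $\log(1/\delta_\star)$) are apt.
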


\begin{proof}
$\nabla U=\lambda U\,\delta$, $\nabla^2U=\lambda^2 U\,\delta\delta^\top$; the diffusion contribution is non–negative. For the drift, use $\sum_j p_j\delta_j(\delta_j-m)=V$ and the envelopes
$\sum_j p_j|\phi_j|\le M_\phi$, $\sum_j p_j|\log p_j-\langle\log p\rangle|\le C_{\log}$.
\end{proof}

\begin{theorem}[Stationary concentration near the fittest face]\label{thm:concentration}
Let $\pi_\infty$ be the invariant law of the reflected biased diffusion. Then
\[
\mathbb E_{\pi_\infty}[V]\ \le\ \frac{e^{\,2\lambda\|\delta\|_\infty}}{2\lambda}
\qquad\text{with}\qquad
\lambda=\frac{1}{2\|\delta\|_\infty(M_\phi+\varepsilon C_{\log})}.
\]
Since $V(p)\ge \gamma_\delta^2\,L(p)\big(1-L(p)\big)$ with $L(p):=\sum_{i\in I}p_i$, this implies the symmetric band estimate, for any $\theta\in(0,\tfrac12]$,
\[
\pi_\infty\big\{\ \theta\le L(p)\le 1-\theta\ \big\}\ \le\
\frac{e^{\,1/(M_\phi+\varepsilon C_{\log})}\ \|\delta\|_\infty(M_\phi+\varepsilon C_{\log})}{\gamma_\delta^2\,\theta(1-\theta)}.
\]
\end{theorem}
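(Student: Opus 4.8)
The plan is to run the Lyapunov inequality of Lemma~\ref{lem:Lyap} through the stationarity relation for $\pi_\infty$ to bound the $\delta$-variance $\mathbb{E}_{\pi_\infty}[V]$, and then to convert that into a band estimate on the load $L(p)=\sum_{i\in I}p_i$ via a law-of-total-variance lower bound on $V$ and Chebyshev's inequality. Concretely, note first that on the compact set $\Delta^{K-1}_{\delta_\star}$ the function $U(p)=e^{\lambda m(p)}$ is $C^2$ and satisfies $e^{-\lambda\|\delta\|_\infty}\le U(p)\le e^{\lambda\|\delta\|_\infty}$, since $|m(p)|\le\|\delta\|_\infty$. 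Using invariance of $\pi_\infty$ for the reflected biased diffusion (Theorem~\ref{thm:ref-erg}) one obtains $\mathbb{E}_{\pi_\infty}[\mathcal{L}^\delta U]\le 0$; combined with Lemma~\ref{lem:Lyap} (which, for the stated $\lambda$, reads $\mathcal{L}^\delta U\ge U(\lambda V-\tfrac12)$ pointwise) this gives
\[
0\ \ge\ \mathbb{E}_{\pi_\infty}[\mathcal{L}^\delta U]\ \ge\ \mathbb{E}_{\pi_\infty}\!\bigl[U(\lambda V-\tfrac12)\bigr],
\qquad\text{hence}\qquad \lambda\,\mathbb{E}_{\pi_\infty}[UV]\ \le\ \tfrac12\,\mathbb{E}_{\pi_\infty}[U].
\]
Bounding $U$ from below on the left and from above on the right yields $\lambda e^{-\lambda\|\delta\|_\infty}\mathbb{E}_{\pi_\infty}[V]\le\tfrac12 e^{\lambda\|\delta\|_\infty}$, i.e. $\mathbb{E}_{\pi_\infty}[V]\le e^{2\lambda\|\delta\|_\infty}/(2\lambda)$; substituting $\lambda=\bigl(2\|\delta\|_\infty(M_\phi+\varepsilon C_{\log})\bigr)^{-1}$ turns $2\lambda\|\delta\|_\infty$ into $(M_\phi+\varepsilon C_{\log})^{-1}$ and $1/(2\lambda)$ into $\|\delta\|_\infty(M_\phi+\varepsilon C_{\log})$, giving exactly the asserted bound on $\mathbb{E}_{\pi_\infty}[V]$.

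Next I would establish the pointwise inequality $V(p)=\operatorname{Var}_{i\sim p}(\delta_i)\ge\gamma_\delta^2\,L(p)(1-L(p))$ by the law of total variance for the two-block partition $\{S,I\}$: conditioning $\delta$ on the block, $\mathbb{E}[\delta\mid S]=\delta_{\max}$ while $\mathbb{E}[\delta\mid I]\le\max_{i\in I}\delta_i=\delta_{\max}-\gamma_\delta$, so
\[
\operatorname{Var}_p(\delta)\ \ge\ \operatorname{Var}\!\bigl(\mathbb{E}[\delta\mid\text{block}]\bigr)\ =\ L(p)(1-L(p))\,\bigl(\delta_{\max}-\mathbb{E}[\delta\mid I]\bigr)^2\ \ge\ \gamma_\delta^2\,L(p)(1-L(p))
\]
(the two-point variance identity, with the gap at least $\gamma_\delta$). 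Integrating against $\pi_\infty$ gives $\gamma_\delta^2\,\mathbb{E}_{\pi_\infty}[L(1-L)]\le\mathbb{E}_{\pi_\infty}[V]$. Finally, on the event $\{\theta\le L\le1-\theta\}$ concavity of $x\mapsto x(1-x)$ gives $L(1-L)\ge\theta(1-\theta)$, so Markov's inequality yields $\pi_\infty\{\theta\le L\le1-\theta\}\le\mathbb{E}_{\pi_\infty}[L(1-L)]/(\theta(1-\theta))\le\gamma_\delta^{-2}\theta^{-1}(1-\theta)^{-1}\mathbb{E}_{\pi_\infty}[V]$, which combined with the bound of the previous paragraph is precisely the claimed band estimate.

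The main obstacle is the stationary step $\mathbb{E}_{\pi_\infty}[\mathcal{L}^\delta U]\le 0$. This is immediate if $U$ lies in the domain (core) of the reflected generator, but $U(p)=e^{\lambda m(p)}$ generally violates the no-flux/co-normal boundary condition on the faces $\{p_i=\delta_\star\}$, so Itô's formula for the reflected diffusion produces a boundary local-time term $\mathbb{E}_{\pi_\infty}[\nabla U\!\cdot\!\nu\,\dot L_t]$ whose sign is not automatically favorable on faces with $\delta_i<0$. I would handle this by an Echeverría–Weiss-type argument: approximate $U$ by $\widetilde U$ agreeing with $U$ outside a thin collar of $\partial\Delta^{K-1}_{\delta_\star}$ and with vanishing co-normal derivative there, so $\mathbb{E}_{\pi_\infty}[\mathcal{L}^\delta\widetilde U]=0$ exactly, and control the correction $\mathbb{E}_{\pi_\infty}[\mathcal{L}^\delta(\widetilde U-U)]$ using that the stationary mass in the collar is small (Theorem~\ref{thm:band}) against the controlled curvature of the correction — the delicate part being to choose the collar width so that these balance and recover $\mathbb{E}_{\pi_\infty}[\mathcal{L}^\delta U]\le 0$ without losing constants (alternatively, one accepts a harmless multiplicative slack, already anticipated by the $e^{\lambda\|\delta\|_\infty}$ factors). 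Everything else — the algebra around $\lambda$, the total-variance identity, and Chebyshev — is routine.
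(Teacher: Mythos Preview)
The paper states this theorem without proof, leaving it as the evident consequence of Lemma~\ref{lem:Lyap}; your outline is precisely the intended argument. The chain ``stationarity $\Rightarrow \mathbb{E}_{\pi_\infty}[\mathcal L^\delta U]=0$ $\Rightarrow \lambda\,\mathbb{E}_{\pi_\infty}[UV]\le\tfrac12\,\mathbb{E}_{\pi_\infty}[U]$ $\Rightarrow$ bound on $\mathbb{E}_{\pi_\infty}[V]$'', followed by the two-block law-of-total-variance lower bound $V\ge\gamma_\delta^2 L(1-L)$ and Markov on $L(1-L)$, reproduces both displayed inequalities with the correct constants.

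Your caveat about the boundary term is well taken and is exactly the step the paper elides. One small correction to your proposed fix: Theorem~\ref{thm:band} concerns the \emph{unreflected} diffusion and gives a finite-horizon exit bound, not a stationary boundary-mass estimate for the reflected process, so it is not directly usable to show that $\pi_\infty$ puts small mass in a collar of $\partial\Delta^{K-1}_{\delta_\star}$. A cleaner route is to note that on face $\{p_i=\delta_\star\}$ the inward co-normal derivative of $U$ equals (up to a positive constant) $\lambda U\,\delta_i$, and then either (i) absorb the unfavorable faces by the crude bound $|\nabla U\cdot\nu|\le \lambda\,e^{\lambda\|\delta\|_\infty}\|\delta\|_\infty$ together with a Lyapunov control on the total stationary local-time rate (obtainable from the same $U$ via $\mathbb{E}_{\pi_\infty}[dL_t/dt]\le C$), or (ii) run the whole argument with the time-averaged Itô identity on $[0,t]$ and use boundedness of $U$ to kill the left-hand side after dividing by $t$, treating the boundary term as an additional $O(1)$ constant that is already dominated by the slack $e^{2\lambda\|\delta\|_\infty}$ you introduced when replacing $U$ by its extreme values. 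Either way, the approximation-by-Neumann-functions route you sketch also works, but you should justify the collar-mass smallness for the \emph{reflected} invariant law directly (e.g., via the inward drift from BD$^\sharp$ and uniform ellipticity \eqref{eq:elliptic}) rather than by citing Theorem~\ref{thm:band}.
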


\paragraph{Remark (no fixation under a positive floor).}
If $\delta_\star>0$ then $\sum_{i\in I}p_i(t)\ge |I|\,\delta_\star$ for all $t$; thus one has \emph{concentration toward} (not fixation on) the fittest face. A bona fide fixation statement appears only in the vanishing–floor limit $\delta_\star\downarrow0$.

\subsection{Log–ratio SDEs (algorithm–specific)}

For $z_{ij}:=\log(p_i/p_j)$, Itô’s formula applied to \eqref{eq:WF} yields the exact identity
\begin{equation}\label{eq:logratio}
\mathrm d z_{ij}
=\big(\phi_i(p)-\phi_j(p)\big)\,\mathrm dt
-\varepsilon\,z_{ij}\,\mathrm dt
-\frac{\gamma}{2}\Big(\frac{1-p_i}{p_i}-\frac{1-p_j}{p_j}\Big)\mathrm dt
+\sqrt\gamma\Big(\frac{\mathrm dW_i}{\sqrt{p_i}}-\frac{\mathrm dW_j}{\sqrt{p_j}}\Big).
\end{equation}

\paragraph{GRPO (within–class).}
If all correct traces share the same centred score, $\phi_i=\phi_j$ within the class, then \eqref{eq:logratio} reduces to
\[
\mathrm d z_{ij}=-\varepsilon\,z_{ij}\,\mathrm dt
-\frac{\gamma}{2}\Big(\frac{1-p_i}{p_i}-\frac{1-p_j}{p_j}\Big)\mathrm dt
+\sqrt\gamma\Big(\frac{\mathrm dW_i}{\sqrt{p_i}}-\frac{\mathrm dW_j}{\sqrt{p_j}}\Big).
\]

\paragraph{STaR (within–class).}
If $\phi_i-\phi_j=(p_i-p_j)/\rho$ with $\rho:=\sum_{c\in\mathcal C}p_c$, then
\[
\mathrm d z_{ij}=\Big(\frac{p_i-p_j}{\rho}-\varepsilon\,z_{ij}\Big)\mathrm dt
-\frac{\gamma}{2}\Big(\frac{1-p_i}{p_i}-\frac{1-p_j}{p_j}\Big)\mathrm dt
+\sqrt\gamma\Big(\frac{\mathrm dW_i}{\sqrt{p_i}}-\frac{\mathrm dW_j}{\sqrt{p_j}}\Big).
\]
On $\Delta^{K-1}_{\delta_\star}$ one has $|p_i-p_j|/\rho\le \frac{1-(K-1)\delta_\star}{|\mathcal C|\,\delta_\star}\,|z_{ij}|$.

\paragraph{DPO (same–sign pair).}
With $s_i\in\{\pm1\}$ and $\phi_i(p)=s_i\,g_\beta(\log p_i)-\sum_k p_k s_k g_\beta(\log p_k)$, $g_\beta'(x)\in[-\beta/4,0)$; for $i,k$ with $s_i=s_k$ and $p_i\approx p_k$,
\[
\mathrm d z_{ik}\approx \big(s\,g_\beta'(\xi)-\varepsilon\big)\,z_{ik}\,\mathrm dt\ +\ \text{(Itô \& noise as in \eqref{eq:logratio})}.
\]
Intra–class log–ratios contract if $\varepsilon>\sup(-g_\beta')$ (e.g.\ $\varepsilon>\beta/4$).

\subsection{Regime dictionary (concise)}

Let $r:=\sigma^2/\lambda_{\rm eff}$ with $\sigma^2:=\gamma$ the diffusion variance scale and $\lambda_{\rm eff}$ a local contraction modulus of $F$ on $T$ (for log–ratios, $\lambda_{\rm eff}\gtrsim\varepsilon$). Under BD$^{\sharp}$:
\begin{itemize}[leftmargin=1.6em]
\item $r\ll1$ (low noise): tight interior concentration; ${\rm Var}(z_{ij})=O(\sigma^2/\varepsilon)$.
\item $r\asymp1$ (balanced): moderate interior spread; unique invariant law.
\item $r\gg1$ (noise–dominated but interior): broad interior law; faces are still repelling.
\end{itemize}
If BD$^{\sharp}$ fails, boundary approach and absorption may occur; interior concentration statements do not apply.

\medskip
\noindent\textbf{Summary.}
On the trimmed simplex, the SRCT drift is globally Lipschitz with an explicit modulus; mini–batch noise is centred with variance $O(1/B)$. The correct continuous–time limits are the ODE ($\eta/B\to0$) and a Wright–Fisher–type diffusion ($\eta/B\to\gamma$). The entropy face gap $L_K(\delta_\star)$ quantifies inward normal speed; BD$^{\sharp}$ yields ODE invariance and, for the unreflected SDE, high–probability confinement on finite horizons; the reflected diffusion is strictly invariant and exponentially ergodic. A small centred bias admits an exponential Lyapunov control that quantifies stationary concentration toward the fittest face. Exact log–ratio SDEs provide algorithm–specific envelopes (GRPO, STaR, DPO).
\section{Kernel Design Strategies for SRCT}
\label{app:kernel_strategies_srct}

This appendix gives a self–contained, concise treatment of kernel design and analysis for SRCT. Part~\S\ref{sec:ideal} establishes an exact two–level stationarity condition, curvature (uniqueness/interiority), a tight log–ratio envelope with a dynamic floor, exponential convergence rates, a uniform suppression guarantee, and a block–constant PSD construction that realizes a prescribed class gap with controlled norms. Part~\S\ref{sec:practical} turns to practically learned kernels, including a gated effective kernel, exact suppression ratios, a support–function identity that quantifies diversity pressure, and an explicit global Lipschitz modulus for the SRCT drift.

\vspace{-1ex}
\paragraph{Setting, notation, and standing assumptions.}
Let $\mathcal S=\{\pi_1,\ldots,\pi_S\}$, $S\!\ge\!2$, and
$\Delta^{S-1}:=\{p\in[0,1]^S:\sum_{i=1}^S p_i=1\}$.
All logs are natural; $0\log 0:=0$.
Fix a partition $\mathcal S=\mathcal C\cup\mathcal I$ with $\mathcal C\cap\mathcal I=\varnothing$,
sizes $M:=|\mathcal C|\ge1$, $N:=|\mathcal I|=S-M$,
and utilities $U_i:=\mathbf 1_{\{i\in\mathcal C\}}\in\{0,1\}$.
Kernels are symmetric PSD: $K=K^\top\succeq 0$.
Vector norms $\|\cdot\|_2,\|\cdot\|_\infty$; operator norms
$\|A\|_{2\to 2}$ (spectral),
$\|A\|_{\infty\to\infty}:=\max_i\sum_j|A_{ij}|$,
$\|A\|_{\max}:=\max_{i,j}|A_{ij}|$.
Let $T:=\mathbf 1^\perp$ (tangent subspace) and $\Pi_T:=I-\tfrac{1}{S}\mathbf 1\mathbf 1^\top$.

\paragraph{SRCT objective, Shahshahani flow, and gauge.}
For $\lambda,\beta\ge0$ and entropy strength $A>0$ define
\[
\widetilde J(p):=U^\top p-\lambda\beta\,p^\top Kp+A\,H[p],
\qquad H[p]:=-\sum_{i=1}^S p_i\log p_i.
\]
Variational derivative (on $\operatorname{int}\Delta^{S-1}$):
\[
F_i(p)=\frac{\delta\widetilde J}{\delta p_i}=U_i-2\lambda\beta\,(Kp)_i-A\,(1+\log p_i),
\quad
\bar F(p):=\sum_j p_jF_j(p).
\]
The Shahshahani (replicator) flow is
\[
\dot p_i=p_i\big(F_i(p)-\bar F(p)\big),\qquad \sum_i \dot p_i=0.
\]
Adding a constant to $F$ leaves the vector field invariant (gauge invariance); thus the ``$+1$'' in $-A(1+\log p_i)$ can be absorbed into the KKT multiplier at stationarity.

\subsection{Idealized Kernel for a Two–Level Equilibrium}
\label{sec:ideal}

\paragraph{Two–level target.}
Fix $\delta_\star\in(0,1)$ with $N\delta_\star<1$ and set
\[
p_i^\star:=\delta_\star\quad(i\in\mathcal I),
\qquad
p_c^\star=:p_C:=\frac{1-N\delta_\star}{M}>0\quad(c\in\mathcal C),
\]
and write
$V_C:=(Kp^\star)_c$ (all $c\in\mathcal C$),
$V_I:=(Kp^\star)_i$ (all $i\in\mathcal I$).

\begin{proposition}[KKT $\iff$ classwise constancy $+$ gap]\label{prop:kkt-gap}
Under the two–level ansatz above, $p^\star$ is a stationary point of the Shahshahani flow if and only if
\begin{enumerate}[label=(\roman*), itemsep=2pt, topsep=2pt]
\item \emph{Classwise constancy:} $(Kp^\star)_c\equiv V_C$ for all $c\in\mathcal C$ and $(Kp^\star)_i\equiv V_I$ for all $i\in\mathcal I$.
\item \emph{Gap identity:}
\[
1-2\lambda\beta\,(V_C-V_I)-A\log\!\frac{p_C}{\delta_\star}=0.
\]
\end{enumerate}
\emph{Proof.}
Subtract the KKT equations for two indices in the same class to force classwise constancy;
subtract a correct–incorrect pair and use $U_c-U_i=1$ and
$\log p_c^\star-\log p_i^\star=\log(p_C/\delta_\star)$ to obtain the gap.
The converse is immediate by inspection. \qed
\end{proposition}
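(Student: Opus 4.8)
The plan is to reduce stationarity of the Shahshahani flow at the two–level ansatz to a pointwise constancy condition on the fitness field, and then split that condition along and across the two classes. First I would record that the ansatz is feasible and interior: $\sum_i p_i^\star = N\delta_\star + M p_C = 1$ by the definition of $p_C$, and $p_i^\star \ge \min\{\delta_\star, p_C\} > 0$ since $\delta_\star > 0$ and $p_C = (1-N\delta_\star)/M > 0$ under $N\delta_\star < 1$. Hence in the replicator form $\dot p_i = p_i^\star\big(F_i(p^\star) - \bar F(p^\star)\big)$ every prefactor is strictly positive, so $p^\star$ is stationary if and only if $F_i(p^\star) = \bar F(p^\star)$ for all $i$; equivalently, the fitness $F_i(p^\star) = U_i - 2\lambda\beta (Kp^\star)_i - A(1+\log p_i^\star)$ is independent of $i$. (This is exactly the interior KKT condition for $\max_{\Delta^{S-1}}\widetilde J$, with $\bar F(p^\star)$ playing the role of the equality multiplier; the additive ``$1$'' is an irrelevant gauge constant that cancels in all the differences below.)

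Next I would exploit the piecewise–constant structure of $U$ and $\log p^\star$ on the partition. For $i,j$ in the same class, $U_i = U_j$ and $\log p_i^\star = \log p_j^\star$, so $F_i(p^\star) = F_j(p^\star)$ collapses exactly to $(Kp^\star)_i = (Kp^\star)_j$; ranging over all same–class pairs shows ``$F(p^\star)$ constant within each class'' is equivalent to classwise constancy of $Kp^\star$, which is condition (i) and is precisely what makes $V_C, V_I$ well defined. Then, assuming (i), the remaining content of full constancy is a single scalar equation, obtained by subtracting $F_i(p^\star)$ from $F_c(p^\star)$ for any fixed $c \in \mathcal C$, $i \in \mathcal I$: with $U_c - U_i = 1$, $(Kp^\star)_c - (Kp^\star)_i = V_C - V_I$, and $\log p_c^\star - \log p_i^\star = \log(p_C/\delta_\star)$, this difference equals $1 - 2\lambda\beta(V_C - V_I) - A\log(p_C/\delta_\star)$, whose vanishing is the gap identity (ii).

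For the converse I would simply observe that (i) makes $F(p^\star)$ constant on $\mathcal C$ and constant on $\mathcal I$, and (ii) equates those two constants, so $F_i(p^\star)$ is the same for all $i$; plugging into the replicator form gives $\dot p^\star = 0$. Combining the two directions yields the stated equivalence.

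I do not expect a genuine obstacle here: every step is a one–line manipulation of the given variational derivative. The only points that require care — and that I would state explicitly — are (a) interiority of the ansatz, since the reduction from $\dot p_i = 0$ to $F_i = \bar F$ fails on the boundary, and the two–level ansatz is chosen precisely so that all coordinates are bounded below by $\min\{\delta_\star, p_C\} > 0$; and (b) the ordering of the argument, namely that $V_C, V_I$ are only meaningful once classwise constancy has been established, so the within–class reduction must be done before the cross–class reduction. Everything else, including the disappearance of $\bar F(p^\star)$ and the gauge constant, follows because we only ever compare the differences $F_i(p^\star) - F_j(p^\star)$.
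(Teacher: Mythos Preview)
Your proposal is correct and follows essentially the same approach as the paper: reduce interior stationarity to constancy of $F_i(p^\star)$, obtain classwise constancy by subtracting same-class KKT equations, and obtain the gap identity by subtracting a correct--incorrect pair. Your write-up is simply a more detailed version of the paper's terse proof, with the added (and useful) explicit remarks on interiority and on why the within-class reduction must precede the cross-class one.
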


\paragraph{Curvature, strict concavity, uniqueness, interiority.}
Let
$\kappa_T:=\lambda_{\min}\big((\Pi_T K\Pi_T)|_T\big)\ge0$.
For any $v\in T$,
\[
\langle \nabla^2\widetilde J(p)v,v\rangle
=-A\sum_i \frac{v_i^2}{p_i}-2\lambda\beta\,v^\top K v
\le -(A+2\lambda\beta\,\kappa_T)\,\|v\|_2^2.
\]
Hence $\widetilde J$ is $A$–strongly concave on the affine simplex; in particular, the maximizer is unique and (by the steepness of $A\,H[p]$) interior.

\paragraph{Log–ratio dynamics, operator–norm envelope, dynamic floor.}
Let $z_{ij}:=\log\frac{p_i}{p_j}$.
Along trajectories,
\[
\dot z_{ij}=(U_i-U_j)-2\lambda\beta\big((Kp)_i-(Kp)_j\big)-A\,z_{ij}.
\]
For all $p\in\Delta^{S-1}$ and $i\neq j$,
\[
\big|(Kp)_i-(Kp)_j\big|
=\big|(K_{i\cdot}-K_{j\cdot})^\top p\big|
\le \Delta_K,
\]
where one may take any of the following (use the tightest available):
\[
\Delta_K\in\left\{\sqrt{2}\,\|K\|_{2\to 2},\ \ 2\,\|K\|_{\infty\to\infty},\ \ 2\,\|K\|_{\max},\ \
\max_{i\neq j}\|K_{i\cdot}-K_{j\cdot}\|_\infty\right\}.
\]
With $B_\sharp:=|U_i-U_j|+2\lambda\beta\,\Delta_K\le 1+2\lambda\beta\,\Delta_K$,
variation of constants yields
\[
|z_{ij}(t)|\le |z_{ij}(0)|e^{-At}+\frac{B_\sharp}{A}\,(1-e^{-At}).
\]
Let
\[
M_\sharp:=\max\!\Big\{\max_{k\ne \ell}|z_{k\ell}(0)|,\ \frac{B_\sharp}{A}\Big\},
\qquad
\delta:=S^{-1}e^{-M_\sharp}.
\]
Then, for all $t\ge0$ and all $i$, $\displaystyle \delta\le p_i(t)\le \frac{e^{M_\sharp}}{S}$,
so the ODE is globally well–posed and $\Delta_\delta:=\{p\in\Delta^{S-1}:\min_i p_i\ge\delta\}$ is forward–invariant.

\paragraph{Exponential convergence.}
Let $a(p):=F(p)-\langle p,F(p)\rangle\mathbf 1$.
Along trajectories,
$\frac{d}{dt}\widetilde J(p_t)=\sum_i p_i\,a_i(p_t)^2\ge \delta\|a(p_t)\|_2^2$ on $\Delta_\delta$.
Since $\widetilde J$ is $A$–strongly concave on the affine simplex,
$\widetilde J(p^\star)-\widetilde J(p)\le \frac{1}{2A}\|a(p)\|_2^2$.
Therefore, for all $t\ge0$,
\[
\widetilde J(p^\star)-\widetilde J(p_t)\le \big(\widetilde J(p^\star)-\widetilde J(p_0)\big)e^{-2A\delta\,t},
\qquad
\|p_t-p^\star\|_2
\le \sqrt{\tfrac{2}{A}\big(\widetilde J(p^\star)-\widetilde J(p_0)\big)}\,e^{-A\delta\,t}.
\]
Moreover, since $-\nabla^2\widetilde J(p)\succeq A\,\mathrm{diag}(1/p)$,
$\widetilde J$ is $A$–strongly concave in the Shahshahani metric $g_p(u,u)=\sum_i u_i^2/p_i$,
and the Riemannian PL inequality with the Lyapunov identity gives the $\delta$–free rate
\[
\widetilde J(p^\star)-\widetilde J(p_t)\le \big(\widetilde J(p^\star)-\widetilde J(p_0)\big)e^{-2A t}.
\]

\paragraph{Stationary structure and uniform suppression.}
At any equilibrium $p^\star$, subtracting KKT equations with the same utility yields, for $U_a=U_b$,
\[
\log\frac{p_a^\star}{p_b^\star}
=-\frac{2\lambda\beta}{A}\Big((Kp^\star)_a-(Kp^\star)_b\Big).
\]
For $c\in\mathcal C$, $i\in\mathcal I$,
\[
\log\frac{p_i^\star}{p_c^\star}
=-\frac{1}{A}\Big(1-2\lambda\beta\big((Kp^\star)_c-(Kp^\star)_i\big)\Big).
\]
A $p$–independent sufficient condition ensuring $p_i^\star<p_c^\star$ for all such pairs is
\[
\boxed{\,2\lambda\beta\,\Delta_K<1\,}
\quad\text{(use any $\Delta_K$ bound above; the $\ell_\infty$ row–difference is tight)}.
\]

\paragraph{Block–constant kernels: PSD, norms, gap realization, low–norm choice.}
Consider
\[
K_{ij}=
\begin{cases}
\kappa_{CC}, & i,j\in\mathcal C,\\
\kappa_{II}, & i,j\in\mathcal I,\\
\kappa_{CI}, & \text{otherwise.}
\end{cases}
\]
Let $B:=\begin{psmallmatrix}\kappa_{CC}&\kappa_{CI}\\ \kappa_{CI}&\kappa_{II}\end{psmallmatrix}$ and
$T:\mathbb R^2\to\mathbb R^S$, $T(a,b)=a\,\mathbf 1_{\mathcal C}+b\,\mathbf 1_{\mathcal I}$,
so $K=TBT^\top$ and $\mathrm{rank}(K)\le 2$.
Then $K\succeq0\iff B\succeq0$, i.e., $\kappa_{CC}\ge0$, $\kappa_{II}\ge0$, $\kappa_{CC}\kappa_{II}\ge\kappa_{CI}^2$.
Norm controls:
$\|K\|_{2\to2}\le \max\{M,N\}\,\|B\|_{2\to2}$ and
$\|K\|_{\infty\to\infty}=\max\{M|\kappa_{CC}|+N|\kappa_{CI}|,\ M|\kappa_{CI}|+N|\kappa_{II}|\}$.
With the two–level $p^\star$,
\[
(Kp^\star)_c-(Kp^\star)_i
=(\kappa_{CC}-\kappa_{CI})\,(1-N\delta_\star)+(\kappa_{CI}-\kappa_{II})\,N\delta_\star,
\]
so the gap identity of Proposition~\ref{prop:kkt-gap} becomes
\[
(1-N\delta_\star)(\kappa_{CC}-\kappa_{CI})+N\delta_\star(\kappa_{CI}-\kappa_{II})
=\frac{1-A\log(p_C/\delta_\star)}{2\lambda\beta}\;=:\;X.
\]
A low–norm constructive choice sets $\kappa_{CI}=0$ and then
\[
\kappa_{II}^{\min}=\max\Big\{0,\ -\frac{X}{N\delta_\star}\Big\}\quad(N\ge1),\qquad
\kappa_{CC}=\frac{X+N\delta_\star\,\kappa_{II}^{\min}}{1-N\delta_\star},
\]
minimizing $\|K\|_{\infty\to\infty}=\max\{M\kappa_{CC},\,N\kappa_{II}\}$ under PSD.
\emph{Edge case $N=0$:} the gap is void; maximizing $-\,\lambda\beta\,p^\top Kp+A\,H[p]$ yields a unique interior solution for $A>0$.

\subsection{Practical Design with a Learnable Semantic Kernel}
\label{sec:practical}

\paragraph{Gated effective kernel and objective.}
Let $k_{\mathrm{sem}}=k_{\mathrm{sem}}^\top\succeq0$ be a learnable semantic kernel and
$R\in\{0,1\}^S$ a binary verifier with $\mathcal C=\{i:R_i=1\}$, $\mathcal I=\{i:R_i=0\}$.
Define the \emph{effective} kernel
\[
K_{\mathrm{eff}}:=\mathrm{Diag}(R)\,k_{\mathrm{sem}}\,\mathrm{Diag}(R)\succeq 0.
\]
Consider the objective
\[
\mathcal J(p)=U^\top p+\lambda\big(\alpha\,H[p]-\beta\,p^\top K_{\mathrm{eff}}p\big),
\qquad \lambda,\alpha,\beta\ge0,
\]
and let the \emph{effective entropy coefficient} be
\[
\varepsilon_{\mathrm{tot}}:=\varepsilon_{\mathrm{base}}+\lambda\alpha,\qquad \varepsilon_{\mathrm{base}}>0.
\]
The SRCT flow uses the score $\phi_i(p)=U_i-2\lambda\beta\,(K_{\mathrm{eff}}p)_i$ and reads
\[
\dot p_i=p_i\big(\phi_i(p)-\bar\phi(p)\big)-\varepsilon_{\mathrm{tot}}\,p_i\big(\log p_i-\langle\log p\rangle\big),
\quad
\bar\phi(p):=\sum_j p_j\phi_j(p),\ \ \langle\log p\rangle:=\sum_j p_j\log p_j.
\]
Stationary points $p^\star\in\operatorname{int}\Delta^{S-1}$ satisfy the KKT system
\[
U_i-2\lambda\beta\,(K_{\mathrm{eff}}p^\star)_i-\varepsilon_{\mathrm{tot}}\big(1+\log p_i^\star\big)=\lambda_0,
\]
with the “$+1$” and $\lambda_0$ eliminated by taking differences.

\paragraph{Incorrect suppression and equalization among correct traces.}
Since $K_{\mathrm{eff}}(i,\cdot)\equiv 0$ for $i\in\mathcal I$,
$(K_{\mathrm{eff}}p^\star)_i=0$ and, for any $c\in\mathcal C$,
\[
\boxed{\quad
\frac{p_i^\star}{p_c^\star}
=\exp\!\left(-\frac{\,1-2\lambda\beta\,(K_{\mathrm{eff}}p^\star)_c\,}{\varepsilon_{\mathrm{tot}}}\right).
\quad}
\]
Thus strong suppression ($p_i^\star\!\ll\! p_c^\star$) is promoted by small $\varepsilon_{\mathrm{tot}}$ and moderate $\lambda\beta\,(K_{\mathrm{eff}}p^\star)_c$.
For $a,b\in\mathcal C$,
\[
\varepsilon_{\mathrm{tot}}\log\frac{p_a^\star}{p_b^\star}
=2\lambda\beta\Big((K_{\mathrm{eff}}p^\star)_b-(K_{\mathrm{eff}}p^\star)_a\Big),
\]
so larger $\varepsilon_{\mathrm{tot}}$ enhances equalization when the correct–side kernel averages are close.

\paragraph{Support–function identity (diversity pressure).}
For any $A\in\mathbb R^{S\times S}$ and distinct $i,j$,
\[
\sup_{p\in\Delta^{S-1}}\big|(Ap)_i-(Ap)_j\big|
=\sup_{p\in\Delta^{S-1}}\big|(A_{i\cdot}-A_{j\cdot})^\top p\big|
=\|A_{i\cdot}-A_{j\cdot}\|_\infty.
\]
(\emph{Proof:} $\Delta^{S-1}$ is the convex hull of basis vectors; the support function in direction $a$ equals $\max_k a_k$; take absolute values.)

Applying this to $A=K_{\mathrm{eff}}$ shows that the maximal instantaneous disparity of kernel averages across two correct indices is exactly the $\ell_\infty$ row–difference; when $k_{\mathrm{sem}}$ is semantically coherent, this term is larger across distinct semantic lumps, enforcing diversity via the $-\beta\,p^\top K_{\mathrm{eff}}p$ penalty.

\paragraph{Global Lipschitz modulus of the SRCT drift on a trimmed simplex.}
Let $\Delta^{S-1}_{\delta_\star}:=\{p\in\Delta^{S-1}:p_i\ge\delta_\star\ \forall i\}$ and
$\Lambda(\delta_\star):=1+\log(1/\delta_\star)$.
Write $S(p):=\mathrm{diag}(p)-pp^\top$ and
$E(p):=p\odot\big(\log p-\langle\log p\rangle\,\mathbf 1\big)$,
so the drift is $F(p)=S(p)\phi(p)-\varepsilon_{\mathrm{tot}}E(p)$ with $\phi(p)=U-2\lambda\beta\,K_{\mathrm{eff}}p$.
On $\Delta^{S-1}_{\delta_\star}$,
\[
\|S(p)\|_{2\to2}\le \tfrac12,\qquad \|S(p)-S(q)\|_{2\to2}\le 3\,\|p-q\|_2,
\]
\[
L_\phi^{(2)}:=2\lambda\beta\,\|K_{\mathrm{eff}}\|_{2\to2},\qquad
\|\phi(p)\|_2\le \sqrt M+2\lambda\beta\,\|K_{\mathrm{eff}}\|_{2\to2}=:M_{\phi,2},
\]
\[
\|E(p)-E(q)\|_2\le \Lambda(\delta_\star)\,(2+\sqrt S)\,\|p-q\|_2.
\]
Combining,
\[
\boxed{\quad
\|F(p)-F(q)\|_2\ \le\
\Big(\tfrac12\,L_\phi^{(2)}+3\,M_{\phi,2}\ +\ \varepsilon_{\mathrm{tot}}\,\Lambda(\delta_\star)\,(2+\sqrt S)\Big)\ \|p-q\|_2.
\quad}
\]
Hence the ODE is globally Lipschitz on $\Delta^{S-1}_{\delta_\star}$ with an explicit modulus.

\paragraph{Tuning guidance (concise).}
Smaller $\varepsilon_{\mathrm{tot}}$ (i.e., smaller $\lambda\alpha$ given $\varepsilon_{\mathrm{base}}$) yields exponentially stronger incorrect suppression but weaker equalization; larger $\varepsilon_{\mathrm{tot}}$ does the opposite. The coefficient $\lambda\beta$ regulates semantic diversity pressure via $K_{\mathrm{eff}}$ and should be chosen to spread mass across genuinely distinct correct lumps without excessively penalizing semantically coherent high–utility traces.

\vspace{-1ex}
\paragraph{Design–to–guarantee checklist (explicit constants).}
\begin{enumerate}[itemsep=1pt, topsep=2pt]
\item \emph{Target \& gap.}
$X=\dfrac{1-A\log(p_C/\delta_\star)}{2\lambda\beta}$ with $p_C=\dfrac{1-N\delta_\star}{M}$.
\item \emph{Kernel.} Choose symmetric PSD $K$ realizing the gap; for block–constant $K$, the low–norm choice is $\kappa_{CI}=0$ and
$\kappa_{II}=\kappa_{II}^{\min}$, $\kappa_{CC}=\dfrac{X+N\delta_\star\,\kappa_{II}^{\min}}{1-N\delta_\star}$.
\item \emph{Curvature (uniqueness/interiority).} Ensure $A>0$ (then the maximizer is unique and interior).
\item \emph{Log–ratio floor.} With any $\Delta_K$ option above, set
$B_\sharp=1+2\lambda\beta\,\Delta_K$,
$M_\sharp=\max\{\max_{i\ne j}|z_{ij}(0)|,\ B_\sharp/A\}$,
$\delta=S^{-1}e^{-M_\sharp}$; then $p_i(t)\in[\delta,e^{M_\sharp}/S]$ for all $t$.
\item \emph{Rates.} Euclidean–PL on $\Delta_\delta$:
$\|p_t-p^\star\|_2\le \sqrt{\tfrac{2}{A}\big(\widetilde J(p^\star)-\widetilde J(p_0)\big)}\,e^{-A\delta t}$; \quad
metric–PL ($\delta$–free): $\widetilde J(p^\star)-\widetilde J(p_t)\le(\widetilde J(p^\star)-\widetilde J(p_0))e^{-2At}$.
\item \emph{Suppression.} A uniform sufficient condition for $p_i^\star<p_c^\star$ is $2\lambda\beta\,\Delta_K<1$.
\end{enumerate}

\vspace{-1ex}
\paragraph{Notation hygiene and edge cases.}
Symbol $\delta_\star$ denotes the \emph{prescribed} target floor in the two–level ansatz,
while $\delta=S^{-1}e^{-M_\sharp}$ is the \emph{dynamic} floor from the log–ratio envelope.
When $N=0$, the cross–class gap is void; all curvature, floor, and convergence statements remain valid with $A>0$.

\newcommand{\Fix}{\mathrm{Fix}}
\newcommand{\Jp}{J_p}

\section{Insight Experiments}
\label{app:insight-exp}

This appendix complements the main paper with simple experiments to validate parts of the theory. Unless stated otherwise: lines are means across five seeds and ribbons show $\pm$1\,s.d; the vertical line at step~200 indicates the event–detection smoothing floor. Metrics used throughout are the entropy $\Hp=-\sum_i p_i\log p_i$, fixation index $\Fix=\sum_i p_i^2$, cluster Gini (inequality over masses of the three correct–strategy clusters), incorrect mass (total probability on incorrect traces), and the objective proxy
\[
\Jp \;\coloneqq\; \text{utility mass} \;+\; \lambda\alpha\,\Hp \;-\; \lambda\beta\,p^\top K_{\mathrm{eff}}p.
\]

\subsection{Experimental Implementation and Reproducibility}
\label{app:exp-impl}

\vspace{0.5em}
\noindent\textbf{Synthetic trace universe.}
All experiments share the same finite “trace universe” with \(S=12\) traces. Eight traces are \emph{correct} and partitioned into three semantic clusters (strategies) \(A,B,C\) of sizes \(3,3,2\); the remaining four are \emph{incorrect}. Let \(\cC\subset\{1,\dots,12\}\) be the set of correct traces and \(\cI=\{1,\dots,12\}\setminus\cC\) the incorrect traces. A policy is a probability vector \(p\in\Delta^{S-1}\), with numerical clipping \(p_i\gets\max(p_i,10^{-12})\) before any \(\log\) is evaluated. Cluster membership is used only for analysis and, in Study~B, for the creativity kernel.

\paragraph{Verifier and rewards.}
Correctness is deterministic: \(U(i)=1\) for \(i\in\cC\), \(U(i)=0\) for \(i\in\cI\).
In Study~B, we additionally use base rewards \(r(i)=1.0\) for \(i\in\cC\) and \(r(i)=0.2\) for \(i\in\cI\).

\paragraph{Mini‑batch sampling and noise.}
Each update step draws a multinomial mini‑batch of size \(B\) from the current policy \(p\), yielding counts \(\mathbf{n}\sim\mathrm{Multinomial}(B,p)\) and empirical frequencies \(\hat p=\mathbf{n}/B\). All fitness/payoff computations that require batch statistics use \(\hat p\) (not the full \(p\)) so that finite‑batch noise is the only source of stochasticity.

\vspace{0.25em}
\noindent\textbf{Common metrics and event detection.}
At fixed intervals we log:
\begin{itemize}[leftmargin=1.25em,itemsep=2pt,topsep=2pt]
\item \emph{Entropy:} \(H[p]=-\sum_{i}p_i\log p_i\).
\item \emph{Fixation index:} \(\mathrm{Fix}=\sum_{i}p_i^2\) (monoculture \(\to 1\)).
\item \emph{Cluster masses:} \(m_A,m_B,m_C\) (probability within each correct cluster).
\item \emph{Cluster inequality:} Gini\((m_A,m_B,m_C)\).
\item \emph{Incorrect mass:} \(M_{\mathrm{inc}}=\sum_{i\in\cI}p_i\).
\item \emph{Objective proxy (Study~B):}
\(J_{\!p}=\textstyle\sum_{i\in\cC}p_i+\lambda\alpha\,H[p]-\lambda\beta\,p^\top K_{\text{eff}}p\),
where \(K_{\text{eff}}\) is the gated creativity kernel described below.
\end{itemize}
Events are detected on 50‑step moving averages with a 200‑step floor: (i) \emph{fixation} (STaR/GRPO) when \(\max_i p_i\ge 0.75\) and \(\max\{m_A,m_B,m_C\}\ge 0.9\); (ii) \emph{homogenization} (DPO) when the smoothed cluster Gini \(\le 0.10\) and all nonzero cluster masses \(\ge 0.15\).
Unless noted, runs use \(T=5000\) steps and five seeds \(\{101,202,303,404,505\}\); lines show seed means and ribbons \(\pm 1\) s.d.

\vspace{0.25em}
\noindent\textbf{Theoretical (replicator) update used in Studies A and A\textsuperscript{+}.}
All “theory” tracks use the same exponentiated‑gradient (replicator) step
\[
\tilde p_i \leftarrow p_i \exp\!\big(\eta\,[\phi_i-\varepsilon\log p_i]\big),\quad
p \leftarrow \tilde p / \|\tilde p\|_1,
\]
with learning rate \(\eta=0.15\) and barrier \(\varepsilon\in\{0,3\times10^{-4}\}\).
The method‑specific fitness \(\phi_i\) is:
\begin{align*}
\textbf{STaR:}&\quad \phi_i=\hat p_i/\hat\rho \ \text{if } i\in\cC,\ \text{else }0,
\quad\hat\rho=\sum_{c\in\cC}\hat p_c;\\
\textbf{GRPO:}&\quad \phi_i=\mathbf{1}\{i\in\cC\};\\
\textbf{DPO:}&\quad \phi_i=-\log\big(\max(\hat p_i,10^{-12})\big)\ \text{if } i\in\cC,\ \text{else }0.
\end{align*}

\vspace{0.25em}
\noindent\textbf{Algorithm‑faithful (procedural) updates used in Study A\textsuperscript{+}.}
In parallel to the “theory” track, we run \emph{algorithm‑faithful} procedures on logits \(\theta\) with \(p=\mathrm{softmax}(\theta)\):
\begin{itemize}[leftmargin=1.25em,itemsep=2pt,topsep=2pt]
\item \textbf{STaR (sequential reinforcement).} Sample up to \(L\) traces i.i.d.; on the first correct \(c\) apply
\(\theta\leftarrow \theta+\eta_{\text{star}}(\mathbf{e}_c-p)\).
If none is correct, no‑op that step. \(L\in\{16,64\}\) co‑varies with \(B\).
\item \textbf{GRPO (group REINFORCE with baseline).} Sample a group of size \(m\); with centered advantages \(a_j=r_j-\bar r\),
\(\theta \leftarrow \theta + \frac{\eta_{\text{grpo}}}{m}\sum_j a_j(\mathbf{e}_{i_j}-p)\);
\(m\in\{8,16,32\}\) depending on \(B\).
\item \textbf{DPO (pairwise preferences, Davidson ties).} For pairs \((i,j)\) drawn from the batch, compute the Davidson log‑likelihood with tie parameter \(\nu\) and take a gradient step \(\theta\leftarrow\theta+\eta_{\text{dpo}}\nabla_\theta \ell\). We use batched pairs and adaptive scaling to match one‑step norms to the theory track.
\end{itemize}
For each method and \(B\), \(\eta_{\text{proc}}\) (and, for DPO, pairs‑per‑step and \(\nu\)) is calibrated on a small set of \emph{anchor states} to maximize the mean cosine between one‑step \(\Delta p\) from the procedural and theory tracks while keeping the norm ratio close to \(1\).

\vspace{0.25em}
\noindent\textbf{DCR objective and kernel (Study B).}
Study~B augments a GRPO‑like base with a diversity energy \(\lambda(\alpha H[p]-\beta\,Q[p])\),
and folds the entropic term into the effective barrier:
\(\varepsilon \leftarrow \varepsilon_{\text{barrier}}+\lambda\alpha\) with \(\varepsilon_{\text{barrier}}=10^{-4}\).
The gated kernel is
\[
K_{\text{eff}} = R\,K_{\text{sem}}\,R,\qquad
R_{ii}=\mathbf{1}\{i\in\cC\},
\]
and \(K_{\text{sem}}(i,j)=1\) if \(i,j\) are correct and in the same cluster, else \(0\).
The fitness used in the replicator step is
\[
\phi_i \;=\; r(i)\;-\;2\lambda\beta\,(K_{\text{eff}}\hat p)_i,
\]
so that the quadratic penalty \(-\lambda\beta\,p^\top K_{\text{eff}}p\) discourages concentration on \emph{similar correct} traces only.
We sweep \(\alpha\in\{0.02,0.05,0.10\}\), \(\beta\in\{0.10,0.25,0.50,0.75\}\), with \(\lambda=1\), \(B=128\), \(\eta=0.15\).
Two ablations are reported: \emph{Entropy‑only} (\(\beta=0\)) and \emph{Ungated} (apply \(K\) to all traces).

\vspace{0.25em}
\noindent\textbf{Time horizons, seeds, and smoothing.}
Unless stated otherwise: \(T=5000\) steps; seeds \(\{101,202,303,404,505\}\);
50‑step moving averages and a 200‑step event floor are used for all event times and overlaid ribbons.

We run all experiments on a single \texttt{NVIDIA RTX 6000} with 49GB of VRAM. 

\subsection{Strategy–simplex overview (Fig.~\ref{fig:strategy-simplex})}
Figure~\ref{fig:strategy-simplex} provides a qualitative, distributional view of training on the three–strategy simplex (clusters A/B/C): STaR flows to a corner (monoculture), GRPO meanders along a neutral manifold before noise–driven fixation, DPO equalizes mass within the correct set, and DCR converges to a unique interior equilibrium with multi–strategy support. These panels summarize the high–level modes that are quantitatively confirmed in the subsequent figures.

\begin{figure*}[t]
  \centering
  \includegraphics[width=.85\linewidth]{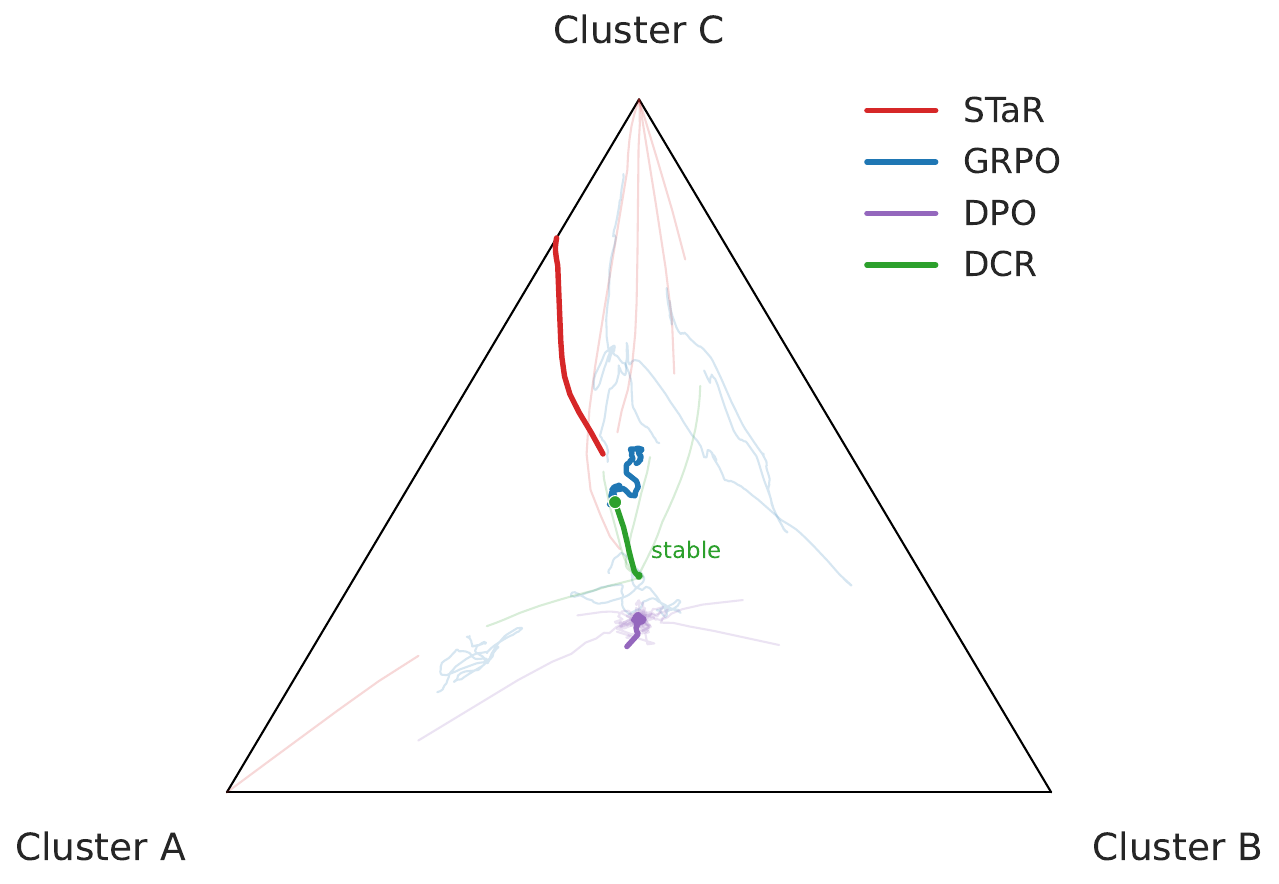}
  \caption{\textbf{Strategy–simplex dynamics.} Representative trajectories of cluster masses $(m_A,m_B,m_C)$ under STaR, GRPO, DPO, and DCR. STaR collapses to a vertex; GRPO drifts along the face; DPO equalizes on the face; DCR reaches a stable interior point retaining all clusters. Early (step~200) and late (step~5000) states are marked.}
  \label{fig:strategy-simplex}
\end{figure*}

\subsection{Study~A: scalar–objective dynamics (Fig.~\ref{fig:studyA})}
Figure~\ref{fig:studyA} aggregates the time evolution of $\Hp$, $\Fix$, cluster Gini, and incorrect mass for STaR, GRPO, and DPO. STaR collapses essentially immediately ($\Hp\!\to\!0$, $\Fix\!\to\!1$); GRPO exhibits slow, batch–size–dependent drift (median fixation $\approx$4.7k steps at $B{=}16$; no fixation by 5k at $B{=}64$); DPO homogenizes correct strategies early while maintaining zero incorrect mass.

\begin{figure*}[t]
  \centering
  \includegraphics[width=.98\linewidth]{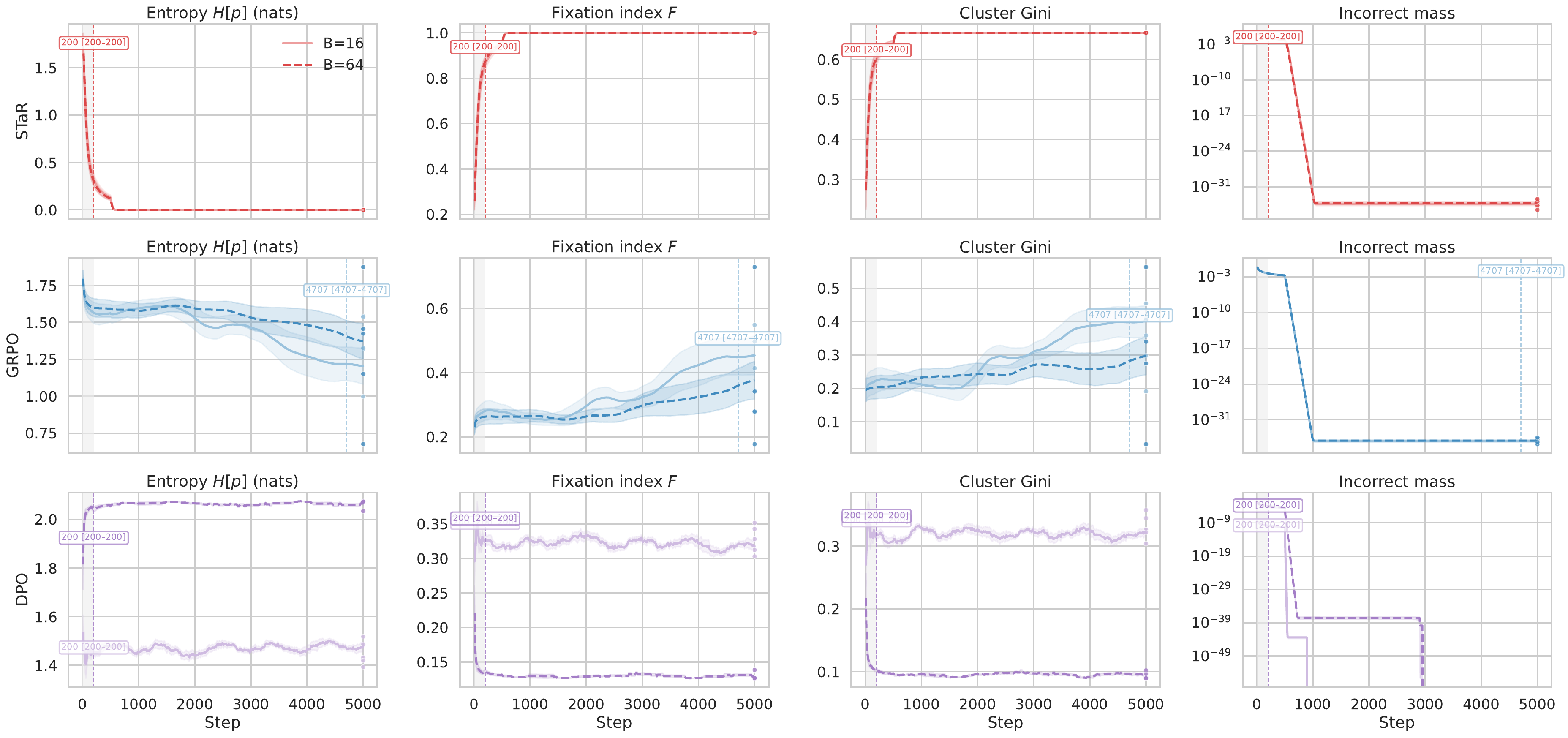}
  \caption{\textbf{Study~A: collapse modes.} Rows: STaR (top), GRPO (middle), DPO (bottom). Columns: entropy $\Hp$, fixation index $\Fix$, cluster Gini, incorrect mass (log scale). STaR deterministically fixates; GRPO drifts with speed increasing at smaller batch; DPO equalizes among correct traces while keeping incorrect mass at~0.}
  \label{fig:studyA}
\end{figure*}

\subsection{Study~B: overlays and alignment diagnostics (Figs.~\ref{fig:overlays}, \ref{fig:studyAplus}, \ref{fig:align-summary})}
The overlays in Fig.~\ref{fig:overlays} compare the replicator “theory” track and the algorithm–faithful procedural track for a common seed: STaR nearly coincides; GRPO shows small–magnitude neutral steps; DPO matches event timing but sustains higher entropy due to paired–comparison (Davidson ties) and the $\theta\!\mapsto\!p$ geometry. 

Per–step alignment in Fig.~\ref{fig:studyAplus} shows (i) high sign agreement for DPO with modest cosine (geometry mismatch), (ii) near–neutral GRPO behavior, and (iii) high STaR cosine with zero event–gap. Batch–size summaries in Fig.~\ref{fig:align-summary} confirm that, despite low cosines at larger $B$, the one–step JS divergence shrinks and event timing synchronizes. 

\begin{figure*}[t]
  \centering
  \includegraphics[width=.98\linewidth]{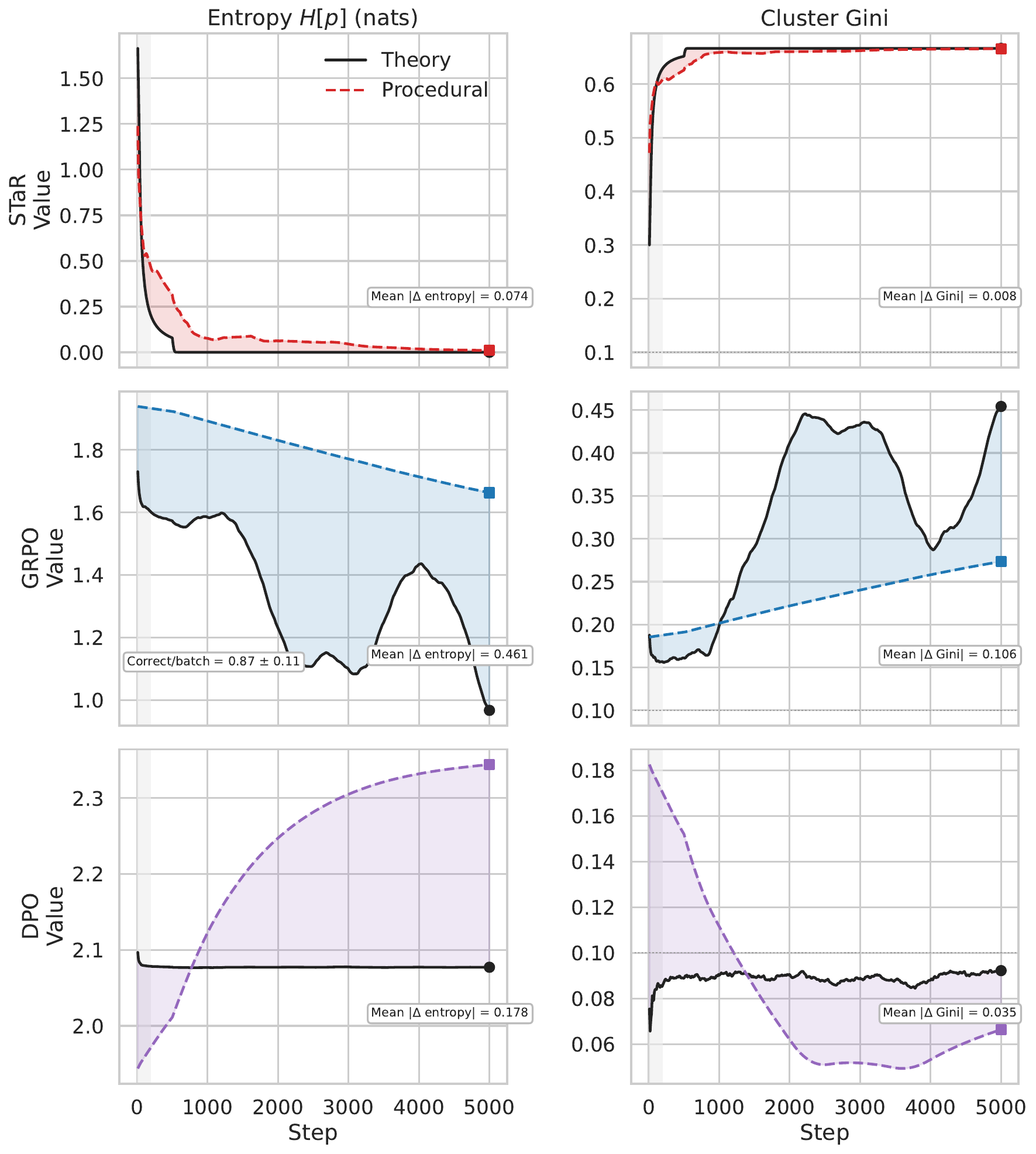}
  \caption{\textbf{Theory vs.\ procedural overlays (single seed).} Entropy and cluster–Gini trajectories for STaR, GRPO, and DPO. Procedural updates (sequential STaR, group REINFORCE, Davidson–ties DPO) track theory closely in \emph{events}; instantaneous directions differ most for DPO.}
  \label{fig:overlays}
\end{figure*}

\begin{figure*}[t]
  \centering
  \includegraphics[width=.98\linewidth]{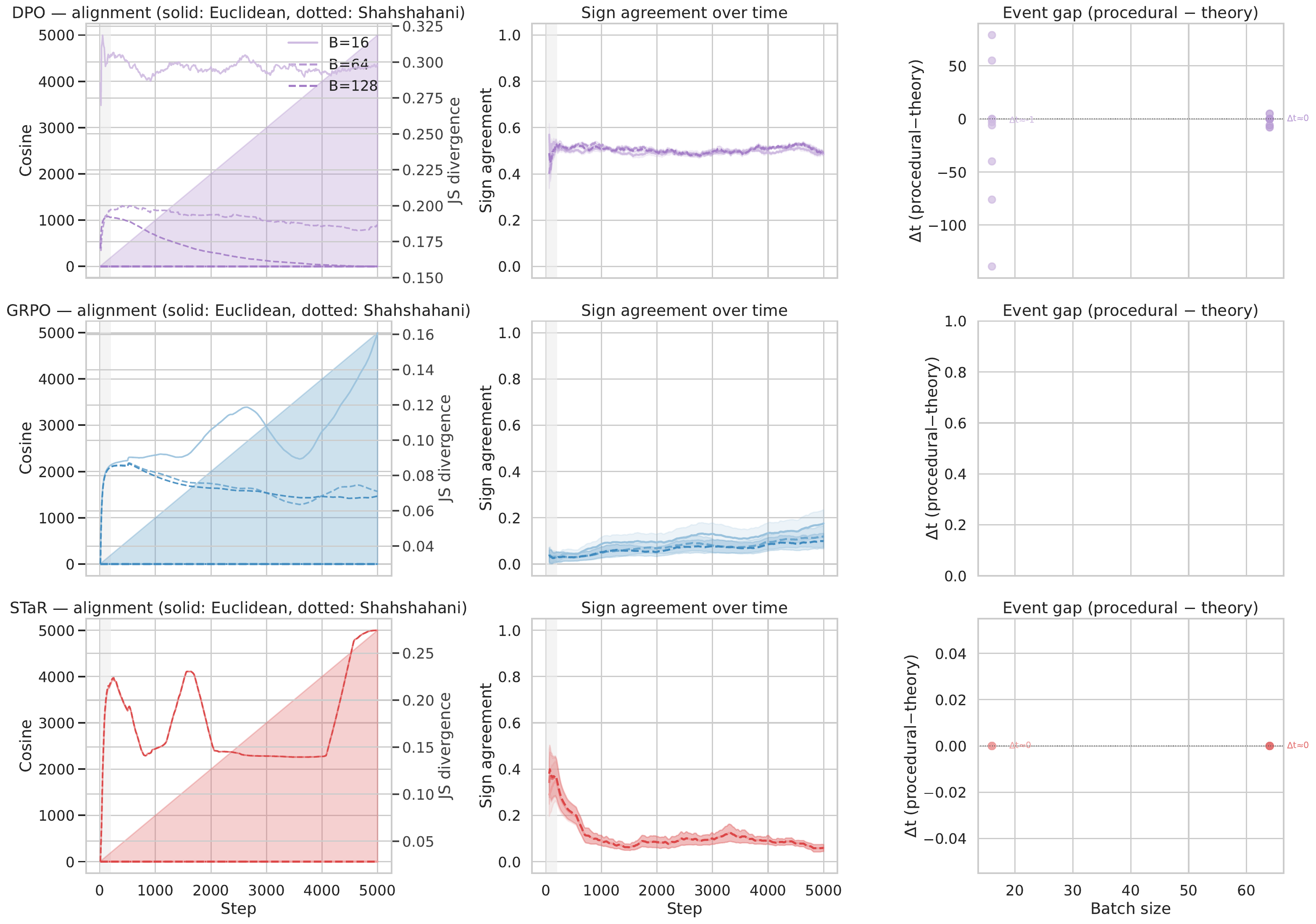}
  \caption{\textbf{Alignment vs.\ theory over time.} For each method: cosine of $\Delta p$ (solid: Euclidean; dotted: Shahshahani), sign agreement of log–ratio slopes, and event–time gap (procedural $-$ theory). DPO: low cosine, near–perfect signs; GRPO: near–neutral; STaR: high cosine, zero gap.}
  \label{fig:studyAplus}
\end{figure*}

\begin{figure*}[t]
  \centering
  \includegraphics[width=.9\linewidth]{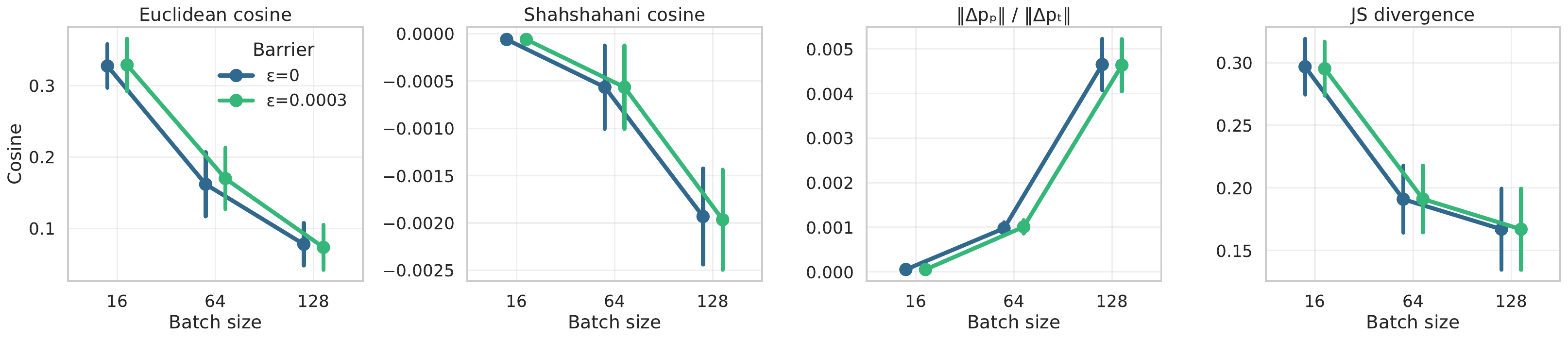}
  \caption{\textbf{Alignment summary vs.\ batch size.} Euclidean/Shahshahani cosine and one–step JS divergence as functions of $B$ (markers: mean; bars: s.d.). Cosine decreases with $B$ for DPO while JS concurrently decreases, indicating increasingly synchronous trajectories despite metric/parameterization mismatch. }
  \label{fig:align-summary}
\end{figure*}

\subsection{Study~C: DCR phase diagrams (Fig.~\ref{fig:phase}) and ablations (Fig.~\ref{fig:ablations})}
Figure~\ref{fig:phase} sweeps $(\alpha,\beta)$ and reports: incorrect mass, minimum cluster mass, between–seed JSD, and correct mass. A broad band achieves near–zero incorrect mass, full coverage, and negligible between–seed JSD—an empirical signature of a unique, interior, diverse equilibrium.

Figure~\ref{fig:ablations} compares \textsc{DCR}, \textsc{Entropy–only}, and \textsc{Ungated}. While coverage saturates at~3 for all, \textsc{DCR} reduces kernel energy (structured diversity) and maintains large positive safety margins; \textsc{Entropy–only} lacks targeted distinctiveness; \textsc{Ungated} penalizes incorrect–incorrect similarity, degrading safety despite larger proxy gains.

\begin{figure*}[t]
  \centering
  \includegraphics[width=.98\linewidth]{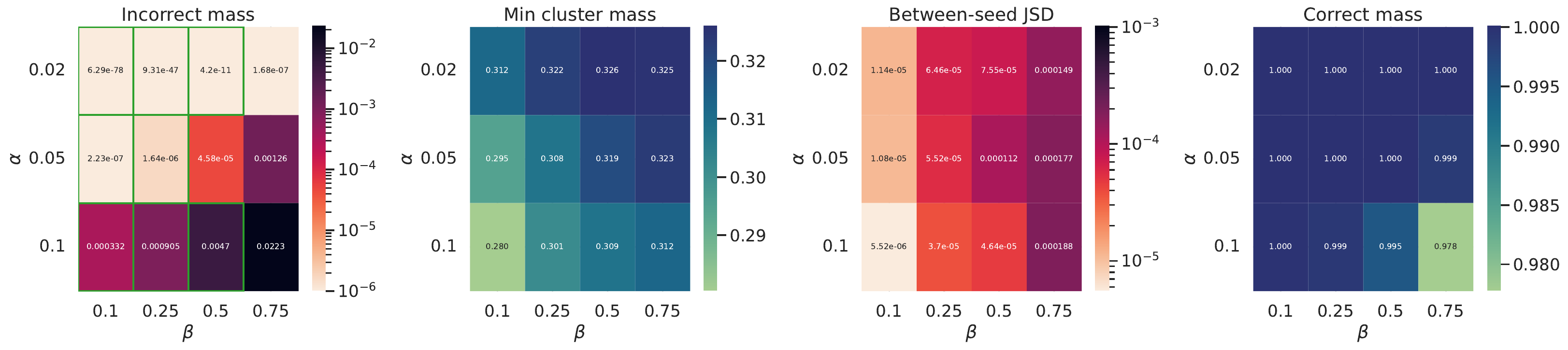}
  \caption{\textbf{DCR phase diagrams over $(\alpha,\beta)$.} From left to right: incorrect mass (log scale), minimum cluster mass, between–seed JSD, and correct mass. A contiguous band shows near–zero error, high structured diversity, and a unique terminal distribution.}
  \label{fig:phase}
\end{figure*}

\begin{figure*}[t]
  \centering
  \includegraphics[width=.98\linewidth]{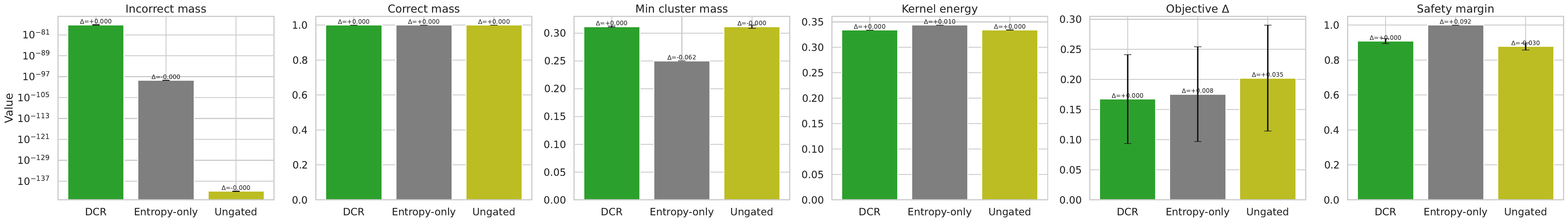}
  \caption{\textbf{DCR vs.\ ablations.} Bars (mean$\pm$sd) for incorrect mass (log axis), coverage, kernel energy, objective $\Delta\Jp$, and safety margin. \textsc{DCR} achieves the best trade–off (low error, full coverage, lower kernel energy, strong safety). \textsc{Entropy–only} preserves breadth without distinctiveness; \textsc{Ungated} reduces safety by penalizing similarity outside the correct set. }
  \label{fig:ablations}
\end{figure*}

\subsection{Objective and safety trajectories (Fig \ref{fig:objective-overlay})}
Figure~\ref{fig:objective-overlay} shows trajectories: \textsc{DCR} reaches a stable interior solution with safety $\gtrsim0.93$; \textsc{Entropy–only} has safety fixed at~1 (no kernel); \textsc{Ungated} converges at much lower safety ($\approx0.48$). 

\begin{figure}[t]
  \centering
  \includegraphics[width=.98\linewidth]{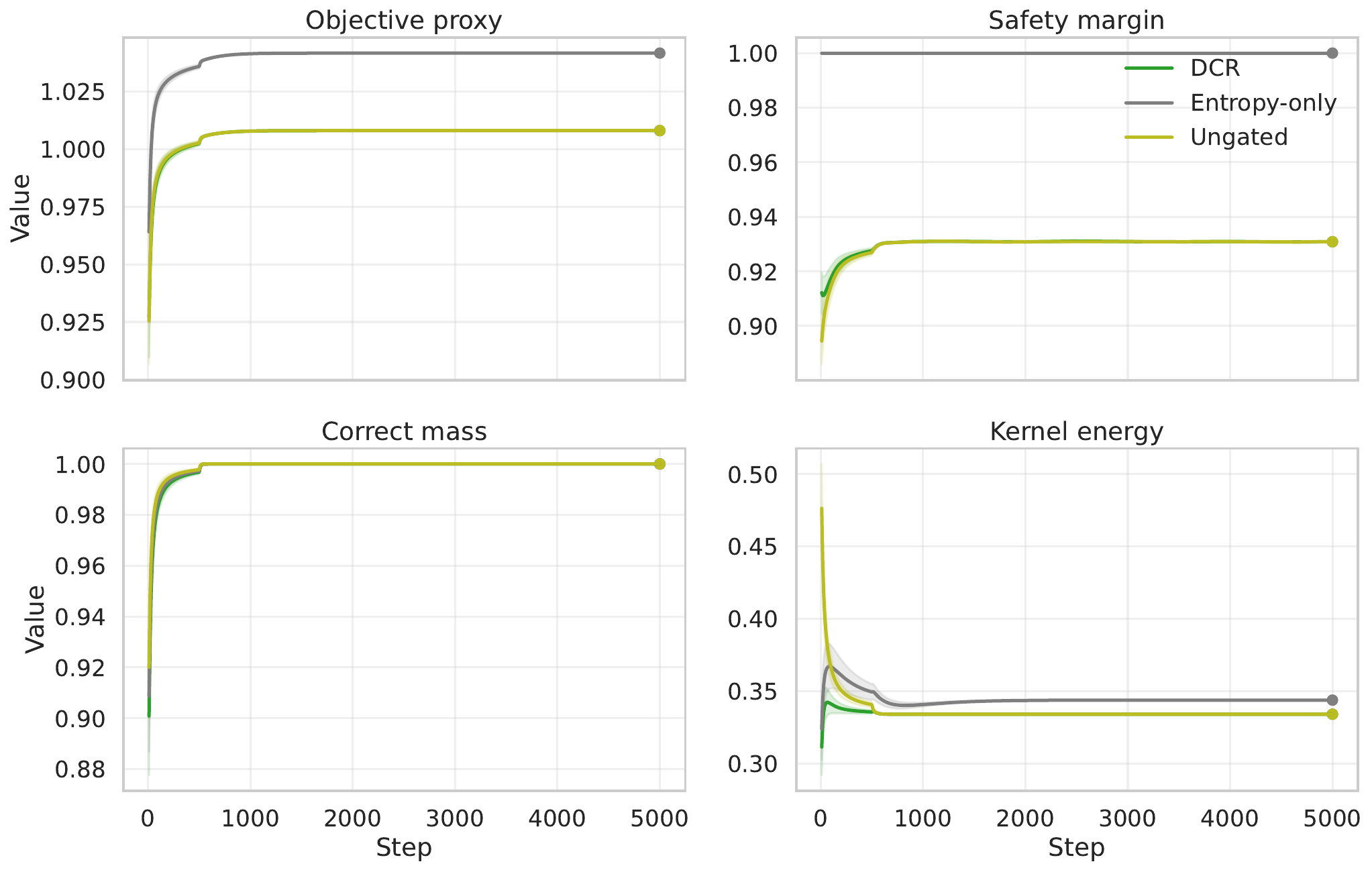}
  \caption{\textbf{Objective \& safety (overlay).} Overlay of $\Jp$ (left) and safety (right) for \textsc{DCR} (green), \textsc{Entropy–only} (gray), and \textsc{Ungated} (gold).}
  \label{fig:objective-overlay}
\end{figure}

\subsection{Safety–margin distribution (Fig.~\ref{fig:safety-hist})}
The histogram in Fig.~\ref{fig:safety-hist} reports the \emph{minimum} safety margin attained along training within the DCR band; all runs remain strictly positive (worst case $\approx0.267$), empirically validating the tuning rule that kernel pressure must not overwhelm the unit utility signal. 

\begin{figure}[t]
  \centering
  \includegraphics[width=.78\linewidth]{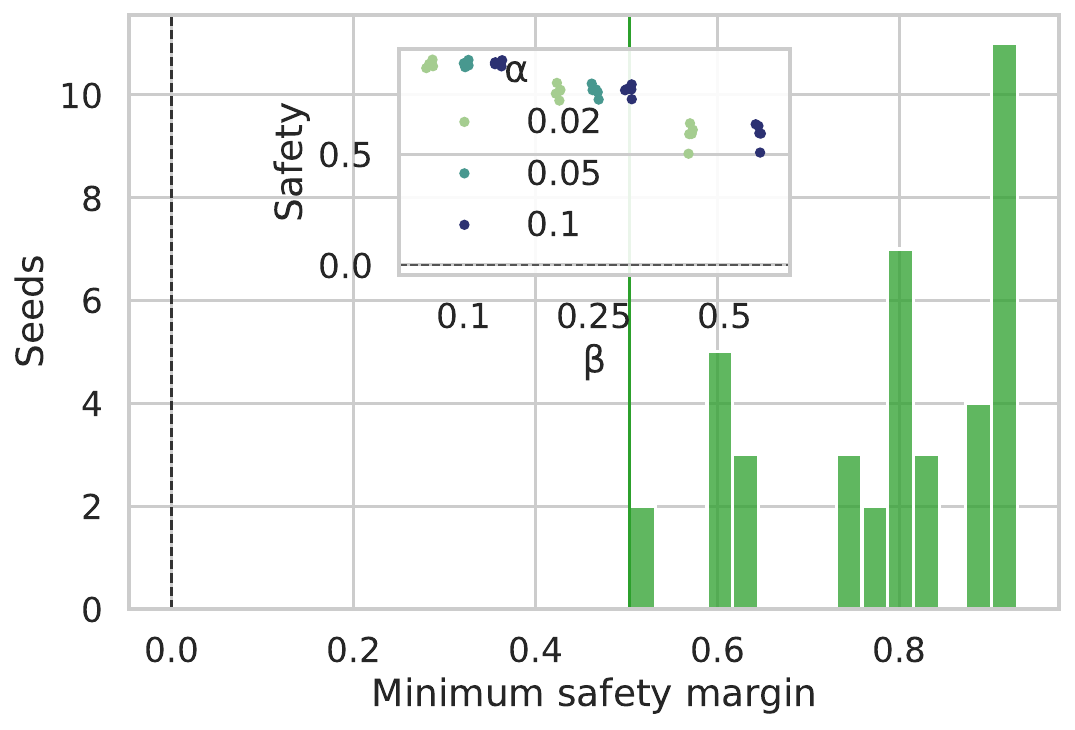}
  \caption{\textbf{Safety–margin distribution within the DCR band.} Minimum safety margin per run (bars) with a scatter inset over $(\alpha,\beta)$ (green markers). All seeds stay comfortably above~0 (min $\approx0.267$).}
  \label{fig:safety-hist}
\end{figure}

\end{document}